\definecolor{vdarkred}{rgb}{0.6,0,0.2}
\definecolor{vdarkblue}{rgb}{0,0.2,0.6}
\tikzset{new spy style/.style={spy scope={%
	magnification=5,
	size=1.25cm,
	connect spies,
	every spy on node/.style={
		rectangle,
		draw,
	},
	every spy in node/.style={
		draw,
		rectangle,
		fill=gray!40,
	}
}}}
\newcommand{\asto}{\overset{\rm a.s.}{\longrightarrow}}
\newcommand{\Var}{\mathrm{Var}}
\DeclareMathOperator{\tr}{tr}
\DeclareMathOperator{\diag}{\rm diag}
\DeclareMathOperator{\argmin}{argmin}
\newcounter{ctheorem}
\newtheorem{theorem}[ctheorem]{Theorem}
\newcounter{cassumption}
\newtheorem{assumption}[cassumption]{Assumption}
\newproof{proof}{Proof}
\newcounter{cproposition}
\newtheorem{proposition}[cproposition]{Proposition}
\newcounter{ccorollary}
\newtheorem{corollary}[ccorollary]{Corollary}
\newcounter{clemma}
\newtheorem{lemma}[clemma]{Lemma}
\newcounter{cremark}
\newtheorem{remark}[cremark]{Remark}
\definecolor{darkgreen}{rgb}{0.125,0.5,0.169}
\journal{Journal of Multivariate Analysis}
\begin{document}


\title{Spectral community detection in heterogeneous large networks\tnoteref{t1}}
\tnotetext[t1]{The work is supported by the ANR Project RMT4GRAPH (ANR-14-CE28-0006).}

\author{Hafiz TIOMOKO ALI\corref{cor1}\fnref{fn1}}
\ead{hafiz.tiomokoali@centralesupelec.fr}
\author{Romain COUILLET\corref{cor2}\fnref{fn2}}
\ead{romain.couillet@centralesupelec.fr}
\address{CentraleSup\'elec -- LSS -- Universit\'e Paris Saclay\\ 3 rue Joliot Curie, Gif-sur-Yvette, France}
\thanks{The work is supported by the ANR Project RMT4GRAPH (ANR-14-CE28-0006).}





\begin{abstract}
In this article, we study spectral methods for community detection based on $ \alpha$-parametrized normalized modularity matrix hereafter called $ {\bf L}_\alpha $ in heterogeneous graph models. We show, in a regime where community detection is not asymptotically trivial, that $ {\bf L}_\alpha $ can be well approximated by a more tractable random matrix which falls in the family of spiked random matrices. The analysis of this equivalent spiked random matrix allows us to improve spectral methods for community detection and assess their performances in the regime under study. In particular, we prove the existence of an optimal value $ \alpha_{\rm opt} $ of the parameter $ \alpha $ for which the detection of communities is best ensured and we provide an on-line estimation of $ \alpha_{\rm opt} $ only based on the knowledge of the graph adjacency matrix. Unlike classical spectral methods for community detection where clustering is performed on the eigenvectors associated with extreme eigenvalues, we show through our theoretical analysis that a regularization should instead be performed on those eigenvectors prior to clustering in heterogeneous graphs. Finally, through a deeper study of the regularized eigenvectors used for clustering, we assess the performances of our new algorithm for community detection. Numerical simulations in the course of the article show that our methods outperform state-of-the-art spectral methods on dense heterogeneous graphs.
\end{abstract}	

\begin{keyword}
community detection, networks, random matrices, spectral clustering.
\end{keyword}







\maketitle


\section{Introduction and Motivations}
\label{sec:intro}
The advent of the big data era is creating an unprecedented need for automating large network analysis. Community detection is among the most important tasks in automated network mining \cite{fortunato2010community}. Given a network graph, detecting communities consists in retrieving hidden clusters of nodes based on some similarity metric (the edges are dense inside communities and sparse across communities). While quite simple to define, community detection is usually not an easy task and many methods arising from different fields have been proposed to carry it out. The most important of them are statistical inference, modularity maximization and graph partitioning methods. Statistical inference methods consist in fitting the observed network to a structured network model and infer its parameters (among which the assignment of the nodes to the communities)\cite{hastings2006community,newman2007mixture}. Modularity maximization algorithms rely on the modularity metric which quantifies the subdivision of networks into communities \cite{fortunato2010community}.\footnote{Precisely, the modularity is defined as the difference between the total number of edges inside the communities for a given partition and the total number of edges if the partition was created randomly in the graph.} The maximization of this quantity over all possible partitions in the graph gives the best possible subdivision of this graph in the modularity measure sense. However, this is generally an NP-hard problem and many approximation methods have been proposed based on some polynomial-time heuristics: greedy methods \cite{newman2004fast}, simulated annealing \cite{guimera2004modularity}, extremal optimization \cite{duch2005community} and spectral methods \cite{newman2006modularity}. Spectral algorithms consist in retrieving the communities from the eigenvectors associated with the extreme eigenvalues of some matrix representation of the graph structure (adjacency matrix, modularity matrix, Laplacian matrix). By relaxing the modularity optimization problem from discrete values of the community memberships to continuous values, it is shown that approximate modularity maximization and even statistical inference methods can be performed via a low dimensional clustering of the entries of the dominant eigenvectors of the Laplacian matrix \cite{ng2002spectral,newman2016community} in polynomial time. We focus in this article on the latter methods. Precisely, spectral methods for community detection generally follow successively those steps 
\begin{enumerate}
\item Compute the, say, $\ell$ eigenvectors corresponding to the extreme (largest or smallest) eigenvalues of one of the matrix representations of the network (adjacency, modularity, Laplacian).
\item Stack those eigenvectors column-wise in a matrix $ {\bf W} \in \mathbb{R}^{n \times \ell}$ with $ n $ the number of nodes, or correspondingly the size of the matrix representation of the network.
\item Take each row of $ {\bf W} $ as a (feature) vector in a $ \ell$-dimensional (feature) space.
\item Cluster those $ n $ vectors in $ K $ groups using a standard classification algorithm e.g., k-means or expectation maximization (EM). The EM algorithm, for example, aims to roughly identify clusters at first before to sequentially update the individual cluster means and covariances until convergence. \label{four2}
\end{enumerate}

Real world networks are in general sparse in that the number of connections of each node (degree) scales in $ \mathcal{O}(1) $ when the number of nodes $ n $ grows large. When the degrees scale instead like $ \mathcal{O}(\log n) $ or $ \mathcal{O}(n) $, the network is said to be dense. The standard spectral algorithms based on the network matrix (adjacency, modularity, Laplacian) of sparse graphs are generally suboptimal in the sense that they fail to detect the communities down to the transition point where the detection is theoretically feasible \cite{krzakala2013spectral}. New operators (non-backtracking \cite{krzakala2013spectral}, Bethe Hessian \cite{saade2014spectral}) based on statistical physics have recently been proposed and have been shown to perform well down to the aforementioned sparse regime. We focus however in this article on dense networks for which spectral methods are often optimal \cite{krzakala2013spectral}. 

Most of the works proposing statistical analysis of the performance of community detection (for dense as well as sparse networks) consider the basic Stochastic Block Model (SBM) as a model for networks decomposable into communities. Denoting $\mathcal{G}$ a $K$-class graph of $n$ vertices with communities $\mathcal{C}_1,\ldots,\mathcal{C}_K$ with $ g_i $ the group assignment of node $ i $, the SBM assumes an adjacency matrix ${\bf A}\in\{0,1\}^{n\times n}$, with $A_{ij}$ independent Bernoulli random variables with parameter $P_{g_ig_j}$ where $ P_{ab} $ represents the probability that any node of class $ \mathcal{C}_a $ is connected to any node of class $ \mathcal{C}_b $. The main limitation of this model is that it is only suited to homogeneous graphs where all nodes have the same average degree in each community (besides, class sizes are often taken equal). A more realistic model, the Degree-Corrected SBM (DCSBM), was proposed in \cite{coja2009finding, karrer2011stochastic} to account for degree heterogeneity inside communities. For the same graph $\mathcal{G}$ defined above, by letting $q_i$, $ 1\leq i\leq n$, be some intrinsic weights which affect the probability for node $i$ to connect to any other network node, the adjacency matrix ${\bf A}\in\{0,1\}^{n\times n}$ of the graph generated by the DCSBM is such that  $A_{ij}$ are independent Bernoulli random variables with parameter $q_{i}q_{j}C_{g_ig_j}$, where $C_{g_ig_j}$ is a class-wise correction factor.

The main motivation of this work arises from the observation that classical spectral algorithms based on the adjacency matrix (modularity, Laplacian) may drastically fail to detect the genuine communities in some synthetic graphs generated using the DCSBM. The same observation is made even for the aforementioned and very competitive Bethe Hessian (BH) method.\footnote{The Bethe Hessian (BH) spectral method \cite{saade2014spectral} is based on the union of the eigenvectors associated to the negative eigenvalues of $ H(r_c) $ and $ H(-r_c) $ respectively where $ H(r)=(r^2-1){\bf I}_n-r{\bf A}+{\bf D} $ with $ {\bf I}_n $ the identity matrix of size $ n $, $ {\bf D} $ a diagonal matrix containing the degrees on the diagonal and $ r_c=\frac{\sum_{i} d_i^2}{\sum_{i} d_i}-1$ with $d_i$ the degree of node $ i $.} To illustrate those limitations of spectral methods under the DCSBM, the two graphs of Figure~\ref{fig:2Dplot} provide 2D representations of dominant eigenvector~1 versus eigenvector~2 for the standard modularity matrix and the BH matrix, when half the nodes connect with low weight $q_{(1)}$ and half the nodes with high weight $q_{(2)}$. For both methods, it is clear that k-means or EM alike would erroneously induce the detection of extra communities and even a confusion of genuine communities in the BH approach. We have come to understand that those extra communities are produced by some biases created by the heterogeneity of the intrinsic weights $ q_{i} $'s; intuitively, nodes sharing the same intrinsic connection weights tend to create their own sub-cluster inside each community, thereby forming additionnal sub-communities inside the genuine communities.

\begin{figure}[h!]
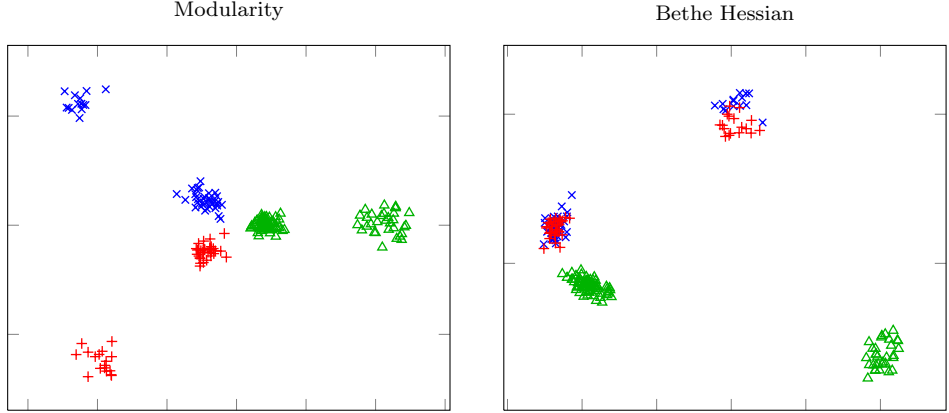

  \centering

  \caption{Two dominant eigenvectors (x-y axes) for $n=2000$, $K=3$ classes $ \mathcal C_1 $, $ \mathcal C_2 $ and $ \mathcal C_3 $ of sizes $|\mathcal{C}_{1}|=|\mathcal{C}_{2}|=\frac{n}{4}$, $ |\mathcal{C}_{3}|=\frac{n}{2} $, $\frac34$ of the nodes having $ q_i=0.1$ and $\frac14$ of the nodes having $ q_i=0.5$, matrix of weights ${\bf C}={\bf 1}_{3}{\bf 1}_{3}^{\sf T}+\frac{100}{\sqrt{n}} {\bf I}_3$. Colors and shapes correspond to ground truth classes.
  }
  \label{fig:2Dplot}
\end{figure}

In order to understand the aforementioned limitations and the different mechanisms into play when using spectral methods based on matrices derived from the adjacency matrix $ {\bf A} $ (such as the modularity matrix), we study here a generalized version of the normalized modularity matrix, given for $ \alpha \in \mathcal{A} \subset \mathbb{R} $, by
\begin{align}
\label{eq:L}
	{\bf L}_{\alpha} =(2m)^{\alpha}\frac{1}{\sqrt{n}}\mathbf{D}^{-\alpha} \left[ {\bf A} - \frac{{\bf d}{\bf d}^{\sf T}}{2m} \right] \mathbf{D}^{-\alpha}
\end{align}
where $ {\bf d} $ is the vector of degrees ($ d_i =\sum_{j=1}^{n} A_{ij}$), $ {\bf D} $ is the diagonal matrix of degrees (containing $ {\bf d} $ on the main diagonal) and $ m=\frac{1}{2}{\bf d}^{\sf T}{\bf 1}_n $ is the number of edges in the network. In particular, $ {\bf L}_{0} $ is the modularity matrix, $ {\bf L}_{\frac12} $ is a modularity equivalent to the Laplacian matrix and $ {\bf L}_{1} $ was studied in \cite{gulikers2015spectral,ali2016performance}.

In the dense DCSBM model where $ q_i=\mathcal{O}(1) $ (with growing $ n $), when the correction factors ${C_{g_ig_j}} $ differ by $ \mathcal{O}(1) $, the classification is trivial as asymptotically vanishing error rates are easily guaranteed. We thus place ourselves in the more interesting regime where $ {C_{g_ig_j}}=\mathcal{O}(1)$ individually but the ${C_{g_ig_j}} $'s differ by $ \mathcal{O}(n^{-\frac12}) $. We study the dominant eigenvalues and associated eigenvectors (used for classification) of $ {\bf L}_{\alpha} $ for large dimensional dense graphs following the DCSBM in the aforementioned ``non-trivial" regime.

In a nutshell, our main findings are as follows
\begin{itemize}
	\item We show that, as $ n \to \infty $, $ {\bf L}_{\alpha} $ can be arbitrarily well approximated by a theoretically tractable random matrix $ \tilde{{\bf L}}_{\alpha} $ which falls in the family of so-called spiked random matrix models and which allows for a thorough understanding of spectral methods based on $ {\bf L}_{\alpha} $. Those random matrices generally exhibit a phase transition beyond which useful information can be extracted from the eigenvectors associated to outlying eigenvalues (and below which nothing can be said). In our context, this phase transition corresponds to a \emph{community detectability threshold}, common in community detection algorithms analysis. We characterize exactly this phase transition for each value of $ \alpha $.
	\item We prove the existence of an optimal value $ \alpha_{\rm opt} $ of $ \alpha $ for which the aforementioned phase transition is maximally achievable.
	\item We provide a consistent estimator $ \hat{\alpha}_{\rm opt} $ of $ \alpha_{\rm opt} $ based on $ {\bf d} $ alone.
	\item A thorough analysis of the spiked random matrix model then shows that, to achieve consistent clustering in the DCSBM model, the dominant eigenvectors used for clustering should be pre-multiplied by $\mathbf{D}^{\alpha-1}$ prior to the low dimensional classification (step~(\ref{four2}) of the spectral algorithm described previously).\footnote{Our results generalize the work in \cite{gulikers2015spectral} where it was shown that for $ \alpha=1 $, a consistent clustering can be obtained by using the dominant eigenvectors of $ {\bf L}_\alpha $ without any regularization since for $ \alpha=1 $, $\mathbf{D}^{\alpha-1}={\bf I}_n$ where $ {\bf I}_n $ is the identity matrix of size $ n .$}
	\item A deeper study of those regularized eigenvectors allows us to
	\begin{itemize}
	\item improve the initial setting of the EM algorithm (in the step (\ref{four2}) of the spectral algorithm described above) in comparison with a random setting.
	\item find the theoretical clustering error rate of spectral community detection in the regime under study.
	\end{itemize}
	\item Numerical simulations (throughout the article) show that our methods outperform state-of-the-art techniques both on synthetic graphs and on real world networks.
\end{itemize}

\medskip

{\it Notations:} Vectors (matrices) are denoted by lowercase (uppercase) boldface letters. $ \left\{ {\bf v}_{a}\right\}_{a=1}^{n} $ is the column vector $ \mathbf{v} $ with (scalar or vector) entries $v_{a} $ and $ \left\{ {\bf V}_{ab}\right\}_{a,b=1}^{n} $ is the matrix $ \mathbf{V} $ with (scalar or matrix) entries $V_{ab} $. For a vector $ {\bf v} $, the operator $  \mathcal{D}(\mathbf{v})=\mathcal{D}\left( \left\{ {v}_{a} \right\}_{a=1}^{n} \right) $ is the diagonal matrix having the scalars $v_a$ down its diagonal and for a matrix $ {\bf V} $, $  \mathcal{D}(\bf{V})$ is the vector containing the diagonal entries of $ {\bf V} $. The vector $ \mathbf{1}_{n} \in \mathbb{R}^{n}$ stands for the column vector filled with ones. The Dirac measure at $ x $ is $ \delta_{x} $. The vector $ \mathbf{j}_{a} $ is the canonical vector of class $ \mathcal{C}_{a} $ defined by $(\mathbf{j}_{a})_{i}=\mathbf{\delta}_{i \in \mathcal{C}_{a}}$ and $\mathbf{J}=\left[\mathbf{j}_{1},\ldots,\mathbf{j}_{K}\right] \in \{0,1\}^{n \times K}$. The set $\mathbb{C}^+$ is $\{z\in\mathbb{C},~\Im[z]>0\}$.

\section{Preliminaries}
\label{sec:prelim}
This section describes the network model under study, which is based on the DCSBM defined in the previous section, and provides preliminary technical results.
\bigskip

We consider an $n$-node random graph with $K$ classes $\mathcal{C}_1,\ldots,\mathcal C_K$ of sizes $|\mathcal{C}_{k}|=n_k$. Each node is characterized by an intrinsic connexion weight $ q_{i} $ which affect the probability that this node gets attached to another node in the graph. A null model would consider that the existence of an edge between $ i $ and $ j $ has probability $ q_{i}q_{j} $. In order to take into account the membership of the nodes to some group, we define $ {\bf C}\in \mathbb{R}^{K\times K} $ as a matrix of class weights $C_{ab}$, independent of the $ q_{i} $'s, affecting the connection probability between nodes in $\mathcal C_a$ and nodes in $\mathcal C_b$. 

As such, following \cite{karrer2011stochastic}, the adjacency matrix $ {\bf A} $ of the graph generated from a DCSBM model has independent entries (up to symmetry) which are Bernoulli random variables with parameter $ P_{ij}=q_{i}q_{j}{C_{g_ig_j}}\in (0,1)$ where $ g_i $ is the group assignment of node $ i $. We set $ A_{ii}=0 $ for all $i$. In the dense regime under consideration, $ q_{i}=\mathcal{O}(1)$ and $ C_{g_ig_j}=\mathcal{O}(1)$ as $ n \rightarrow \infty $. For convenience of exposition and without loss of generality, we discard the nodes having no neighbor and we assume that node indices are sorted by clusters i.e, nodes $ 1 $ to $ n_1 $ constitutes $ \mathcal{C}_1 $, nodes $ n_1+1 $ to $ n_1+n_2 $ form $ \mathcal{C}_2 $ and so on.

The matrix under study is given by 
\begin{align}
\label{eq:L}
	{\bf L}_{\alpha} =(2m)^{\alpha}\frac{1}{\sqrt{n}}\mathbf{D}^{-\alpha} \left[ {\bf A} - \frac{{\bf d}{\bf d}^{\sf T}}{2m} \right] \mathbf{D}^{-\alpha}
\end{align}
for some $\alpha\in \mathcal{A}$, a compact subset of $\mathbb{R}$, where $ {\bf d}={\bf A1}_n $ , $ {\bf D}=\mathcal{D}({\bf d})$ and $ m=\frac{1}{2}{\bf d}^{\sf T}{\bf 1}_n $.

As stated in the introduction, we are mainly interested in a dense network regime where clustering is not asymptotically trivial. This regime is ensured by the following growth rate conditions.
\begin{assumption}\label{as1} 
As $ n \to \infty $, $ K $ remains fixed and, for all $i,j \in \{1,\ldots,n\}$
\begin{enumerate}
	\item $ {C_{g_ig_j}}=1+\frac{M_{g_ig_j}}{\sqrt{n}} $, where $ M_{g_ig_j}=\mathcal{O}(1) $; we shall denote ${\bf M}=\{M_{ab}\}_{a,b=1}^K$.
\item $q_{i}$ are i.i.d.\@ random variables with measure $\mu$ having compact support in $ (0,1) $.
\item $\frac{n_{i}}{n} \rightarrow c_{i}>0$ and we will denote $ \mathbf{c}=\left\{c_k\right\}_{k=1}^{K} $ (in particular, $ {\bf c}^{\sf T}{\bf 1}_K=1 $).
\end{enumerate}
\end{assumption}

Before delving into the main technical results, we provide a uniform consistent estimator of the (a priori unknown) intrinsic weight $ q_i $ which shall be used in the course of the article. 
\begin{lemma}
\label{lm:consistentEstimate}
Under Assumption~\ref{as1},
\begin{align} \label{eq:estimator}
	\max_{1\leq i\leq n}\left|q_{i}-\hat{q}_i\right|\to 0
\end{align}
almost surely, where $\hat{q}_i=\frac{d_i}{\sqrt{{\bf d}^{\sf T}{\bf 1}_n}} $. 
\end{lemma}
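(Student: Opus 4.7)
The plan is to show that both the numerator $d_i$ and the denominator $\sqrt{\mathbf{d}^{\sf T}\mathbf{1}_n}$ of $\hat q_i$ concentrate around their $q$-conditional expectations, and that item~1 of Assumption~\ref{as1} forces these expectations to a form yielding $\hat q_i\approx q_i$ uniformly in $i$.

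First, conditional on $(q_1,\ldots,q_n)$, $d_i=\sum_{j\ne i}A_{ij}$ is a sum of independent Bernoulli variables, so Hoeffding's inequality gives
$$\Pr\!\left(\,|d_i-\EE[d_i\mid q]|>\sqrt n\,\log n\,\big|\,q\right)\le 2\exp(-2\log^2 n).$$
A union bound over $i\in\{1,\ldots,n\}$ followed by Borel--Cantelli yields $\max_i|d_i-\EE[d_i\mid q]|=O(\sqrt n\log n)$ almost surely. The same argument applied to $\mathbf{d}^{\sf T}\mathbf{1}_n=2\sum_{i<j}A_{ij}$ (a sum of $\binom{n}{2}$ independent Bernoullis) gives $|\mathbf{d}^{\sf T}\mathbf{1}_n-\EE[\mathbf{d}^{\sf T}\mathbf{1}_n\mid q]|=O(n\log n)$ a.s.

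Next, I would expand the conditional expectations using $C_{g_ig_j}=1+M_{g_ig_j}/\sqrt n$, noting that both $q_i$ and $M_{ab}$ are bounded by fixed constants:
$$\EE[d_i\mid q]=q_i\sum_{j\ne i}q_j+\frac{q_i}{\sqrt n}\sum_{j\ne i}q_j M_{g_ig_j}=q_i\,n\bar q_n+O(\sqrt n),$$
uniformly in $i$, where $\bar q_n=\tfrac1n\sum_j q_j$; and likewise $\EE[\mathbf{d}^{\sf T}\mathbf{1}_n\mid q]=n^2\bar q_n^{\,2}+O(n^{3/2})$. Since $\supp\mu$ is compact in $(0,1)$, the SLLN gives $\bar q_n\to\int q\,d\mu(q)>0$ a.s., so $\bar q_n$ is bounded and bounded away from zero for all large $n$.

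Combining both steps, uniformly in $i$,
$$\hat q_i=\frac{d_i}{\sqrt{\mathbf{d}^{\sf T}\mathbf{1}_n}}=\frac{q_i\,n\bar q_n+O(\sqrt n\,\log n)}{n\bar q_n\bigl(1+O(n^{-1/2})\bigr)^{1/2}}=q_i+O\!\left(\frac{\log n}{\sqrt n}\right)\qquad\text{a.s.,}$$
which establishes $\max_i|\hat q_i-q_i|\to 0$ a.s. The only step that needs care is the \emph{uniformity} in $i$ of all the $O(\cdot)$ bounds: this follows because the Hoeffding deviation is identical for every $i$, the constants involved in the bias bound depend only on $\|q\|_\infty$ and $\|\mathbf M\|_\infty$, and $\bar q_n^{-1}$ is eventually uniformly bounded almost surely. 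No real obstacle beyond this bookkeeping is anticipated.
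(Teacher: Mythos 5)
Your proof is correct and in fact delivers an explicit rate ($O(\log n/\sqrt n)$ uniformly in $i$), which is more than the lemma requires. The overall strategy — concentration of the numerator and denominator via an exponential inequality, a union bound over $i$, Borel--Cantelli, and a bias computation using $C_{ab}=1+M_{ab}/\sqrt n$ — matches the paper's. The organizational details differ in two ways. First, the paper uses Bernstein's inequality (carrying around the variance $\sigma^2$ of the $A_{ij}$), whereas you use Hoeffding; since the variances are $\Theta(1)$ anyway, Hoeffding loses nothing and is a bit more elementary. Second, the paper decomposes $\hat q_i-q_i$ explicitly into three pieces $A$, $B$, $C$ (numerator fluctuation over the ideal denominator, denominator fluctuation, and deterministic bias), then controls the difference of reciprocal square roots in $B$ through several sub-events and conditioning on $n^{-2}\sum_{i,j}A_{ij}>\psi$; your approach instead concentrates $d_i$ and $\mathbf d^{\sf T}\mathbf 1_n$ separately about their $q$-conditional means and then Taylor-expands the ratio, which avoids the $B$-term bookkeeping entirely and is noticeably cleaner. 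One small point worth making explicit if you write this up: since $\mu$ has compact support in $(0,1)$, there is a deterministic $\epsilon>0$ with $q_i\geq\epsilon$ for all $i$, so $\bar q_n\geq\epsilon$ without even invoking the SLLN; this makes the ``$\bar q_n^{-1}$ eventually bounded'' claim trivial.
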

Note that $q_i$ can be retrieved from the empirical graph degrees irrespective of the class matrix ${\bf C}$, which is a direct (and important) consequence of Assumption~\ref{as1}-(1).
\bigskip

The first goal of the article is to study deeply the eigenstructure of $ {\bf L}_{\alpha} $. As can be observed, $ {\bf L}_{\alpha} $ has non independent entries as $ {\bf D} $ (and $ {\bf d} $) depend on $ {\bf A} $, and it does not follow a standard random matrix model. Our strategy is to approximate $ {\bf L}_{\alpha} $ by a more tractable random matrix which asymptotically preserves the eigenvalue distribution and isolated eigenvectors of $ {\bf L}_{\alpha} $. Before providing the complete proof in Section~ \ref{proofApprox}, let us give the main steps of the approximation. As per our model and assumptions, the random variable $ A_{ij} $ is Bernoulli distributed with parameter $ P_{ij}=q_{i}q_{j}(1+M_{g_ig_j}/ \sqrt{n}) $ so that its mean is $q_{i}q_{j}(1+M_{g_ig_j}/\sqrt{n})$ and variance $ q_{i}q_{j}(1+M_{g_ig_j}/\sqrt{n})(1-q_{i}q_{j}(1+M_{g_ig_j}/\sqrt{n})) $. We may thus write 
\[A_{ij}=q_{i}q_{j}+q_{i}q_{j}\frac{M_{g_ig_j}}{\sqrt{n}}+X_{ij}\]
where $ X_{ij} $, $ 1\leq i<j \leq n $, are independent zero mean random variables of variance $ q_{i}q_{j}(1-q_{i}q_{j}) +\mathcal{O}(n^{-\frac12})$. The normalized adjacency matrix is thus\footnote{Here subscript `$d,n^k$' stands for deterministic term of order $n^k$ and `$r,n^k$' for random term (of operator norm) of order $n^k$.} 
\begin{align}
\label{eq:A}
	\frac1{\sqrt{n}} \mathbf{A} &= \underbrace{\frac{1}{\sqrt{n}}\mathbf{q}\mathbf{q}^{\sf T} }_{\mathbf{A}_{d,\sqrt{n}}} + \underbrace{\frac{1}{n} \left\{ \mathbf{q}_{(a)}\mathbf{q}_{(b)}^{\sf T}M_{ab} \right\}_{a,b=1}^{K}}_{\mathbf{A}_{d,1}} + \underbrace{\frac{1}{\sqrt{n}}\mathbf{X}}_{\mathbf{A}_{r,1}},
 \end{align}
where $ \mathbf{q}_{(i)}=[ q_{n_{1}+\ldots+n_{i-1}+1},\ldots,q_{n_{1}+\ldots+n_{i}}]^{\sf T}\in \mathbb{R}^{n_i}$ ($n_{0}=0$) and $\mathbf{X}=\left\{X_{ij}\right\}_{i,j=1}^n$. Note that the right-hand side of \eqref{eq:A} is composed of a dominant (in terms of operator norm) matrix ${\bf A}_{d,\sqrt{n}}$ of order $ \|{\bf A}_{d,\sqrt{n}}\|=\mathcal{O}(\sqrt{n}) $ and of smaller order terms. From there, we may then provide a Taylor expansion of $ {\bf dd}^{\sf T} $, $ (2m)^{-1}=({\bf d}^{\sf T}{\bf 1}_n)^{-1} $, $(2m)^{\alpha}=({\bf d}^{\sf T}{\bf 1}_n)^{\alpha} $ and $ {\bf D}^{-\alpha} $ around their dominant terms. By grouping all those expansions consistently following the structure of Equation \eqref{eq:L} and by only keeping non-vanishing operator norm terms, we obtain the corresponding approximate of $ {\bf L}_\alpha $ as follows
 
 \begin{theorem} \label{approx}
	Let Assumption \ref{as1} hold and let $ {\bf L}_{\alpha} $ be given by \eqref{eq:L}. Then, for $ \mathbf{D}_{q}=\mathcal{D}({\bf q}) $, as $n\to \infty $, $ \| {\bf L}_{\alpha} - \tilde{\mathbf{L}}_{\alpha} \| \to 0 $ in operator norm, almost surely, where
	\begin{align*}
		\tilde{\mathbf{L}}_{\alpha} &=\frac{1}{\sqrt{n}}\mathbf{D}_{q}^{-\alpha}\mathbf{X}\mathbf{D}_{q}^{-\alpha}+\mathbf{U}{\bm \Lambda}\mathbf{U}^{\sf T},\\
	\mathbf{U} &= \begin{bmatrix}
	\frac{\mathbf{D}_{q}^{1-\alpha}{\bf J}}{\sqrt{n}} & \frac{\mathbf{D}_{q}^{-\alpha}{\bf X}{\bf 1}_{n}}{{\bf q}^{\sf T}{\bf 1}_{n}}
	\end{bmatrix}, \\
	{\bm \Lambda} &= \begin{bmatrix}
	\left({\bf I}_{K}-{\bf 1}_{K}{\bf c}^{T}\right){{\bf M}}\left({\bf I}_{K}-{\bf c}{\bf 1}_{K}^{T}\right) & -{\bf 1}_{K} \\
	-{\bf 1}_{K}^{T} & 0
	\end{bmatrix},
	\end{align*}
	where we recall that $\mathbf{J}=\left[\mathbf{j}_{1},\ldots,\mathbf{j}_{K}\right] \in \{0,1\}^{n \times K}$ and $(\mathbf{j}_{a})_{i}=\mathbf{\delta}_{g_i=a}$.
\end{theorem}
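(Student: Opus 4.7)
The approach is an operator-norm Taylor expansion of every random ingredient of $\mathbf{L}_\alpha$ around its leading deterministic proxy, organised so that the single source of ``signal'' at order $\Theta(n)$ in operator norm, namely the $\mathbf{q}\mathbf{q}^{\sf T}$ piece of $\mathbf{A}$, is cancelled exactly, to leading order, by the rank-one subtraction $\mathbf{d}\mathbf{d}^{\sf T}/(2m)$. After this cancellation only subleading pieces remain, and once rescaled by $(2m)^{\alpha}\mathbf{D}^{-\alpha}/\sqrt{n}$ they collect into the two summands announced: a pure-noise bilinear form and a rank-$(K+1)$ low-rank matrix.

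Concretely, I would proceed as follows. First, starting from the decomposition~\eqref{eq:A} of $\mathbf{A}/\sqrt{n}$, write $\mathbf{d}=(\mathbf{q}^{\sf T}\mathbf{1}_n)\mathbf{q}+\boldsymbol{\xi}$ with $\boldsymbol{\xi}=\tfrac{1}{\sqrt{n}}\mathbf{D}_q\mathbf{J}\mathbf{M}\mathbf{J}^{\sf T}\mathbf{q}+\mathbf{X}\mathbf{1}_n+o(\sqrt{n})$ and $2m=(\mathbf{q}^{\sf T}\mathbf{1}_n)^2(1+O(n^{-1/2}))$; the uniform control $\max_i|\xi_i|=O(\sqrt{n})$ a.s.\ follows either from Lemma~\ref{lm:consistentEstimate} or directly from a concentration bound on the independent Bernoulli entries $\{A_{ij}\}_j$ that define $d_i$. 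Second, Taylor expand $\mathbf{D}^{-\alpha}=(\mathbf{q}^{\sf T}\mathbf{1}_n)^{-\alpha}\mathbf{D}_q^{-\alpha}(\mathbf{I}_n+\mathbf{E}_1)$ entrywise, with the first-order correction $\mathbf{E}_1$ retained explicitly and of diagonal operator norm $O(n^{-1/2})$, and likewise $(2m)^\alpha=(\mathbf{q}^{\sf T}\mathbf{1}_n)^{2\alpha}(1+O(n^{-1/2}))$; uniformity is granted by Assumption~\ref{as1} ($q_i$ bounded away from $0$). Third, substitute into~\eqref{eq:L}, fully expand the product, and discard every term of operator norm $o(1)$ almost surely. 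The discards are controlled by three standard bounds: $\|\mathbf{X}\|=O(\sqrt{n})$ a.s.\ (symmetric matrix with independent bounded zero-mean entries), $\|\mathbf{X}\mathbf{1}_n\|=O(\sqrt{n})$ a.s., and $\|\mathbf{u}\mathbf{v}^{\sf T}\|=\|\mathbf{u}\|\|\mathbf{v}\|$ for rank-one pieces.

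The surviving terms then split cleanly into two groups. The unique purely $\mathbf{X}$-bilinear survivor is $\tfrac{1}{\sqrt{n}}\mathbf{D}_q^{-\alpha}\mathbf{X}\mathbf{D}_q^{-\alpha}$; everything else is either deterministic of rank at most $K$ (coming from $\mathbf{A}_{d,1}$ together with the first-order pieces of $\mathbf{D}^{-\alpha}$ and $(2m)^\alpha$ acting on the leading $\mathbf{q}\mathbf{q}^{\sf T}$) or of the form (deterministic vector)$\otimes\mathbf{X}\mathbf{1}_n$ (coming from the $\mathbf{X}\mathbf{1}_n$ contribution in $\boldsymbol{\xi}$), giving a rank-$(K+1)$ matrix. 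A short block factorisation, using $\mathbf{J}\mathbf{1}_K=\mathbf{1}_n$, rewrites this remainder as $\mathbf{U}\boldsymbol{\Lambda}\mathbf{U}^{\sf T}$ with the announced $\mathbf{U}$ and $\boldsymbol{\Lambda}$: the centered block $(\mathbf{I}_K-\mathbf{1}_K\mathbf{c}^{\sf T})\mathbf{M}(\mathbf{I}_K-\mathbf{c}\mathbf{1}_K^{\sf T})$ is precisely the residue obtained when the ``raw'' $\mathbf{M}$-contribution of $\mathbf{A}_{d,1}$ is corrected by the first-order pieces of $(2m)^\alpha$ and $\mathbf{D}^{-\alpha}$ (which project $\mathbf{M}$ off its $\mathbf{c}$-weighted row and column means), while the $-\mathbf{1}_K$ off-diagonal blocks record the mismatch between $\mathbf{X}\mathbf{1}_n$ and its expectation zero.

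The main obstacle is accounting, not technology: a brute-force expansion of $\mathbf{D}^{-\alpha}[\mathbf{A}-\mathbf{d}\mathbf{d}^{\sf T}/(2m)]\mathbf{D}^{-\alpha}$ after Taylor produces on the order of a dozen cross terms, several of which have operator norm as large as $\Theta(\sqrt{n})$ individually and become $o(1)$ only after pairing with a matching correction from the expansion of $\mathbf{d}\mathbf{d}^{\sf T}/(2m)$. The one nontrivial algebraic check is that these pairings produce exactly the projection $\mathbf{I}_K-\mathbf{1}_K\mathbf{c}^{\sf T}$ appearing in $\boldsymbol{\Lambda}$ rather than some near-miss centering; this reduces to a short $K\times K$ matrix identity involving the class proportions $\mathbf{c}$ and the vector $\mathbf{J}^{\sf T}\mathbf{q}/n$, after which the theorem follows.
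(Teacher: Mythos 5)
Your proposal follows exactly the same strategy as the paper's proof: decompose $\mathbf{A}/\sqrt{n}$ into a dominant $\mathbf{q}\mathbf{q}^{\sf T}/\sqrt{n}$ term plus the signal $\mathbf{D}_q\mathbf{J}\mathbf{M}\mathbf{J}^{\sf T}\mathbf{D}_q/n$ plus noise $\mathbf{X}/\sqrt{n}$, Taylor-expand $\mathbf{D}^{-\alpha}$ and $(2m)^{\alpha}$ around their deterministic leading parts, observe the exact cancellation of the $\mathbf{q}\mathbf{q}^{\sf T}$ contributions between $\mathbf{A}$ and $\mathbf{d}\mathbf{d}^{\sf T}/(2m)$, and collect the surviving $O(1)$-operator-norm terms into $\frac{1}{\sqrt{n}}\mathbf{D}_q^{-\alpha}\mathbf{X}\mathbf{D}_q^{-\alpha}+\mathbf{U}\boldsymbol{\Lambda}\mathbf{U}^{\sf T}$ using $\mathbf{J}\mathbf{1}_K=\mathbf{1}_n$ and $\frac1n\mathbf{J}^{\sf T}\mathbf{q}\to m_\mu\mathbf{c}$. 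The one small slip is the claim $\|\mathbf{X}\mathbf{1}_n\|=O(\sqrt{n})$ a.s.\ in Euclidean norm, which should read $O(n)$ — it is the \emph{entries} of $\mathbf{X}\mathbf{1}_n$ that are $O(\sqrt{n})$ — but this does not derail the argument since $\mathbf{X}\mathbf{1}_n$ always appears divided by $\mathbf{q}^{\sf T}\mathbf{1}_n=\Theta(n)$, so the second column of $\mathbf{U}$ indeed has $O(1)$ norm as required.
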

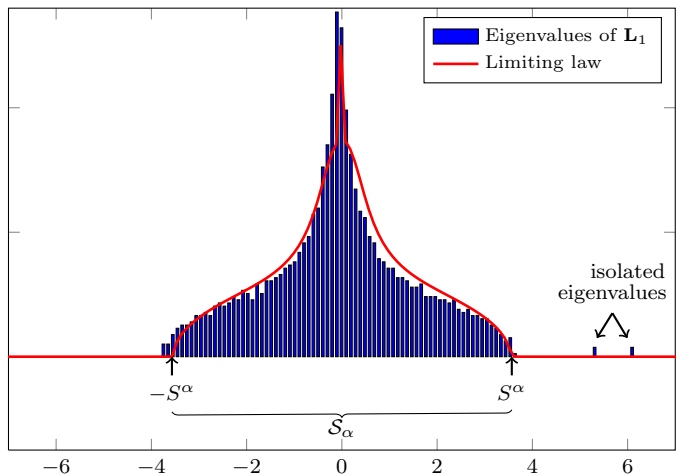
\begin{figure}[h!]
  \centering
  \begin{tikzpicture}[font=\footnotesize]
    \renewcommand{\axisdefaulttryminticks}{4} 
    \tikzstyle{every major grid}+=[style=densely dashed]       
    \tikzstyle{every axis y label}+=[yshift=-10pt] 
    \tikzstyle{every axis x label}+=[yshift=5pt]
    \tikzstyle{every axis legend}+=[cells={anchor=west},fill=white,
        at={(0.98,0.98)}, anchor=north east, font=\scriptsize ]
    \begin{axis}[
      height=0.5\textwidth,
      width=0.7\textwidth,
      xmin=-7,
      ymin=-0.15,
      xmax=7,
      ymax=0.56,
      yticklabels={},
      bar width=1pt,
      grid=none,
      ymajorgrids=false,
      scaled ticks=false,
      ]
      \addplot[area legend,ybar,mark=none,color=black,fill=blue] coordinates{
(-3.747266,0.020322)(-3.648850,0.020322)(-3.550434,0.035563)(-3.452018,0.045724)(-3.353603,0.050805)(-3.255187,0.050805)(-3.156771,0.055885)(-3.058355,0.066046)(-2.959939,0.066046)(-2.861523,0.071127)(-2.763107,0.066046)(-2.664691,0.081288)(-2.566275,0.086368)(-2.467859,0.081288)(-2.369443,0.086368)(-2.271027,0.096529)(-2.172611,0.091449)(-2.074195,0.106690)(-1.975779,0.101610)(-1.877363,0.091449)(-1.778947,0.116851)(-1.680531,0.101610)(-1.582115,0.121931)(-1.483699,0.121931)(-1.385283,0.127012)(-1.286867,0.132092)(-1.188451,0.142253)(-1.090035,0.147334)(-0.991619,0.152414)(-0.893203,0.167656)(-0.794788,0.182897)(-0.696372,0.193058)(-0.597956,0.228621)(-0.499540,0.238782)(-0.401124,0.304829)(-0.302708,0.340392)(-0.204292,0.421680)(-0.105876,0.553772)(-0.007460,0.528370)(0.090956,0.396277)(0.189372,0.325151)(0.287788,0.269265)(0.386204,0.233702)(0.484620,0.223541)(0.583036,0.193058)(0.681452,0.182897)(0.779868,0.157495)(0.878284,0.152414)(0.976700,0.142253)(1.075116,0.142253)(1.173532,0.127012)(1.271948,0.127012)(1.370364,0.121931)(1.468780,0.111770)(1.567196,0.111770)(1.665611,0.116851)(1.764027,0.096529)(1.862443,0.096529)(1.960859,0.096529)(2.059275,0.091449)(2.157691,0.091449)(2.256107,0.086368)(2.354523,0.091449)(2.452939,0.076207)(2.551355,0.071127)(2.649771,0.076207)(2.748187,0.066046)(2.846603,0.066046)(2.945019,0.060966)(3.043435,0.060966)(3.141851,0.050805)(3.240267,0.045724)(3.338683,0.035563)(3.437099,0.025402)(3.535515,0.030483)(3.633931,0.005080)(3.732347,0.000000)(3.830763,0.000000)(3.929179,0.000000)(4.027595,0.000000)(4.126011,0.000000)(4.224426,0.000000)(4.322842,0.000000)(4.421258,0.000000)(4.519674,0.000000)(4.618090,0.000000)(4.716506,0.000000)(4.814922,0.000000)(4.913338,0.000000)(5.011754,0.000000)(5.110170,0.000000)(5.208586,0.000000)(5.307002,0.015080)(5.405418,0.000000)(5.503834,0.000000)(5.602250,0.000000)(5.700666,0.000000)(5.799082,0.000000)(5.897498,0.000000)(5.995914,0.000000)(6.094330,0.015080)
      };
      \addplot[mark=none,color=red,line width=1pt] coordinates{
(-10,0)(-4.747266,0.000022)(-4.648850,0.000024)(-4.550434,0.000026)(-4.452018,0.000028)(-4.353603,0.000030)(-4.255187,0.000034)(-4.156771,0.000037)(-4.058355,0.000042)(-3.959939,0.000049)(-3.861523,0.000059)(-3.763107,0.000074)(-3.664691,0.000105)(-3.566275,0.000254)(-3.467859,0.023133)(-3.369443,0.035697)(-3.271027,0.044721)(-3.172611,0.052119)(-3.074195,0.058541)(-2.975779,0.064297)(-2.877363,0.069565)(-2.778947,0.074463)(-2.680531,0.079077)(-2.582115,0.083471)(-2.483699,0.087699)(-2.385283,0.091806)(-2.286867,0.095836)(-2.188451,0.099829)(-2.090035,0.103825)(-1.991619,0.107868)(-1.893203,0.112004)(-1.794788,0.116284)(-1.696372,0.120771)(-1.597956,0.125535)(-1.499540,0.130662)(-1.401124,0.136258)(-1.302708,0.142457)(-1.204292,0.149426)(-1.105876,0.157384)(-1.007460,0.166615)(-0.909044,0.177497)(-0.810628,0.190521)(-0.712212,0.206310)(-0.613796,0.225546)(-0.515380,0.248648)(-0.416964,0.274962)(-0.318548,0.301824)(-0.220132,0.325096)(-0.121716,0.341227)(-0.023300,0.500002)(0.075116,0.346223)(0.173532,0.333778)(0.271948,0.313545)(0.370364,0.287854)(0.468780,0.260818)(0.567196,0.236005)(0.665611,0.214950)(0.764027,0.197610)(0.862443,0.183361)(0.960859,0.171535)(1.059275,0.161576)(1.157691,0.153055)(1.256107,0.145649)(1.354523,0.139109)(1.452939,0.133246)(1.551355,0.127911)(1.649771,0.122987)(1.748187,0.118379)(1.846603,0.114009)(1.945019,0.109812)(2.043435,0.105731)(2.141851,0.101718)(2.240267,0.097729)(2.338683,0.093722)(2.437099,0.089656)(2.535515,0.085490)(2.633931,0.081181)(2.732347,0.076679)(2.830763,0.071925)(2.929179,0.066844)(3.027595,0.061337)(3.126011,0.055258)(3.224426,0.048374)(3.322842,0.040248)(3.421258,0.029783)(3.519674,0.012672)(3.618090,0.000138)(3.716506,0.000085)(3.814922,0.000065)(3.913338,0.000053)(4.011754,0.000045)(4.110170,0.000040)(4.208586,0.000035)(4.307002,0.000032)(4.405418,0.000029)(4.503834,0.000027)(4.602250,0.000025)(4.700666,0.000023)(4.799082,0.000021)(4.897498,0.000020)(4.995914,0.000019)(5.094330,0.000018)(5.192746,0.000017)(5.291162,0.000016)(5.389578,0.000015)(5.487994,0.000014)(5.586410,0.000013)(5.684825,0.000013)(5.783241,0.000012)(5.881657,0.000012)(5.980073,0.000011)(6.078489,0.000011)(6.176905,0.000010)(6.275321,0.000010)(6.373737,0.000010)(6.472153,0.000009)(6.570569,0.000009)(6.668985,0.000009)(6.767401,0.000008)(6.865817,0.000008)(6.964233,0.000008)(7.062649,0.000007)(10,0)
      };
      \draw[->,thick] (axis cs:-3.5680,-0.03) -- (axis cs:-3.5680,0) node [below,pos=0,font=\footnotesize] (ms) {$-S^\alpha$};
      \draw[->,thick] (axis cs:3.5680,-0.03) -- (axis cs:3.5680,0) node [below,pos=0,font=\footnotesize] (ps) {$S^\alpha$};
     \draw[decorate,decoration={brace,mirror}] (ms.south) -- node[below] {$\mathcal S_\alpha$} (ps.south);
\draw[->,thick] (axis cs:5.68,0.07) -- (axis cs:5.354324,0.03)node [above,pos=-1,font=\footnotesize] (fs) {isolated};
\draw[->,thick] (axis cs:5.68,0.07) -- (axis cs:5.995914,0.03)node [above,pos=0,font=\footnotesize] (fs) {eigenvalues};
      %
      %
      \legend{{Eigenvalues of ${\bf L}_1$},{Limiting law}}
    \end{axis}
  \end{tikzpicture}

\caption{Histogram of the eigenvalues of ${\bf L}_{1}$, $K=3$, $n=2000$, $c_1=0.3, c_2=0.3, c_3=0.4$, $\mu=\frac12\delta_{q_{(1)}}+\frac12\delta_{q_{(2)}}$, $q_{(1)}=0.4 $, $q_{(2)}=0.9 $, ${\bf M}$ defined by $M_{ii}=12$, $M_{ij}=-4, i\neq j$.}
  \label{fig:hist_L_synthetic}
\end{figure}

As far as the spectral analysis is concerned, $ \tilde{\mathbf{L}}_{\alpha} $ is asymptotically equivalent to $\mathbf{L}_\alpha$, as they asymptotically share the same set of eigenvalues and isolated eigenvectors.\footnote{That is, eigenvectors associated to eigenvalues found at non-vanishing distance of any other eigenvalue.} Thus, for large enough $ n $, the spectral analysis of $\mathbf{L}_\alpha$ can be performed through that of $ \tilde{\mathbf{L}}_{\alpha} $. Interestingly, $ \tilde{\mathbf{L}}_{\alpha} $ is an additive spiked random matrix \cite{baik2005phase} as it is the sum of a standard full rank random matrix $n^{-\frac12}\mathbf{D}_{q}^{-\alpha}\mathbf{X}\mathbf{D}_{q}^{-\alpha}$ (symmetric matrix having independent entries of zero mean and $ \mathcal{O}(n^{-1}) $ variances) and a low rank matrix $\mathbf{U}{\bm \Lambda}\mathbf{U}^{\sf T}$. As shown in Figure~\ref{fig:hist_L_synthetic}, the spectrum (eigenvalue distribution) of spiked random matrices is generally composed of (one or several) bulks of concentrated eigenvalues and, when a phase transition is met, of additionnal eigenvalues which isolate from the aforementioned bulks. The eigenvectors associated to those isolated eigenvalues contain important information related to the low rank matrix up to some noise; the more those eigenvalues isolate from the bulks, the lesser noise is contained in the corresponding eigenvectors. More specifically, the eigenvectors of the spiked random matrix become more correlated to the eigenvectors of the low rank matrix as the isolated eigenvalues are far away from the phase transition threshold. 

From Theorem~\ref{approx}, we see that the low rank matrix $ \mathbf{U}{\bm \Lambda}\mathbf{U}^{\sf T} $ contains the matrix $\mathbf{D}_{q}^{1-\alpha}{\bf J}$; so in our case, when the phase transition is met, the eigenvectors of $ \tilde{\mathbf{L}}_{\alpha} $ will be correlated to some extent to $\mathbf{D}_{q}^{1-\alpha}{\bf J}$. But, for a consistent clustering, one expects instead the vectors used for classification to be correlated to the canonical vectors $ {\bf j}_a $, $ 1 \leq a \leq K $. \emph{An important outcome of this first preliminary result is thus that the eigenvectors of $ {\bf L}_\alpha $ should be pre-multiplied by $ \mathbf{D}^{\alpha-1} $ prior to the classification}.\footnote{As far as the eigenvectors are concerned, we may freely replace $ {\bf D}_q $ (unknown in practice) by $ {\bf D} $ (which can be computed from the observed graph) since, from Lemma~\ref{lm:consistentEstimate}, the vector of degrees $ {\bf d} $ is, up to the scale factor $ \frac{1}{\sqrt{{\bf d}^{\sf T}{\bf 1}_n}} $, a uniform consistent estimator of the vector of intrinsic weights $ {\bf q} $ and thus $\|\frac{{\bf D}}{\sqrt{{\bf d}^{\sf T}{\bf 1}_n}}-{\bf D}_q\|\to 0$ almost surely.} This first result helps correcting the biases (creation of artificial classes) introduced by the degree heterogeneity when using classical spectral methods (as observed earlier in Figure~\ref{fig:2Dplot}). As shown in Figure~\ref{fig:correct}, which assumes the same setting as Figure~\ref{fig:2Dplot}, when the aforementioned eigenvector regularization is performed prior to EM or k-means classification, the genuine communities are correctly recovered. 
\begin{figure}[h!]
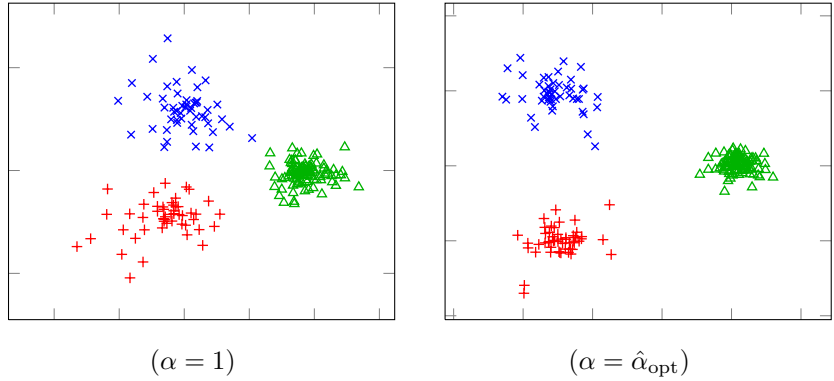

\centering

  \caption{Two dominant eigenvectors of $ {\bf L}_\alpha $ pre-multiplied by $ \mathbf{D}^{\alpha-1} $(x-y axes) for $n=2000$, $K=3$, $\mu=\frac34\delta_{q_{(1)}}+\frac14\delta_{q_{(2)}}$, $q_{(1)}=0.1$, $q_{(2)}=0.5$, $c_1=c_2=\frac14$, $c_3=\frac12$, ${\bf M}=100{\bf I}_3$ with $\hat{\alpha}_{\rm opt}$ defined in Section \ref{subsec:optimal}. Same setting as Figure~\ref{fig:2Dplot}.
  }
  \label{fig:correct}
\end{figure}

As is classical in the analysis of spiked random matrices, our next task is to study the isolated eigenvalues of $ {\bf L}_\alpha $ and their associated eigenvectors. This will in particular allow us to $\textit{i)}$ evidence the phase transition phenomenon discussed earlier which corresponds here to a \emph{community detectability threshold} and $\textit{ii)}$ evaluate the per-class average means and average covariances of the eigenvectors used for clustering, thereby leading to the clustering performances. This is the objective of the next sections.

\section{Main Results}
\label{sec:results}
\subsection{Eigenvalues}

In this section, we are interested in the localization of the eigenvalues of $ {\bf L}_\alpha $. Since $ {\bf L}_\alpha $ is asymptotically equivalent to a spiked random matrix, its eigenvalues are expected to be asymptotically the same as the eigenvalues of the full rank ``noise" matrix $n^{-\frac12}\mathbf{D}_{q}^{-\alpha}\mathbf{X}\mathbf{D}_{q}^{-\alpha} $ which are essentially concentrated in bulks, but possibly for finitely many of them which can isolate from those bulks when some eigenvalues  of the low rank matrix $ \mathbf{U}{\bm \Lambda}\mathbf{U}^{\sf T} $ are sufficiently large \cite{baik2006eigenvalues}.

To study the eigenvalues and eigenvectors of the spiked random matrix $ {\bf L}_\alpha $, we follow standard random matrix approaches \cite{benaych2012singular} and \\ \cite{hachem2013subspace}. We will first determine the support of the limiting eigenvalue distribution of $n^{-\frac12}\mathbf{D}_{q}^{-\alpha}\mathbf{X}\mathbf{D}_{q}^{-\alpha} $ where most of the eigenvalues of $ {\bf L}_\alpha $ concentrate (in bulks). Then, we will find the plausible isolated eigenvalues of $ {\bf L}_\alpha $ only induced by the low rank matrix $ \mathbf{U}{\bm \Lambda}\mathbf{U}^{\sf T} $ and which lie outside the aforementioned main support.

\begin{theorem}[Limiting spectrum]
\label{determinst2}
Let $ \pi^{\alpha}=\frac{1}{n}\sum_{i=1}^{n}\delta_{\lambda_{i}({\bf L}_\alpha)} $ be the empirical spectral distribution (e.s.d.) of $ {\bf L}_\alpha $. Then, as $ n \rightarrow \infty $, $ \pi^{\alpha} \rightarrow \bar{\pi}^{\alpha}$ almost surely where $ \bar{\pi}^{\alpha} $ is a probability measure with compact support $ \mathcal{S}^{\alpha} $ defined by its Stieltjes transform $E_0^\alpha(z)$ with $E_0^\alpha(z)\equiv \int \left(t-z\right)^{-1}\mathrm{d}\bar{\pi}^{\alpha}(t)=e_{00}^{\alpha}(z)$ ($ z \in \mathbb{C}^{+}\setminus \mathcal{S}^{\alpha} $) where, for $ a,b \in \mathbb{Z} $, 
\begin{equation}
\label{eq:StieltjesE}
e_{ab}^{\alpha}(z)=\int \frac{q^{a-2b\alpha}}{-z-E_1^\alpha(z)q^{1-2\alpha}+E_2^\alpha(z)q^{2-2\alpha}}\mu(dq)
\end{equation}
with $ \left(E_1^\alpha(z),E_2^\alpha(z)\right) $ for $ z \in \mathbb{C}^{+} $ the unique solution in $(\mathbb{C}^{+})^2$ of $ E_1^\alpha(z)=e_{11}^{\alpha}(z) $ and $  E_2^\alpha(z)=e_{21}^{\alpha}(z).$
\end{theorem}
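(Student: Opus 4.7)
The plan is to combine Theorem~\ref{approx} with a standard resolvent analysis of a generalized Wigner-type matrix with a separable variance profile, and then read off the self-consistent equations defining the Stieltjes transform $E_0^\alpha$. First I would invoke Theorem~\ref{approx} to replace $\mathbf{L}_\alpha$ by $\tilde{\mathbf{L}}_\alpha = n^{-1/2}\mathbf{D}_q^{-\alpha}\mathbf{X}\mathbf{D}_q^{-\alpha}+\mathbf{U}\bm{\Lambda}\mathbf{U}^{\sf T}$ without affecting the limiting e.s.d.\ (operator-norm control $\|\mathbf{L}_\alpha-\tilde{\mathbf{L}}_\alpha\|\to 0$ implies that the Kolmogorov distance between their e.s.d.s vanishes). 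Since $\mathbf{U}\bm{\Lambda}\mathbf{U}^{\sf T}$ has rank at most $K+1$, the rank inequality for Hermitian perturbations lets me discard it at the bulk level, so it suffices to determine the limiting e.s.d.\ of the \emph{noise} matrix $\mathbf{W} := n^{-1/2}\mathbf{D}_q^{-\alpha}\mathbf{X}\mathbf{D}_q^{-\alpha}$.

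Next I would recognize $\mathbf{W}$ as a real symmetric matrix with independent centred entries (up to symmetry) and a \emph{separable} variance profile
\begin{equation*}
\mathrm{Var}(W_{ij}) = \tfrac{1}{n}\sigma_{ij}^2,\qquad \sigma_{ij}^2 = q_i^{1-2\alpha}\,q_j^{1-2\alpha}\,(1-q_iq_j)+O(n^{-1/2}),
\end{equation*}
which, conditionally on the i.i.d.\ weights $q_1,\ldots,q_n\sim\mu$, falls into the classical framework of Girko/Shlyakhtenko-type deterministic equivalents. Writing $\mathbf{G}(z)=(\mathbf{W}-z\mathbf{I})^{-1}$, the Schur-complement identity together with Hanson--Wright-type control of quadratic forms in the $i$-th row of $\mathbf{W}$ gives, up to a vanishing error,
\begin{equation*}
G_{ii}(z) \;\approx\; \frac{1}{-z \;-\; q_i^{1-2\alpha}\,\widehat{E}_1(z)\;+\;q_i^{2-2\alpha}\,\widehat{E}_2(z)},
\end{equation*}
where I have set $\widehat{E}_k(z) := \frac{1}{n}\sum_j q_j^{k-2\alpha}G_{jj}(z)$ for $k\in\{1,2\}$.

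From there I would close the system. Substituting the previous display into the definitions of $\widehat{E}_1, \widehat{E}_2$ and using the strong law of large numbers on the i.i.d.\ sample $q_i$ (legitimate because $\mu$ is compactly supported in $(0,1)$, making all integrands bounded once $\Im z$ is bounded away from $0$) yields
\begin{equation*}
\widehat{E}_k(z)\xrightarrow{\mathrm{a.s.}} E_k^\alpha(z) = \int \frac{q^{k-2\alpha}}{-z - q^{1-2\alpha}E_1^\alpha(z) + q^{2-2\alpha}E_2^\alpha(z)}\,\mu(dq),
\end{equation*}
and the $k=0$ analogue gives $E_0^\alpha(z)=\frac{1}{n}\operatorname{tr}\mathbf{G}(z)\to \int (-z - q^{1-2\alpha}E_1^\alpha+q^{2-2\alpha}E_2^\alpha)^{-1}\mu(dq)$. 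Almost-sure convergence of $\pi^\alpha$ to a probability measure $\bar{\pi}^\alpha$ with Stieltjes transform $E_0^\alpha$ then follows from the Stieltjes-transform continuity theorem combined with a Borel--Cantelli argument on a countable dense subset of $\mathbb{C}^+$.

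The main obstacle is the well-posedness of the fixed-point system: for each $z\in\mathbb{C}^+$ I must show that $(E_1^\alpha(z),E_2^\alpha(z))$ is the \emph{unique} solution in $(\mathbb{C}^+)^2$ of $E_k=e_{kk\!-\!1+[k=1]}^\alpha(z)$, and that the resulting $E_0^\alpha$ is the Stieltjes transform of a compactly supported probability measure. Existence I would obtain by a standard Schauder/Brouwer argument on a suitable compact subset of $(\mathbb{C}^+)^2$ (the separable structure of $\sigma_{ij}^2$ keeps the image in $\mathbb{C}^+$ because $\Im z>0$ is amplified by positive mass). Uniqueness I would derive by exhibiting the map as a strict contraction in an appropriate hyperbolic/Earle--Hamilton-type metric on $(\mathbb{C}^+)^2$, using that the denominators stay uniformly away from $0$. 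Compact support of $\bar{\pi}^\alpha$ follows from the boundedness of $\sigma_{ij}^2$ via a Bai--Yin-type operator-norm bound on $\mathbf{W}$. All remaining steps---concentration of quadratic forms, equivalent trace approximations, and the continuous limit in $\mu$---are routine once uniform-in-$z$ lower bounds on $\Im G_{ii}(z)$ are in place to prevent any denominators from degenerating.
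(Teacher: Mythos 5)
Your plan is correct and arrives at the same self-consistent system as the paper, but you establish the key resolvent estimate by a different route. The paper first invokes universality to replace the entries of $n^{-1/2}\mathbf{D}_q^{-\alpha}\mathbf{X}\mathbf{D}_q^{-\alpha}$ by Gaussians with the same first two moments, and then runs the Gaussian-interpolation machinery (Stein's lemma plus the Nash--Poincar\'e inequality, their Lemmas~\ref{stein} and~\ref{nash}) to derive the deterministic equivalent for the resolvent (Lemma~\ref{lemma3}). You instead propose the Schur-complement expansion of $G_{ii}(z)$ together with Hanson--Wright control of the row quadratic forms, which avoids the universality reduction entirely and is the approach used in, e.g., the Hachem--Loubaton--Najim line of work that the paper cites. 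Both routes deliver the same diagonal ansatz $G_{ii}(z)\approx (-z-q_i^{1-2\alpha}\widehat E_1(z)+q_i^{2-2\alpha}\widehat E_2(z))^{-1}$ and close to the same fixed-point system, with the law of large numbers on the i.i.d.\ $q_i$ giving the limiting integral forms $e_{11}^\alpha,e_{21}^\alpha,e_{00}^\alpha$. The Gaussian route is somewhat shorter for the later double-resolvent identities the paper needs (Lemma~\ref{determinst3}), while your Schur/Hanson--Wright route is arguably more self-contained for the bulk result alone.

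One small imprecision: you describe the variance profile as \emph{separable}, but $\sigma_{ij}^2 = q_i^{1-2\alpha}q_j^{1-2\alpha}(1-q_iq_j) = q_i^{1-2\alpha}q_j^{1-2\alpha}-q_i^{2-2\alpha}q_j^{2-2\alpha}$ is a \emph{difference of two} separable profiles, not one --- this is exactly why the self-consistent system is two-dimensional ($E_1^\alpha, E_2^\alpha$) rather than one-dimensional. Your ansatz and fixed-point system are set up correctly, so this is only a terminological slip, but it is worth getting right since a genuinely separable profile would give a single scalar equation.
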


\bigskip
For future use, we further define for $ z, \tilde{z} \in \mathbb{C} \setminus \mathcal{S}^{\alpha}$
\begin{equation}
\label{eDouble}
e_{ab;2}^{\alpha}(z,\tilde{z})=\int \frac{q^{a-2b\alpha}\mu(dq) }{(-z-E_1^\alpha(z)q^{1-2\alpha}+E_2^\alpha(z)q^{2-2\alpha})(-\tilde{z}-E_1^\alpha(\tilde{z})q^{1-2\alpha}+E_2^\alpha(\tilde{z})q^{2-2\alpha})}
\end{equation}
and
\begin{equation}
\label{eTriple}
e_{ab;3}^{\alpha}(z,\tilde{z})=\int \frac{q^{a-2b\alpha}\mu(dq) }{(-z-E_1^\alpha(z)q^{1-2\alpha}+E_2^\alpha(z)q^{2-2\alpha})^{2}(-\tilde{z}-E_1^\alpha(\tilde{z})q^{1-2\alpha}+E_2^\alpha(\tilde{z})q^{2-2\alpha})}.
\end{equation}
\begin{remark}
The support $ \mathcal{S}^{\alpha} $ is symmetric i.e., $ \bar{\pi}^{\alpha}([a,b])=\bar{\pi}^{\alpha}([-b,-a]) $. We have in particular $ S^{\alpha}_{-}=-S^{\alpha}_{+}=-S^{\alpha}$ where we denote $S^{\alpha}_{+} \triangleq \sup \mathcal{S}^{\alpha}$ and $S^{\alpha}_{-} \triangleq \inf \mathcal{S}^{\alpha}.$ See Figure~\ref{fig:hist_L_synthetic} for illustration.
\end{remark}
\begin{remark}[Semi-circle law]
For homogenous graphs where $ \forall i , q_i=q_0$, $ E_0^\alpha(z)=\frac{1}{-z-E_0^\alpha(z)} $ which is the Stieltjes transform of the well known semi-circle probability measure\footnote{Note that the limiting value $ E_0^\alpha(z) $ does not depend on $ \alpha $ in this case.} with support $ \mathcal{S}^{\alpha}=[-2,2] $ and density given by
$$ \bar{\pi}^{\alpha}(\mathrm{d}t)=\frac{1}{2\pi}\sqrt{(4-t^2)^{+}}\mathrm{d}t.$$
\end{remark}
\bigskip

Since $ \tilde{{\bf L}}_{\alpha} $ and $ \frac{1}{\sqrt{n}}\mathbf{D}_{q}^{-\alpha}\mathbf{X}\mathbf{D}_{q}^{-\alpha} $ only differ by a finite rank matrix from Theorem~\ref{approx}, the e.s.d.\@ of $ \frac{1}{\sqrt{n}}\mathbf{D}_{q}^{-\alpha}\mathbf{X}\mathbf{D}_{q}^{-\alpha} $ also converges weakly to $ \bar{\pi}^{\alpha} $ with support $ \mathcal{S}^{\alpha} $. In addition, we have
\begin{proposition}[No eigenvalues outside the support]
	\label{prop1}
	Following the statement of Theorem~\ref{determinst2}, let $ S^\alpha_{-} $ and $ S^\alpha_{+} $ be respectively the left and right edges of $ \mathcal{S}^\alpha $. Then, for any $ \epsilon>0 $, by letting $ \mathcal{S}^\alpha_{\epsilon}=[S^\alpha_{-}-\epsilon;S^\alpha_{+}+\epsilon] $ , for all large $ n $ almost surely, 
	$$ \Big\{\lambda_i\Big({\bf D}_q^{-\alpha}\frac{{\bf X}}{\sqrt{n}}{\bf D}_q^{-\alpha}\Big), 1\leq i \leq n\Big\} \cap (\mathbb{R} \setminus \mathcal{S}^\alpha_{\epsilon})=\emptyset. $$
	\end{proposition}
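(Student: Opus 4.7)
The statement is a ``no eigenvalues outside the support'' result for the generalized Wigner matrix $\mathbf{W}_n \triangleq n^{-\frac12}\mathbf{D}_q^{-\alpha}\mathbf{X}\mathbf{D}_q^{-\alpha}$, whose entries are independent centered random variables (up to symmetry) with variance profile $\sigma_{ij}^2/n=q_i^{1-2\alpha}q_j^{1-2\alpha}(1-q_iq_j)/n$. The plan is to follow the classical Bai--Silverstein route: establish uniform convergence of the Stieltjes transform $m_n(z)=\tfrac1n\tr(\mathbf{W}_n-z\mathbf{I})^{-1}$ to $E_0^\alpha(z)$ on compact subsets of $\mathbb{C}\setminus\mathcal{S}^\alpha$, and then use a contour-integration argument to rule out the existence of outlying eigenvalues.

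\textbf{Step 1 (convergence of $m_n$ on $\mathbb C^+$).} I would first verify that, for each fixed $z\in\mathbb C^+$, $m_n(z)\to E_0^\alpha(z)$ almost surely. This splits into (a) concentration of $m_n(z)$ around $\mathbb E[m_n(z)]$, which follows from a standard martingale difference decomposition along the rows/columns of $\mathbf W_n$ combined with Burkholder's inequality, yielding $\mathrm{Var}(m_n(z))=O(n^{-2}|\Im z|^{-c})$ and hence almost sure convergence via Borel--Cantelli; and (b) identification of the deterministic equivalent, obtained by applying the Schur complement formula to the diagonal entries of $(\mathbf W_n-z\mathbf I)^{-1}$ together with standard quadratic-form concentration (trace lemma) for the bounded entries $X_{ij}$. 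Writing the asymptotic self-consistent relation and separating the $q_i$-dependent moments naturally produces the two-equation fixed-point system $E_1^\alpha(z)=e_{11}^\alpha(z)$, $E_2^\alpha(z)=e_{21}^\alpha(z)$ of Theorem \ref{determinst2}, whose unique solution in $(\mathbb C^+)^2$ characterizes $E_0^\alpha(z)=e_{00}^\alpha(z)$.

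\textbf{Step 2 (extension up to $\mathbb R\setminus \mathcal S^\alpha$).} Since $m_n$ and $E_0^\alpha$ are analytic on $\mathbb C^+$, and since $E_0^\alpha$ admits an analytic continuation across every point of $\mathbb R\setminus\mathcal{S}^\alpha$ (by the compactness of $\mathcal S^\alpha$ and the smoothness of the fixed-point system away from the support), Vitali's convergence theorem upgrades pointwise almost-sure convergence to uniform almost-sure convergence on any compact subset $K\subset\mathbb C\setminus\mathcal S^\alpha$. In particular, I can fix a rectangular contour $\Gamma\subset\mathbb C\setminus\mathcal S^\alpha_{\epsilon/2}$ that encloses the set $[S^\alpha_+ +\epsilon,\,R]\cup[-R,\,S^\alpha_- -\epsilon]$ for some large $R$ (the operator norm of $\mathbf W_n$ being bounded above by $R$ almost surely, by a standard $\varepsilon$-net / Bai--Yin-type argument applied to the bounded-entry Wigner-type matrix).

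\textbf{Step 3 (contour integration).} Suppose, towards a contradiction, that with positive probability $\mathbf W_n$ has, infinitely often along a subsequence, an eigenvalue $\lambda_n\in \mathbb R\setminus\mathcal S^\alpha_\epsilon$. Then $\tfrac{1}{2\pi i}\oint_\Gamma m_n(z)\,dz\ge \tfrac{1}{n}$, while by uniform convergence (Step 2) and Cauchy's theorem applied to the analytic function $E_0^\alpha$, $\tfrac{1}{2\pi i}\oint_\Gamma m_n(z)\,dz\to \tfrac{1}{2\pi i}\oint_\Gamma E_0^\alpha(z)\,dz = 0$, a contradiction. Thus, eventually almost surely, $\mathbf W_n$ has no eigenvalues in $\mathbb R\setminus\mathcal S^\alpha_\epsilon$.

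\textbf{Main obstacle.} The hard part is Step 1(b): correctly identifying the asymptotic fixed-point system in the presence of the variance profile $q_i^{1-2\alpha}q_j^{1-2\alpha}(1-q_iq_j)$. This requires careful bookkeeping of the Schur-complement residuals and disentangling the contributions indexed by different powers of $q_i$, which is precisely what produces the auxiliary quantities $e_{ab}^\alpha,e_{ab;2}^\alpha,e_{ab;3}^\alpha$ introduced just before the proposition. A subsidiary difficulty is the stability analysis of this fixed-point system on a neighborhood of the real axis (away from $\mathcal{S}^\alpha$), needed to make Vitali's extension rigorous; this is handled by showing that the Jacobian of the $(E_1^\alpha,E_2^\alpha)$ system is invertible off the support, so that the analytic continuation exists and the convergence of $m_n$ propagates to the real line.
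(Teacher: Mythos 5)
Your overall strategy (Bai--Silverstein route: Stieltjes transform convergence, analytic continuation off the support, contour integration) is the right one, and is indeed the route the paper implicitly relies on: the paper does not prove Proposition~\ref{prop1} itself but defers to \cite{bai1998no} and \cite{hachem2007deterministic}. However, there is a genuine gap in your Step~3, and an omission that the paper explicitly flags.

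The contradiction in Step~3 does not hold as written. You argue that an outlier eigenvalue inside $\Gamma$ forces $\frac{1}{2\pi i}\oint_\Gamma m_n\,dz\ge\frac1n$, while the same quantity tends to $\oint_\Gamma E_0^\alpha\,dz=0$. But $\frac1n$ itself tends to $0$, so these two facts are perfectly compatible; you have not produced a contradiction. What is really needed is control of the \emph{integer} $N_n=n\cdot\frac{1}{2\pi i}\oint_\Gamma m_n\,dz$ (the eigenvalue count inside $\Gamma$), i.e.\ a bound of the form $n\oint_\Gamma(m_n-E_0^\alpha)\,dz\to 0$ almost surely, equivalently $\sup_{z\in\Gamma}|m_n(z)-E_0^\alpha(z)|=o(n^{-1})$ a.s. Uniform convergence from Vitali's theorem (Step~2) only gives $o(1)$, and your variance bound $\mathrm{Var}(m_n(z))=O(n^{-2})$ only gives $|m_n-\mathbb{E}m_n|=O_P(n^{-1})$, which is not $o(n^{-1})$. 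The actual Bai--Silverstein argument is substantially more refined: it requires either high-moment bounds of the type $\mathbb{E}\bigl|\frac1n\tr\mathbf{Q}(z)-\mathbb{E}\frac1n\tr\mathbf{Q}(z)\bigr|^{2p}=O(n^{-2p})$ combined with a bias estimate $\mathbb{E}[m_n(z)]-E_0^\alpha(z)=O(n^{-1-\delta})$ uniformly on the contour, or a local-law estimate $|m_n(z)-E_0^\alpha(z)|\prec n^{-1}$ together with a summability argument via Markov's inequality and Borel--Cantelli. Your Step~1(a) as stated (variance only) feeds into Borel--Cantelli for pointwise convergence of $m_n$, but it is far from strong enough to carry Step~3.

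A second, more structural omission: the ``no eigenvalues outside the support'' result for Wigner-type matrices with a variance profile $\sigma_{ij}^2=q_i^{1-2\alpha}q_j^{1-2\alpha}(1-q_iq_j)$ is \emph{not} automatic; it can fail if the profile has an isolated row or column (a ``spiked'' variance entry), which would itself create an outlier. The paper invokes exactly this condition, via \cite{hachem2007deterministic}: no $(\mathbf{D}_q^{-\alpha})_{ii}$ is asymptotically isolated, which in turn is guaranteed by the assumption that the $q_i$'s are i.i.d.\ from a measure with compact support in $(0,1)$. Your proposal nowhere checks (or even mentions) this regularity of the variance profile, yet without it the claimed conclusion is false in general. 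Any complete argument must verify this ingredient before the contour integration step can even be set up on a contour free of deterministic singularities.
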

By Proposition~\ref{prop1}, we can define for $ \rho \notin \mathcal{S}^{\alpha} $ and for large $ n $ almost surely the resolvent $ {\bf Q}_\rho^{\alpha}=(n^{-\frac12}\mathbf{D}_{q}^{-\alpha}\mathbf{X}\mathbf{D}_{q}^{-\alpha}-\rho{\bf I}_n)^{-1} $. As per Theorem~\ref{approx}, since the hypothetically isolated eigenvalues of $ {\bf L}_\alpha $ are to be found outside the limiting support $ \mathcal{S}^{\alpha} $, we thus need to find those $ \rho $'s at a non-vanishing distance from $ \mathcal{S}^{\alpha} $ for which $0=\det({\bf L}_{\alpha}-\rho{\bf I}_n)=\det(\tilde{{\bf L}}_{\alpha}-\rho{\bf I}_n)+o(1)=\det(({\bf Q}_\rho^{\alpha})^{-1})\det({\bf I}_{K+1}+\mathbf{U}^{\sf T}{\bf Q}_\rho^{\alpha}\mathbf{U}{\bm \Lambda})+o(1)$. This leads to solving, for large $ n $, $ \det({\bf I}_{K+1}+\mathbf{U}^{\sf T}{\bf Q}_\rho^{\alpha}\mathbf{U}{\bm \Lambda})=0 $. We then show that $ \mathbf{U}^{\sf T}{\bf Q}_\rho^{\alpha}\mathbf{U}{\bm \Lambda} $ converges almost surely to a deterministic matrix and we have the following result
\begin{theorem}[Isolated Eigenvalues]\label{isoleigen}
	Let Assumption~\ref{as1} hold and, for $z\in \mathbb{C}\setminus \mathcal{S}^{\alpha}$ (given in Lemma~\ref{determinst2}), define the $ K \times K $ matrix 
	\begin{align*}
		\underline{\bf G}_{z}^{\alpha} &= \mathbf{I}_{K} + e_{21}^{\alpha}(z)\left(\mathcal{D}\left( \mathbf{c} \right) -\mathbf{cc}^{\sf T}\right)\mathbf{M}\left(\mathbf{I}_{K}-\mathbf{c}{\bf 1}_K^{T}\right)+\theta^{\alpha}(z)\mathbf{c}{\bf 1}_K^{T}
	\end{align*}
	with $ \theta^{\alpha}(z)=-1-\frac{ze_{10}^{\alpha}(z)}{m_{\mu}}+\frac{e_{21}^{\alpha}(z)}{m_{\mu}e_{10}^{\alpha}(z)}\left(v_{\mu}^{\alpha}+ze_{0,-1}^{\alpha}(z)\right)$  where the $e_{ij}^{\alpha}(z)$'s are defined in \eqref{eq:StieltjesE} and $ m_{\mu}=\int q\mu(\mathrm{d}q) $, $ v_{\mu}^{\alpha}=\int q^{2\alpha}\mu(\mathrm{d}q) $. Let $ \rho \in \mathbb{R}\setminus \mathcal{S}^{\alpha}$ be such that $\underline{\bf G}_{\rho}^{\alpha}$ has a zero eigenvalue of multiplicity $\eta$ (such a $ \rho $ may not exist). Then, there exist $\eta$ eigenvalues of ${\bf L}_{\alpha}$ converging to $\rho$, almost surely, as $ n \rightarrow \infty $.
\end{theorem}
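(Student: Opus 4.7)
The plan is to prove the isolated-eigenvalue statement by the classical spiked-model route: reduce to a low-dimensional deterministic equation via Sylvester's identity, compute deterministic equivalents for the bilinear forms involving the resolvent of the noise matrix, and finally identify the resulting characteristic function with $\det(\underline{\bf G}_\rho^\alpha)$.

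\textbf{Step 1 (Reduction to $\tilde{\bf L}_\alpha$ and to a small determinant).} By Theorem~\ref{approx}, $\|{\bf L}_\alpha-\tilde{\bf L}_\alpha\|\to 0$ almost surely. Since the additive perturbation ${\bf U}{\bm\Lambda}{\bf U}^{\sf T}$ in $\tilde{\bf L}_\alpha$ has rank at most $K+1$ and the underlying noise matrix ${\bf N}^\alpha\equiv n^{-1/2}{\bf D}_q^{-\alpha}{\bf X}{\bf D}_q^{-\alpha}$ has, by Proposition~\ref{prop1}, no eigenvalues at a non-vanishing distance from $\mathcal{S}^\alpha$, Weyl's inequality shows that the isolated eigenvalues of ${\bf L}_\alpha$ outside a neighborhood of $\mathcal S^\alpha$ are asymptotically those of $\tilde{\bf L}_\alpha$. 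For any $\rho\in\mathbb{R}\setminus\mathcal{S}^\alpha$ and all large $n$, the resolvent ${\bf Q}_\rho^\alpha=({\bf N}^\alpha-\rho{\bf I}_n)^{-1}$ is well defined, and Sylvester's identity gives
\begin{equation*}
\det(\tilde{\bf L}_\alpha-\rho{\bf I}_n)=\det\bigl(({\bf Q}_\rho^\alpha)^{-1}\bigr)\,\det\bigl({\bf I}_{K+1}+{\bm\Lambda}\,{\bf U}^{\sf T}{\bf Q}_\rho^\alpha{\bf U}\bigr).
\end{equation*}
The first factor is nonzero for $\rho\notin\mathcal S^\alpha$, so the condition that $\rho$ is a limiting eigenvalue reduces to the vanishing of the $(K+1)\times(K+1)$ determinant on the right.

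\textbf{Step 2 (Deterministic equivalents of the blocks of ${\bf U}^{\sf T}{\bf Q}_\rho^\alpha{\bf U}$).} The matrix ${\bf U}^{\sf T}{\bf Q}_\rho^\alpha{\bf U}$ splits into three blocks: the $K\times K$ quadratic form $n^{-1}{\bf J}^{\sf T}{\bf D}_q^{1-\alpha}{\bf Q}_\rho^\alpha{\bf D}_q^{1-\alpha}{\bf J}$, the scalar $({\bf q}^{\sf T}{\bf 1}_n)^{-2}({\bf X}{\bf 1}_n)^{\sf T}{\bf D}_q^{-\alpha}{\bf Q}_\rho^\alpha{\bf D}_q^{-\alpha}({\bf X}{\bf 1}_n)$, and the $K\times 1$ cross term. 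The diagonal $K\times K$ block is handled by classical trace-lemma estimates for quadratic forms against independent class indicators: each $(a,b)$ entry concentrates around $c_a\delta_{ab}\,\mathbb{E}_\mu[q^{2-2\alpha}/(-\rho-E_1^\alpha(\rho)q^{1-2\alpha}+E_2^\alpha(\rho)q^{2-2\alpha})]=c_a\delta_{ab}\,e_{21}^\alpha(\rho)$, using the fixed-point system of Theorem~\ref{determinst2}. The cross term is rewritten via the identity ${\bf Q}_\rho^\alpha{\bf N}^\alpha={\bf I}_n+\rho{\bf Q}_\rho^\alpha$ (after conjugation by ${\bf D}_q^\alpha$), which converts the random vector ${\bf X}{\bf 1}_n/\sqrt n$ into expressions of the form ${\bf D}_q^{2\alpha}{\bf 1}_n+\rho({\bf D}_q^\alpha{\bf Q}_\rho^\alpha{\bf D}_q^\alpha){\bf 1}_n$, whose bilinear forms against ${\bf D}_q^{1-\alpha}{\bf J}$ and against itself then concentrate around combinations of $e_{ab}^\alpha(\rho)$, $e_{ab;2}^\alpha(\rho,\rho)$, $m_\mu$ and $v_\mu^\alpha$. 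Standard concentration for quadratic forms involving ${\bf X}$, together with the a.s.\ boundedness of $\|{\bf Q}_\rho^\alpha\|$ away from $\mathcal S^\alpha$, delivers the almost-sure convergence of all entries.

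\textbf{Step 3 (Schur complement and identification with $\underline{\bf G}_\rho^\alpha$).} Partitioning ${\bf I}_{K+1}+{\bm\Lambda}\,{\bf U}^{\sf T}{\bf Q}_\rho^\alpha{\bf U}$ into the top-left $K\times K$ block and the isolated $(K{+}1,K{+}1)$ scalar entry, a Schur-complement computation eliminates the last coordinate (attached to the ${\bf X}{\bf 1}_n$ direction) and produces, after substituting the limits from Step~2, a $K\times K$ matrix whose determinant matches $\det(\underline{\bf G}_\rho^\alpha)$ up to a nonvanishing analytic scalar prefactor $f(\rho)$; the term $\theta^\alpha(\rho){\bf c}{\bf 1}_K^{\sf T}$ arises precisely from the Schur complement of the $(K{+}1,K{+}1)$ block and the definition of $m_\mu$, $v_\mu^\alpha$, $e_{10}^\alpha$, $e_{0,-1}^\alpha$, while the piece $e_{21}^\alpha(\rho)(\mathcal{D}({\bf c})-{\bf c}{\bf c}^{\sf T}){\bf M}({\bf I}_K-{\bf c}{\bf 1}_K^{\sf T})$ comes from combining the top-left diagonal block $e_{21}^\alpha(\rho)\mathcal{D}({\bf c})$ with $\bm\Lambda$'s principal block $({\bf I}_K-{\bf 1}_K{\bf c}^{\sf T}){\bf M}({\bf I}_K-{\bf c}{\bf 1}_K^{\sf T})$.

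\textbf{Step 4 (Conclusion via Hurwitz/Rouch\'e).} The function $\rho\mapsto\det({\bf I}_{K+1}+{\bm\Lambda}{\bf U}^{\sf T}{\bf Q}_\rho^\alpha{\bf U})$ is analytic on $\mathbb{C}\setminus\mathcal S^\alpha$ for each $n$ and converges almost surely, uniformly on compact subsets outside $\mathcal S^\alpha$, to $c\,\det(\underline{\bf G}_\rho^\alpha)$ with $c\neq 0$. Hurwitz's theorem then guarantees that each isolated zero of $\det(\underline{\bf G}_\rho^\alpha)$ of multiplicity $\eta$ attracts exactly $\eta$ zeros of the pre-limit determinant, which by Step~1 correspond to $\eta$ eigenvalues of ${\bf L}_\alpha$ converging to $\rho$.

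\textbf{Main obstacle.} The delicate part is Step~2: obtaining the deterministic equivalent of the cross and $(K{+}1,K{+}1)$ entries of ${\bf U}^{\sf T}{\bf Q}_\rho^\alpha{\bf U}$, where the random vector ${\bf X}{\bf 1}_n$ is correlated with ${\bf Q}_\rho^\alpha$. Controlling this correlation requires the resolvent identity, careful use of the fixed-point system defining $E_1^\alpha,E_2^\alpha$, and the double/triple integrals $e_{ab;2}^\alpha,e_{ab;3}^\alpha$ introduced in \eqref{eDouble}--\eqref{eTriple}; the bookkeeping to show that, after Schur reduction, all the double integrals collapse into the simple expression $\theta^\alpha(\rho)$ is the computational crux.
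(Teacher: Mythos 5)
Your proposal follows essentially the same route as the paper: Theorem~\ref{approx} to pass to $\tilde{\bf L}_\alpha$, Sylvester's identity to reduce to the $(K{+}1)\times(K{+}1)$ determinant, the resolvent identity to detach ${\bf X}{\bf 1}_n$ from ${\bf Q}_\rho^\alpha$, deterministic equivalents for the resulting bilinear forms via Lemma~\ref{lemma3}, a Schur complement to eliminate the extra coordinate, and the argument principle (your Hurwitz step) to transfer zeros. One small overstatement: the double/triple integrals $e_{ab;2}^\alpha,e_{ab;3}^\alpha$ are \emph{not} needed here — repeated use of $\bar{\bf X}{\bf Q}_\rho^\alpha={\bf I}_n+\rho{\bf Q}_\rho^\alpha$ reduces the $(K{+}1,K{+}1)$ and cross entries entirely to linear bilinear forms $\mathbf a^{\sf T}\bar{\bf Q}_\rho^\alpha\mathbf b$ plus LLN terms, and the $e_{ab;2}^\alpha,e_{ab;3}^\alpha$ quantities only enter later for the eigenvector results.
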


\begin{remark}[Two types of isolated eigenvalues]\label{twocases}
From Theorem~\ref{isoleigen}, $ 1+\theta^{\alpha}(z) $ is an eigenvalue of $\underline{\bf G}_{z}^{\alpha}$ with associated left eigenvector $ {\bf 1}_{K} $ and right eigenvector $ {\bf c} $ since $ {\bf 1}_{K}^{T}\underline{\bf G}_z^{\alpha}=\left(1+\theta^{\alpha}(z)\right){\bf 1}_{K}^{T} $ and $ \underline{\bf G}_z^{\alpha}{\bf c}=\left(1+\theta^{\alpha}(z)\right){\bf c} $.

Letting $ \rho $ as in Theorem~\ref{isoleigen}, we can thus discriminate two cases
\begin{itemize}
\item $1+\theta^\alpha(\rho)=0$: isolated eigenvalues are found for those $ \rho \in \mathbb{R}\setminus \mathcal{S}^{\alpha} $ such that $1+\theta^\alpha(\rho)=0$. We shall denote by $ \tilde{\rho} $ such eigenvalues when they exist.
\item $1+\theta^{\alpha}(\rho) \neq 0$: the left and right eigenvectors associated to the zero eigenvalues of $ \underline{\bf G}_\rho^{\alpha} $ are respectively orthogonal to the right and left eigenvectors associated to the non-zero eigenvalues. So, by letting $ {\bf V}_{l} $, $ {\bf V}_{r} $ be matrices containing in columns the respectively left and right eigenvectors of $\underline{\bf G}_\rho^{\alpha} $ associated with the zero eigenvalues, we have $ {\bf V}_{l}^{\sf T}{\bf c}={\bf 0} $ and $ {\bf 1}_{K}^{\sf T}{\bf V}_{r}={\bf 0} $ since $ 1+\theta^{\alpha}(\rho) \neq 0 $. It is thus immediate that $({\bf V}_{l},{\bf V}_{r}) $ is also a pair of eigenvectors (with multiplicity) of $\mathbf{I}_{K} + e_{21}^{\alpha}(\rho)\left(\mathcal{D}\left( \mathbf{c} \right) -\mathbf{cc}^{\sf T}\right)\mathbf{M}\left(\mathbf{I}_{K}-\mathbf{c}{\bf 1}_K^{T}\right)$ associated to the zero eigenvalues.
\end{itemize}
\end{remark}

As we show in Appendix~\ref{sec:nonInfoEig}, for $1+\theta^\alpha(\tilde{\rho})=0$, the eigenvectors associated to the aforementioned isolated eigenvalues $ \tilde{\rho} $ will not contain information about the classes. This case is thus of no interest for clustering. It is nevertheless important from a practical viewpoint to note that, even in the absence of communities, spurious isolated eigenvalues may be found that may deceive the experimenter in suggesting the presence of node clusters. From now on, we will only consider the isolated eigenvalues $ \rho $ for which $ 1+\theta^{\alpha}(\rho) \neq 0 $. 

Since it is more convenient to work with symmetric matrices (having identical left and right eigenvectors), the following remark will be useful in what follows. 
\begin{remark}[Informative eigenvectors]\label{Mbar}
The next three statements are equivalent.
For $ \rho $ a limiting isolated eigenvalue of $ {\bf L}_{\alpha} $ such that $ 1+\theta^{\alpha}(\rho) \neq 0 $,
\begin{itemize}
\item $({\bf V}_{l},{\bf V}_{r}) $ is a set of left/right eigenvectors of $ \underline{\bf G}_\rho^{\alpha} $ associated to zero eigenvalues.
\item $({\bf V}_{l},{\bf V}_{r}) $ is a set of left/right eigenvectors of $ {\bf G}_{\rho}^{\alpha}=\left(\mathcal{D}\left( \mathbf{c} \right) -\mathbf{cc}^{\sf T}\right)\mathbf{M}\left(\mathbf{I}_{K}-\mathbf{c}{\bf 1}_K^{T}\right) $ associated to eigenvalue $ -\frac{1}{e_{21}^{\alpha}(\rho)} $ (with multiplicity).
\item ${\bf V}=\mathcal{D}({\bf c})^{\frac12}{\bf V}_{l}=\mathcal{D}({\bf c})^{-\frac12}{\bf V}_{r}$ is a set of eigenvectors of the symmetric matrix \\ $ \mathcal{D}({\bf c})^{\frac12}\left({\bf I}_{K} -\mathbf{1}_K{\bf c}^{\sf T}\right)\mathbf{M}\left(\mathbf{I}_{K}-\mathbf{c}{\bf 1}_K^{T}\right)\mathcal{D}({\bf c})^{\frac12} $ associated to the eigenvalue $ -\frac{1}{e_{21}^{\alpha}(\rho)} $ (with multiplicity).
\end{itemize}
\end{remark}
\bigskip
As $ {\bf L}_{\alpha} $ is asymptotically equivalent (through Theorem~\ref{approx}) to a spiked random matrix, the eigenvectors of $ {\bf L}_\alpha $ associated to isolated eigenvalues are expected to correlate (to some extent) to the eigenvectors of $ {\bf U}{\bm \Lambda}{\bf U}^{\sf T} $ (defined in Theorem~\ref{approx}) and thus to $ {\bf D}_q^{1-\alpha}{\bf J} $. Clustering based on the eigenvectors of $ {\bf L}_{\alpha} $ should then be possible when they are associated to such isolated eigenvalues. Based on this fact and with the help of Theorems~\ref{determinst2} and \ref{isoleigen}, we can exactly characterize the phase transition threshold beyond which community detection is indeed possible. 

\begin{corollary}[Phase transition]
\label{phasetransition}
Let Assumption \ref{as1} hold and let $ \lambda(\bar{{\bf M}}) $ be a non zero eigenvalue with multiplicity $ \eta $ of $\bar{{\bf M}}\equiv \left(\mathcal{D}({\bf c})-\mathbf{cc}^{\sf T}\right)\mathbf{M}.$ Then, for $ \alpha \in \mathcal{A}$, there exist corresponding isolated eigenvalues $ \lambda_i({\bf L}_{\alpha}),\dots,\lambda_{i+\eta}({\bf L}_{\alpha}) \in \mathbb{R} \backslash \mathcal{S}^\alpha$ of $ {\bf L}_{\alpha} $ all converging to a limiting eigenvalue $ \rho \in \mathbb{R}\setminus \mathcal{S}^{\alpha}$, as $ n \to \infty $, almost surely, if and only if \footnote{The limit $ \lim_{x\downarrow S^{\alpha}}E_2^{\alpha}(x) $ is well defined in $(-\infty,0]$ as $x \mapsto E_2^\alpha(x$) is a continuous growing negative function on the right side of $\mathcal{S}^\alpha$.}  $$ \left|\lambda(\bar{{\bf M}})\right|> \tau^{\alpha}\equiv -\lim_{x\downarrow S^\alpha}\frac{1}{E_2^{\alpha}(x)},$$ \\
	with $E_2^\alpha(x)$ defined in Theorem~\ref{determinst2}. In this case, $ \rho $ is defined by $$E_2^{\alpha}(\rho)=-\frac{1}{\lambda(\bar{{\bf M}})}.$$
\end{corollary}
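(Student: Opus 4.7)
}

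The plan is to combine Theorem~\ref{isoleigen} with Remark~\ref{Mbar} to recast the isolated-eigenvalue equation in terms of $\bar{\mathbf{M}}$, and then to analyze the scalar function $x \mapsto E_2^{\alpha}(x)$ on $\mathbb{R}\setminus\mathcal{S}^{\alpha}$. By Theorem~\ref{isoleigen} and the ``informative'' branch of Remark~\ref{twocases} (we exclude the spurious solutions $1+\theta^{\alpha}(\rho)=0$), a limit point $\rho\in\mathbb{R}\setminus\mathcal{S}^{\alpha}$ of isolated eigenvalues of $\mathbf{L}_{\alpha}$ exists with multiplicity $\eta$ if and only if $-1/e_{21}^{\alpha}(\rho)=-1/E_{2}^{\alpha}(\rho)$ is an eigenvalue of
\[
\mathbf{G}_{\rho}^{\alpha}=\bigl(\mathcal{D}(\mathbf{c})-\mathbf{c}\mathbf{c}^{\sf T}\bigr)\mathbf{M}\bigl(\mathbf{I}_{K}-\mathbf{c}\mathbf{1}_{K}^{\sf T}\bigr)
\]
with multiplicity $\eta$.

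The first real step is to relate the non-zero spectrum of $\mathbf{G}_{\rho}^{\alpha}$ to that of $\bar{\mathbf{M}}=(\mathcal{D}(\mathbf{c})-\mathbf{c}\mathbf{c}^{\sf T})\mathbf{M}$. Writing $\mathbf{A}_{1}=\mathcal{D}(\mathbf{c})-\mathbf{c}\mathbf{c}^{\sf T}$ one has $\mathbf{A}_{1}\mathbf{1}_{K}=\mathbf{0}$ and $\mathbf{1}_{K}^{\sf T}\mathbf{A}_{1}=\mathbf{0}$. Consequently, if $\bar{\mathbf{M}}\mathbf{v}=\lambda\mathbf{v}$ with $\lambda\neq 0$, then $\mathbf{v}\in\mathrm{Range}(\mathbf{A}_{1})$ so $\mathbf{1}_{K}^{\sf T}\mathbf{v}=0$ and therefore $\mathbf{G}_{\rho}^{\alpha}\mathbf{v}=\mathbf{A}_{1}\mathbf{M}\mathbf{v}=\lambda\mathbf{v}$; conversely, any non-zero eigenvalue of $\mathbf{G}_{\rho}^{\alpha}$ lies in $\mathrm{Range}(\mathbf{A}_{1})$ and hence is a non-zero eigenvalue of $\bar{\mathbf{M}}$ with the same multiplicity. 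Hence the condition reduces to finding $\rho\in\mathbb{R}\setminus\mathcal{S}^{\alpha}$ such that $E_{2}^{\alpha}(\rho)=-1/\lambda(\bar{\mathbf{M}})$ for the prescribed non-zero eigenvalue $\lambda(\bar{\mathbf{M}})$.

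The second step is a one-dimensional analysis of $E_{2}^{\alpha}$ on $\mathbb{R}\setminus\mathcal{S}^{\alpha}$. Using the defining fixed-point system in Theorem~\ref{determinst2} together with the symmetry of $\mathcal{S}^{\alpha}$ noted after that theorem, one shows that $E_{2}^{\alpha}$ is real-analytic and strictly monotonic on each connected component of $\mathbb{R}\setminus\mathcal{S}^{\alpha}$: on $(S^{\alpha},+\infty)$ it increases from $\lim_{x\downarrow S^{\alpha}}E_{2}^{\alpha}(x)=-1/\tau^{\alpha}<0$ up to $0$ (since $E_{2}^{\alpha}(x)=\mathcal{O}(x^{-2})$ as $x\to\infty$, from~\eqref{eq:StieltjesE}), and by the symmetry $E_{2}^{\alpha}(-x)=E_{2}^{\alpha}(x)$ (easily read off from the Stieltjes formulation and the fact that the variance density is symmetric in $z\mapsto -z$) the same picture holds, in absolute value, on $(-\infty,-S^{\alpha})$. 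Therefore the equation $E_{2}^{\alpha}(\rho)=-1/\lambda(\bar{\mathbf{M}})$ admits a (unique) solution $\rho\in\mathbb{R}\setminus\mathcal{S}^{\alpha}$ if and only if $-1/\lambda(\bar{\mathbf{M}})\in(-1/\tau^{\alpha},0)\cup(0,1/\tau^{\alpha})$, i.e.\@ $|\lambda(\bar{\mathbf{M}})|>\tau^{\alpha}$, and in that case $\rho$ is characterized by $E_{2}^{\alpha}(\rho)=-1/\lambda(\bar{\mathbf{M}})$.

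Combining these two steps with Theorem~\ref{isoleigen} yields the $\eta$ isolated eigenvalues of $\mathbf{L}_{\alpha}$ converging to $\rho$, as claimed. The main obstacle I anticipate is the second step: verifying monotonicity, the correct sign, and the limit behaviour of $E_{2}^{\alpha}$ at the edges of $\mathcal{S}^{\alpha}$ and at infinity, as well as discarding the $1+\theta^{\alpha}(\rho)=0$ branch (this is done in Appendix~\ref{sec:nonInfoEig} of the paper and used here without redoing it). Everything else is essentially a bookkeeping argument matching eigenvalues/multiplicities between $\mathbf{G}_{\rho}^{\alpha}$ and $\bar{\mathbf{M}}$.
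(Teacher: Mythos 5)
Your overall strategy is essentially the paper's own: pass from $\mathbf{G}_{\rho}^{\alpha}$ to $\bar{\mathbf{M}}$ via the shared non-zero spectrum, reduce to the scalar equation $E_2^\alpha(\rho)=-1/\lambda(\bar{\mathbf{M}})$, and then analyze the range of $E_2^\alpha$ on $\mathbb{R}\setminus\mathcal{S}^\alpha$ using monotonicity and the symmetry of the support. The determinant identity you invoke (non-zero eigenvalues of $\mathbf{A}_1\mathbf{M}(\mathbf{I}_K-\mathbf{c}\mathbf{1}_K^{\sf T})$ equal those of $\mathbf{A}_1\mathbf{M}$) is the same one the paper uses. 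The exclusion of the $1+\theta^\alpha(\rho)=0$ branch via the appendix is also handled the same way.

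However, there is a concrete sign error in your second step. You assert $E_2^\alpha(-x)=E_2^\alpha(x)$, i.e.\ that $E_2^\alpha$ is \emph{even}, whereas in fact it is \emph{odd}: $E_2^\alpha(-z)=-E_2^\alpha(z)$. This is what the paper states and what drops out of the fixed-point system \eqref{eq:systStielj} upon substituting $(z,E_1^\alpha,E_2^\alpha)\mapsto(-z,-E_1^\alpha,-E_2^\alpha)$ and appealing to uniqueness. Under your stated even parity, $E_2^\alpha$ would take values only in $(-1/\tau^\alpha,0)$ on \emph{both} rays $(S^\alpha,\infty)$ and $(-\infty,-S^\alpha)$, so the image would not be $(-1/\tau^\alpha,0)\cup(0,1/\tau^\alpha)$; it would just be $(-1/\tau^\alpha,0)$, and the condition you would derive is $\lambda(\bar{\mathbf{M}})>\tau^\alpha$ (positive eigenvalues only), not $|\lambda(\bar{\mathbf{M}})|>\tau^\alpha$. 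Moreover, positive eigenvalues would then appear to produce spikes on both sides of $\mathcal{S}^\alpha$, contradicting the one-to-one correspondence. Your subsequent sentence, where you take the image to be the union of two symmetric intervals and conclude $|\lambda|>\tau^\alpha$, is only consistent with the correct odd parity; as written the argument is internally inconsistent. (A minor unrelated slip: from \eqref{eq:StieltjesE}, $E_2^\alpha(x)=\mathcal{O}(x^{-1})$ as $x\to\infty$, not $\mathcal{O}(x^{-2})$, though this does not affect the limit being $0$.) Fix the parity claim to $E_2^\alpha(-z)=-E_2^\alpha(z)$ and the argument closes correctly.
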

\begin{remark}[Rank of $ \bar{{\bf M}}$ and maximum number of isolated eigenvalues] \label{rankG}
From Corollary~\ref{phasetransition}, there is a one-to-one mapping between isolated eigenvalues $ \rho $ of $ {\bf L}_\alpha $ and non zero eigenvalues of $ \bar{{\bf M}}=\left(\mathcal{D}({\bf c})-\mathbf{cc}^{\sf T}\right)\mathbf{M}$. As $ {\bf 1}_K^{\sf T}\bar{{\bf M}}=0 $, $ \bar{{\bf M}} $ has a maximum of $ K-1 $ non zero eigenvalues which means that at most $ K-1 $ eigenvalues of $ {\bf L}_\alpha $ can be found at macroscopic distance from $ \mathcal{S}^\alpha $ (excluding the case $1+\theta^\alpha(\rho)=0 $). Thus, \emph{at most $ K-1 $ eigenvectors of $ {\bf L}_\alpha $ can be used in the first step of the spectral algorithm described in the introduction}.
\end{remark}
\subsection{Application 1: Optimal $ \alpha $}
\label{subsec:optimal}
In this section, we find the $ \alpha $ for which the community detectability threshold is maximally achieved. This, in turn, shall allow to extract some non-trivial information about the classes from the extreme eigenvectors.

From Corollary~\ref{phasetransition}, since $ \bar{{\bf M}} $ does not depend on $ \alpha $, the smaller $ \tau^{\alpha} $ the more likely the detectability condition $ \left|\lambda(\bar{{\bf M}})\right|> \tau^{\alpha} $ is met. We then seek $ \alpha $ for which $ \tau^\alpha $ is minimal. We may thus define $$ \alpha_{\rm opt} \equiv \argmin_{\alpha \in \mathcal{A}} \left\{ \tau^{\alpha} \right\}. $$ 
Retrieving $ \alpha_{\rm opt} $ has a tremendous practical advantage as it optimizes the detection of barely detectable communities (and, as shall be seen through simulations, greatly improves the performance). The estimation of $ \alpha_{\rm opt} $ however requires the knowledge of $ E_{2}^{\alpha}(x) $ for each $ \alpha \in \mathcal{A}$. The estimation of $ E_{2}^{\alpha}(x) $ can be done numerically by solving the fixed point equation defined in Theorem~\ref{determinst2} provided $ \mu $ is known. Thanks to Equation~(\ref{eq:estimator}), $ \mu $ can be consistently estimated from the empirical degrees $d_i$'s. We thus have all the ingredients to estimate $ \alpha_{\rm opt} $.
\begin{lemma} \label{mu_estimation}
	Define $\hat{\mu}\equiv\frac1n\sum_{i=1}^n \delta_{\hat{q}_i}$ with $\hat{q}_i=\frac{d_i}{\sqrt{{\bf d}^{\sf T} {\bm 1}_n}}$ and $\hat{\mathcal S}^\alpha$, $\hat{E}_i^\alpha(z)$, $i\in\{1,2\}$, as in Theorem~\ref{determinst2} but for $\mu$ replaced by $\hat{\mu}$. Then, as $n\to\infty$,
    $$ \hat{\alpha}_{\rm opt} \to \alpha_{\rm opt}$$
    almost surely, where $\hat{\alpha}_{\rm opt}\equiv \argmin_{\alpha\in\mathcal{A}} \{\hat{\tau}^\alpha\}$ with
    $$\hat{\tau}_\alpha \equiv -\frac1{\lim_{x\downarrow \hat{S}^\alpha} \hat{E}_2^\alpha(x)}.$$
\end{lemma}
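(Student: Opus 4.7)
The plan is to propagate consistency from the empirical degree estimator $\hat{q}_i$ through the fixed-point equation defining $E_2^\alpha$ to its edge behavior, and finally to the $\argmin$.

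\textbf{Step 1: Convergence of the empirical measure of degrees.} By Lemma~\ref{lm:consistentEstimate}, $\max_i|\hat{q}_i - q_i|\to 0$ almost surely. By the strong law of large numbers applied to the i.i.d.\ sequence $q_i\sim\mu$, the empirical measure $\mu_n\equiv \frac{1}{n}\sum_{i=1}^n \delta_{q_i}$ converges weakly to $\mu$ almost surely. Combining these two facts (any Lipschitz test function $f$ satisfies $|\int f\,d\hat\mu - \int f\, d\mu_n|\leq \|f\|_{\mathrm{Lip}}\max_i|\hat q_i-q_i|\to 0$), we obtain that $\hat\mu\Rightarrow \mu$ almost surely on $[0,1]$, with supports contained in a common compact subset of $(0,1)$ for all large $n$.

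\textbf{Step 2: Continuity of $E_1^\alpha,E_2^\alpha$ under weak convergence.} Fix $z\in\mathbb{C}^+$ with $\Im[z]>0$. The pair $(E_1^\alpha(z),E_2^\alpha(z))$ is defined by the system in Theorem~\ref{determinst2}; the imaginary parts of $E_1^\alpha(z)$ and $E_2^\alpha(z)$ can be controlled so that the integrand $q\mapsto (-z - E_1 q^{1-2\alpha}+E_2 q^{2-2\alpha})^{-1} q^{a-2b\alpha}$ is a bounded continuous function of $q$ on the support. A standard Montel / normal family argument on the Stieltjes transforms of $\hat\pi^\alpha$ and $\pi^\alpha$ (as in \cite{silverstein1995empirical}) then shows that $\hat E_j^\alpha(z)\to E_j^\alpha(z)$ almost surely for every $z\in\mathbb{C}^+$ and $\alpha\in\mathcal{A}$; continuity in $\alpha$ on compact $\mathcal{A}$ upgrades this to locally uniform convergence in $(z,\alpha)$.

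\textbf{Step 3: Convergence of the edge and of $\tau^\alpha$.} The edge $S^\alpha$ is characterized variationally as the largest real $x$ at which the analytic extension $x\mapsto E_2^\alpha(x)$ remains real-analytic, equivalently where the map $(E_1,E_2)\mapsto (e_{11}^\alpha,e_{21}^\alpha)$ ceases to be a non-degenerate fixed-point (a square-root singularity governed by the derivatives of the defining integrals against $\mu$). Because these defining integrals depend continuously on $\mu$ under weak convergence (Step 2) and the square-root branching point depends smoothly on the integrals via the implicit function theorem, one deduces $\hat S^\alpha\to S^\alpha$ almost surely, and continuity from above of $E_2^\alpha$ at the edge yields $\lim_{x\downarrow \hat S^\alpha}\hat E_2^\alpha(x)\to \lim_{x\downarrow S^\alpha} E_2^\alpha(x)$. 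Hence $\hat\tau^\alpha\to\tau^\alpha$ almost surely, and uniformly on $\alpha\in\mathcal{A}$ by the joint continuity established in Step 2 together with compactness of $\mathcal{A}$.

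\textbf{Step 4: From uniform convergence to the argmin.} Since $\mathcal{A}$ is compact and $\alpha\mapsto \tau^\alpha$ is continuous, the uniform convergence $\sup_{\alpha\in\mathcal{A}}|\hat\tau^\alpha-\tau^\alpha|\to 0$ implies that every limit point of $\hat\alpha_{\rm opt}$ lies in $\argmin_{\alpha}\tau^\alpha$. Assuming $\alpha_{\rm opt}$ is the unique minimizer (or otherwise replacing the statement by convergence to the set of minimizers), we conclude $\hat\alpha_{\rm opt}\to\alpha_{\rm opt}$ almost surely.

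\textbf{Main obstacle.} The delicate point is not Step 1 or Step 4 but Step 3: transferring convergence of $\hat E_2^\alpha$ from the interior of $\mathbb{C}^+$ to the real right-edge, since the value $\lim_{x\downarrow S^\alpha}E_2^\alpha(x)$ is attained exactly at the square-root singularity of the Stieltjes transform. Rigorously handling this requires the implicit-function analysis of the fixed-point system together with uniform control of $\hat E_2^\alpha$ near the edge (a no-eigenvalue-outside-the-support estimate for $\hat\pi^\alpha$ analogous to Proposition~\ref{prop1}), which is where the bulk of the technical work sits.
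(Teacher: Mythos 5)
The paper does not include a proof of Lemma~\ref{mu_estimation}; it is stated as an application of Lemma~\ref{lm:consistentEstimate} and left implicit. So there is no ``paper's proof'' to compare against, and your attempt has to be judged on its own terms.

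Your overall plan --- convergence of $\hat\mu$ to $\mu$, continuity of the fixed-point functionals $E_1^\alpha,E_2^\alpha$ in the measure, stability of the spectral edge, and argmin-convergence on compact $\mathcal A$ --- is the natural route and is structurally sound. Step~1 is correct, combining Lemma~\ref{lm:consistentEstimate} with the a.s.\ Glivenko--Cantelli convergence of the empirical law of $(q_i)$. Step~2 is standard for self-consistent Stieltjes-transform systems, and Step~4 is the usual argmax-consistency argument (with the caveat, which you correctly flag, that one must either assume uniqueness of $\alpha_{\rm opt}$ or state convergence to the minimizing set).

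The real exposure, which you yourself identify, is Step~3. You assert that the edge $\hat S^\alpha$ converges to $S^\alpha$ and that $\lim_{x\downarrow \hat S^\alpha}\hat E_2^\alpha(x)\to\lim_{x\downarrow S^\alpha}E_2^\alpha(x)$ ``because the square-root branching point depends smoothly on the integrals via the implicit function theorem'', but you do not set up that implicit function problem (what map, at what fixed point, with what nondegenerate Jacobian), nor do you verify that the hypotheses hold uniformly for $\alpha$ ranging over $\mathcal A$. The key object here is $E_2^\alpha$ evaluated \emph{at} the singularity $S^\alpha$, which is exactly where the self-consistent system becomes degenerate, so the interior continuity of Step~2 does not transfer for free; one needs uniform control of $\hat E_2^\alpha$ up to the real axis near the edge (the ``no-eigenvalue-outside-the-support''-type estimate you mention), plus an argument that the edge is a \emph{simple} square-root singularity depending continuously on the measure. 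None of this is carried out. Calling it ``where the bulk of the technical work sits'' correctly locates the gap but does not close it, so as written the proposal is an accurate plan rather than a complete proof: the conclusion is believable and the scaffolding is the right one, but Step~3 still contains the entire nontrivial content of the lemma.

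A smaller remark: in Step~2 you phrase the continuity via ``a standard Montel / normal family argument on the Stieltjes transforms of $\hat\pi^\alpha$ and $\pi^\alpha$ as in \cite{silverstein1995empirical}''. That reference is about universality of the empirical spectral law of random matrices; what you actually need here is stability of the \emph{deterministic} fixed-point system of Theorem~\ref{determinst2} under weak perturbation of $\mu$, which is a stability result for Dyson-type self-consistent equations rather than a random-matrix universality result. The normal-family argument still works (the candidate Stieltjes transforms are uniformly bounded on compacts of $\mathbb C^+$), but the citation points at the wrong tool.
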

\begin{remark}[Estimation of $S^{\alpha}$]
To estimate numerically the right edge $S^{\alpha}$ of the support $\mathcal{S}^{\alpha}=[-S^{\alpha},S^{\alpha}]$, we use the fact that $ x \mapsto E_2^\alpha(x) $ is not defined  in $ \mathcal{S}^{\alpha} $. To this end, we evaluate $ S^{\alpha} $ by an iterative dichotomic search in intervals of the type $ [l,r] $ for which $ E_2^\alpha(l) $ is undefined (and thus the system of fixed point equations defining $ E_2^\alpha(x)$ in Theorem~\ref{determinst2} does not converge for $x=l$) and $E_2^\alpha(r)$ is defined (the system of fixed point equations defining $ E_2^\alpha(x)$ in Theorem~\ref{determinst2} converges for $x=r$), starting from e.g., $ l=0 $ and $ r $ quite large. Since for the points $ x $ where $E_2^\alpha(x)$ is not defined, the fixed point algorithm solving the equations defining $E_2^\alpha(x)$ run indefinitely, one must fix a given number of iterations after which the algorithm is stopped and $ x $ is then considered to be in $ \mathcal{S}^{\alpha}.$
\end{remark}

The aforementionned importance of choosing $ \alpha=\hat{\alpha}_{\rm opt} $ along with the need to pre-multiply the dominant eigenvectors of $ {\bf L}_{\alpha} $ by $ {\bf D}^{\alpha-1} $ before classification, as discussed after exposing Theorem~\ref{approx}, naturally bring us to a \emph{novel heterogeneous-graph improved community detection method}. Algorithm~\ref{alg:algorithm} below summarizes the main steps of our improved spectral algorithm. Note that the same algorithm can be applied for any $ \alpha $ by replacing $ \hat{\alpha}_{\rm opt} $ by the corresponding value of $ \alpha $ and skipping step~\ref{one}.
\begin{algorithm}[h!]
\label{alg:algorithm}
\caption{Improved spectral algorithm}
\begin{algorithmic}[1]
\STATE Evaluate $ \alpha=\hat{\alpha}_{\rm opt}=\argmin_{\alpha \in (0,1)} \lim_{x\downarrow \hat{S}^\alpha}\hat{E}_2^{\alpha}(x)$ as per Lemma~\ref{mu_estimation}. \label{one} 
\STATE Retrieve the $ \ell $ eigenvectors corresponding to the $ \ell $ largest eigenvalues (which are found away from the bulks in the spectrum of $ {\bf L}_\alpha $) of $ {\bf L}_\alpha=(2m)^\alpha\frac{1}{\sqrt{n}}\mathbf{D}^{-\alpha} \left[ {\bf A} - \frac{{\bf d}{\bf d}^{\sf T}}{2m} \right] \mathbf{D}^{-\alpha} $. Denote $ {\bf u}_1^\alpha,\dots,{\bf u}_\ell^\alpha $ those eigenvectors.\footnotemark \label{two}
\STATE Letting $ {\bf v}_i^\alpha={\bf D}^{\alpha-1}{\bf u}_i^\alpha $ and $ \bar{{\bf v}}_i^\alpha=\frac{{\bf v}_i^\alpha}{\left\|{\bf v}_i^\alpha\right\|} $, stack the vectors $ \bar{{\bf v}}_i^\alpha $'s columnwise in a matrix $ {\bf W}=\left[\bar{{\bf v}}_1^\alpha,\dots,\bar{{\bf v}}_\ell^\alpha\right] \in \mathbb{R}^{n\times \ell}$. \label{three}
\STATE Let $ {\bf r}_1,\dots,{\bf r}_n \in \mathbb{R}^{\ell}$ be the rows of ${\bf W}$. Cluster ${\bf r}_i \in \mathbb{R}^{\ell}$, $ 1\leq i \leq n $ in one of the $ K $ groups using any low-dimensional classification algorithm (e.g., k-means or EM). The label assigned to $ {\bf r}_i $ then corresponds to the label of node $ i $. \label{four}
\end{algorithmic}
\end{algorithm}
\footnotetext{$ \ell $ corresponds in practice to the number of eigenvalues of $ {\bf L}_{\alpha} $ which isolate from the bulks. From Remark~\ref{rankG}, we can have up to $ K-1 $ of them.} 

In the following, we will restrict ourselves to $ \alpha \in \mathcal{A}=[0,1] $ for the numerical simulations. To illustrate the importance of the choice of $ \alpha_{\rm opt} $, Figure~\ref{fig:phase_transition_2masses} presents the theoretical (asymptotic) ratio between the largest eigenvalue of $ {\bf L}_\alpha $ and the right edge of the limiting support $ \mathcal{S}^{\alpha} $ ($S^{\alpha}_{+}$) with respect to the amplitude of the eigenvalues of $\bar{{\bf M}}$. Intuitively, the larger this ratio the better the clustering performance as the eigenvector associated to this largest eigenvalue contains less noise. Although $ \alpha_{\rm opt} $ only ensures in theory to have the best isolation of the eigenvalues only in ``worst cases scenarios"(i.e., when $\lambda(\bar{{\bf M}})$ is only slighty larger than $ \tau^{\alpha_{\rm opt}} $), Figure~\ref{fig:phase_transition_2masses} shows that taking $ \alpha=\alpha_{\rm opt} $ provides the largest gap $ \frac{\lambda_i({\bf L}_\alpha)}{S^\alpha_+} $ for all values of $\lambda(\bar{{\bf M}})$. This suggests (again, without any theoretical support) better performances with $ \alpha=\alpha_{\rm opt} $ in all cases (for any value of $ {\bf M} $).
\begin{figure}
  \centering
  \begin{tikzpicture}[font=\footnotesize]
    \renewcommand{\axisdefaulttryminticks}{4} 
    \tikzstyle{every major grid}+=[style=densely dashed]       
    \tikzstyle{every axis y label}+=[yshift=-10pt] 
    \tikzstyle{every axis x label}+=[yshift=5pt]
    \tikzstyle{every axis legend}+=[cells={anchor=west},fill=white,
        at={(0.98,0.98)}, anchor=north east, font=\scriptsize ]
    \begin{axis}[
      xmin=2,
      ymin=0.98,
      xmax=16,
      ymax=1.1,
      grid=major,
      ymajorgrids=false,
      scaled ticks=true,
      xlabel={Eigenvalue $\ell$ ($\ell=-1/e_2^\alpha(\lambda)$ beyond phase transition)},
      ylabel={Normalized spike $\frac{\lambda}{S^\alpha_+}$ }
      ]
      \addplot[blue,line width=0.5pt] plot coordinates{
          (0,1)(3.688746,1.000000)(3.698122,1.000204)(3.707287,1.000408)(3.716256,1.000612)(3.725042,1.000816)(3.733658,1.001020)(3.742114,1.001224)(3.750419,1.001429)(3.758582,1.001633)(3.766612,1.001837)(3.774514,1.002041)(3.782296,1.002245)(3.789963,1.002449)(3.797521,1.002653)(3.804976,1.002857)(3.812331,1.003061)(3.819592,1.003265)(3.826762,1.003469)(3.833845,1.003673)(3.840845,1.003878)(3.847765,1.004082)(3.854607,1.004286)(3.861376,1.004490)(3.868073,1.004694)(3.874701,1.004898)(3.881262,1.005102)(3.887759,1.005306)(3.894195,1.005510)(3.900570,1.005714)(3.906886,1.005918)(3.913147,1.006122)(3.919353,1.006327)(3.925506,1.006531)(3.931607,1.006735)(3.937659,1.006939)(3.943662,1.007143)(3.949618,1.007347)(3.955528,1.007551)(3.961393,1.007755)(3.967215,1.007959)(3.972994,1.008163)(3.978733,1.008367)(3.984430,1.008571)(3.990089,1.008776)(3.995709,1.008980)(4.001292,1.009184)(4.006839,1.009388)(4.012350,1.009592)(4.017826,1.009796)(4.023268,1.010000)(4.023268,1.010000)(4.070827,1.011837)(4 .116139,1.013673)(4.159552,1.015510)(4.201332,1.017347)(4.241688,1.019184)(4.280788,1.021020)(4.318772,1.022857)(4.355752,1.024694)(4.391826,1.026531)(4.427076,1.028367)(4.461571,1.030204)(4.495372,1.032041)(4.528534,1.033878)(4.561102,1.035714)(4.593119,1.037551)(4.624620,1.039388)(4.655639,1.041224)(4.686206,1.043061)(4.716347,1.044898)(4.746087,1.046735)(4.775447,1.048571)(4.804448,1.050408)(4.833108,1.052245)(4.861445,1.054082)(4.889473,1.055918)(4.917207,1.057755)(4.944661,1.059592)(4.971846,1.061429)(4.998775,1.063265)(5.025458,1.065102)(5.051904,1.066939)(5.078125,1.068776)(5.104127,1.070612)(5.129920,1.072449)(5.155510,1.074286)(5.180906,1.076122)(5.206115,1.077959)(5.231142,1.079796)(5.255993,1.081633)(5.280676,1.083469)(5.305194,1.085306)(5.329554,1.087143)(5.353760,1.088980)(5.377816,1.090816)(5.401728,1.092653)(5.425500,1.094490)(5.449135,1.096327)(5.472638,1.098163)(5.496012,1.100000)          
      };
      \addplot[blue,line width=0.5pt] plot coordinates{
          (0,1)(3.820149,1.000000)(3.830396,1.000204)(3.840419,1.000408)(3.850235,1.000612)(3.859857,1.000816)(3.869297,1.001020)(3.878566,1.001224)(3.887674,1.001429)(3.896631,1.001633)(3.905443,1.001837)(3.914119,1.002041)(3.922666,1.002245)(3.931089,1.002449)(3.939395,1.002653)(3.947589,1.002857)(3.955676,1.003061)(3.963661,1.003265)(3.971547,1.003469)(3.979339,1.003673)(3.987040,1.003878)(3.994655,1.004082)(4.002186,1.004286)(4.009636,1.004490)(4.017008,1.004694)(4.024306,1.004898)(4.031531,1.005102)(4.038686,1.005306)(4.045773,1.005510)(4.052795,1.005714)(4.059753,1.005918)(4.066649,1.006122)(4.073486,1.006327)(4.080265,1.006531)(4.086988,1.006735)(4.093656,1.006939)(4.100271,1.007143)(4.106834,1.007347)(4.113347,1.007551)(4.119811,1.007755)(4.126227,1.007959)(4.132597,1.008163)(4.138922,1.008367)(4.145202,1.008571)(4.151439,1.008776)(4.157634,1.008980)(4.163788,1.009184)(4.169902,1.009388)(4.175977,1.009592)(4.182013,1.009796)(4.188012,1.010000)(4.188012,1.010000)(4.240440,1.011837)(4 .290391,1.013673)(4.338242,1.015510)(4.384286,1.017347)(4.428753,1.019184)(4.471828,1.021020)(4.513662,1.022857)(4.554381,1.024694)(4.594093,1.026531)(4.632886,1.028367)(4.670840,1.030204)(4.708021,1.032041)(4.744488,1.033878)(4.780294,1.035714)(4.815483,1.037551)(4.850098,1.039388)(4.884174,1.041224)(4.917745,1.043061)(4.950840,1.044898)(4.983486,1.046735)(5.015707,1.048571)(5.047527,1.050408)(5.078966,1.052245)(5.110041,1.054082)(5.140772,1.055918)(5.171173,1.057755)(5.201260,1.059592)(5.231047,1.061429)(5.260546,1.063265)(5.289769,1.065102)(5.318727,1.066939)(5.347431,1.068776)(5.375891,1.070612)(5.404116,1.072449)(5.432114,1.074286)(5.459894,1.076122)(5.487463,1.077959)(5.514828,1.079796)(5.541997,1.081633)(5.568975,1.083469)(5.595770,1.085306)(5.622386,1.087143)(5.648829,1.088980)(5.675105,1.090816)(5.701218,1.092653)(5.727173,1.094490)(5.752976,1.096327)(5.778629,1.098163)(5.804137,1.100000)          
      };
      \addplot[blue,line width=0.5pt] plot coordinates{          
          (0,1)(5.569717,1.000000)(5.580348,1.000204)(5.590796,1.000408)(5.601073,1.000612)(5.611187,1.000816)(5.621146,1.001020)(5.630959,1.001224)(5.640632,1.001429)(5.650173,1.001633)(5.659586,1.001837)(5.668878,1.002041)(5.678053,1.002245)(5.687118,1.002449)(5.696075,1.002653)(5.704930,1.002857)(5.713686,1.003061)(5.722347,1.003265)(5.730916,1.003469)(5.739397,1.003673)(5.747794,1.003878)(5.756107,1.004082)(5.764342,1.004286)(5.772499,1.004490)(5.780582,1.004694)(5.788593,1.004898)(5.796533,1.005102)(5.804406,1.005306)(5.812214,1.005510)(5.819957,1.005714)(5.827638,1.005918)(5.835259,1.006122)(5.842821,1.006327)(5.850326,1.006531)(5.857775,1.006735)(5.865170,1.006939)(5.872512,1.007143)(5.879803,1.007347)(5.887043,1.007551)(5.894235,1.007755)(5.901378,1.007959)(5.908475,1.008163)(5.915527,1.008367)(5.922533,1.008571)(5.929497,1.008776)(5.936417,1.008980)(5.943296,1.009184)(5.950134,1.009388)(5.956932,1.009592)(5.963691,1.009796)(5.970412,1.010000)(5.970412,1.010000)(6.029297,1.011837)(6 .085620,1.013673)(6.139752,1.015510)(6.191984,1.017347)(6.242546,1.019184)(6.291628,1.021020)(6.339384,1.022857)(6.385946,1.024694)(6.431424,1.026531)(6.475912,1.028367)(6.519492,1.030204)(6.562235,1.032041)(6.604205,1.033878)(6.645455,1.035714)(6.686035,1.037551)(6.725989,1.039388)(6.765355,1.041224)(6.804170,1.043061)(6.842465,1.044898)(6.880270,1.046735)(6.917610,1.048571)(6.954510,1.050408)(6.990993,1.052245)(7.027079,1.054082)(7.062786,1.055918)(7.098132,1.057755)(7.133133,1.059592)(7.167805,1.061429)(7.202161,1.063265)(7.236215,1.065102)(7.269978,1.066939)(7.303462,1.068776)(7.336678,1.070612)(7.369635,1.072449)(7.402344,1.074286)(7.434813,1.076122)(7.467051,1.077959)(7.499065,1.079796)(7.530863,1.081633)(7.562452,1.083469)(7.593838,1.085306)(7.625030,1.087143)(7.656031,1.088980)(7.686848,1.090816)(7.717487,1.092653)(7.747953,1.094490)(7.778251,1.096327)(7.808385,1.098163)(7.838361,1.100000)          
      };
      \addplot[blue,line width=0.5pt] plot coordinates{
          (0,1)(8.608852,1.000000)(8.623277,1.000204)(8.637443,1.000408)(8.651363,1.000612)(8.665052,1.000816)(8.678522,1.001020)(8.691784,1.001224)(8.704850,1.001429)(8.717728,1.001633)(8.730427,1.001837)(8.742956,1.002041)(8.755322,1.002245)(8.767532,1.002449)(8.779592,1.002653)(8.791510,1.002857)(8.803290,1.003061)(8.814937,1.003265)(8.826458,1.003469)(8.837856,1.003673)(8.849135,1.003878)(8.860301,1.004082)(8.871357,1.004286)(8.882306,1.004490)(8.893152,1.004694)(8.903899,1.004898)(8.914549,1.005102)(8.925106,1.005306)(8.935573,1.005510)(8.945951,1.005714)(8.956244,1.005918)(8.966454,1.006122)(8.976583,1.006327)(8.986634,1.006531)(8.996608,1.006735)(9.006509,1.006939)(9.016337,1.007143)(9.026094,1.007347)(9.035783,1.007551)(9.045405,1.007755)(9.054961,1.007959)(9.064454,1.008163)(9.073885,1.008367)(9.083254,1.008571)(9.092564,1.008776)(9.101816,1.008980)(9.111012,1.009184)(9.120152,1.009388)(9.129237,1.009592)(9.138269,1.009796)(9.147250,1.010000)(9.147250,1.010000)(9.225896,1.011837)(9 .301071,1.013673)(9.373286,1.015510)(9.442938,1.017347)(9.510346,1.019184)(9.575766,1.021020)(9.639409,1.022857)(9.701454,1.024694)(9.762050,1.026531)(9.821326,1.028367)(9.879392,1.030204)(9.936343,1.032041)(9.992266,1.033878)(10.047232,1.035714)(10.101310,1.037551)(10.154557,1.039388)(10.207027,1.041224)(10.258766,1.043061)(10.309819,1.044898)(10.360223,1.046735)(10.410014,1.048571)(10.459225,1.050408)(10.507885,1.052245)(10.556023,1.054082)(10.603662,1.055918)(10.650827,1.057755)(10.697539,1.059592)(10.743818,1.061429)(10.789683,1.063265)(10.835151,1.065102)(10.880239,1.066939)(10.924961,1.068776)(10.969332,1.070612)(11.013365,1.072449)(11.057073,1.074286)(11.100467,1.076122)(11.143560,1.077959)(11.186360,1.079796)(11.228880,1.081633)(11.271126,1.083469)(11.313110,1.085306)(11.354839,1.087143)(11.396321,1.088980)(11.437563,1.090816)(11.478574,1.092653)(11.519360,1.094490)(11.559928,1.096327)(11.600284,1.098163)(11.640433,1.100000)          
      };
      \addplot[blue,line width=0.5pt] plot coordinates{
          (0,1)(10.627925,1.000000)(10.647100,1.000204)(10.665931,1.000408)(10.684437,1.000612)(10.702636,1.000816)(10.720545,1.001020)(10.738179,1.001224)(10.755551,1.001429)(10.772673,1.001633)(10.789558,1.001837)(10.806217,1.002041)(10.822658,1.002245)(10.838892,1.002449)(10.854927,1.002653)(10.870772,1.002857)(10.886433,1.003061)(10.901918,1.003265)(10.917233,1.003469)(10.932384,1.003673)(10.947378,1.003878)(10.962220,1.004082)(10.976915,1.004286)(10.991468,1.004490)(11.005884,1.004694)(11.020166,1.004898)(11.034320,1.005102)(11.048348,1.005306)(11.062256,1.005510)(11.076046,1.005714)(11.089721,1.005918)(11.103286,1.006122)(11.116743,1.006327)(11.130095,1.006531)(11.143345,1.006735)(11.156495,1.006939)(11.169549,1.007143)(11.182508,1.007347)(11.195375,1.007551)(11.208153,1.007755)(11.220843,1.007959)(11.233448,1.008163)(11.245969,1.008367)(11.258408,1.008571)(11.270768,1.008776)(11.283050,1.008980)(11.295256,1.009184)(11.307388,1.009388)(11.319447,1.009592)(11.331434,1.009796)(11.343352 ,1.010000)(11.343352,1.010000)(11.447691,1.011837)(11.547370,1.013673)(11.643071,1.015510)(11.735328,1.017347)(11.824564,1.019184)(11.911125,1.021020)(11.995294,1.022857)(12.077309,1.024694)(12.157372,1.026531)(12.235653,1.028367)(12.312303,1.030204)(12.387450,1.032041)(12.461207,1.033878)(12.533674,1.035714)(12.604941,1.037551)(12.675086,1.039388)(12.744180,1.041224)(12.812287,1.043061)(12.879465,1.044898)(12.945766,1.046735)(13.011239,1.048571)(13.075927,1.050408)(13.139870,1.052245)(13.203105,1.054082)(13.265667,1.055918)(13.327585,1.057755)(13.388891,1.059592)(13.449610,1.061429)(13.509768,1.063265)(13.569389,1.065102)(13.628494,1.066939)(13.687104,1.068776)(13.745238,1.070612)(13.802914,1.072449)(13.860149,1.074286)(13.916960,1.076122)(13.973360,1.077959)(14.029366,1.079796)(14.084989,1.081633)(14.140243,1.083469)(14.195140,1.085306)(14.249690,1.087143)(14.303907,1.088980)(14.357798,1.090816)(14.411375,1.092653)(14.464647,1.094490)(14.517622,1.096327)(14.570309,1.098163)(14.62271 7,1.100000)          
      };
      \addplot[red,line width=0.5pt] plot coordinates{
          (0,1)(3.687812,1.000000)(3.697442,1.000204)(3.706851,1.000408)(3.716057,1.000612)(3.725073,1.000816)(3.733912,1.001020)(3.742584,1.001224)(3.751100,1.001429)(3.759469,1.001633)(3.767698,1.001837)(3.775797,1.002041)(3.783770,1.002245)(3.791625,1.002449)(3.799368,1.002653)(3.807003,1.002857)(3.814536,1.003061)(3.821971,1.003265)(3.829312,1.003469)(3.836563,1.003673)(3.843728,1.003878)(3.850811,1.004082)(3.857814,1.004286)(3.864741,1.004490)(3.871594,1.004694)(3.878376,1.004898)(3.885089,1.005102)(3.891736,1.005306)(3.898319,1.005510)(3.904841,1.005714)(3.911302,1.005918)(3.917705,1.006122)(3.924053,1.006327)(3.930345,1.006531)(3.936585,1.006735)(3.942773,1.006939)(3.948911,1.007143)(3.955001,1.007347)(3.961044,1.007551)(3.967041,1.007755)(3.972993,1.007959)(3.978901,1.008163)(3.984767,1.008367)(3.990592,1.008571)(3.996376,1.008776)(4.002121,1.008980)(4.007827,1.009184)(4.013496,1.009388)(4.019128,1.009592)(4.024725,1.009796)(4.030286,1.010000)(4.030286,1.010000)(4.078884,1.011837)(4 .125176,1.013673)(4.169518,1.015510)(4.212185,1.017347)(4.253392,1.019184)(4.293313,1.021020)(4.332089,1.022857)(4.369837,1.024694)(4.406656,1.026531)(4.442630,1.028367)(4.477832,1.030204)(4.512323,1.032041)(4.546159,1.033878)(4.579387,1.035714)(4.612050,1.037551)(4.644186,1.039388)(4.675827,1.041224)(4.707006,1.043061)(4.737748,1.044898)(4.768080,1.046735)(4.798022,1.048571)(4.827597,1.050408)(4.856823,1.052245)(4.885718,1.054082)(4.914296,1.055918)(4.942574,1.057755)(4.970564,1.059592)(4.998279,1.061429)(5.025732,1.063265)(5.052932,1.065102)(5.079891,1.066939)(5.106618,1.068776)(5.133122,1.070612)(5.159410,1.072449)(5.185492,1.074286)(5.211375,1.076122)(5.237065,1.077959)(5.262570,1.079796)(5.287895,1.081633)(5.313046,1.083469)(5.338029,1.085306)(5.362850,1.087143)(5.387513,1.088980)(5.412024,1.090816)(5.436386,1.092653)(5.460604,1.094490)(5.484683,1.096327)(5.508625,1.098163)(5.532436,1.100000)          
      };
      \addplot[blue,mark=o,only marks,line width=0.5pt] plot coordinates{(3.688746,1)(3.820149,1)(5.569717,1)(8.608852,1)(10.627925,1)};
      \addplot[red,mark=o,only marks,line width=0.5pt] plot coordinates{(3.687812,1)};
      \draw[->,thick] (axis cs:3.906886,1.022) -- (axis cs:4.201332,1.017347) node [below,pos=0,font=\footnotesize] {$0$};
      \draw[->,thick] (axis cs:4.850098,1.015) -- (axis cs:4.182013,1.009796) node [below,pos=0,font=\footnotesize] {$\frac14$};
      \draw[->,thick] (axis cs:7.236215,1.042) -- (axis cs:6.686035,1.037551) node [below,pos=0,font=\footnotesize] {$\frac12$};
      \draw[->,thick] (axis cs:11.600284,1.08) -- (axis cs:11.100467,1.076122) node [below,pos=0,font=\footnotesize] {$\frac34$};
      \draw[->,thick] (axis cs:13.802914,1.06) -- (axis cs:13.139870,1.052245) node [below,pos=0,font=\footnotesize] {$1$};
      \draw[->,thick] (axis cs:4.030286,1.05) -- (axis cs:4.768080,1.046735) node [above,pos=0,font=\footnotesize] {$\alpha_{\rm opt}$};
    \end{axis}
  \end{tikzpicture}
  \caption{Ratio between the largest eigenvalue $\lambda$ of ${\bf L}_\alpha$ and the right edge of the support $ \mathcal{S}^\alpha $, as a function of the largest eigenvalue $\ell$ of $\bar{\bf M}$, ${\bf M}=\Delta {\bf I}_3$, $c_i=\frac13$, for $\Delta\in[10,150]$, $\mu=\frac34\delta_{q_{(1)}}+\frac14\delta_{q_{(2)}}$ with $q_{(1)}=0.1$ and $q_{(2)}=0.5$, for $\alpha\in\{0,\frac14,\frac12,\frac34,1,\alpha_{\rm opt}\}$ (indicated on the curves of the graph). Here, $\alpha_{\rm opt}=0.07$. Circles indicate phase transition.}
  \label{fig:phase_transition_2masses}
\end{figure}
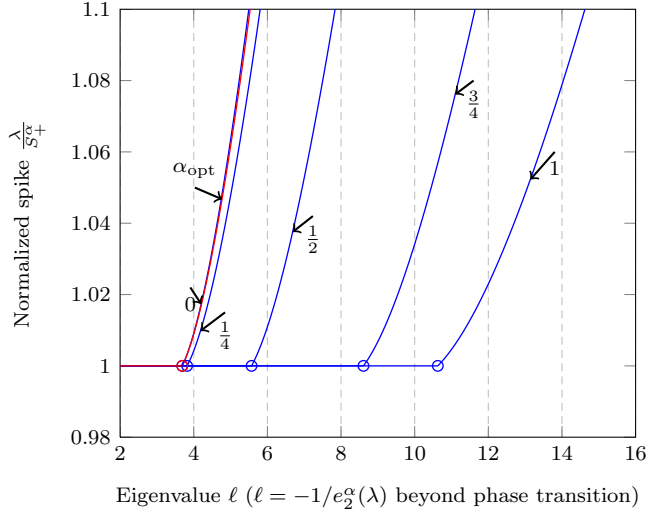

In the sequel, to compare the different algorithms, we will use the performance evaluation measure known as the \emph{overlap to ground truth communities}, defined in \cite{krzakala2013spectral} as 
$$ {\rm Overlap} \equiv \frac{\frac1n\sum_{i=1}^{n}\delta(g_{i}\hat{g}_{i})-\frac1K}{1-\frac1K}, $$ where $ g_{i} $ and $ \hat{g}_{i} $ are the true and estimated labels of node $ i $, respectively. Note that this definition implicitly suggests that all communities are of equal proportions and is therefore not fully compatible with our more general present setting which allows for unbalanced classes. Figure~\ref{fig:overlap_2masses} subsequently shows the overlap performance under the setting of Figure~\ref{fig:phase_transition_2masses}. It is worth mentioning that the empirically observed phase transitions closely match the theoretical ones (drawn in circles and the same as in Figure~\ref{fig:phase_transition_2masses}).
\begin{figure}[h!]
  \centering
  \begin{tikzpicture}[font=\footnotesize]
    \renewcommand{\axisdefaulttryminticks}{4} 
    \tikzstyle{every major grid}+=[style=densely dashed]       
    \tikzstyle{every axis y label}+=[yshift=-10pt] 
    \tikzstyle{every axis x label}+=[yshift=5pt]
    \tikzstyle{every axis legend}+=[cells={anchor=west},fill=white,
        at={(0.98,0.02)}, anchor=south east, font=\scriptsize ]
    \begin{axis}[
      xmin=5,
      ymin=0,
      xmax=50,
      ymax=1,
      grid=major,
      ymajorgrids=false,
      scaled ticks=true,
      xlabel={$\Delta$},
      ylabel={Overlap},
      legend style={at={(axis cs:51,0.23)},anchor=west}
      ]
      {
      \addplot[blue,line width=0.5pt] plot coordinates{
          (0.000000,0.018010)(1.250000,0.016490)(2.500000,0.018420)(3.750000,0.017010)(5.000000,0.016640)(6.250000,0.017250)(7.500000,0.020310)(8.750000,0.026895)(10.000000,0.044070)(11.250000,0.095735)(12.500000,0.172230)(13.750000,0.331570)(15.000000,0.444300)(16.250000,0.500335)(17.500000,0.547830)(18.750000,0.589950)(20.000000,0.625675)(21.250000,0.655970)(22.500000,0.687785)(23.750000,0.714360)(25.000000,0.737215)(26.250000,0.759905)(27.500000,0.783395)(28.750000,0.799705)(30.000000,0.818235)(31.250000,0.835130)(32.500000,0.848315)(33.750000,0.863455)(35.000000,0.876185)(36.250000,0.887530)(37.500000,0.900655)(38.750000,0.909590)(40.000000,0.917825)(41.250000,0.926435)(42.500000,0.934060)(43.750000,0.941475)(45.000000,0.947505)(46.250000,0.953180)(47.500000,0.958105)(48.750000,0.963215)(50.000000,0.967270)
      };
      \addlegendentry{$\alpha=0$}
      \addplot[blue,densely dashed,line width=0.5pt] plot coordinates{
          (0.000000,0.018245)(1.250000,0.017545)(2.500000,0.017170)(3.750000,0.017100)(5.000000,0.017635)(6.250000,0.018840)(7.500000,0.019470)(8.750000,0.020560)(10.000000,0.020770)(11.250000,0.026310)(12.500000,0.033420)(13.750000,0.044180)(15.000000,0.068715)(16.250000,0.113250)(17.500000,0.219740)(18.750000,0.402385)(20.000000,0.528185)(21.250000,0.595990)(22.500000,0.647980)(23.750000,0.692655)(25.000000,0.727540)(26.250000,0.760235)(27.500000,0.787660)(28.750000,0.809930)(30.000000,0.833005)(31.250000,0.851990)(32.500000,0.868470)(33.750000,0.883455)(35.000000,0.897795)(36.250000,0.910910)(37.500000,0.921360)(38.750000,0.932830)(40.000000,0.939990)(41.250000,0.947940)(42.500000,0.954040)(43.750000,0.960625)(45.000000,0.965430)(46.250000,0.969625)(47.500000,0.974040)(48.750000,0.978670)(50.000000,0.980995)
};
      \addlegendentry{$\alpha=\frac12$}
}
      \addplot[blue,densely dotted,line width=0.5pt] plot coordinates{
          (0.000000,0.015615)(1.250000,0.015390)(2.500000,0.016740)(3.750000,0.016030)(5.000000,0.016025)(6.250000,0.016485)(7.500000,0.016210)(8.750000,0.017455)(10.000000,0.016765)(11.250000,0.016960)(12.500000,0.019225)(13.750000,0.019240)(15.000000,0.021075)(16.250000,0.020520)(17.500000,0.019990)(18.750000,0.021140)(20.000000,0.023670)(21.250000,0.023290)(22.500000,0.026535)(23.750000,0.027700)(25.000000,0.032405)(26.250000,0.032670)(27.500000,0.038260)(28.750000,0.043065)(30.000000,0.057275)(31.250000,0.063080)(32.500000,0.088855)(33.750000,0.117325)(35.000000,0.149375)(36.250000,0.209790)(37.500000,0.299270)(38.750000,0.359705)(40.000000,0.499970)(41.250000,0.600195)(42.500000,0.682285)(43.750000,0.718695)(45.000000,0.749200)(46.250000,0.782620)(47.500000,0.802675)(48.750000,0.825470)(50.000000,0.841235)
      };
      \addlegendentry{$\alpha=1$}
     {
      \addplot[red,line width=1pt] plot coordinates{
          (0.000000,0.018130)(1.250000,0.017950)(2.500000,0.016620)(3.750000,0.017280)(5.000000,0.017975)(6.250000,0.016600)(7.500000,0.019970)(8.750000,0.024855)(10.000000,0.029965)(11.250000,0.054840)(12.500000,0.106060)(13.750000,0.242680)(15.000000,0.392755)(16.250000,0.484455)(17.500000,0.544925)(18.750000,0.594330)(20.000000,0.639380)(21.250000,0.674630)(22.500000,0.709545)(23.750000,0.739220)(25.000000,0.764105)(26.250000,0.789365)(27.500000,0.812145)(28.750000,0.828930)(30.000000,0.848770)(31.250000,0.865240)(32.500000,0.878000)(33.750000,0.892540)(35.000000,0.904290)(36.250000,0.915590)(37.500000,0.925855)(38.750000,0.934940)(40.000000,0.941690)(41.250000,0.949205)(42.500000,0.955710)(43.750000,0.961480)(45.000000,0.966295)(46.250000,0.970040)(47.500000,0.974050)(48.750000,0.978535)(50.000000,0.980710)
      };
      \addlegendentry{$\alpha=\alpha_{\rm opt}$}
      }
      {
      \addplot[green!70!black,dashdotted,line width=0.5pt] plot coordinates{
          (0.000000,0.015195)(1.250000,0.015130)(2.500000,0.015490)(3.750000,0.014745)(5.000000,0.015275)(6.250000,0.013920)(7.500000,0.015750)(8.750000,0.015770)(10.000000,0.016030)(11.250000,0.015165)(12.500000,0.015895)(13.750000,0.016050)(15.000000,0.017110)(16.250000,0.021015)(17.500000,0.158695)(18.750000,0.468200)(20.000000,0.639750)(21.250000,0.669635)(22.500000,0.701285)(23.750000,0.727705)(25.000000,0.749290)(26.250000,0.772960)(27.500000,0.793735)(28.750000,0.809760)(30.000000,0.828355)(31.250000,0.845480)(32.500000,0.859065)(33.750000,0.873905)(35.000000,0.885300)(36.250000,0.897120)(37.500000,0.909570)(38.750000,0.918315)(40.000000,0.925600)(41.250000,0.933695)(42.500000,0.940770)(43.750000,0.947320)(45.000000,0.953390)(46.250000,0.957960)(47.500000,0.962855)(48.750000,0.967320)(50.000000,0.971315)
      };
      \addlegendentry{Bethe Hessian}
      }
      {
      \addplot[blue,mark=o,only marks,line width=0.5pt] plot coordinates{(11.0662,0)};
      \addplot[blue,mark=o,only marks,densely dashed,line width=0.5pt] plot coordinates{(16.7092,0)};
      }
      \addplot[blue,mark=o,only marks,densely dotted,line width=0.5pt] plot coordinates{(31.8838,0)};
      \addlegendentry{Phase transition}
      {
      \addplot[red,mark=o,only marks,line width=1pt] plot coordinates{(11.0634,0)};
      }
    \end{axis}
  \end{tikzpicture}
  \caption{Overlap performance for $n=3000$, $K=3$, $c_i=\frac13$, $\mu=\frac34\delta_{q_{(1)}}+\frac14\delta_{q_{(2)}}$ with $q_{(1)}=0.1$ and $q_{(2)}=0.5$, ${\bf M}=\Delta {\bf I}_3$, for $\Delta\in[5,50]$. Here $\alpha_{\rm opt}=0.07$.}
  \label{fig:overlap_2masses}
\end{figure}
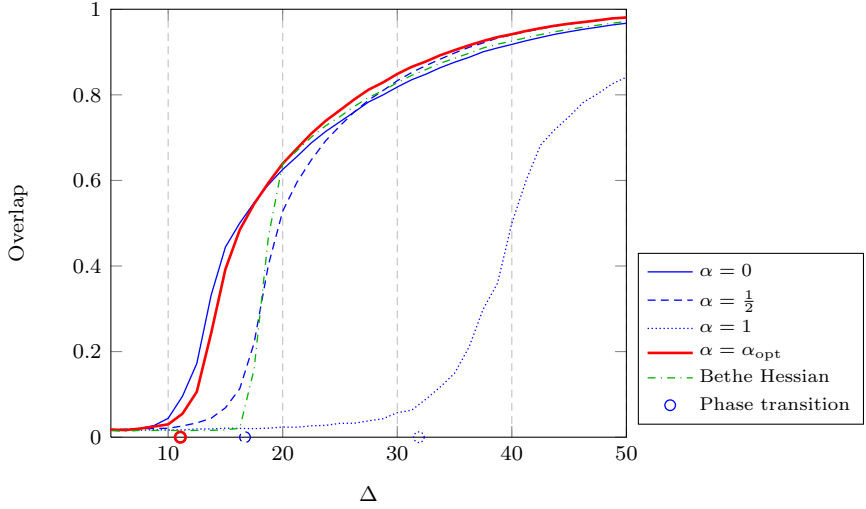
We then present in Figure~\ref{fig:overlap_2masses_q} an example where the BH algorithm fails due to strongly heterogeneous node degrees. Assuming nodes connect with either low $q_{(1)}=0.1$ or high $q_{(2)}>q_{(1)}$ intrinsic weights, we observe a sudden drop of the BH overlap for large $q_{(2)}-q_{(1)}$. This phenomenon is consistent with the fact, observed earlier in Figure~\ref{fig:2Dplot}, that BH creates artificial communities out of nodes with the same $q_i$ parameter. This is a practical demonstration of the need for a proper eigenvector normalization to avoid degree biases. This observation has recently led~\cite{newman2013spectral} to consider a regularization for the non-backtracking operator on which the BH method is based.

\begin{figure}[h!]
  \centering
  \begin{tikzpicture}[font=\footnotesize]
    \renewcommand{\axisdefaulttryminticks}{4}
    \tikzstyle{every major grid}+=[style=densely dashed]      
    \tikzstyle{every axis y label}+=[yshift=-10pt]
    \tikzstyle{every axis x label}+=[yshift=5pt]
    \tikzstyle{every axis legend}+=[cells={anchor=west},fill=white,
        at={(0.02,0.98)}, anchor=north west, font=\scriptsize ]
    \begin{axis}[
      xmin=0.1,
      ymin=0,
      xmax=0.9,
      ymax=1,
      grid=major,
      ymajorgrids=false,
      scaled ticks=true,
      ylabel={Overlap},
      xlabel={$q_{(2)}$ ($q_{(1)}=0.1$)},
      legend style={at={(1.02,0.2)},anchor=west}
      ]
      \addplot[blue,line width=0.5pt] plot coordinates{
          (0.100000,0.026515)(0.125000,0.030275)(0.150000,0.034090)(0.175000,0.042005)(0.200000,0.051600)(0.225000,0.078450)(0.250000,0.122930)(0.275000,0.180675)(0.300000,0.295255)(0.325000,0.414490)(0.350000,0.486430)(0.375000,0.529995)(0.400000,0.568070)(0.425000,0.599035)(0.450000,0.623780)(0.475000,0.648795)(0.500000,0.668385)(0.525000,0.686080)(0.550000,0.699165)(0.575000,0.712895)(0.600000,0.722685)(0.625000,0.736160)(0.650000,0.745200)(0.675000,0.756255)(0.700000,0.762940)(0.725000,0.773540)(0.750000,0.782290)(0.775000,0.789625)(0.800000,0.797420)(0.825000,0.805280)(0.850000,0.809605)(0.875000,0.819305)(0.900000,0.823135)
      };
      \addplot[blue,densely dashed,line width=0.5pt] plot coordinates{
          (0.100000,0.027535)(0.125000,0.031095)(0.150000,0.031085)(0.175000,0.040055)(0.200000,0.044480)(0.225000,0.057015)(0.250000,0.078485)(0.275000,0.097720)(0.300000,0.140605)(0.325000,0.190050)(0.350000,0.272445)(0.375000,0.355910)(0.400000,0.454770)(0.425000,0.526080)(0.450000,0.565195)(0.475000,0.601065)(0.500000,0.628560)(0.525000,0.652025)(0.550000,0.669990)(0.575000,0.688230)(0.600000,0.702800)(0.625000,0.718000)(0.650000,0.730865)(0.675000,0.743465)(0.700000,0.752195)(0.725000,0.764185)(0.750000,0.773360)(0.775000,0.782330)(0.800000,0.790315)(0.825000,0.798290)(0.850000,0.809420)(0.875000,0.815825)(0.900000,0.821235)
      };
      \addplot[blue,densely dotted,line width=0.5pt] plot coordinates{
          (0.100000,0.016640)(0.125000,0.019700)(0.150000,0.017925)(0.175000,0.020840)(0.200000,0.021535)(0.225000,0.021625)(0.250000,0.020025)(0.275000,0.017915)(0.300000,0.021420)(0.325000,0.020460)(0.350000,0.020835)(0.375000,0.020230)(0.400000,0.022450)(0.425000,0.021570)(0.450000,0.022490)(0.475000,0.021310)(0.500000,0.024020)(0.525000,0.023240)(0.550000,0.021910)(0.575000,0.024255)(0.600000,0.023740)(0.625000,0.023235)(0.650000,0.024155)(0.675000,0.023535)(0.700000,0.023605)(0.725000,0.022860)(0.750000,0.022425)(0.775000,0.024290)(0.800000,0.022035)(0.825000,0.025845)(0.850000,0.021800)(0.875000,0.023100)(0.900000,0.025580)
      };
      \addplot[red,line width=1pt] plot coordinates{
          (0.100000,0.027935)(0.125000,0.029875)(0.150000,0.035060)(0.175000,0.048325)(0.200000,0.055240)(0.225000,0.080305)(0.250000,0.144460)(0.275000,0.204785)(0.300000,0.330655)(0.325000,0.446430)(0.350000,0.514640)(0.375000,0.553030)(0.400000,0.591900)(0.425000,0.621690)(0.450000,0.646185)(0.475000,0.672040)(0.500000,0.691995)(0.525000,0.707820)(0.550000,0.721965)(0.575000,0.734635)(0.600000,0.744635)(0.625000,0.756875)(0.650000,0.766080)(0.675000,0.777305)(0.700000,0.783780)(0.725000,0.792980)(0.750000,0.802170)(0.775000,0.809860)(0.800000,0.816790)(0.825000,0.823535)(0.850000,0.829795)(0.875000,0.837975)(0.900000,0.841065)
      };
      \addplot[green!70!black,dashdotted,line width=0.5pt] plot coordinates{
          (0.100000,0.019515)(0.125000,0.019660)(0.150000,0.019800)(0.175000,0.018230)(0.200000,0.018680)(0.225000,0.018885)(0.250000,0.023655)(0.275000,0.036860)(0.300000,0.083705)(0.325000,0.200035)(0.350000,0.325885)(0.375000,0.467710)(0.400000,0.584180)(0.425000,0.614115)(0.450000,0.645105)(0.475000,0.665860)(0.500000,0.683465)(0.525000,0.695440)(0.550000,0.705495)(0.575000,0.704280)(0.600000,0.681170)(0.625000,0.620700)(0.650000,0.545395)(0.675000,0.438580)(0.700000,0.394540)(0.725000,0.341275)(0.750000,0.319325)(0.775000,0.280340)(0.800000,0.280515)(0.825000,0.269530)(0.850000,0.269685)(0.875000,0.260785)(0.900000,0.279285)
      };
      \legend{ {$\alpha=0$},{$\alpha=0.5$},{$\alpha=1$},{$\alpha=\hat{\alpha}_{\rm opt}$},{Bethe Hessian},{Phase transition} }
    \end{axis}
  \end{tikzpicture}
  \caption{Overlap for $n=3000$, $K=3$, $\mu=\frac34\delta_{q_{(1)}}+\frac14\delta_{q_{(2)}}$ with $q_{(1)}=0.1$ and $q_{(2)}\in[0.1,0.9]$, ${\bf M}$ defined by $M_{ii}=10$, $M_{ij}=-10, i\neq j$, $c_i=\frac13$.}
  \label{fig:overlap_2masses_q}
\end{figure}
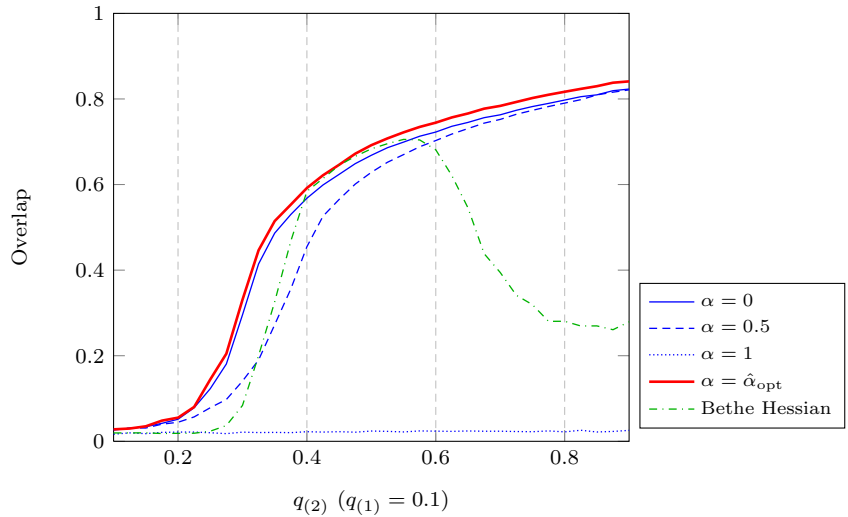


In Figure~\ref{fig:overlap_powerlaw}, we consider a more realistic synthetic graph where the $ q_i $'s assume a power law of support $[0.05,0.3]$ which simulates a sparse graph characteristic of real world networks. Although this is not the regime we study in this article, our method for $ \alpha=\hat{\alpha}_{\rm opt} $ still competes with the BH method which was developped for sparse homogeneous graphs. However, it is seen that the theoretical phase transitions do not closely match the empirical ones espectially for the case $ \alpha=1 $. This mismatch is likely due to the fact that our theoretical results in this article require $ P_{ij}=\mathcal{O}(1) $ which is not always the case in this scenario.
\begin{figure}[h!]
  \centering
  \begin{tikzpicture}[font=\footnotesize]
    \renewcommand{\axisdefaulttryminticks}{4}
    \tikzstyle{every major grid}+=[style=densely dashed]      
    \tikzstyle{every axis y label}+=[yshift=-10pt]
    \tikzstyle{every axis x label}+=[yshift=5pt]
    \tikzstyle{every axis legend}+=[cells={anchor=west},fill=white,
        at={(0.02,0.98)}, anchor=north west, font=\scriptsize ]
    \begin{axis}[
      xmin=10,
      ymin=0,
      xmax=150,
      ymax=1,
      grid=major,
      ymajorgrids=false,
      scaled ticks=true,
      xlabel={$\Delta$},
      ylabel={Overlap},
      legend style={at={(axis cs:152,0.23)},anchor=west}
      ]
      \addplot[blue,line width=0.5pt] plot coordinates{
          (0.000000,0.015460)(5.000000,0.015443)(10.000000,0.016147)(15.000000,0.019610)(20.000000,0.020640)(25.000000,0.021007)(30.000000,0.026520)(35.000000,0.032000)(40.000000,0.046640)(45.000000,0.072210)(50.000000,0.110260)(55.000000,0.158290)(60.000000,0.266130)(65.000000,0.366310)(70.000000,0.487840)(75.000000,0.585040)(80.000000,0.650180)(85.000000,0.692520)(90.000000,0.731620)(95.000000,0.765930)(100.000000,0.793740)(105.000000,0.818660)(110.000000,0.843550)(115.000000,0.863600)(120.000000,0.881000)(125.000000,0.895510)(130.000000,0.907460)(135.000000,0.921480)(140.000000,0.930890)(145.000000,0.940750)(150.000000,0.948680)
      };
      \addplot[blue,densely dashed,line width=0.5pt] plot coordinates{
          (0.000000,0.015150)(5.000000,0.017603)(10.000000,0.019057)(15.000000,0.017350)(20.000000,0.021840)(25.000000,0.021427)(30.000000,0.021370)(35.000000,0.031020)(40.000000,0.038690)(45.000000,0.065300)(50.000000,0.120340)(55.000000,0.266940)(60.000000,0.464100)(65.000000,0.588380)(70.000000,0.663740)(75.000000,0.727500)(80.000000,0.773900)(85.000000,0.811160)(90.000000,0.842900)(95.000000,0.869670)(100.000000,0.891640)(105.000000,0.908840)(110.000000,0.925310)(115.000000,0.937480)(120.000000,0.947280)(125.000000,0.955420)(130.000000,0.962990)(135.000000,0.970460)(140.000000,0.975330)(145.000000,0.979380)(150.000000,0.982710)
};
      \addplot[blue,densely dotted,line width=0.5pt] plot coordinates{
          (0.000000,0.001360)(5.000000,0.001023)(10.000000,0.001157)(15.000000,0.001050)(20.000000,0.001190)(25.000000,0.001287)(30.000000,0.001350)(35.000000,0.001980)(40.000000,0.001670)(45.000000,0.001600)(50.000000,0.001330)(55.000000,0.001850)(60.000000,0.001570)(65.000000,0.002460)(70.000000,0.001890)(75.000000,0.002900)(80.000000,0.002310)(85.000000,0.002100)(90.000000,0.006280)(95.000000,0.003480)(100.000000,0.005840)(105.000000,0.008540)
          (110,0.015)
          (115.000000,0.019840)(120.000000,0.040330)(125.000000,0.119270)(130.000000,0.272510)
          (135,0.38)
          (140.000000,0.483990)(145.000000,0.574640)(150.000000,0.677590)
      };
      \addplot[red,line width=1pt] plot coordinates{
          (0.000000,0.018060)(5.000000,0.019723)(10.000000,0.017287)(15.000000,0.021000)(20.000000,0.022240)(25.000000,0.021957)(30.000000,0.034430)(35.000000,0.049100)(40.000000,0.086860)(45.000000,0.163500)(50.000000,0.321470)(55.000000,0.474480)(60.000000,0.559940)(65.000000,0.622330)(70.000000,0.678080)(75.000000,0.731290)(80.000000,0.772350)(85.000000,0.806510)(90.000000,0.834780)(95.000000,0.861010)(100.000000,0.883250)(105.000000,0.899880)(110.000000,0.915660)(115.000000,0.928510)(120.000000,0.938630)(125.000000,0.948820)(130.000000,0.956690)(135.000000,0.964010)(140.000000,0.969330)(145.000000,0.974260)(150.000000,0.978390)
      };
      \addplot[green!70!black,dashdotted,line width=0.5pt] plot coordinates{
          (0.000000,0.016340)(5.000000,0.015313)(10.000000,0.015417)(15.000000,0.015030)(20.000000,0.016980)(25.000000,0.015547)(30.000000,0.016690)(35.000000,0.018380)(40.000000,0.019290)(45.000000,0.069520)(50.000000,0.215160)(55.000000,0.441200)(60.000000,0.558730)(65.000000,0.623240)(70.000000,0.676100)(75.000000,0.727370)(80.000000,0.766990)(85.000000,0.801060)(90.000000,0.827210)(95.000000,0.853140)(100.000000,0.874710)(105.000000,0.892110)(110.000000,0.908060)(115.000000,0.921030)(120.000000,0.931320)(125.000000,0.941800)(130.000000,0.948970)(135.000000,0.958060)(140.000000,0.963010)(145.000000,0.968840)(150.000000,0.973170)
      };
      \addplot[blue,mark=o,only marks,line width=0.5pt] plot coordinates{(48.8813,0)};
      \addplot[blue,mark=o,only marks,densely dashed,line width=0.5pt] plot coordinates{(47.9614,0)};
      \addplot[blue,mark=o,only marks,densely dotted,line width=0.5pt] plot coordinates{(62.0640,0)};
      \addplot[red,mark=o,only marks,line width=1pt] plot coordinates{(42.2605,0)};
      \legend{ {$\alpha=0$},{$\alpha=0.5$},{$\alpha=1$},{$\alpha=\hat{\alpha}_{\rm opt}$},{Bethe Hessian},{Phase transition} }
    \end{axis}
  \end{tikzpicture}
  \caption{Overlap for $n=3000$, $K=3$, $c_i=\frac13$, $\mu$ a power law with exponent $3$ and support $[0.05,0.3]$, ${\bf M}=\Delta {\bf I}_3$, for $\Delta\in[10,150]$. Here $\hat{\alpha}_{\rm opt}=0.28$.}
  \label{fig:overlap_powerlaw}
\end{figure}
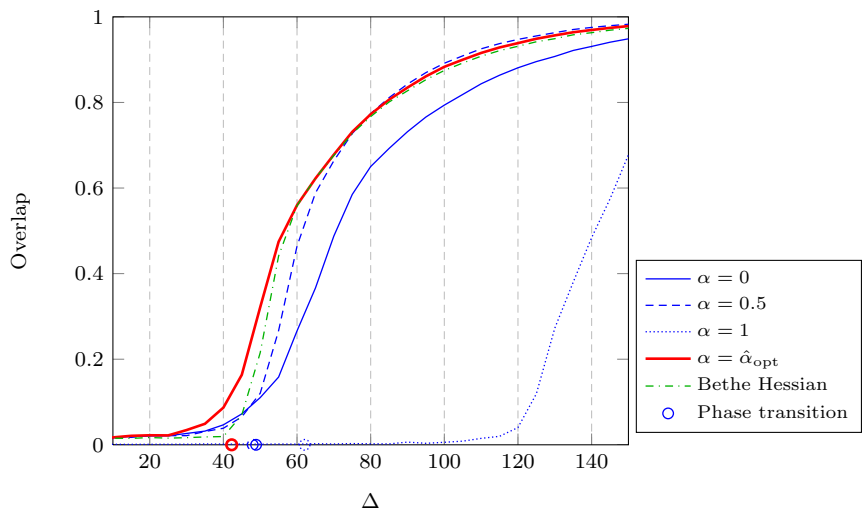

We finally confront the overlap performance on real world benchmarks in Table~\ref{tbl:benchmarks}. The best overlap score for each benchmark is set in boldface and quasi-equal scores are shown in italic. Our approach largely outperforms the BH method on some benchmarks and has competitive performances on others. However note that, for so small network sizes, the performance achieved by ${\bf L}_{\hat{\alpha}_{\rm opt}}$ may be quite unsatisfactory.\footnote{We should note here that the scores for the BH are different from the ones found in the article \cite{saade2014spectral} since here we are running a consistent algorithm (EM in the last step of the spectral algorithm) for all $ K $ while the authors of \cite{saade2014spectral} have instead used the signs of the eigenvector for networks with two communities and k-means algorithm for those with more than two communities.} 

    \begin{table}
    \centering
        \begin{tabular}{l|rrrr||r}
            Graph ($n$, $K$) [reference] & $\alpha=0$ & $\alpha=\frac12$ & $\alpha=1$ & $\hat{\alpha}_{\rm opt}$ & BH \\
            \hline
            Polbooks ($105$, $3$)\cite{newman2006finding}& $\it 0.743$ &  $\bf 0.757$  & $0.214$ &    $\it 0.743$ &  $\bf 0.757$ \\
            Adjnoun ($112$, $2$)\cite{newman2006finding}& $0.571$ &  $\bf 0.714$  & $0.000$ &    $0.571$ &  $0.661$ \\
            Karate ($34$, $2$)\cite{zachary1977information}& $0.176$ &  $\it 0.941$      & $0.353$ &    $0.176$ &  $\bf 1.000$ \\
            Dolphins ($62$, $2$)\cite{lusseau2003bottlenose}& $\bf 0.968$ &  $\bf 0.968$  & $0.387$ &    $\bf 0.968$ & $\it 0.935$ \\
            Polblogs ($1221$, $2$)\cite{adamic2005political}& $\bf 0.897$ &  $0.035$      & $0.040$ &    $\bf 0.897$ &  $0.304$ \\
            Football ($115$, $12$)\cite{newman2006finding}& $0.858$ &  $\it 0.905$      & $\it 0.905$ &    $\it 0.905$& $\bf 0.924$
        \end{tabular}
        \medskip
        \caption{Overlap performance on benchmark graphs.}
        \label{tbl:benchmarks}
    \end{table}
    
\bigskip

In order to assess the performance of Algorithm~\ref{alg:algorithm}, we now need to investigate closely the content of the eigenvectors of $ {\bf L}_{\alpha}$ used for clustering (and of their pre-multiplied by $ {\bf D}^{\alpha-1} $ versions). These regularized eigenvectors happen to be shapped like noisy ``plateaus" (step functions), each plateau characterizing a class. The properties of those noisy plateaus are significant to assess the performance of spectral clustering. The objective of the next section is to characterize exactly those quantities.

\subsection{Eigenvectors}
\label{subsec:eigenvect}
In this section, in order to fully characterize the performances of Algorithm~\ref{alg:algorithm}, we study in depth the normalized eigenvectors $ \bar{{\bf v}}_i^\alpha $ used for the classification in the algorithm (step~\ref{three} of Algorithm~\ref{alg:algorithm}). The eigenvectors corresponding to the eigenvalues for which $ 1+\theta^{\alpha}(\rho)=0 $ are still not considered since we recall they are of no interest for the classification. For technical reasons, we restrict ourselves here to those eigenpairs $ (\lambda_i,\bar{{\bf v}}_i^\alpha) $'s for which there exists no $ \lambda_j \neq \lambda_i $ such that, if $ \lambda_i \to \rho $, $ \lambda_j \to \rho.$

As one can see in Figure~\ref{fig:correct}, the different clusters of points (rows of $ {\bf W} $ in Algorithm~\ref{alg:algorithm}) have different dispersions (variances) in the DCSBM model under consideration. The most appropriate algorithm to use in step~\ref{four} of Algorithm~\ref{alg:algorithm} is the expectation maximization (EM) method. EM considers each point $ {\bf r}_i \in \mathbb{R}^{\ell} $ arising from $[\bar{{\bf v}}_1^\alpha,\dots,\bar{{\bf v}}_\ell^\alpha] $ as a mixture of $ K $ Gaussian random vectors with means $ {\bm \nu}_{EM}^a$ and covariances $ {\bm \Sigma}_{EM}^a \in \mathbb{R}^{\ell \times \ell} $, $a \in \{1,\dots,K\}$. Starting from initial means and covariances, they are sequentially updated until convergence. To identify ${\bm \nu}_{EM}^a $, ${\bm \Sigma}_{EM}^a$ and thus understand the performance of Algorithm~\ref{alg:algorithm}, we may write $ \bar{{\bf v}}_i^\alpha $ \footnote{Recall that the graph nodes were assumed labeled by class, and thus the entries of $\bar{{\bf v}}_i^\alpha$ are similarly sorted by class.} as the ``noisy plateaus" vector 
\begin{equation}
\label{eigenvectorStruct}
\bar{{\bf v}}_i^\alpha = \sum_{a=1}^{K} \nu_{i}^{a}{\bf j}_{a} + \sqrt{\sigma_{ii}^{a}}{\bf w}_{i}^{a}
\end{equation}
where $ {\bf w}_{i}^{a} \in \mathbb{R}^{n} $ is a random vector orthogonal to $ {\bf j}_{a}$, of norm $ \sqrt{n_a} $ and supported on the indices of $ \mathcal{C}_a $ and 
\begin{align} \label{classMean}
\nu_{i}^{a}&=\frac{1}{n_a}\left(\bar{{\bf v}}_i^\alpha\right)^{\sf T}{\bf j}_a =\frac{1}{n_a}\frac{({\bf u}_i^{\alpha})^{\sf T}{\bf D}^{\alpha-1}{\bf j}_a}{\sqrt{({\bf u}_i^{\alpha})^{\sf T}{\bf D}^{2(\alpha-1)}{\bf u}_i^{\alpha}}} \\
\label{classCov}
\sigma_{ij}^{a}&=\frac{1}{n_a}\left[\frac{({\bf u}_i^{\alpha})^{\sf T}{\bf D}^{\alpha-1}\mathcal{D}_a{\bf D}^{\alpha-1}{\bf u}_j^{\alpha}}{\sqrt{({\bf u}_i^{\alpha})^{\sf T}{\bf D}^{2(\alpha-1)}{\bf u}_i}\sqrt{({\bf u}_j^{\alpha})^{\sf T}{\bf D}^{2(\alpha-1)}{\bf u}_j^{\alpha}}}\right] - \nu_i^a \nu_j^a
\end{align}
with $ \mathcal{D}_a=\mathcal{D}({\bf j}_a) $. The vector $ {\bm \nu}^{a}=(\nu_i^a)_{i=1}^{\ell} \in \mathbb{R}^{\ell} $ and the matrix $ {\bm \Sigma}^{a}=(\sigma_{ij}^a)_{i,j=1}^{\ell} \in \mathbb{R}^{\ell \times \ell}$ represent respectively the empirical means and empirical covariances of the points $ {\bf r}_i $ (defined in Algorithm~\ref{alg:algorithm}) belonging to class $ \mathcal{C}_a $. Thus, provided that EM converges to the correct solution, $({\bm \nu}_{EM}^a)_i $ and $({\bm \Sigma}_{EM}^a)_{ij}$ shall converge asymptotically to the limiting values of $ \nu_{i}^{a} \in \mathbb{R} $ and $ \sigma_{ij}^{a}$ respectively.

Clearly, for small values of $ {\bm \Sigma}^a $ compared to $ {\bm \nu}^a $, clustering the vectors $ \bar{{\bf v}}_i^\alpha $ shall lead to good performances. We shall investigate here the links between $ {\bm \Sigma}^a $, $ {\bm \nu}^a $ and the model parameters $ {\bf M} $, $ \mu $ and $ \alpha $. 

Technically, the standard tools used in spiked random matrix analysis do not allow for an immediate assessment of the quantities $ \nu_{i}^{a} $ and $ \sigma_{ij}^{a} $. As a workaround, we follow the approach used in \cite{couillet2016kernel} which relies on the possibility to estimate bilinear forms of the type $ {\bf a}^{\sf T}{\bf u}_i^{\alpha}({\bf u}_i^{\alpha})^{\sf T}{\bf b} $ for given vectors $ {\bf a}, {\bf b} \in \mathbb{R}^n $ and unit multiplicity eigenvectors $ {\bf u}_i^{\alpha} $ of $ {\bf L}_\alpha $ as we have from Cauchy formula, as $ n \to \infty $ almost surely, (since $ \lambda_i({\bf L}_\alpha) \to \rho $)
$$ {\bf a}^{\sf T}{\bf u}_i^{\alpha}({\bf u}_i^{\alpha})^{\sf T}{\bf b}=-\frac{1}{2\pi i}\oint_{\Gamma_\rho} {\bf a}^{\sf T}\left({\bf L}_\alpha-z{\bf I}_n\right)^{-1}{\bf b}\mathrm{d}z$$
and for a given matrix $ {\bf D} $ $$ ({\bf u}_i^{\alpha})^{\sf T}{\bf D}{\bf u}_i^{\alpha}=\tr {\bf u}_i^{\alpha}({\bf u}_i^{\alpha})^{\sf T}{\bf D}=  -\frac{1}{2\pi i}\oint_{\Gamma_\rho} \tr \left({\bf L}_\alpha-z{\bf I}_n\right)^{-1}{\bf D}\mathrm{d}z$$
where $ \Gamma_\rho $ is a positively oriented contour circling around the limiting eigenvalue $ \rho $ of $ \lambda_i({\bf L}_{\alpha}) $ associated to the eigenvector $ {\bf u}_i^{\alpha} $ of $ {\bf L}_\alpha $.
Similar to the eigenvalue estimation step in previous sections, the estimation of those quantities then consists in relating $ {\bf L}_\alpha $ to $ \tilde{{\bf L}}_\alpha $ and then to $ \underline{{\bf G}}_z^\alpha $. Precisely,
\begin{itemize}
\item we will estimate the $ \nu_{i}^{a} $'s by obtaining an estimator of the $ K \times K $ matrix $$ \frac{1}{n}\frac{{\bf J}^{\sf T}{\bf D}^{\alpha-1}{\bf u}_i^\alpha({\bf u}_i^\alpha)^{\sf T}{\bf D}^{\alpha-1}{\bf J}}{({\bf u}_i^\alpha)^{\sf T}{\bf D}^{2(\alpha-1)}{\bf u}_i^\alpha}, $$ the diagonal entries of which allow to estimate $ \left|\nu_{i}^{a}\right| $ while the off-diagonal entries are used to decide on the signs of the $ \nu_{i}^{a} $'s (up to a convention in the sign of $ {\bf u}_i^{\alpha} $).
\item Similarly, we may first estimate the more involved object $$ \frac{1}{n}\frac{{\bf J}^{\sf T}{\bf D}^{\alpha-1}{\bf u}_i^\alpha({\bf u}_i^\alpha)^{\sf T}{\bf D}^{\alpha-1}\mathcal{D}_a{\bf D}^{\alpha-1}{\bf u}_j^\alpha({\bf u}_j^\alpha)^{\sf T}{\bf D}^{\alpha-1}{\bf J}}{\left(({\bf u}_i^\alpha)^{\sf T}{\bf D}^{2(\alpha-1)}{\bf u}_i^\alpha\right)\left(({\bf u}_j^\alpha)^{\sf T}{\bf D}^{2(\alpha-1)}{\bf u}_j^\alpha\right)} $$ from which $ \frac{({\bf u}_i^{\alpha})^{\sf T}{\bf D}^{\alpha-1}\mathcal{D}_a{\bf D}^{\alpha-1}{\bf u}_j^{\alpha}}{\sqrt{({\bf u}_i^{\alpha})^{\sf T}{\bf D}^{2(\alpha-1)}{\bf u}_i^{\alpha}}\sqrt{({\bf u}_j^{\alpha})^{\sf T}{\bf D}^{2(\alpha-1)}{\bf u}_j^{\alpha}}} $ can be retrieved by dividing any entry $ e,f $ of the former quantity by non-vanishing quantities $ \nu_{i}^{e}\nu_{i}^{f} $. For the eigenvectors $ {\bf u}_i^{\alpha} $ used for clustering, there is always at least one index $ f $ such that $ \nu_i^f $ is non zero (otherwise, this eigenvector is of no use for clustering).
\end{itemize}
All calculus done, we obtain the following limit for the empirical means $ \nu_{i}^{a} $'s.
\begin{theorem}[Means]
\label{cor:means}
For each eigenpair $ (\lambda(\bar{{\bf M}}),{\bf v})$ of $ \mathcal{D}({\bf c})^{\frac12}\left({\bf I}_{K} -\mathbf{1}_K{\bf c}^{\sf T}\right)\mathbf{M}\left(\mathbf{I}_{K}-\mathbf{c}{\bf 1}_K^{T}\right)\mathcal{D}({\bf c})^{\frac12} $ of unit multiplicity, mapped to eigenpair $ (\rho,{\bf u}_i^\alpha) $ of $ {\bf L}_\alpha $ as defined in Corollary~\ref{phasetransition}, under the conditions of Assumption \ref{as1} and for $ \nu_{i}^{a} $ defined in~\eqref{classMean}, we have almost surely as $ n \to \infty $, $ \left|(\nu_{i}^{a})^{2}-(\nu_i^{a,\infty})^2\right| \rightarrow 0 $ where
\[(\nu_i^{a,\infty})^2 \equiv \frac{1}{n_a}\frac{\left[E_0^{\alpha}(\rho)\right]^{2}}{e_{00;2}^{\alpha}(\rho,\rho)+\chi^{\alpha}(\rho)}(v_{a})^{2}\]
with \\
$ \chi^{\alpha}(\rho)=\frac{\left[\left(1+e_{42;2}^{\alpha}(\rho)\right)e_{-1,0}^{\alpha}(\rho)-e_{32;2}^{\alpha}(\rho)e_{00}^{\alpha}(\rho)\right]e_{32;3}^{\alpha}(\rho)-\left[e_{22;2}^{\alpha}(\rho)e_{-1,0}^{\alpha}(\rho)+\left(1-e_{22;2}^{\alpha}(\rho)\right)e_{00}^{\alpha}(\rho)\right]e_{42;3}^{\alpha}(\rho)}{\left(1+e_{42;2}^{\alpha}(\rho)\right)\left(1-e_{22;2}^{\alpha}(\rho)\right)+\left[e_{32;2}^{\alpha}(\rho)\right]^{2}} $ and $ v_{a} $ is the component $ a $ of $ {\bf v}$.
\end{theorem}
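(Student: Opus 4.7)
The quantity of interest reads
\begin{equation*}
(\nu_i^a)^2 = \frac{1}{n_a^2}\frac{\mathbf{j}_a^{\sf T}\mathbf{D}^{\alpha-1}\mathbf{u}_i^\alpha(\mathbf{u}_i^\alpha)^{\sf T}\mathbf{D}^{\alpha-1}\mathbf{j}_a}{(\mathbf{u}_i^\alpha)^{\sf T}\mathbf{D}^{2(\alpha-1)}\mathbf{u}_i^\alpha},
\end{equation*}
so the plan is to evaluate the numerator and denominator through the Cauchy contour integrals already advocated by the authors, namely $\mathbf{a}^{\sf T}\mathbf{u}_i^\alpha(\mathbf{u}_i^\alpha)^{\sf T}\mathbf{b} = -\frac{1}{2\pi\imath}\oint_{\Gamma_\rho}\mathbf{a}^{\sf T}(\mathbf{L}_\alpha-z\mathbf{I}_n)^{-1}\mathbf{b}\,dz$ around a small contour enclosing only the limit $\rho$ (the unit-multiplicity hypothesis guarantees a simple pole and a clean selection of $\lambda_i$). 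First, Lemma~\ref{lm:consistentEstimate} lets me substitute $\mathbf{D}$ by $\sqrt{\mathbf{d}^{\sf T}\mathbf{1}_n}\,\mathbf{D}_q$ with vanishing error, so that the global factors $(\mathbf{d}^{\sf T}\mathbf{1}_n)^{(\alpha-1)}$ cancel between numerator and denominator and I may work with the deterministic $\mathbf{D}_q$. Second, Theorem~\ref{approx} lets me replace $\mathbf{L}_\alpha$ by $\tilde{\mathbf{L}}_\alpha$ inside the resolvent uniformly over $\Gamma_\rho$, since $\Gamma_\rho$ stays at positive distance from $\mathcal{S}^\alpha$.

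Next, I apply the Woodbury identity to the spiked structure of $\tilde{\mathbf{L}}_\alpha$,
\begin{equation*}
(\tilde{\mathbf{L}}_\alpha - z\mathbf{I}_n)^{-1} = \mathbf{Q}_z^\alpha - \mathbf{Q}_z^\alpha\mathbf{U}\bigl[\bm{\Lambda}^{-1} + \mathbf{U}^{\sf T}\mathbf{Q}_z^\alpha\mathbf{U}\bigr]^{-1}\mathbf{U}^{\sf T}\mathbf{Q}_z^\alpha,
\end{equation*}
with $\mathbf{Q}_z^\alpha=(n^{-1/2}\mathbf{D}_q^{-\alpha}\mathbf{X}\mathbf{D}_q^{-\alpha}-z\mathbf{I}_n)^{-1}$ as in the proof of Theorem~\ref{isoleigen}. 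The bulk term $\mathbf{Q}_z^\alpha$ has no pole on $\Gamma_\rho$ (Proposition~\ref{prop1}); the whole residue comes from the rank-$(K+1)$ correction, whose singularity at $z=\rho$ is, by the proof of Theorem~\ref{isoleigen}, precisely the zero eigenvalue of $\bm{\Lambda}^{-1}+\mathbf{U}^{\sf T}\mathbf{Q}_z^\alpha\mathbf{U}$ encoded in $\underline{\mathbf{G}}_z^\alpha$. I would then invoke the standard deterministic equivalents for bilinear forms $\mathbf{a}^{\sf T}\mathbf{Q}_z^\alpha\mathbf{b}$ and normalized traces $n^{-1}\tr[\mathbf{B}\mathbf{Q}_z^\alpha]$ against deterministic diagonal $\mathbf{B}$, expressible through $E_0^\alpha(z)$ and the kernels $e_{ab}^\alpha(z)$ of \eqref{eq:StieltjesE}. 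In particular, the numerator integrand at the leading order involves $\mathbf{j}_a^{\sf T}\mathbf{D}_q^{\alpha-1}\mathbf{Q}_z^\alpha\mathbf{U}$, which will asymptotically project onto the first $K$ columns of $\mathbf{U}$ with coefficients proportional to $e_{00}^\alpha(z)/n$ per entry, and similarly on the other side; the resulting quadratic form in the zero-direction eigenvector of $\underline{\mathbf{G}}_\rho^\alpha$ translates (via Remark~\ref{Mbar}) into the factor $(v_a)^2$.

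The residue computation itself yields the denominator $e_{00;2}^\alpha(\rho,\rho)+\chi^\alpha(\rho)$: this is because extracting a residue from the matrix inverse $[\bm{\Lambda}^{-1}+\mathbf{U}^{\sf T}\mathbf{Q}_z^\alpha\mathbf{U}]^{-1}$ at a simple zero requires differentiating this bracket in $z$, producing first-order derivatives of $e_{ab}^\alpha(z)$; these derivatives are the double-pole kernels $e_{ab;2}^\alpha$ and $e_{ab;3}^\alpha$ of \eqref{eDouble}--\eqref{eTriple}, and the Schur-complement bookkeeping of the block structure of $\bm{\Lambda}$ bundles them into the compact expression for $\chi^\alpha(\rho)$. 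The parallel computation for the denominator $(\mathbf{u}_i^\alpha)^{\sf T}\mathbf{D}^{2(\alpha-1)}\mathbf{u}_i^\alpha=-\frac{1}{2\pi\imath}\oint_{\Gamma_\rho}\tr[\mathbf{D}_q^{2(\alpha-1)}(\tilde{\mathbf{L}}_\alpha-z)^{-1}]dz$ produces an identical denominator structure (cancelling the $e_{00;2}+\chi^\alpha$ contribution up to the factor $[E_0^\alpha(\rho)]^2$ that survives from the numerator), and the prefactor $1/n_a$ emerges from $|\mathcal{C}_a|/n\to c_a$.

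The main technical obstacle will be the bookkeeping of the rank-$(K+1)$ correction: the $\bm{\Lambda}$ matrix is not block diagonal (its $-\mathbf{1}_K/-\mathbf{1}_K^{\sf T}$ off-diagonal blocks couple the informative directions to the $(K{+}1)$-th coordinate of $\mathbf{U}$), and the second column of $\mathbf{U}$ is the \emph{random} vector $\mathbf{D}_q^{-\alpha}\mathbf{X}\mathbf{1}_n/(\mathbf{q}^{\sf T}\mathbf{1}_n)$, forcing an extra concentration argument to obtain the deterministic limit of $\mathbf{U}^{\sf T}\mathbf{Q}_z^\alpha\mathbf{U}$. The key simplification, already foreshadowed in Remark~\ref{Mbar}, is that the non-trivial zero directions of $\underline{\mathbf{G}}_\rho^\alpha$ are orthogonal to $\mathbf{1}_K$ and to $\mathbf{c}$, so this auxiliary $(K{+}1)$-th block decouples from the informative eigenvectors in the residue, leaving only the clean expression stated.
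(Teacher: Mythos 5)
Your high-level plan is the right one and matches the paper up to the point where you explain where $\chi^\alpha$ comes from: there your account is incorrect, and the gap is substantive. You claim that $\chi^\alpha(\rho)$ arises because extracting the residue of $\bigl[\bm{\Lambda}^{-1}+\mathbf{U}^{\sf T}\mathbf{Q}_z^\alpha\mathbf{U}\bigr]^{-1}$ at a simple zero forces a $z$-derivative of the bracket, and that these derivatives produce the kernels $e_{ab;2}^\alpha$, $e_{ab;3}^\alpha$. What the $\ell$'H\^opital step in the residue actually produces is $\bigl(e_{21}^\alpha(\rho)\bigr)'$ (the $z$-derivative of a single first-order kernel, applied to the coefficient of the informative block of $\underline{\mathbf{G}}_z^\alpha$), and this derivative appears in both the numerator $\frac{1}{n}\mathbf{J}^{\sf T}\mathbf{D}^{\alpha-1}\mathbf{u}_i^\alpha(\mathbf{u}_i^\alpha)^{\sf T}\mathbf{D}^{\alpha-1}\mathbf{J}$ and the normalization $(\mathbf{u}_i^\alpha)^{\sf T}\mathbf{D}^{2(\alpha-1)}\mathbf{u}_i^\alpha$, so it \emph{cancels out} in the ratio defining $\nu_i^a$. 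It leaves no trace in the final formula and in particular is not the source of $\chi^\alpha$.

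The correction $\chi^\alpha(\rho)$ instead comes from the normalization term $(\mathbf{u}_i^\alpha)^{\sf T}\mathbf{D}^{2(\alpha-1)}\mathbf{u}_i^\alpha$, which you must express as $-\frac{1}{2\pi i}\oint_{\Gamma_\rho}\tr\bigl[(\mathbf{L}_\alpha - z\mathbf{I}_n)^{-1}\mathbf{D}^{2(\alpha-1)}\bigr]\,dz$. After Woodbury, the residue-carrying piece of this trace contains the sandwich $\mathbf{U}^{\sf T}\mathbf{Q}_z^\alpha\mathbf{D}^{2(\alpha-1)}\mathbf{Q}_z^\alpha\mathbf{U}$, i.e.\ a product of \emph{two} resolvents at the same point $z$. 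The na\"ive approximation $\bar{\mathbf{Q}}_z^\alpha\mathbf{D}^{2(\alpha-1)}\bar{\mathbf{Q}}_z^\alpha$ only accounts for the term $e_{00;2}^\alpha(\rho,\rho)$; there is an additional $\mathcal{O}(1)$ variance-profile correction that survives in the limit and is exactly what Lemma~\ref{determinst3} (the ``second deterministic equivalent'' for $\mathbf{Q}_{z_1}^\alpha\bm{\Xi}\mathbf{Q}_{z_2}^\alpha$, with its $\bigl(\mathbf{I}_n-\bm{\Upsilon}_{z_1,z_2}\bigr)^{-1}\bm{\Upsilon}_{z_1,z_2}$ correction) supplies. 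Decomposing $\bm{\Upsilon}_{z,z}$ into a rank-two structure in the vectors $\mathbf{a}_z, \mathbf{b}_z$ and inverting that small system is what yields the ratio of $e_{22;2}^\alpha$, $e_{32;2}^\alpha$, $e_{42;2}^\alpha$, $e_{32;3}^\alpha$, $e_{42;3}^\alpha$ that defines $\chi^\alpha(\rho)$. Without invoking this second-order deterministic equivalent the proof is incomplete: the standard bilinear-form and trace concentration around $\bar{\mathbf{Q}}_z^\alpha$ that you rely on simply cannot generate these kernels, and your outline would terminate with the wrong normalization.

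Two smaller points. First, identifying $e_{ab;2}^\alpha(z,z)$ with $\frac{d}{dz}e_{ab}^\alpha(z)$ is not quite right: differentiating $e_{ab}^\alpha$ also drags in $E_1'(z)$, $E_2'(z)$ inside the integrand, so the double-pole kernels are \emph{not} plain derivatives of the single-pole ones. Second, your version of Woodbury with $\bm{\Lambda}^{-1}$ requires $\bm{\Lambda}$ to be invertible, which holds generically here but is avoidable (the paper uses the $\bigl(\mathbf{I}_{K+1}+\mathbf{U}^{\sf T}\mathbf{Q}_z^\alpha\mathbf{U}\bm{\Lambda}\bigr)^{-1}$ form precisely so that no such assumption is needed); these forms are equivalent when $\bm{\Lambda}$ is invertible, so this is a stylistic rather than a fatal issue.
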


And for the empirical covariances $ \sigma_{ij}^{a} $'s, we have the following limit
\begin{theorem}[Covariances]
\label{cor:cov}
For two unit multiplicity eigenpairs $ (\lambda_1(\bar{{\bf M}}),{\bf v}^{1})$ and $ (\lambda_{2}(\bar{{\bf M}}),{\bf v}^{2})$ of $ \mathcal{D}({\bf c})^{\frac12}\left({\bf I}_{K} -\mathbf{1}_K{\bf c}^{\sf T}\right)\mathbf{M}\left(\mathbf{I}_{K}-\mathbf{c}{\bf 1}_K^{T}\right)\mathcal{D}({\bf c})^{\frac12} $ mapped respectively to $ (\rho_1,{\bf u}_{i}^\alpha) $ and $ (\rho_2,{\bf u}_{j}^\alpha) $ eigenpairs of $ {\bf L}_{\alpha} $ and for $ \sigma_{ij}^{a} $ defined in~\eqref{classCov}, we have almost surely as $ n \to \infty $, $ \left|\sigma_{ij}^{a}-\sigma_{ij}^{a,\infty}\right| \rightarrow 0 $ where
\[\sigma_{ij}^{a,\infty} \equiv \frac{1}{n_a}\frac{\left[\left(e_{00;2}^{\alpha}(\rho_1,\rho_2)-e_{00}^{\alpha}(\rho_1)e_{00}^{\alpha}(\rho_2)\right)v_{a}^{\rho_1}v_{a}^{\rho_2}+\delta_{\rho_1}^{\rho_2}c_{a}\chi^{\alpha}(\rho_1)\right]}{\sqrt{e_{00;2}^{\alpha}(\rho_1)+\chi^{\alpha}(\rho_1)}\sqrt{e_{00;2}^{\alpha}(\rho_2)+\chi^{\alpha}(\rho_2)}}\]
where $ \chi^{\alpha}(\rho)$ is defined in Theorem~\ref{cor:means}.
\end{theorem}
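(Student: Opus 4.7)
\textbf{Plan of proof.} The strategy parallels that of Theorem~\ref{cor:means}, promoting the single-contour Cauchy representation of a rank-one spectral projector to a double-contour representation for bilinear cross terms.

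\emph{Step 1: Cauchy representation of the bilinear form.} By Theorem~\ref{approx} and Proposition~\ref{prop1}, the eigenvectors ${\bf u}_i^\alpha$, ${\bf u}_j^\alpha$ of ${\bf L}_\alpha$ associated to the isolated eigenvalues $\lambda_i,\lambda_j\to \rho_1,\rho_2$ can be replaced, up to vanishing error, by those of $\tilde{\bf L}_\alpha$. Choosing disjoint positively oriented contours $\Gamma_{\rho_1},\Gamma_{\rho_2}$, each enclosing only its target limiting eigenvalue (possible thanks to the unit-multiplicity hypothesis and to the technical assumption on $(\lambda_i,\bar{\bf v}_i^\alpha)$ stated before Theorem~\ref{cor:means}), one writes, for ${\bf A}={\bf D}^{\alpha-1}\mathcal{D}_a{\bf D}^{\alpha-1}$,
\begin{align*}
\bigl(({\bf u}_i^\alpha)^{\sf T}{\bf A}\,{\bf u}_j^\alpha\bigr)\bigl(({\bf u}_j^\alpha)^{\sf T}{\bf A}\,{\bf u}_i^\alpha\bigr)
= \frac{-1}{(2\pi i)^2}\oint_{\Gamma_{\rho_1}}\!\!\oint_{\Gamma_{\rho_2}} \tr\!\Bigl(\tilde R(z)\,{\bf A}\,\tilde R(\tilde z)\,{\bf A}\Bigr)\,dz\,d\tilde z,
\end{align*}
with $\tilde R(z)=(\tilde{\bf L}_\alpha-z{\bf I}_n)^{-1}$, and analogous representations are available for the normalizations $({\bf u}_i^\alpha)^{\sf T}{\bf D}^{2(\alpha-1)}{\bf u}_i^\alpha$ and for the cross mean products that will absorb the $\nu_i^a\nu_j^a$ correction.

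\emph{Step 2: Woodbury reduction to the noise resolvent.} Writing $\tilde{\bf L}_\alpha=n^{-1/2}{\bf D}_q^{-\alpha}{\bf X}{\bf D}_q^{-\alpha}+{\bf U}{\bm\Lambda}{\bf U}^{\sf T}$ and applying the Woodbury identity yields
\begin{align*}
\tilde R(z) = {\bf Q}_z^\alpha - {\bf Q}_z^\alpha {\bf U}\bigl({\bm\Lambda}^{-1}+{\bf U}^{\sf T}{\bf Q}_z^\alpha{\bf U}\bigr)^{-1}{\bf U}^{\sf T}{\bf Q}_z^\alpha.
\end{align*}
Substituting into the double contour integral expresses every relevant quantity as a sum of traces and bilinear forms in ${\bf Q}^\alpha$ sandwiched by deterministic diagonal matrices (${\bf D}_q^\beta$, $\mathcal{D}_a$) and by ${\bf J}$, to which standard random matrix deterministic-equivalent machinery applies.

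\emph{Step 3: Deterministic equivalents and residue computation.} I then invoke the trace lemma and the defining fixed-point equations of Theorem~\ref{determinst2} to replace, e.g., $n^{-1}\tr\bigl({\bf D}_q^{b_1}{\bf Q}_z^\alpha{\bf D}_q^{b_2}{\bf Q}_{\tilde z}^\alpha\bigr)$ by the appropriate $e^\alpha_{b_1+b_2,\cdot\,;2}(z,\tilde z)$ kernels from \eqref{eDouble}, and similarly for triple traces in terms of $e^\alpha_{\cdot,\cdot;3}$ from \eqref{eTriple}. The diagonal selector $\mathcal{D}_a$ produces, by law of large numbers on the i.i.d.\ weights $q_i$ restricted to $\mathcal{C}_a$, an overall multiplicative factor $c_a$ (rather than the $v_{a}$ of Theorem~\ref{cor:means}, which originated from the low-rank signal). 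The contour integrals then collapse to residues at $\rho_1,\rho_2$, at which the characteristic equation $E_2^\alpha(\rho)=-1/\lambda(\bar{\bf M})$ identifies the pole location and the eigenvector ${\bf v}$ of Corollary~\ref{phasetransition} and Remark~\ref{Mbar} controls the amplitude through the Woodbury correction factor $({\bm\Lambda}^{-1}+{\bf U}^{\sf T}{\bf Q}_z^\alpha{\bf U})^{-1}$.

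\emph{Step 4: Diagonal vs.\ off-diagonal — main obstacle.} The delicate point is the dichotomy encoded by $\delta_{\rho_1}^{\rho_2}$. When $\rho_1\neq \rho_2$ the two contours are disjoint and only simple poles at $\rho_1$ and $\rho_2$ contribute, yielding the ``signal'' term $(e_{00;2}^\alpha(\rho_1,\rho_2)-e_{00}^\alpha(\rho_1)e_{00}^\alpha(\rho_2))v_a^{\rho_1}v_a^{\rho_2}$ once the $\nu_i^a\nu_j^a$ subtraction removes the separable piece $e_{00}^\alpha(\rho_1)e_{00}^\alpha(\rho_2)v_a^{\rho_1}v_a^{\rho_2}$ coming from the product of means already computed in Theorem~\ref{cor:means}. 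When $\rho_1=\rho_2$, the contours can be merged and a double pole appears; the triple-kernel objects $e_{ab;3}^\alpha$ enter precisely as in the proof of Theorem~\ref{cor:means}, producing the extra noise contribution $c_a\,\chi^\alpha(\rho_1)$ with the same $\chi^\alpha$ as defined there. The appearance of $c_a$ (not $c_a^2$) and the vanishing of this term off the diagonal are thus two manifestations of the same mechanism: the noise ``plateau'' variance in~\eqref{eigenvectorStruct} is carried by $\tr(\mathcal{D}_a{\bf Q}_z^\alpha{\bf D}^{2(\alpha-1)}{\bf Q}_{\tilde z}^\alpha)$-type quantities that are concentrated along the $z=\tilde z$ locus, while the signal part is fully captured by the low-rank Woodbury correction. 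The hardest bookkeeping step is tracking the cancellations that ensure the $\nu_i^a\nu_j^a$ subtraction yields exactly the stated expression and not a residual signal contribution; this is verified by matching terms between the double-contour expansion and the squared single-contour expansion used for Theorem~\ref{cor:means}, using the symmetrization ${\bf V}=\mathcal{D}({\bf c})^{1/2}{\bf V}_l=\mathcal{D}({\bf c})^{-1/2}{\bf V}_r$ of Remark~\ref{Mbar}.
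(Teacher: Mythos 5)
Your overall strategy—double contour integrals, Woodbury reduction, and the second deterministic equivalent—is the right one and matches the paper's. However, there are two genuine gaps.

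First, your Step~1 sets up the wrong object. Writing ${\bf A}={\bf D}^{\alpha-1}\mathcal D_a{\bf D}^{\alpha-1}$, you integrate $\tr(\tilde R(z)\,{\bf A}\,\tilde R(\tilde z)\,{\bf A})$, whose double residue is $\bigl(({\bf u}_i^\alpha)^{\sf T}{\bf A}{\bf u}_j^\alpha\bigr)^2$. But the quantity entering~\eqref{classCov} is $({\bf u}_i^\alpha)^{\sf T}{\bf A}{\bf u}_j^\alpha$ itself, which can be negative and whose sign relative to $\nu_i^a\nu_j^a$ is precisely what the $\delta_{\rho_1}^{\rho_2}$ term in $\sigma_{ij}^{a,\infty}$ must be reconciled with. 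A trace cannot recover this sign. The paper instead computes the $K\times K$ matrix
\[
\frac{1}{n}\,\frac{{\bf J}^{\sf T}{\bf D}^{\alpha-1}{\bf u}_i^\alpha({\bf u}_i^\alpha)^{\sf T}{\bf D}^{\alpha-1}\mathcal D_a{\bf D}^{\alpha-1}{\bf u}_j^\alpha({\bf u}_j^\alpha)^{\sf T}{\bf D}^{\alpha-1}{\bf J}}{\bigl(({\bf u}_i^\alpha)^{\sf T}{\bf D}^{2(\alpha-1)}{\bf u}_i^\alpha\bigr)\bigl(({\bf u}_j^\alpha)^{\sf T}{\bf D}^{2(\alpha-1)}{\bf u}_j^\alpha\bigr)}
\]
and then divides its $(e,f)$ entry by the nonvanishing means $\nu_i^e\nu_j^f$ from Theorem~\ref{cor:means}; this extracts the signed scalar. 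You would need to replace your trace by this matrix-valued Cauchy integral (or supply a separate sign argument). Also, your prefactor $-1/(2\pi i)^2$ should be $+1/(2\pi i)^2$, since each contour already contributes a factor $-2\pi i$.

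Second, your Step~4 misattributes the origin of the Kronecker delta. You say ``when $\rho_1=\rho_2$, the contours can be merged and a double pole appears,'' and that the triple kernels $e_{ab;3}^\alpha$ come from this double pole. That is not how the proof works: even for $\rho_1=\rho_2$, the two contours $\Gamma_{\rho_1}$ and $\Gamma_{\rho_2}$ are kept as two separate integrations (both circling $\rho$), and after replacing the middle piece by its deterministic equivalent (Lemma~\ref{determinst3}) the poles in $z_1$ and $z_2$ remain \emph{simple} poles coming from $(\underline{\bf G}_{z_1}^\alpha)^{-1}$ and $(\underline{\bf G}_{z_2}^\alpha)^{-1}$ respectively, handled by l'H\^opital exactly as in Theorem~\ref{cor:means}. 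The $\delta_{\rho_1}^{\rho_2}$ arises instead from the \emph{orthogonality} of the eigenvectors ${\bf v}^{\rho_1}$, ${\bf v}^{\rho_2}$ of the symmetrized matrix $\mathcal D({\bf c})^{1/2}({\bf I}_K-{\bf 1}_K{\bf c}^{\sf T}){\bf M}({\bf I}_K-{\bf c}{\bf 1}_K^{\sf T})\mathcal D({\bf c})^{1/2}$: after the two residues are taken, the $\chi^\alpha$-contribution is sandwiched between the rank-one projectors ${\bf v}^{\rho_1}({\bf v}^{\rho_1})^{\sf T}$ and ${\bf v}^{\rho_2}({\bf v}^{\rho_2})^{\sf T}$ and collapses through an inner product $({\bf v}^{\rho_1})^{\sf T}{\bf v}^{\rho_2}=\delta_{\rho_1}^{\rho_2}$. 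The triple kernels $e_{ab;3}^\alpha$ already live inside $\chi^\alpha$ (through Lemma~\ref{determinst3} and the derivative $(e_{21}^\alpha)'$) and are present for both diagonal and off-diagonal $(\rho_1,\rho_2)$; they are not a consequence of a higher-order pole.
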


From Theorems~\ref{cor:means} and~\ref{cor:cov}, $ \nu_i^{a,\infty} $ and $ \sigma_{ij}^{a,\infty} $ depend on the $ e_{ij} $'s (defined in Theorem~\ref{determinst2}), the normalized eigenvectors $ {\bf v} $ of $ \mathcal{D}({\bf c})^{\frac12}\left({\bf I}_{K} -\mathbf{1}_K{\bf c}^{\sf T}\right)\mathbf{M}\left(\mathbf{I}_{K}-\mathbf{c}{\bf 1}_K^{T}\right)\mathcal{D}({\bf c})^{\frac12} $ and the proportions $ c_a $'s of classes. Thanks to Lemma~\ref{lm:consistentEstimate}, the $ e_{ij} $'s can consistently be estimated similarly to what was described in Lemma~\ref{mu_estimation}. However, the eigenvectors $ {\bf v} $ and the class proportions are not directly accessible in practice. Nevertheless, in the particular case of $ K=2 $ classes, we know exactly $ {\bf v} $.

\begin{remark}[$ K=2 $ classes]
\label{2Classes}
Here, only one isolated eigenvector is used for the classification. Since $ {\bf v}_r $ (right eigenvector of $ \bar{{\bf M}} $) is orthogonal to $ {\bf 1}_2 $, $ {\bf v}_r  $ is necessarily the vector $ \left[1,-1\right]^{\sf T} $. Hence, the normalized eigenvector $ {\bf v}=\frac{\mathcal{D}({\bf c})^{-\frac12}{\bf v}_r}{\|\mathcal{D}({\bf c})^{-\frac12}{\bf v}_r\|} $ is  $\frac{1}{\sqrt{1/c_1+1/c_2}}\left[\frac{1}{\sqrt{c_{1}}},-\frac{1}{\sqrt{c_{2}}}\right]^{\sf T}$. 
\end{remark}
We thus obtain from Theorems~\ref{cor:means} and~\ref{cor:cov} along with Remark~\ref{2Classes},
\begin{corollary}[Means and covariances for $ K=2 $ classes] \label{cor:2class}
For $ a=1,2 $
\begin{align*}
(\nu^{a,\infty})^2&=\frac{n}{n_a^2}\frac{\left[e_{00}^{\alpha}(\rho)\right]^{2}}{e_{00;2}^{\alpha}(\rho,\rho)+\chi^{\alpha}(\rho)}\frac{1}{\left(1/c_{1}+1/c_{2}\right)} \\
(\sigma^{a,\infty})^{2}&=\frac{\left[\frac{n}{n_a^2}\frac{\left(e_{00;2}^{\alpha}(\rho,\rho)-e_{00}^{\alpha}(\rho)^{2}\right)}{\left(1/c_{1}+1/c_{2}\right)}+\frac{1}{n}\chi^{\alpha}(\rho)\right]}{e_{00;2}^{\alpha}(\rho,\rho)+\chi^{\alpha}(\rho)}
\end{align*}
for $ \rho $ the unique isolated eigenvalue of $ {\bf L}_\alpha $ (if it exists).
\end{corollary}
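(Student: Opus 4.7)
The plan is to treat this as a straightforward specialization of Theorems~\ref{cor:means} and~\ref{cor:cov} to the two-class setting, where the eigenstructure of the relevant $K\times K$ symmetric matrix is completely explicit. First I would invoke Remark~\ref{2Classes}: since $\mathbf{1}_K^{\sf T}(\mathcal{D}(\mathbf{c}) - \mathbf{c}\mathbf{c}^{\sf T})\mathbf{M} = 0$ implies that any right eigenvector $\mathbf{v}_r$ associated to a nonzero eigenvalue of $\bar{\mathbf{M}}$ is orthogonal to $\mathbf{1}_K$, and since we are in dimension $K=2$, this forces $\mathbf{v}_r \propto [1,-1]^{\sf T}$. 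Then the symmetric eigenvector from Remark~\ref{Mbar} is $\mathbf{v} = \mathcal{D}(\mathbf{c})^{1/2}\mathbf{v}_l = \mathcal{D}(\mathbf{c})^{-1/2}\mathbf{v}_r / \|\mathcal{D}(\mathbf{c})^{-1/2}\mathbf{v}_r\|$, which gives explicitly
\[
\mathbf{v} = \frac{1}{\sqrt{1/c_1 + 1/c_2}}\begin{bmatrix} 1/\sqrt{c_1} \\ -1/\sqrt{c_2} \end{bmatrix}, \qquad (v_a)^2 = \frac{1}{c_a(1/c_1 + 1/c_2)}.
\]

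Second, by Remark~\ref{rankG}, since $\bar{\mathbf{M}}$ has rank at most $K-1 = 1$, there is at most a single isolated eigenvalue $\rho$ of $\mathbf{L}_\alpha$ (corresponding to the unique nonzero eigenvalue of $\bar{\mathbf{M}}$), so there is only one relevant eigenvector $\mathbf{u}_i^\alpha$. This forces $\rho_1 = \rho_2 = \rho$ and $\mathbf{v}^1 = \mathbf{v}^2 = \mathbf{v}$ in the statement of Theorem~\ref{cor:cov}, so the Kronecker symbol $\delta_{\rho_1}^{\rho_2}$ activates and only the diagonal covariance $(\sigma^{a,\infty})^2$ needs to be computed.

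Third, I substitute $(v_a)^2$ into the formula of Theorem~\ref{cor:means}. Using $n_a = n c_a$ so that $\frac{1}{n_a}\cdot\frac{1}{c_a} = \frac{n}{n_a^2}$ and recalling that $E_0^\alpha(\rho) = e_{00}^\alpha(\rho)$, the expression immediately collapses to
\[
(\nu^{a,\infty})^2 = \frac{n}{n_a^2}\frac{[e_{00}^\alpha(\rho)]^2}{e_{00;2}^\alpha(\rho,\rho) + \chi^\alpha(\rho)}\cdot \frac{1}{1/c_1 + 1/c_2},
\]
which is the stated formula. An analogous substitution into Theorem~\ref{cor:cov} (with $\rho_1 = \rho_2 = \rho$, $v_a^{\rho_1}v_a^{\rho_2} = (v_a)^2$) yields, after using $c_a/n_a = 1/n$, exactly the stated expression for $(\sigma^{a,\infty})^2$.

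There is essentially no obstacle here beyond bookkeeping: the entire content of the corollary is extracting the explicit form of $\mathbf{v}$ in the $K=2$ case and performing the substitution. The only subtle point worth double-checking is that the unique nonzero eigenvalue of $\bar{\mathbf{M}}$ indeed has multiplicity one (required to apply Theorems~\ref{cor:means} and~\ref{cor:cov}), which is automatic when $\bar{\mathbf{M}} \neq 0$ since $\mathrm{rank}(\bar{\mathbf{M}}) \leq 1$, and to confirm that the normalization convention $\|\mathbf{v}\|=1$ is consistent between the symmetric-form eigenvector of Remark~\ref{Mbar} and the eigenvector appearing in the statements of the two theorems.
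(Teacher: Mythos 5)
Your proposal is correct and is exactly the computation the paper intends (the paper presents the corollary as a direct consequence of Theorems~\ref{cor:means} and~\ref{cor:cov} and Remark~\ref{2Classes} without writing it out). The specialization $v_a^2 = 1/(c_a(1/c_1+1/c_2))$, the substitutions $1/(n_a c_a) = n/n_a^2$ and $c_a/n_a = 1/n$, and the observation that rank-one $\bar{\mathbf M}$ forces a single isolated eigenpair so $\rho_1=\rho_2$, all match what the paper does.
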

\begin{remark}[Case $ q_i=q_0 $]
\label{QiQ0}
Here, since $ \forall i,$ $ q_i=q_0$, all the $ e_{ij}(\rho) $'s are completely explicit and only depend on $ q_0 $, $ \rho $ and $ \alpha $ while for $ q_i \neq q_0 $, the $ e_{ij}(\rho) $'s can only be found by solving a fixed point equation numerically. Furthermore, $ e_{00;2}(\rho_1,\rho_2)=e_{00}(\rho_1)e_{00}(\rho_2) $ and thus 
\begin{align*}
\sigma_{ii}^{a,\infty} &= \frac{1}{n}\frac{\chi^{\alpha}(\rho)}{e_{00;2}^{\alpha}(\rho)+\chi^{\alpha}(\rho)} \\
\sigma_{ij}^{a,\infty}&=0 \quad i \neq j
\end{align*}
with $ \chi^{\alpha}(\rho)=\frac{q_0^{2-4\alpha}(1-q_0^2)}{(-\rho-q_0^{1-2\alpha}(1-q_0^2)E_1^{\alpha}(\rho))^{2}((-\rho-q_0^{1-2\alpha}(1-q_0^2)E_1^{\alpha}(\rho))^{2}-q_0^{2-4\alpha}(1-q_0^2))} $, \\ $ e_{00;2}^{\alpha}(\rho)=\frac{1}{(-\rho-q_0^{1-2\alpha}(1-q_0^2)E_1^{\alpha}(\rho))^{2}} $ and $ E_1^{\alpha}(\rho)=-\frac{\rho}{2q_0^{1-2\alpha}(1-q_0^2)}-\sqrt{(\frac{\rho}{2q_0^{1-2\alpha}(1-q_0^2)})^{2}-1}.$
This indicates that the normalized eigenvectors $ \bar{{\bf v}}_i^\alpha \in \mathbb{R}^{n}$, $ 1\leq i \leq \ell $ used for the classification (Step~\ref{four2} of Algorithm~\ref{alg:algorithm}) are asymptotically uncorrelated. Thus, the classification can be performed by treating each eigenvector independently, rather than jointly. In addition, we have $ \sigma_{ii}^{a,\infty}=\sigma_{ii}^{b,\infty} $ for $ 1\leq a \neq b \leq K $ meaning that the nodes can only be discriminated based on the means $ \nu_i^a $'s.
\end{remark}

\subsection{Application 2: EM improvement and Performance analysis}
\label{subsec:perf}
\subsubsection{EM improvement}
The performances of EM highly depend on the chosen starting parameters; a first natural choice is to set them randomly, which as we shall see leads to poor performances especially in cases where the clusters are not easily separable.  
 
Since the theoretical limiting means $ {\bm \nu}^{a,\infty} $ and covariances $ {\bm \Sigma}^{a,\infty} $ are respectively the limiting values of $ {\bm \nu}^{a}_{EM} $ and covariances $ {\bm \Sigma}^{a}_{EM} $ provided EM converges to the correct solution, we may set as initial parameters of EM our findings ${\bm \nu}^{a,\infty}$ (Theorem~\ref{cor:means}) and $ {\bm \Sigma}^{a,\infty} $ (Theorem~\ref{cor:cov}) for $ a \in \{1,\dots,K\} $ if those can be estimated. It will be confirmed later via simulations that this new setting of initial parameters is much better than a random initial setting of EM, and is close to the theoretical performances evaluated in Section~\ref{Perf}. Unfortunately, in most scenarios, the many unknowns prevent such an estimation. Nonetheless, from Corollary~\ref{cor:2class}, provided the class proportions (or the sizes of each class) are (more or less) known, we can consistently estimate ${\bm \nu}^{\infty}$ and $ {\bm \Sigma}^{\infty}$ in a $ 2 $-class scenario. To show the effect of our setting of initial parameters of EM based on the findings ${\bm \nu}^{\infty}$ and $ {\bm \Sigma}^{\infty}$, Figure~\ref{fig:perfTheo2} compares the empirical performances of our new spectral algorithm based on the regularized eigenvector of $ {\bf L}_{0.5} $  for different initial settings of the EM parameters $ \textit{i)}  $ random setting (Random EM) $ \textit{ii)}  $ our theoretical setting (by assuming that the class proportions are known) and $ \textit{iii)}  $ the ground truth setting (oracle EM where we set the initial points to the empirically evaluated means and covariances of each cluster based on ground truth) versus the theoretical ones (evaluated in Section~\ref{Perf}). Below the transition point, no consistent clustering can be achieved for large $ n $ using the eigenvectors associated to highest eigenvalues since the clusters are not separable and our theoretical limiting means and covariances are not defined since there is no isolated eigenvalues in that case. We have thus initialized EM at random in this non interesting regime (as for Random EM). The EM algorithm may in that regime set all the nodes to the same cluster, which will then result to a classification rate close to the proportion of the nodes in the cluster of largest size. This explains the important mismatch between theory and practice (the retrieval of the theoretical plot is discussed in the subsequent section) below the transition threshold, when $ c_1$ and $ c_2 $ are not equal (Figure~\ref{fig:perfTheo2}). In the interesting regime (after the transition point), we see that the performances (in terms of correct classification rate) of the algorithm using our theoretical setting of EM closely match the performances of an ideal setting with ground truth (oracle EM). The performances of the algorithm using a random initialization (Random EM) are completely degraded especially around critical cases (small values of $\Delta $). Random EM becomes reliable only for very large values of $\Delta $ where clustering is somewhat trivial.
\begin{figure}[h!]
  \centering
  \begin{tikzpicture}[font=\footnotesize]
    \renewcommand{\axisdefaulttryminticks}{4} 
    \tikzstyle{every major grid}+=[style=densely dashed]       
    \tikzstyle{every axis y label}+=[yshift=-10pt] 
    \tikzstyle{every axis x label}+=[yshift=5pt]
    \tikzstyle{every axis legend}+=[cells={anchor=west},fill=white,
        at={(0.98,0.02)}, anchor=south east, font=\scriptsize ]
    \begin{axis}[
      xmin=0,
      ymin=0.5,
      xmax=20,
      ymax=1,
      grid=major,
      ymajorgrids=false,
      scaled ticks=true,
      xlabel={$\Delta$},
      ylabel={Correct classificate rate},
      legend style={at={(axis cs:20.2,0.7)},anchor=west}
      ]
      
     \addplot[red,densely dashed,line width=1pt] plot coordinates{
(0.500000,0.500000)(1.000000,0.500000)(1.500000,0.500000)(2.000000,0.500000)(2.500000,0.500000)(3.000000,0.500000)(3.500000,0.500000)(4.000000,0.500000)(4.500000,0.500000)(5.000000,0.500000)(5.500000,0.500000)(6.000000,0.500000)(6.500000,0.500000)(7.000000,0.500000)(7.500000,0.500000)(8.000000,0.500000)(8.500000,0.500000)(9.000000,0.500000)(9.500000,0.640903)(10.000000,0.708689)(10.500000,0.755696)(11.000000,0.792304)(11.500000,0.822213)(12.000000,0.847278)(12.500000,0.868599)(13.000000,0.886900)(13.500000,0.902697)(14.000000,0.916377)(14.500000,0.928242)(15.000000,0.938538)(15.500000,0.947468)(16.000000,0.955206)(16.500000,0.961901)(17.000000,0.967681)(17.500000,0.972662)(18.000000,0.976941)(18.500000,0.980609)(19.000000,0.983744)(19.500000,0.986415)(20.000000,0.988683)
      };
      \addlegendentry{Theory}
      \addplot[blue,densely dashed,line width=1pt] plot coordinates{
     (0.500000,0.705365)(1.500000,0.684637)(2.500000,0.704698)(3.500000,0.698325)(4.500000,0.693848)(5.500000,0.671170)(6.500000,0.679725)(7.500000,0.675175)(8.500000,0.661105)(9.500000,0.671185)(10.500000,0.772962)(11.500000,0.849042)(12.500000,0.891620)(13.500000,0.918230)(14.500000,0.932115)(15.500000,0.944705)(16.500000,0.956935)(17.500000,0.964592)(18.500000,0.972215)(19.500000,0.978268)
      };
      \addlegendentry{oracle EM}
      
   
      \addplot[red,densely dotted,line width=1pt] plot coordinates{
(0.500000,0.720050)(1.500000,0.721550)(2.500000,0.742050)(3.500000,0.673775)(4.500000,0.710138)(5.500000,0.717100)(6.500000,0.657675)(7.500000,0.679513)(8.500000,0.685450)(9.500000,0.677825)(10.500000,0.795400)(11.500000,0.852350)(12.500000,0.892625)(13.500000,0.918688)(14.500000,0.930675)(15.500000,0.945025)(16.500000,0.954788)(17.500000,0.967138)(18.500000,0.972837)(19.500000,0.977762)
      };
      \addlegendentry{Our initialization}

\addplot[blue,densely dotted,line width=1pt] plot coordinates{
(0.500000,0.663713)(1.500000,0.625888)(2.500000,0.649962)(3.500000,0.662163)(4.500000,0.675950)(5.500000,0.684113)(6.500000,0.650975)(7.500000,0.670100)(8.500000,0.656450)(9.500000,0.676800)(10.500000,0.744113)(11.500000,0.722438)(12.500000,0.588825)(13.500000,0.647562)(14.500000,0.763013)(15.500000,0.891088)(16.500000,0.943137)(17.500000,0.966713)(18.500000,0.972575)(19.500000,0.977575)

      };
      \addlegendentry{Random EM}
      
    \end{axis}
  \end{tikzpicture}
  \caption{Probability of correct recovery for $ \alpha=0.5 $, $n=4000$, $K=2$, $c_1=0.8$, $c_2=0.2$, $\mu=\frac{3}{4}\delta_{q_{(1)}}+\frac{1}{4}\delta_{q_{(2)}}$ with $ q_{(1)}=0.2 $ and $ q_{(2)}=0.8 $, ${\bf M}=\Delta {\bf I}_2$, for $\Delta\in[0,20]$.}
  \label{fig:perfTheo2}
\end{figure}
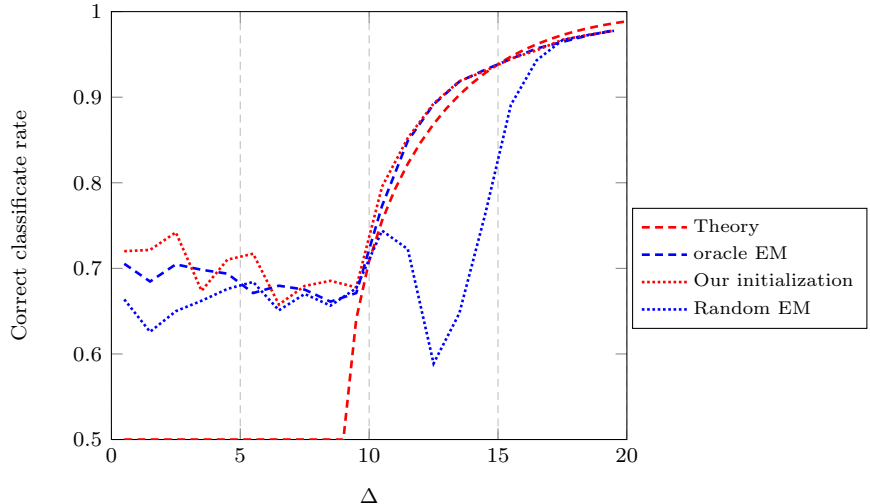

\bigskip
\subsubsection{Performance analysis}
\label{Perf}
In this section, we analyze the performance of our new spectral community detection algorithm in terms of probability of misclassification. 

From our theoretical class means and class covariance findings, we may compute the theoretical misclassification error probability by defining the proper decision regions which separate the different clusters in $\ell$-dimensional space. For simplicity, we focus here on $ K=2 $ so that $ \ell=1 $, but the generalization to $ K>2 $ is straightforward.

The univariate decision boundary is found by solving for $ x $ $ \log \frac{p\left(x|\mathcal{C}_1\right)c_{1}}{p\left(x|\mathcal{C}_2\right)c_2}=0 $ where $ p\left(x|\mathcal{C}_a\right)=\frac{1}{\sqrt{2\pi \sigma^{a,\infty}}}\exp(-\frac{(x-\nu^{a,\infty})^{2}}{2\sigma^{a,\infty}}) $. 
When $\sigma^{1,\infty}=\sigma^{2,\infty} $ (this is the case for example when $ q_i=q_0$ as per Remark~\ref{QiQ0}) with $ \nu^{1,\infty} \leq \nu^{2,\infty} $, there is only one decision threshold $ x_T=\frac{\sigma^{1,\infty}\log c_1/c_2}{2(\nu^{1,\infty}-\nu^{2,\infty})}+\frac{\nu^{1,\infty}+\nu^{2,\infty}}{2}$ and the asymptotic misclassification error probability achievable by EM is $$ c_1\left(1-Q\left(\frac{x_T-\nu^{1,\infty}}{\sqrt{\sigma^{1,\infty}}}\right)\right)+c_2Q\left(\frac{x_T-\nu^{2,\infty}}{\sqrt{\sigma^{2,\infty}}}\right), $$ where $ Q(x)=\int_{-\infty}^{x}\frac{1}{\sqrt{2\pi}}\exp(-\frac{x^2}{2})\mathrm{d}x. $
Otherwise, when the clusters have different variances ($ \sigma^{1,\infty}< \sigma^{2,\infty} $), there are two thresholds $ x_{1T} $, $ x_{2T} $ to $ \log \frac{p\left(x|\mathcal{C}_1\right)c_{1}}{p\left(x|\mathcal{C}_2\right)c_2}=0 $ (with $ x_{1T} < x_{2T} $) and the asymptotic misclassification error probability achievable by EM is 
$$ c_1+c_1\left[Q\left(\frac{x_{1T}-\nu^{1,\infty}}{\sqrt{\sigma^{1,\infty}}}\right)-Q\left(\frac{x_{2T}-\nu^{1,\infty}}{\sqrt{\sigma^{1,\infty}}}\right)\right]+c_2\left[Q\left(\frac{x_{1T}-\nu^{2,\infty}}{\sqrt{\sigma^{2,\infty}}}\right)-Q\left(\frac{x_{2T}-\nu^{2,\infty}}{\sqrt{\sigma^{2,\infty}}}\right)\right].$$

Figure~\ref{fig:perfTheo} displays the theoretical probability of the correct recovery of the classes (which is the complement of the misclassification error probablity evaluated above) for a graph generated using the DCSBM when the clustering is performed on the properly normalized eigenvectors of $ {\bf L}_\alpha $ for $ \alpha \in \{0,0.25,0.5,0.75,1,\alpha_{opt}\} $. While it has been theoretically designed to be optimal in worst case scenarios (small values of eigenvalues of $ {\bf M} $), the algorithm using $\alpha_{\rm opt} $ seemingly outperforms those using the other values of $ \alpha $, in the whole range of $ {\bf M} $ values (driven by $\Delta$). 
\begin{figure}[h!]
  \centering
  \begin{tikzpicture}[font=\footnotesize]
    \renewcommand{\axisdefaulttryminticks}{4} 
    \tikzstyle{every major grid}+=[style=densely dashed]       
    \tikzstyle{every axis y label}+=[yshift=-10pt] 
    \tikzstyle{every axis x label}+=[yshift=5pt]
    \tikzstyle{every axis legend}+=[cells={anchor=west},fill=white,
        at={(0.98,0.02)}, anchor=south east, font=\scriptsize ]
    \begin{axis}[
      xmin=2,
      ymin=0.5,
      xmax=20,
      ymax=1,
      grid=major,
      ymajorgrids=false,
      scaled ticks=true,
      xlabel={$\Delta$},
      ylabel={Correct classificate rate},
      ]
      
      \addplot[blue,line width=0.5pt] plot coordinates{
(0.500000,0.500000)(1.000000,0.500000)(1.500000,0.500000)(2.000000,0.500000)(2.500000,0.500000)(3.000000,0.500000)(3.500000,0.500000)(4.000000,0.500000)(4.500000,0.500000)(5.000000,0.656352)(5.500000,0.731560)(6.000000,0.785379)(6.500000,0.827627)(7.000000,0.861807)(7.500000,0.889740)(8.000000,0.912594)(8.500000,0.931227)(9.000000,0.946329)(9.500000,0.958475)(10.000000,0.968161)(10.500000,0.975813)(11.000000,0.981800)(11.500000,0.986436)(12.000000,0.989991)(12.500000,0.992686)(13.000000,0.994710)(13.500000,0.996212)(14.000000,0.997315)(14.500000,0.998116)(15.000000,0.998692)(15.500000,0.999101)(16.000000,0.999389)(16.500000,0.999589)(17.000000,0.999726)(17.500000,0.999820)(18.000000,0.999882)(18.500000,0.999924)(19.000000,0.999952)(19.500000,0.999969)(20.000000,0.999981)
      };
      \addlegendentry{$\alpha=0$}
      \addplot[blue,densely dashed,line width=0.5pt] plot coordinates{
      (0.500000,0.500000)(1.000000,0.500000)(1.500000,0.500000)(2.000000,0.500000)(2.500000,0.500000)(3.000000,0.500000)(3.500000,0.500000)(4.000000,0.500000)(4.500000,0.500000)(5.000000,0.560065)(5.500000,0.694282)(6.000000,0.763741)(6.500000,0.813995)(7.000000,0.852944)(7.500000,0.883903)(8.000000,0.908736)(8.500000,0.928684)(9.000000,0.944663)(9.500000,0.957396)(10.000000,0.967471)(10.500000,0.975380)(11.000000,0.981534)(11.500000,0.986278)(12.000000,0.989900)(12.500000,0.992638)(13.000000,0.994686)(13.500000,0.996202)(14.000000,0.997313)(14.500000,0.998118)(15.000000,0.998696)(15.500000,0.999105)(16.000000,0.999392)(16.500000,0.999592)(17.000000,0.999729)(17.500000,0.999821)(18.000000,0.999884)(18.500000,0.999925)(19.000000,0.999952)(19.500000,0.999970)(20.000000,0.999981)
      };
      \addlegendentry{$\alpha=0.25$}
      
      \addplot[red,densely dotted,line width=0.5pt] plot coordinates{
(0.500000,0.500000)(1.000000,0.500000)(1.500000,0.500000)(2.000000,0.500000)(2.500000,0.500000)(3.000000,0.500000)(3.500000,0.500000)(4.000000,0.500000)(4.500000,0.500000)(5.000000,0.500000)(5.500000,0.500000)(6.000000,0.619377)(6.500000,0.733050)(7.000000,0.798738)(7.500000,0.845491)(8.000000,0.880791)(8.500000,0.908084)(9.000000,0.929385)(9.500000,0.946044)(10.000000,0.959045)(10.500000,0.969145)(11.000000,0.976940)(11.500000,0.982912)(12.000000,0.987450)(12.500000,0.990866)(13.000000,0.993415)(13.500000,0.995298)(14.000000,0.996675)(14.500000,0.997671)(15.000000,0.998386)(15.500000,0.998892)(16.000000,0.999247)(16.500000,0.999494)(17.000000,0.999663)(17.500000,0.999778)(18.000000,0.999855)(18.500000,0.999907)(19.000000,0.999940)(19.500000,0.999962)(20.000000,0.999977)
      };
      \addlegendentry{$\alpha=0.5$}
   
      \addplot[red,line width=1pt] plot coordinates{
(0.500000,0.500000)(1.000000,0.500000)(1.500000,0.500000)(2.000000,0.500000)(2.500000,0.500000)(3.000000,0.500000)(3.500000,0.500000)(4.000000,0.500000)(4.500000,0.500000)(5.000000,0.500000)(5.500000,0.500000)(6.000000,0.500000)(6.500000,0.500000)(7.000000,0.500000)(7.500000,0.500000)(8.000000,0.688181)(8.500000,0.778437)(9.000000,0.835581)(9.500000,0.876369)(10.000000,0.906741)(10.500000,0.929753)(11.000000,0.947299)(11.500000,0.960683)(12.000000,0.970863)(12.500000,0.978565)(13.000000,0.984355)(13.500000,0.988675)(14.000000,0.991872)(14.500000,0.994218)(15.000000,0.995924)(15.500000,0.997153)(16.000000,0.998030)(16.500000,0.998649)(17.000000,0.999083)(17.500000,0.999383)(18.000000,0.999589)(18.500000,0.999729)(19.000000,0.999823)(19.500000,0.999886)(20.000000,0.999927)
      };
      \addlegendentry{$\alpha=0.75$}

\addplot[black,line width=1pt] plot coordinates{
(0.500000,0.500000)(1.000000,0.500000)(1.500000,0.500000)(2.000000,0.500000)(2.500000,0.500000)(3.000000,0.500000)(3.500000,0.500000)(4.000000,0.500000)(4.500000,0.500000)(5.000000,0.500000)(5.500000,0.500000)(6.000000,0.500000)(6.500000,0.500000)(7.000000,0.500000)(7.500000,0.500000)(8.000000,0.500000)(8.500000,0.500000)(9.000000,0.500000)(9.500000,0.500000)(10.000000,0.500000)(10.500000,0.684419)(11.000000,0.773311)(11.500000,0.830544)(12.000000,0.871838)(12.500000,0.902836)(13.000000,0.926474)(13.500000,0.944597)(14.000000,0.958487)(14.500000,0.969098)(15.000000,0.977160)(15.500000,0.983246)(16.000000,0.987807)(16.500000,0.991198)(17.000000,0.993698)(17.500000,0.995525)(18.000000,0.996850)(18.500000,0.997801)(19.000000,0.998478)(19.500000,0.998956)(20.000000,0.999290)
      };
      \addlegendentry{$\alpha=1$}
      \addplot[cyan,line width=1pt] plot coordinates{
(0.500000,0.500000)(1.000000,0.500000)(1.500000,0.500000)(2.000000,0.500000)(2.500000,0.500000)(3.000000,0.500000)(3.500000,0.500000)(4.000000,0.500000)(4.500000,0.500000)(5.000000,0.623667)(5.500000,0.716234)(6.000000,0.776370)(6.500000,0.821961)(7.000000,0.858275)(7.500000,0.887350)(8.000000,0.910944)(8.500000,0.930067)(9.000000,0.945567)(9.500000,0.957869)(10.000000,0.967711)(10.500000,0.975474)(11.000000,0.981477)(11.500000,0.986238)(12.000000,0.989837)(12.500000,0.992532)(13.000000,0.994589)(13.500000,0.996141)(14.000000,0.997261)(14.500000,0.998063)(15.000000,0.998662)(15.500000,0.999071)(16.000000,0.999367)(16.500000,0.999573)(17.000000,0.999712)(17.500000,0.999810)(18.000000,0.999877)(18.500000,0.999919)(19.000000,0.999948)(19.500000,0.999967)(20.000000,0.999979)
      };
      \addlegendentry{$\alpha=\alpha_{\rm opt}$}

    \end{axis}
  \end{tikzpicture}
  \caption{Theoretical probability of correct recovery for $n=4000$, $K=2$, $c_1=0.8$, $c_2=0.2$, $\mu$ uniformly distributed between $ 0.2 $ and $ 0.8 $, ${\bf M}=\Delta {\bf I}_2$, for $\Delta\in[0,20]$.}
  \label{fig:perfTheo}
\end{figure}
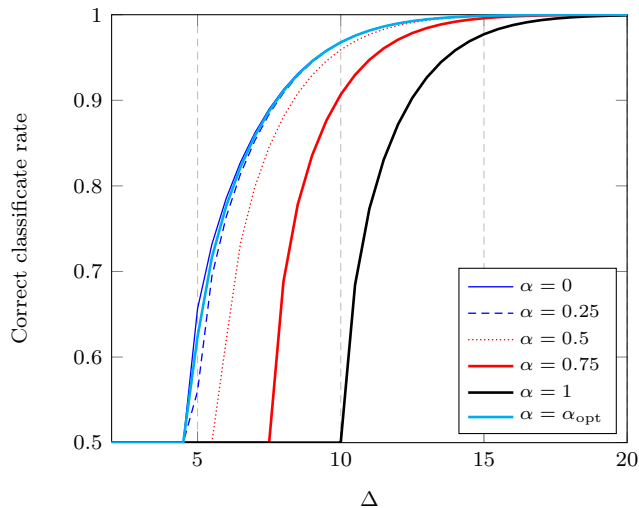


\section{Concluding Remarks}
\label{sec:conclusion}
In this article, we have studied a family of graph affinity matrices $ {\bf L}_\alpha \propto \frac{1}{\sqrt{n}}\mathbf{D}^{-\alpha} \left[ {\bf A} - \frac{{\bf d}{\bf d}^{\sf T}}{2m} \right] \mathbf{D}^{-\alpha}$ which generalize the matrices (modularity, Laplacian) used for spectral community detection in dense networks. The main difficulty for the study of those random matrices comes from the dependency between their entries. We tackle this difficulty by establishing the approximation $ \|{\bf L}_\alpha-\tilde{{\bf L}}_\alpha\| \to 0 $ using similar techniques as in \cite{couillet2015understanding} where $ \tilde{{\bf L}}_\alpha $ belongs to the class of so called ``spiked" random matrices for which the study of eigenvalues and eigenvectors is classical. The study of eigenvalues and eigenvectors of $ {\bf L}_\alpha $ used for the classification is thus performed using $ \tilde{{\bf L}}_\alpha.$

We go further than the observation of \cite{gulikers2015spectral} and \cite{newman2013spectral} which state that it is important to use the eigenvectors of $ {\bf L}_1 $ rather than the classically used $ {\bf L}_0 $ for the classification when the network has heterogeneous degree distribution to avoid some important misclassifications induced by degree biases. We saw in Figure \ref{fig:correct} for example that the eigenvectors of ${\bf L}_1$ correct the degree biases but we show that it is better to use instead the eigenvectors of $ {\bf L}_{0} $ premultiplied by $ {\bf D}^{-1} $ (see Figures \ref{fig:phase_transition_2masses}-\ref{fig:overlap_powerlaw} for example). Better still, we show that there exists an optimal $ \alpha $ called $ \alpha_{\rm opt} $ for which taking the eigenvectors of $ {\bf L}_{\alpha_{\rm opt}} $ pre-multiplied by $ {\bf D}^{\alpha_{\rm opt}-1} $ ensures best performance (or to be more precise best asymptotic cluster detectability).

We generalize the study in \cite{nadakuditi2012graph} concerning the evaluation of per-class means of the entries of the unique eigenvector used for the classification, which was limited to the symmetric stochastic block model with two classes of the same average size. Here we consider a more general model (a non necessarily symmetric stochastic block model with heterogeneous degree distribution (DCSBM), arbitrary number of classes, arbitrary class sizes) and we introduce new techniques to evaluate theoretically the limiting per-class means and covariances of the eigenvectors components which are used for the low-dimensional classification. This in turn allows us to establish exactly the theoretical performance of the spectral methods under study in terms of correct classification rate. Those aforementioned limiting per-class means and per-class covariances are the limiting values of the corresponding per-class means and per-class covariances that the Expectation Maximization (EM) algorithm shall find empirically (in the last step of the spectral method) provided it converges (Step~\ref{four2} of Algorithm~\ref{alg:algorithm}). One can then initialize the EM parameters with our theoretical findings instead of initializing them at random as classically done. However, those theoretical limiting quantities depend on the model parameters such as the class proportions, the eigenvectors of the affinity matrix $\bar{{\bf M}}$ which, except for particular cases ($ K=2 $ classes for example), are not directly accessible from real world graphs. We may empirically estimate those parameters by applying Algorithm~\ref{alg:algorithm} for a fixed $ \alpha $, computing the empirical class-wise means $ \nu_i^a $'s/covariances $ \sigma_{ij}^a $'s and using their theoretical formula (Theorems~\ref{cor:means} and ~\ref{cor:cov}) to deduce the unknown parameters $ {\bf v} $'s (eigenvectors of $\bar{{\bf M}}$) and $ {\bf c} $ (class proportions vector) associated to the graph.

The results and methods in this article are all based on the strong assumption that the class-wise correction factors $ C_{g_ig_j} $ differ by $ \mathcal{O}(n^{-\frac12}) $ since $ \forall i,j \in \{1,\dots,n\} $, $ C_{g_ig_j}=1+\frac{M_{g_ig_j}}{\sqrt{n}}.$ When this condition is not fulfilled, our results may not be applicable. However, this is a more interesting assumption, challenging in practice since when the $C_{ab} $'s differ by $ \mathcal{O}(1) $, one can easily achieve a vanishing misclassification error rate asymptotically and most algorithms perform similarly well in detecting the communities. Besides, our analyses assume very large networks ($ n \to \infty $ with convergence rates of the order $ n^{-\frac12} $) and thus the performances of our algorithm on small size graphs can be quite poor as observed in Table~\ref{tbl:benchmarks}.

Our broad study of spectral methods for community detection is so far limited to dense networks. Real world networks being sparse in general, it would be interesting to extend our study to sparse graph models. The Non Backtacking and Bethe Hessian (BH) methods developped under homogeneous graph models, are currently state-of-the-art spectral methods which allow for non-trivial performances of community detection in sparse graphs. For heterogeneous graphs, the ``flow" matrix (the Non Backtracking matrix normalized by the degrees) proposed in \cite{newman2013spectral} is shown to have a much better behavior than the two aforementionned matrices in sparse heterogeneous graphs. This follows the same spirit as what is done in this article where we normalize the modularity matrix by the degrees for dense heterogeneous graphs. The use of the symmetric BH matrix (of size $ n $ the number of nodes) is highly preferred in terms of computational time and memory than the non-symmetric, high-dimensional Non Backtracking matrix (of size $ 2m $ with $ m $ the number of edges). Following then~\cite{newman2013spectral} and our current work, one may wonder about the existence of an equivalent $\alpha$-normalized BH matrix and its associated performances. This is, we believe, a promising avenue of future investigation.

Pushing further the applicability reach of the present study, note that one of the remaining issues of spectral clustering methods especially for large graphs is the expensive computational complexity of the eigenvectors of the large random matrices representing those graphs. This computation burden is reduced when using power methods and is thus less expensive for sparse graphs (with many zeros in the corresponding matrix). This suggests potential computational gains incurred by smartly removing some edges in the graph to make it sparse prior to eigenvectors computation, which will of course reduce the performance to the benefit of the computational cost. Our mathematical framework may allow us to study the tradeoff between complexity/cost and performances of spectral methods for community detection on such subsampled large dense graphs.

\section{Proofs}
\label{proofs}
\subsection*{Preliminaries}
The random matrix under study $ {\bf L}_\alpha=(2m)^{\alpha}\frac{1}{\sqrt{n}}\mathbf{D}^{-\alpha} \left[ {\bf A} - \frac{{\bf d}{\bf d}^{\sf T}}{2m} \right] \mathbf{D}^{-\alpha} $ is not a classically studied matrix in random matrix theory. We will thus first find an approximate tractable random matrix $ \tilde{{\bf L}}_\alpha$ which asymptotically preserves the eigenvalue distribution and the extreme eigenvectors of $ {\bf L}_\alpha $ (Section~\ref{proofApprox}). Then, the empirical distribution of the eigenvalues of $ {\bf L}_\alpha $ is studied in Section~\ref{sec:determinist1} along with the exact localization of those eigenvalues in Section~\ref{app:th2}. Finally, a thorough study of the eigenvectors associated to the aforementioned eigenvalues is investigated in Section~\ref{app:th3}.

We follow here the proof technique of \cite{couillet2016kernel}. In the sequel, we will make some approximations of random variables in the asymptotic regime where $ n \to \infty $. For the sake of random variables comparisons, we give the following stochastic definitions. For $ x\equiv x_n $ a random variable and $ u_n \geq 0 $, we write $ x=\mathcal{O}(u_n) $ if for any $ \eta>0 $ and $ D>0 $, $ n^D\mathbb{P}(x\geq n^{\eta}u_n)\rightarrow 0 $ as $ n\rightarrow \infty $. For $ {\bf v} $ a vector or a diagonal matrix with random entries, $ {\bf v}=\mathcal{O}(u_n) $ means that the maximal entry of $ {\bf v} $ in absolute value is $ \mathcal{O}(u_n) $ in the sense defined previously. When $ {\bf M} $ is a square matrix, $ {\bf M}=\mathcal{O}(u_n) $ means that the operator norm of $ {\bf M} $ is $ \mathcal{O}(u_n) $. For $ {\bf x} $ a vector or a matrix with random entries, $ {\bf x}=o(u_n) $ means that there is $ \kappa>0 $ such that $ {\bf x}=\mathcal{O}(n^{-\kappa}u_n) $.

Most of the proofs here are classical in random matrix theory (see e.g., \cite{baik2006eigenvalues}) but require certain controls inherent to our model. The goal of the article not being an exhaustive development of the proofs techniques, we will admit a number of technical results already studied in the literature. However, we will exhaustively develop the calculus to obtain our final results which are not trivial.

\subsection{Random equivalent for $ {\bf L}_\alpha $}
\label{proofApprox}
The matrix $ {\bf L}_\alpha=({\bf d}^{\sf T}{\bf 1}_n)^{\alpha}\frac{1}{\sqrt{n}}\mathbf{D}^{-\alpha} \left[ {\bf A} - \frac{{\bf d}{\bf d}^{\sf T}}{{\bf d}^{\sf T}{\bf 1}_n} \right] \mathbf{D}^{-\alpha}$ has non independent entries and is not a classical random matrix model. The idea is thus to approximate $ {\bf L}_\alpha $ by a more tractable random matrix model $ \tilde{{\bf L}}_\alpha $ in such a way that they share asymptotically the same set of outlying eigenvalues/eigenvectors which are of interest in our clustering scenario. We recall that the entries $ A_{ij} $ of the adjacency matrix were defined from the DCSBM model as independent Bernoulli random variables with parameter $ q_iq_j\left(1+\frac{M_{g_ig_j}}{\sqrt{n}}\right) $; one may thus write
\[A_{ij}=q_{i}q_{j}+q_{i}q_{j}\frac{M_{g_ig_j}}{\sqrt{n}}+X_{ij}\]
where $ X_{ij} $, $ 1\leq i,j \leq n $, are independent (up to the symmetry) zero mean random variables of variance $ q_{i}q_{j}(1-q_{i}q_{j}) +\mathcal{O}(n^{-\frac12})$, since $ A_{ij} $ has mean $ q_{i}q_{j}+q_{i}q_{j}\frac{M_{g_ig_j}}{\sqrt{n}} $ and variance $ q_{i}q_{j}(1-q_{i}q_{j}) +\mathcal{O}(n^{-\frac12}) $. We can then write the normalized adjacency matrix as follows 
\begin{align}
\label{eq:A2}
	\frac1{\sqrt{n}} \mathbf{A} &= \frac{1}{\sqrt{n}}\mathbf{q}\mathbf{q}^{\sf T}+\frac{1}{n} \left\{ \mathbf{q}_{(a)}\mathbf{q}_{(b)}^{\sf T}M_{ab} \right\}_{a,b=1}^{K}+ \frac{1}{\sqrt{n}}\mathbf{X} \\
\label{eq:A3} &=\underbrace{\frac{{\bf qq}^{\sf T}}{\sqrt{n}}}_{\mathbf{A}_{d,\sqrt{n}}}+\underbrace{\frac{1}{n}{\bf D}_q{\bf JMJ}^{\sf T}{\bf D}_q}_{\mathbf{A}_{d,1}}+\underbrace{\frac{{\bf X}}{\sqrt{n}}}_{\mathbf{A}_{r,1}},
 \end{align}
 where\footnote{We recall that subscript `$d,n^k$' stands for deterministic term whose operator norm is of order $n^k$ and `$r,n^k$' for random term with operator norm of order $n^k$.} $ \mathbf{q}_{(i)}=[ q_{n_{1}+\ldots+n_{i-1}+1},\ldots,q_{n_{1}+\ldots+n_{i}}]^{\sf T}\in \mathbb{R}^{n_i}$ ($n_{0}=0$) , $\mathbf{X}=\left\{X_{ij}\right\}_{i,j=1}^n$ and $ {\bf D}_q=\mathcal{D}({\bf q}) $. The idea of the proof is to write all the terms of $ {\bf L}_\alpha $ based on Equation~\eqref{eq:A3}, since all those terms depend on $ {\bf A} $. To this end, we will evaluate successively $ {\bf d}={\bf A1}_n $, $ {\bf D}=\mathcal{D}({\bf d}) $, ${\bf d}{\bf d}^{\sf T} $ and $2m={\bf d}^{\sf T}{\bf 1}_n$. It will appear that $ {\bf D} $ and $ {\bf d}^{\sf T}{\bf 1}_n $ are composed of dominant terms (with higher operator norm) and vanishing terms (with smaller operator norm); we may then proceed to writing a Taylor expansion of $ {\bf D}^{-\alpha} $ and $ (2m)^{\alpha}=({\bf d}^{\sf T}{\bf 1}_n)^{\alpha} $ for any $ \alpha $ around their dominant terms to finally retrieve a Taylor expansion of ${\bf L}_\alpha$.
 
 Let us start by developing the degree vector $ {\bf d}={\bf A1}_n $. We have 
 \begin{equation} \label{eq:d} {\bf d}={\bf qq}^{\sf T}{\bf 1}_n + \frac{1}{\sqrt{n}}{\bf D}_q{\bf JMJ}^{\sf T}{\bf D}_q{\bf 1}_n+{\bf X1}_n={\bf q}^{\sf T}{\bf 1}_n\Big(\underbrace{{\bf q}}_{\mathcal{O}(n^{\frac12})}+\underbrace{\frac{1}{\sqrt{n}}\frac{{\bf D}_q{\bf JMJ}^{\sf T}{\bf D}_q{\bf 1}_n}{{\bf q}^{\sf T}{\bf 1}_n}}_{\mathcal{O}(n^{-\frac12})}+\underbrace{\frac{{\bf X1}_n}{{\bf q}^{\sf T}{\bf 1}_n}}_{\mathcal{O}(n^{-\frac12})}\Big). 
 \end{equation}
 Let us then write the expansions of $ {\bf d}^{\sf T}{\bf 1}_n $, $ ({\bf d}^{\sf T}{\bf 1}_n)^{\alpha} $, $ {\bf d}{\bf d}^{\sf T} $ and $ \frac{{\bf d}{\bf d}^{\sf T}}{({\bf d}^{\sf T}{\bf 1}_n)} $ respectively. From~\eqref{eq:d}, we obtain
 \begin{equation} \label{eq:dTransposeOne} {\bf d}^{\sf T}{\bf 1}_n= ({\bf q}^{\sf T}{\bf 1}_n)^{2}\Big[1+\underbrace{\frac{1}{\sqrt{n}}\frac{{\bf 1}_n^{\sf T}{\bf D}_q{\bf JMJ}^{\sf T}{\bf D}_q{\bf 1}_n}{({\bf q}^{\sf T}{\bf 1}_n)^{2}}}_{\mathcal{O}(n^{-\frac12})}+\underbrace{\frac{{\bf 1}_n^{\sf T}{\bf X1}_n}{({\bf q}^{\sf T}{\bf 1}_n)^{2}}}_{\mathcal{O}(n^{-\frac12})}\Big].\end{equation}
 Thus for any $ \alpha $, proceeding to a $ 1^{st} $ order Taylor expansion, we may write \begin{equation}
 \label{eq:dTranspose1Alpha}
 ({\bf d}^{\sf T}{\bf 1}_n)^{\alpha}=({\bf q}^{\sf T}{\bf 1}_n)^{2\alpha}\Big[1+\frac{\alpha}{\sqrt{n}} \underbrace{\frac{{\bf 1}_n^{\sf T}{\bf D}_q{\bf JMJ}^{\sf T}{\bf D}_q{\bf 1}_n}{({\bf q}^{\sf T}{\bf 1}_n)^{2}}}_{\mathcal{O}(n^{-\frac12})}+\alpha \underbrace{\frac{{\bf 1}_n^{\sf T}{\bf X1}_n}{({\bf q}^{\sf T}{\bf 1}_n)^{2}}}_{\mathcal{O}(n^{-\frac12})}+o(n^{-\frac12})\Big].
\end{equation}
Besides, from~\eqref{eq:d} we have 
 \begin{align}
 \label{eq:ddTranspose}
 {\bf d}{\bf d}^{\sf T}&=({\bf q}^{\sf T}{\bf 1}_n)^{2} \Big[\underbrace{{\bf qq}^{\sf T}}_{\mathcal{O}(n)}+\underbrace{\frac{1}{\sqrt{n}}\frac{{\bf q}{\bf 1}_n^{\sf T}{\bf D}_q{\bf JMJ}^{\sf T}{\bf D}_q}{{\bf q}^{\sf T}{\bf 1}_n}}_{\mathcal{O}(\sqrt{n})}+\underbrace{\frac{1}{\sqrt{n}}\frac{{\bf D}_q{\bf JMJ}^{\sf T}{\bf D}_q{\bf 1}_n{\bf q}^{\sf T}}{{\bf q}^{\sf T}{\bf 1}_n}}_{\mathcal{O}(\sqrt{n})}+\underbrace{\frac{{\bf q1}_n^{\sf T}{\bf X}}{{\bf q}^{\sf T}{\bf 1}_n}}_{\mathcal{O}(\sqrt{n})}+\underbrace{\frac{{\bf X1}_n{\bf q}^{\sf T}}{{\bf q}^{\sf T}{\bf 1}_n}}_{\mathcal{O}(\sqrt{n})} \nonumber \\
 &+ \underbrace{\frac{1}{n}\frac{{\bf D}_q{\bf JMJ}^{\sf T}{\bf D}_q{\bf 1}_n{\bf 1}_n^{\sf T}{\bf D}_q{\bf JMJ}^{\sf T}{\bf D}_q}{({\bf q}^{\sf T}{\bf 1}_n)^{2}}}_{\mathcal{O}(1)}+\underbrace{\frac{1}{\sqrt{n}}\frac{{\bf D}_q{\bf JMJ}^{\sf T}{\bf D}_q{\bf 1}_n{\bf 1}_n^{\sf T}{\bf X}}{({\bf q}^{\sf T}{\bf 1}_n)^{2}}}_{\mathcal{O}(1)} +\underbrace{\frac{1}{\sqrt{n}}\frac{{\bf X}{\bf 1}_n{\bf 1}_n^{\sf T}{\bf D}_q{\bf JMJ}^{\sf T}{\bf D}_q}{({\bf q}^{\sf T}{\bf 1}_n)^{2}}}_{\mathcal{O}(1)} \nonumber  \\ 
 &+\underbrace{\frac{{\bf X}{\bf 1}_n{\bf 1}_n^{\sf T}{\bf X}}{({\bf q}^{\sf T}{\bf 1}_n)^{2}}}_{\mathcal{O}(1)} +o(1)\Big].
 \end{align}
Keeping in mind that we shall only need terms with non vanishing operator norms asymptotically, we will require $\frac{1}{\sqrt{n}}\left[ {\bf A} - \frac{{\bf d}{\bf d}^{\sf T}}{{\bf d}^{\sf T}{\bf 1}_n} \right]$ to have terms with spectral norms of order at least $ \mathcal{O}(1) $. We get from multiplying \eqref{eq:ddTranspose} and \eqref{eq:dTranspose1Alpha} (with $ \alpha=-1 $)
 \begin{align}
 \label{eq:16}
 \frac{1}{\sqrt{n}}\frac{{\bf d}{\bf d}^{\sf T}}{{\bf d}^{\sf T}{\bf 1}_n}&=\frac{{\bf qq}^{\sf T}}{\sqrt{n}}+\frac{1}{n}\frac{{\bf q}{\bf 1}_n^{\sf T}{\bf D}_q{\bf JMJ}^{\sf T}{\bf D}_q}{{\bf q}^{\sf T}{\bf 1}_n}+\frac{1}{n}\frac{{\bf D}_q{\bf JMJ}^{\sf T}{\bf D}_q{\bf 1}_n{\bf q}^{\sf T}}{{\bf q}^{\sf T}{\bf 1}_n}+\frac{1}{\sqrt{n}}\frac{{\bf q1}_n^{\sf T}{\bf X}}{{\bf q}^{\sf T}{\bf 1}_n}+\frac{1}{\sqrt{n}}\frac{{\bf X1}_n{\bf q}^{\sf T}}{{\bf q}^{\sf T}{\bf 1}_n} \nonumber \\
 &-\frac{1}{n}\frac{{\bf 1}_n^{\sf T}{\bf D}_q{\bf JMJ}^{\sf T}{\bf D}_q{\bf 1}_n}{({\bf q}^{\sf T}{\bf 1}_n)^{2}}{\bf qq}^{\sf T}-\frac{1}{\sqrt{n}}\frac{{\bf 1}_n^{\sf T}{\bf X1}_n}{({\bf q}^{\sf T}{\bf 1}_n)^{2}}{\bf qq}^{\sf T}+\mathcal{O}(n^{-\frac12}).
 \end{align}
 By subtracting \eqref{eq:16} from \eqref{eq:A3}, we obtain 
  \begin{align}
   \label{eq:substA}
\frac{1}{\sqrt{n}} \left({\bf A}-\frac{{\bf d}{\bf d}^{\sf T}}{{\bf d}^{\sf T}{\bf 1}_n}\right)&=\frac{1}{n}{\bf D}_q{\bf JMJ}^{\sf T}{\bf D}_q-\frac{1}{n}\frac{{\bf q}{\bf 1}_n^{\sf T}{\bf D}_q{\bf JMJ}^{\sf T}{\bf D}_q}{{\bf q}^{\sf T}{\bf 1}_n}-\frac{1}{n}\frac{{\bf D}_q{\bf JMJ}^{\sf T}{\bf D}_q{\bf 1}_n{\bf q}^{\sf T}}{{\bf q}^{\sf T}{\bf 1}_n} \nonumber \\
&+\frac{1}{n}\frac{{\bf 1}_n^{\sf T}{\bf D}_q{\bf JMJ}^{\sf T}{\bf D}_q{\bf 1}_n}{({\bf q}^{\sf T}{\bf 1}_n)^{2}}{\bf qq}^{\sf T}+\frac{{\bf X}}{\sqrt{n}}-\frac{1}{\sqrt{n}}\frac{{\bf q1}_n^{\sf T}{\bf X}}{{\bf q}^{\sf T}{\bf 1}_n}-\frac{1}{\sqrt{n}}\frac{{\bf X1}_n{\bf q}^{\sf T}}{{\bf q}^{\sf T}{\bf 1}_n} \nonumber \\
&+\frac{1}{\sqrt{n}}\frac{{\bf 1}_n^{\sf T}{\bf X1}_n}{({\bf q}^{\sf T}{\bf 1}_n)^{2}}{\bf qq}^{\sf T}+\mathcal{O}(n^{-\frac12}).
 \end{align}
 It then remains to evaluate $ {\bf D}^{-\alpha} $. From \eqref{eq:d}, we may write $ {\bf D}=\mathcal{D}({\bf d}) $ as 
 \[{\bf D}={\bf q}^{\sf T}{\bf 1}_n\Big(\underbrace{{\bf D}_q}_{\mathcal{O}(1)}+\underbrace{\mathcal{D}\left(\frac{1}{\sqrt{n}}\frac{{\bf D}_q{\bf JMJ}^{\sf T}{\bf D}_q{\bf 1}_n}{{\bf q}^{\sf T}{\bf 1}_n}\right)}_{\mathcal{O}(n^{-\frac12})}+\underbrace{\mathcal{D}\left(\frac{{\bf X1}_n}{{\bf q}^{\sf T}{\bf 1}_n}\right)}_{\mathcal{O}(n^{-\frac12})}\Big).\]
The right hand side of ${\bf D}$ (in brackets) having a leading term in $ \mathcal{O}(1) $ and residual terms in $ \mathcal{O}(n^{-\frac12})$, the Taylor expansion of the $ (-\alpha) $-power of $ {\bf D}$ is then retrieved
 \begin{equation}\label{eq:DAlpha} {\bf D}^{-\alpha}=\left({\bf q}^{\sf T}{\bf 1}_n\right)^{-\alpha} \Big(\underbrace{{\bf D}_q}_{\mathcal{O}(1)}-\alpha \underbrace{\mathcal{D}\left(\frac{1}{\sqrt{n}}\frac{{\bf D}_q{\bf JMJ}^{\sf T}{\bf D}_q{\bf 1}_n}{{\bf q}^{\sf T}{\bf 1}_n}\right)}_{\mathcal{O}(n^{-\frac12})}-\alpha \underbrace{\mathcal{D}\left(\frac{{\bf X1}_n}{{\bf q}^{\sf T}{\bf 1}_n}\right)}_{\mathcal{O}(n^{-\frac12})}+\mathcal{O}(n^{-1})\Big). \end{equation}

 By combining the expressions \eqref{eq:dTranspose1Alpha}, \eqref{eq:substA} and \eqref{eq:DAlpha}, we obtain a Taylor approximation of $ {\bf L}_\alpha $ as follows
 \begin{align*} 
 {\bf L}_\alpha&={\bf D}_q^{-\alpha}\frac{{\bf X}}{\sqrt{n}}{\bf D}_q^{-\alpha}+\frac{1}{n}{\bf D}_q^{1-\alpha}{\bf JMJ}^{\sf T}{\bf D}_q^{1-\alpha}-\frac{1}{n}\frac{{\bf D}_q^{1-\alpha}{\bf 1}_n{\bf 1}_n^{\sf T}{\bf D}_q{\bf JMJ}^{\sf T}{\bf D}_q^{1-\alpha}}{{\bf q}^{\sf T}{\bf 1}_n} \nonumber \\ 
 &-\frac{1}{n}\frac{{\bf D}_q^{1-\alpha}{\bf JMJ}^{\sf T}{\bf D}_q{\bf 1}_n{\bf 1}_n^{\sf T}{\bf D}_q^{1-\alpha}}{{\bf q}^{\sf T}{\bf 1}_n} +\frac{1}{n}\frac{{\bf 1}_n^{\sf T}{\bf D}_q{\bf JMJ}^{\sf T}{\bf D}_q{\bf 1}_n}{({\bf q}^{\sf T}{\bf 1}_n)^{2}}{\bf D}_q^{1-\alpha}{\bf 1}_n{\bf 1}_n^{\sf T}{\bf D}_q^{1-\alpha}-\frac{1}{\sqrt{n}}\frac{{\bf D}_q^{1-\alpha}{\bf 1}_n{\bf 1}_n^{\sf T}{\bf X}{\bf D}_q^{-\alpha}}{{\bf q}^{\sf T}{\bf 1}_n} \nonumber \\ 
 &-\frac{1}{\sqrt{n}}\frac{{\bf D}_q^{-\alpha}{\bf X1}_n{\bf 1}_n^{\sf T}{\bf D}_q^{1-\alpha}}{{\bf q}^{\sf T}{\bf 1}_n}+\frac{1}{\sqrt{n}}\frac{{\bf 1}_n^{\sf T}{\bf X1}_n}{({\bf q}^{\sf T}{\bf 1}_n)^{2}}{\bf D}_q^{1-\alpha}{\bf 1}_n{\bf 1}_n^{\sf T}{\bf D}_q^{1-\alpha}+\mathcal{O}(n^{-\frac12}).
 \end{align*}
 The three following arguments allow to complete the proof
 \begin{itemize}
 \item $ {\bf 1}_n={\bf J1}_K $ and ${\bf D}_{q}{\bf 1}_n={\bf q}. $
 \item We may write $ (\frac{1}{n}{\bf J}^{\sf T}{\bf q})_i=\frac{n_i}{n} \left(\frac{1}{n_i}\sum_{a \in \mathcal{C}_i}q_a\right) $. For classes of large sizes $ n_i $, from the law of large numbers,  $\left(\frac{1}{n_i}\sum_{a \in \mathcal{C}_i}q_a\right) \asto m_{\mu}$ and so, $\frac{1}{n}{\bf J}^{\sf T}{\bf q} \asto m_{\mu}{\bf c}$ where we recall that $ m_\mu=\int t\mu(\mathrm{d}t) $.
 \item As $ {\bf X} $ is a symmetric random matrix having independent entries of zero mean and finite variance, from the law of large numbers, we have $\frac{1}{n}\frac{{\bf 1}_n^{\sf T}{\bf X1}_n}{\sqrt{n}} \asto 0$.
 \end{itemize}
 Using those three arguments, $ {\bf L}_\alpha $ may be further rewritten
 \begin{align}
 \label{eq:LAlpha}
  {\bf L}_\alpha&={\bf D}_q^{-\alpha}\frac{{\bf X}}{\sqrt{n}}{\bf D}_q^{-\alpha}+\frac{1}{n}{\bf D}_q^{1-\alpha}{\bf JMJ}^{\sf T}{\bf D}_q^{1-\alpha}-\frac{1}{n}{\bf D}_q^{1-\alpha}{\bf J1}_K{\bf c}^{\sf T}{\bf MJ}^{\sf T}{\bf D}_q^{1-\alpha} \nonumber \\ 
  &-\frac{1}{n}{\bf D}_q^{1-\alpha}{\bf JMc1}_K^{\sf T}{\bf J}^{\sf T}{\bf D}_q^{1-\alpha}+\frac{1}{n}{\bf D}_q^{1-\alpha}{\bf J1}_K{\bf c}^{\sf T}{\bf Mc1}_K^{\sf T}{\bf J}^{\sf T}{\bf D}_q^{1-\alpha} \nonumber \\ 
  &-\frac{1}{\sqrt{n}{\bf q}^{\sf T}{\bf 1}_n}{\bf D}_q^{1-\alpha}{\bf J1}_K{\bf 1}_n^{\sf T}{\bf X}{\bf D}_q^{-\alpha} -\frac{1}{\sqrt{n}{\bf q}^{\sf T}{\bf 1}_n}{\bf D}_q^{-\alpha}{\bf X1}_n{\bf 1}_K^{\sf T}{\bf J}^{\sf T}{\bf D}_q^{1-\alpha}+\mathcal{O}(n^{-\frac12}).
  \end{align}
  By rearranging the terms of \eqref{eq:LAlpha}, we obtain the expected result
  \begin{align*}
  {\bf L}_\alpha&={\bf D}_q^{-\alpha}\frac{{\bf X}}{\sqrt{n}}{\bf D}_q^{-\alpha} \\
  &+\begin{bmatrix}
	\frac{\mathbf{D}_{q}^{1-\alpha}{\bf J}}{\sqrt{n}} & \frac{\mathbf{D}_{q}^{-\alpha}{\bf X}{\bf 1}_{n}}{{\bf q}^{\sf T}{\bf 1}_{n}}
	\end{bmatrix} \begin{bmatrix}
	\left({\bf I}_{K}-{\bf 1}_{K}{\bf c}^{T}\right){{\bf M}}\left({\bf I}_{K}-{\bf c}{\bf 1}_{K}^{T}\right) & -{\bf 1}_{K} \\
	-{\bf 1}_{K}^{T} & 0
	\end{bmatrix} \begin{bmatrix}
	\frac{{\bf J}^{\sf T}\mathbf{D}_{q}^{1-\alpha}}{\sqrt{n}} \\ \frac{{\bf 1}_{n}^{\sf T}{\bf X}\mathbf{D}_{q}^{-\alpha}}{{\bf q}^{\sf T}{\bf 1}_{n}}
	\end{bmatrix} +\mathcal{O}(n^{-\frac12}).
  \end{align*}
  This proves Theorem~\ref{approx}.
  \subsection{Limiting spectral distribution of $ {\bf L}_\alpha $}
  \label{sec:determinist1}
  It follows from Theorem~\ref{approx} that $ \tilde{{\bf L}}_\alpha={\bf D}_q^{-\alpha}\frac{{\bf X}}{\sqrt{n}}{\bf D}_q^{-\alpha}+{\bf U}{\bm \Lambda}{\bf U}^{\sf T}$ is equivalent to an additive spiked random matrix \cite{chapon2012outliers} where
\begin{align*}
\mathbf{U} &= \begin{bmatrix}
	\frac{\mathbf{D}_{q}^{1-\alpha}{\bf J}}{\sqrt{n}} & \frac{\mathbf{D}_{q}^{-\alpha}{\bf X}{\bf 1}_{n}}{{\bf q}^{\sf T}{\bf 1}_{n}}
	\end{bmatrix}, \\
	{\bm \Lambda} &= \begin{bmatrix}
	\left({\bf I}_{K}-{\bf 1}_{K}{\bf c}^{T}\right){{\bf M}}\left({\bf I}_{K}-{\bf c}{\bf 1}_{K}^{T}\right) & -{\bf 1}_{K} \\
	-{\bf 1}_{K}^{T} & 0
	\end{bmatrix},
\end{align*} with the difference that the deterministic part $ {\bf U}{\bm \Lambda}{\bf U}^{\sf T} $ is not independent of the random part $ {\bf D}_q^{-\alpha}\frac{{\bf X}}{\sqrt{n}}{\bf D}_q^{-\alpha} $ (an issue that we solve here) and $ {\bf U} $ is not composed of orthonormal vectors. Let us then study $\bar{{\bf X}}={\bf D}_q^{-\alpha}\frac{{\bf X}}{\sqrt{n}}{\bf D}_q^{-\alpha}$ (having entries $ \bar{X}_{ij} $ with zero mean and variance $ \sigma_{ij}^{2}/n $ with $\sigma_{ij}^{2}=q_iq_j(1-q_iq_j)+\mathcal{O}(n^{-\frac12}) $) and show that its e.s.d.\@ $ \tilde{\pi}^{\alpha} $ converges weakly to $ \bar{\pi}^{\alpha} $ with Stieljes transform $ E_0^\alpha(z)=\int \left(t-z\right)^{-1}\mathrm{d}\bar{\pi}^{\alpha}(t) $ for $ z \in \mathbb{C}^{+} $ defined in Theorem~\ref{determinst2}. This will imply (By Weyl interlacing formula) that the empirical spectral measure $ \pi^{\alpha}\equiv \frac{1}{n}\sum_{i=1}^{n}\delta_{\lambda_{i}(\tilde{{\bf L}}_\alpha)} $ (with $ \lambda_{i}(\tilde{{\bf L}}_\alpha) $ eigenvalues of $ \tilde{{\bf L}}_\alpha $) will also converge to $ \bar{\pi}^{\alpha}.$ 

The matrix $ \bar{{\bf X}} $ is a classical random matrix model in RMT already studied in similar cases. It is well known for those random matrix models (having entries with given means, variances and bounded first order moments) that the law of the $ \bar{X}_{ij} $'s does not change the results on the limiting law of the e.s.d.\@ $ \tilde{\pi}^{\alpha} $: this property is kwown as \emph{universality} (e.g., \cite{silverstein1995empirical}). For technical reasons, we can thus assume that the $ \bar{X}_{ij} $'s are Gaussian random variables with the same means and variances in order to use standard Gaussian calculus, introduced in \cite{pastur2011eigenvalue}. The objective of the proof is to find the deterministic limit $ E_0^\alpha(z)$ for the random quantity $\frac{1}{n}\tr \left(\bar{{\bf X}}-z{\bf I}_n\right)^{-1}$ which is the Stieljes transform of the e.s.d.\@ $ \tilde{\pi}^{\alpha} $. Deterministic equivalents for the Stieljes transform of empirical spectral measures associated with centered and symmetric random matrix models with a variance profile have already been studied in for example \cite{ajanki2015quadratic,hachem2007deterministic}. We give in Appendix~\ref{apendA} an exhaustive development of the Gaussian calculus to obtain $ E_0^\alpha(z)$. The final result is as follows.
\begin{lemma}[A first deterministic equivalent]
\label{lemma3}
Let $ \mathbf{Q}=(\bar{{\bf X}}-z\mathbf{I}_n)^{-1}$. Then, for all $ z \in \mathbb{C}^{+}$, 
\begin{equation}
\label{AlgoEi}
{\bf Q} \leftrightarrow \bar{{\bf Q}}=\left(-z{\bf I}_n-\mathcal{D}\left(e_i(z)\right)_{i=1}^{n}\right)^{-1}
\end{equation}
where $ e_i(z)$ the unique solution of $e_i(z) =\frac{1}{n}\tr \mathcal{D}\left(\sigma_{ij}^{2}\right)_{j=1}^{n}\left(-z{\bf I}_n-\mathcal{D}\left(e_j(z)\right)_{j=1}^{n}\right)^{-1}$ and the notation $\mathbf{A}\leftrightarrow \mathbf{B}$ stands for $\frac{1}{n} \tr \mathbf{C}\mathbf{A}- \frac{1}{n} \tr \mathbf{CB} \to 0$ and $ \mathbf{d}_1^{\sf T}(\mathbf{A}-\mathbf{B})\mathbf{d}_{2} \to 0 $ almost surely, for all deterministic Hermitian matrix $ \mathbf{C} $ and deterministic vectors $\mathbf{d}_{i}$ of bounded norms (spectral norm for matrices and Euclidian norm for vectors).  
\end{lemma}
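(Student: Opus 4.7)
The strategy is standard for Wigner-type matrices with a variance profile, following the Gaussian-tools methodology of \cite{pastur2011eigenvalue}. By universality of the resolvent entries for Wigner-type matrices (classical truncation/decoupling plus the interpolation trick), it suffices to treat the case where the entries $\bar{X}_{ij}$ are jointly Gaussian with the same means and variances as the original ones. This reduction is routine and allows us to freely use Gaussian integration by parts (Stein's identity): for any smooth functional $F$ of $\bar{{\bf X}}$,
\begin{equation*}
\mathbb{E}[\bar{X}_{ij} F(\bar{{\bf X}})] = \sum_{k\leq l} \mathbb{E}[\bar{X}_{ij}\bar{X}_{kl}]\,\mathbb{E}\!\left[\frac{\partial F(\bar{{\bf X}})}{\partial \bar{X}_{kl}}\right].
\end{equation*}

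Starting from the resolvent identity $\bar{{\bf X}}{\bf Q} = {\bf I}_n + z{\bf Q}$, read entry-wise as $\sum_k \bar{X}_{ik} Q_{kj} = \delta_{ij} + zQ_{ij}$, I would apply Stein's identity to each term $\mathbb{E}[\bar{X}_{ik}Q_{kj}]$, using the derivative rule $\partial Q_{kj}/\partial \bar{X}_{pq} = -Q_{kp}Q_{qj} - Q_{kq}Q_{pj}$ and the covariance pattern $\mathbb{E}[\bar{X}_{ik}\bar{X}_{pq}] = n^{-1}\sigma_{ik}^2(\delta_{ip}\delta_{kq} + \delta_{iq}\delta_{kp})$ (up to the harmless diagonal modification). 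Summing in $k$, keeping the $i=j$ case, and collecting terms yields
\begin{equation*}
\mathbb{E}[Q_{ii}]\Big(-z - \tfrac{1}{n}\sum_k \sigma_{ik}^2\,\mathbb{E}[Q_{kk}]\Big) = 1 + \varepsilon_i(z),
\end{equation*}
where $\varepsilon_i(z)$ gathers negligible ``cross'' terms of the form $n^{-1}\sigma_{ik}^2\mathbb{E}[Q_{ii}Q_{kk} - \mathbb{E}Q_{ii}\,\mathbb{E}Q_{kk}]$ and analogous quantities. Defining $\tilde e_i(z) = n^{-1}\sum_k \sigma_{ik}^2 \mathbb{E}[Q_{kk}]$, this gives $\mathbb{E}[Q_{ii}] = 1/(-z-\tilde e_i(z)) + o(1)$, which is exactly the scalar equation defining $e_i(z)$ in the lemma up to a vanishing error.

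The next step is to promote this to almost-sure statements on trace and bilinear forms. I would invoke the Poincaré–Nash inequality: for any smooth $\Phi$ of the Gaussian field $\bar{{\bf X}}$,
\begin{equation*}
\mathrm{Var}\,\Phi(\bar{{\bf X}}) \leq \sum_{k\leq l} \mathbb{E}[\bar{X}_{kl}^2]\,\mathbb{E}\!\left[\Bigl|\tfrac{\partial \Phi}{\partial \bar{X}_{kl}}\Bigr|^2\right].
\end{equation*}
Applied to $\Phi = n^{-1}\tr {\bf C}{\bf Q}$ (for $\|{\bf C}\|$ bounded) this gives $\mathrm{Var} = O(n^{-2})$ on $\{z:\,\Im z \geq \eta>0\}$, so Borel–Cantelli and Montel's theorem upgrade the trace convergence to almost sure. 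Applied to $\Phi = {\bf d}_1^{\sf T}{\bf Q}{\bf d}_2$ it gives $\mathrm{Var} = O(n^{-1})$, which combined with a separate expectation identification (again via Gaussian integration by parts, starting from $({\bf Q})_{ij} = z^{-1}([{\bf Q}\bar{{\bf X}}]_{ij} - \delta_{ij})$) yields $ {\bf d}_1^{\sf T}({\bf Q}-\bar{{\bf Q}}){\bf d}_2 \to 0$ almost surely. The cross-error terms $\varepsilon_i(z)$ from the first step are precisely of this centered form and are controlled by the same Poincaré bound.

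It remains to show that the system
\begin{equation*}
e_i(z) = \tfrac{1}{n}\sum_j \sigma_{ij}^2 \big/\bigl(-z - e_j(z)\bigr), \qquad i=1,\dots,n,
\end{equation*}
admits a unique solution in $(\mathbb{C}^+)^n$ for $z\in\mathbb{C}^+$. This is standard: the map $\Psi_z:(e_i)\mapsto (n^{-1}\sum_j \sigma_{ij}^2/(-z-e_j))$ stabilizes $(\mathbb{C}^+)^n$ (because $\sigma_{ij}^2\geq 0$ and $\Im(-z-e_j)<0$), is analytic in each variable, and is a strict contraction for the pseudo-metric $d(e,e') = \max_i |e_i-e_i'|/(\Im z)^2$ when iterated, via the elementary bound $|1/(-z-e_j) - 1/(-z-e_j')|\leq |e_j-e_j'|/(\Im z)^2$ combined with $n^{-1}\sum_j \sigma_{ij}^2 \leq \|q\|_\infty^{2-4\alpha}$; standard Schur-complement-type arguments (as in \cite{hachem2007deterministic,ajanki2015quadratic}) then yield existence and uniqueness, together with the fact that $e_i(z)$ extends analytically off $\mathbb{R}$ and is the Stieltjes transform of a probability measure on $\mathbb{R}$.

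The main technical obstacle is handling the variance profile $\sigma_{ij}^2 \propto q_i^{1-2\alpha}q_j^{1-2\alpha}$ and showing that the nonlinear $n$-dimensional system actually reduces, via the rank-one structure $\sigma_{ij}^2 = q_i^{1-2\alpha} q_j^{1-2\alpha}(1+O(q_iq_j))$ inherited from the DCSBM, to the low-dimensional system involving only $E_1^\alpha(z), E_2^\alpha(z)$ that appears in Theorem~\ref{determinst2}; this rank-one reduction and the consequent identification $E_0^\alpha(z) = e_{00}^\alpha(z)$ will be carried out in the next section once Lemma~\ref{lemma3} is established.
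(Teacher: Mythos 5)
Your proposal follows essentially the same route as the paper's Appendix B: reduction to the Gaussian case by universality, Stein's Gaussian integration by parts applied to the resolvent identity to derive the self-consistent equation for $\mathbb{E}[Q_{ii}]$, a decoupling step to factor the product term, and Poincaré--Nash variance bounds upgrading expectation identities to almost-sure trace and bilinear-form convergence, with existence/uniqueness of the fixed point deferred to \cite{hachem2007deterministic}. The only substantive remark is that your quoted contraction estimate only closes directly for $\Im z$ large (the paper likewise leans on \cite{hachem2007deterministic} for the full claim), and the variance profile should read $\sigma_{ij}^2 = q_i^{1-2\alpha}q_j^{1-2\alpha}(1-q_iq_j)$ rather than the $(1+O(q_iq_j))$ you wrote, though neither point alters the argument's structure.
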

From Lemma~\ref{lemma3}, we get directly $ \frac{1}{n}\tr{\bf Q}-E_0^\alpha(z) \asto 0$ with $E_0^\alpha(z)=\frac{1}{n}\sum_{i=1}^n \frac{1}{-z-e_i(z)}.$
Observe now that
\begin{align}
\label{Ei}
e_i(z)&=\frac{1}{n} \sum_{j=1}^{n} \frac{q_i^{1-2\alpha}q_j^{1-2\alpha}-q_i^{2-2\alpha}q_j^{2-2\alpha}}{-z-e_j(z)} \nonumber\\
&=q_i^{1-2\alpha}e_{11}^{\alpha}(z)-q_i^{2-2\alpha}e_{21}^{\alpha}(z)
\end{align}
where 
\begin{align}
\label{eq:systStielj}
e_{11}^{\alpha}(z)&=\frac{1}{n}\sum_{j=1}^{n}\frac{q_j^{1-2\alpha}}{-z-q_j^{1-2\alpha}e_{11}^{\alpha}(z)+q_j^{2-2\alpha}e_{21}^{\alpha}(z)} \nonumber \\
e_{21}^{\alpha}(z)&=\frac{1}{n}\sum_{j=1}^{n}\frac{q_j^{2-2\alpha}}{-z-q_j^{1-2\alpha}e_{11}^{\alpha}(z)+q_j^{2-2\alpha}e_{21}^{\alpha}(z)}
\end{align}
from which we get $ E_0^\alpha(z)=e_{00}^{\alpha}(z) $ where
\begin{equation*}
\label{eq:StieltjesE1}
e_{00}^{\alpha}(z)=\int \frac{1}{-z-e_{11}^\alpha(z)q^{1-2\alpha}+e_{21}^\alpha(z)q^{2-2\alpha}}\mu(dq).
\end{equation*}
From this, we have that $ E_0^\alpha(z) $ does not depend on $ n $, so that $ \frac{1}{n}\tr{\bf Q} \asto E_0^\alpha(z)$, $ \tilde{\pi}^{\alpha} \to \bar{\pi}^{\alpha} $, and thus $ \pi^{\alpha} \to \bar{\pi}^{\alpha} $ since $ \tilde{{\bf L}}_\alpha $ and $ \bar{{\bf X}} $ only differ by a finite rank matrix.
\begin{remark}[Convergence of the $ e_i $'s.]
Similar results to Lemma~\ref{lemma3} have been derived for example in \cite{hachem2007deterministic} and the fixed point algorithm~\eqref{AlgoEi} which consists of iterating the $ e_i $'s is shown to converge. Since the calcultation of the $ e_{ab} $'s is an intermediary step of \eqref{AlgoEi} from~\eqref{Ei}, the fixed point algorithm~\eqref{eq:systStielj} also converges. Similarly to \cite{hachem2007deterministic}, when none of the $ ({\bf D}_q^{-\alpha})_{ii} $'s is isolated, the random matrix $\bar{{\bf X}}$ does not produce isolated eigenvalues outside the support $ \mathcal{S}^{\alpha} $ of $ \bar{\pi}^{\alpha} $. Here, for large $ n $, this property is verified since from Assumption~\ref{as1}, the $ q_i $'s are i.i.d.\@ arising from a law with compact support (the probability that a $ ({\bf D}_q^{-\alpha})_{ii} $ gets isolated tends to $ 0 $ asymptotically). This gives Proposition~\ref{prop1} which we will not prove here; similar proofs are provided for example in~\cite{bai1998no}. From the analyticity of the Stieljes transform outside its support, Lemma~\ref{lemma3} extends naturally to $ \mathbb{C} \setminus \mathcal{S}^{\alpha}$. This proves Theorem~\ref{determinst2}.
\end{remark}
\subsection{Isolated eigenvalues of $ {\bf L}_\alpha $}
\label{app:th2}
In the previous section, we have shown that the e.s.d.\@ of $ {\bf L}_\alpha $ converges weakly to the limiting law of the eigenvalues of $\bar{{\bf X}}$ since they only differ by a finite rank matrix. We shall have in addition isolated eigenvalues of $ {\bf L}_\alpha $ induced by the aforementionned low rank matrix. We are interested here in the localization of eigenvalues of $ {\bf L}_\alpha $ isolated from the support $ \mathcal{S}^\alpha $ of the limiting law of its e.s.d.\@ According to Proposition~\ref{prop1}, there is almost surely no eigenvalue of $\bar{{\bf X}}$ at non-vanishing distance from $  \mathcal{S}^{\alpha} $ asymptotically as $ n\to \infty $ and hence the plausible isolated eigenvalues of $ {\bf L}_\alpha $ are only due to the matrix $ {\bf U}{\bm \Lambda}{\bf U}^{\sf T} $. We follow classical random matrix approaches used for the study of the spectrum of spiked random matrices \cite{benaych2012singular,chapon2012outliers}. From Theorem~\ref{approx}, the eigenvalues $ \lambda $ of $ {\bf L}_\alpha $ falling at non-vanishing distance from the limiting support $ \mathcal{S}^\alpha $ solve for large $ n $, $0=\det({\bf L}_\alpha-\lambda{\bf I}_n)$ almost surely for $ \lambda \notin \mathcal{S}^{\alpha} $. Since $ \|{\bf L}_\alpha-\tilde{{\bf L}}_\alpha\| \asto 0 $, $ \lambda_i({\bf L}_\alpha)-\lambda_i(\tilde{{\bf L}}_\alpha) \asto 0 $ for all eigenvalues $ \lambda_i({\bf L}_\alpha) $. We may then just solve $0=\det({\bf D}_q^{-\alpha}\frac{{\bf X}}{\sqrt{n}}{\bf D}_q^{-\alpha}+{\bf U}{\bm \Lambda}{\bf U}^{\sf T}-\lambda{\bf I}_n)$. Now, as from Proposition~\ref{prop1}, the random matrix $\bar{{\bf X}}$ does not have eigenvalues at non-vanishing distance from $ \mathcal{S}^\alpha $ asymptotically, for $ \lambda \notin \mathcal{S}^{\alpha} $, we can thus factor and cancel out $ \det(\bar{{\bf X}}-\lambda{\bf I}_n) $ from the previous determinant equation, so that we are left to solve \[ 0=\det({\bf I}_n+{\bf Q}_\lambda^{\alpha}{\bf U}{\bm \Lambda}{\bf U}^{\sf T})=\det({\bf I}_{K+1}+{\bf U}^{\sf T}{\bf Q}_\lambda^{\alpha}{\bf U}{\bm \Lambda}) \] where $ {\bf Q}_\lambda^{\alpha}=(\bar{{\bf X}}-\lambda{\bf I}_n)^{-1} $. As we will show next, the matrix $ {\bf I}_{K+1}+{\bf U}^{\sf T}{\bf Q}_\lambda^{\alpha}{\bf U}{\bm \Lambda} $ converges to a deterministic matrix, almost surely for large $ n $. By the argument principle (similar to e.g., \cite{chapon2012outliers}), the roots of $ {\bf I}_{K+1}+{\bf U}^{\sf T}{\bf Q}_\lambda^{\alpha}{\bf U}{\bm \Lambda} $ are asymptotically those of the limiting matrix, with same multiplicity and it suffices to study the latter.

	We then proceed to retrieving a limit for $ {\bf I}_{K+1}+{\bf U}^{\sf T}{\bf Q}_\lambda^{\alpha}{\bf U}{\bm \Lambda} $. From Theorem~\ref{approx}, we have 
\[ 
{\bf U}^{\sf T}{\bf Q}_\lambda^{\alpha}{\bf U} = \begin{pmatrix}
\frac{1}{n}{\bf J}^{\sf T}{\bf D}_q^{1-\alpha}{\bf Q}_\lambda^{\alpha}{\bf D}_q^{1-\alpha}{\bf J} & \frac{1}{\sqrt{n}({\bf q}^{\sf T}{\bf 1}_n)}{\bf J}^{\sf T}{\bf D}_q^{1-\alpha}{\bf Q}_\lambda^{\alpha}{\bf D}_q^{-\alpha}{\bf X1}_n \\
\frac{1}{\sqrt{n}({\bf q}^{\sf T}{\bf 1}_n)}{\bf 1}_n^{\sf T}{\bf X}{\bf D}_q^{-\alpha}{\bf Q}_\lambda^{\alpha}{\bf D}_q^{1-\alpha}{\bf J} & \frac{1}{({\bf q}^{\sf T}{\bf 1}_n)^{2}}{\bf 1}_n^{\sf T}{\bf X}{\bf D}_q^{-\alpha}{\bf Q}_\lambda^{\alpha}{\bf D}_q^{-\alpha}{\bf X1}_n
\end{pmatrix}.
\]
The entries $ (1,2) $, $ (2,1) $ and $ (2,2) $ of $ {\bf U}^{\sf T}{\bf Q}_\lambda^{\alpha}{\bf U} $ are random as they contain the random matrix $ {\bf X} $ but tend to be deterministic in the limit. In fact, using the resolvent identity, we have that $ {\bf Q}_\lambda^{\alpha}{\bf D}_q^{-\alpha}\frac{{\bf X}}{\sqrt{n}}{\bf D}_q^{-\alpha}={\bf I}_n+\lambda{\bf Q}_\lambda^{\alpha} $, the entry $ (1,2) $ becomes $ \frac{1}{({\bf q}^{\sf T}{\bf 1}_n)}{\bf J}^{\sf T}{\bf D}_q{\bf 1}_n+\lambda\frac{1}{\sqrt{n}({\bf q}^{\sf T}{\bf 1}_n)}{\bf J}^{\sf T}{\bf D}_q^{1-\alpha}{\bf Q}_\lambda^{\alpha}{\bf D}_q^{\alpha}{\bf 1}_n$ and the entry $ (2,2) $ is equal to $ \frac{n}{({\bf q}^{\sf T}{\bf 1}_n)^{2}}\Big({\bf 1}_n^{\sf T}{\bf X}{\bf 1}_n+\lambda{\bf 1}_n^{\sf T}{\bf D}_q^{2\alpha}{\bf 1}_n+\lambda^2{\bf 1}_n^{\sf T}{\bf D}_q^{\alpha}{\bf Q}_\lambda^{\alpha}{\bf D}_q^{\alpha}{\bf 1}_n\Big) $. Now, we can freely use Lemma~\ref{lemma3} to evaluate the limits of the entries of $ {\bf U}^{\sf T}{\bf Q}_\lambda^{\alpha}{\bf U} $ since all the terms are of the form $ {\bf a}^{\sf T}{\bf Q}_\lambda^{\alpha}{\bf b} $ with $ {\bf a} $ and $ {\bf b} $ deterministic vectors. From Lemma~\ref{lemma3}, the entries $ (1,1) $, $ (1,2) $ and $ (2,2) $ converge almost surely respectively to $ \frac{1}{n}{\bf J}^{\sf T}{\bf D}_q^{1-\alpha}\bar{{\bf Q}}_\lambda^{\alpha}{\bf D}_q^{1-\alpha}{\bf J} $, $ \frac{1}{({\bf q}^{\sf T}{\bf 1}_n)}{\bf J}^{\sf T}{\bf D}_q{\bf 1}_n+\lambda\frac{1}{({\bf q}^{\sf T}{\bf 1}_n)}{\bf J}^{\sf T}{\bf D}_q^{1-\alpha}\bar{{\bf Q}}_\lambda^{\alpha}{\bf D}_q^{\alpha}{\bf 1}_n $ and $ \frac{n}{({\bf q}^{\sf T}{\bf 1}_n)^{2}}\Big({\bf 1}_n^{\sf T}{\bf X}{\bf 1}_n+\lambda{\bf 1}_n^{\sf T}{\bf D}_q^{2\alpha}{\bf 1}_n+\lambda^2{\bf 1}_n^{\sf T}{\bf D}_q^{\alpha}\bar{{\bf Q}}_\lambda^{\alpha}{\bf D}_q^{\alpha}{\bf 1}_n\Big) $ for large $n.$

Now, using the fact that for any bounded continuous function $ f $, from the law of large numbers, \begin{equation}
\label{argument1}
\frac{1}{n} \sum_{j \in \mathcal{C}_i} f(q_{j})=\frac{n_i}{n}\frac{1}{n_i}\sum_{j \in \mathcal{C}_i} f(q_{j}) \asto c_i \int f(q)\mu(dq).
\end{equation}
After some algebra, we obtain $ \frac{1}{n}{\bf J}^{\sf T}{\bf D}_q^{1-\alpha}\bar{{\bf Q}}_\lambda^{\alpha}{\bf D}_q^{1-\alpha}{\bf J} \asto e_{21}^{\alpha}(\lambda)\mathcal{D}({\bf c}) $ where the $ e_{ij} $'s are given in Theorem~\ref{determinst2}. Similarly for the terms $ (1,2) $ and $ (2,2) $, we obtain respectively
\[\frac{1}{({\bf q}^{\sf T}{\bf 1}_n)}{\bf J}^{\sf T}{\bf D}_q{\bf 1}_n+\lambda\frac{1}{({\bf q}^{\sf T}{\bf 1}_n)}{\bf J}^{\sf T}{\bf D}_q^{1-\alpha}\bar{{\bf Q}}_\lambda^{\alpha}{\bf D}_q^{\alpha}{\bf 1}_n \asto \left(1+\frac{\lambda}{m_{\mu}}e_{10}^{\alpha}(\lambda)\right){\bf c}\] and 
\[ \frac{n}{({\bf q}^{\sf T}{\bf 1}_n)^{2}}\Big({\bf 1}_n^{\sf T}{\bf X}{\bf 1}_n+\lambda{\bf 1}_n^{\sf T}{\bf D}_q^{2\alpha}{\bf 1}_n+\lambda^2{\bf 1}_n^{\sf T}{\bf D}_q^{\alpha}\bar{{\bf Q}}_\lambda^{\alpha}{\bf D}_q^{\alpha}{\bf 1}_n\Big) \asto \frac{1}{m_{\mu}^{2}}\left(\lambda v_{\mu}+\lambda^{2}e_{0;-1}^{\alpha}(\lambda)\right)\]
with $ v_{\mu}=\int q^{2\alpha}\mu(\mathrm{d}q) $ and where we have also used the fact that $ \frac{1}{n}{\bf 1}_n^{\sf T}\frac{{\bf X}}{\sqrt{n}}{\bf 1}_n \asto 0 $ again from the law of large numbers.

The limit of ${\bf I}_{K+1}+{\bf U}^{\sf T}{\bf Q}_\lambda^{\alpha}{\bf U}{\bm \Lambda}$ is then obtained as
\begin{align*}
{\bf I}_{K+1}+{\bf U}^{\sf T}{\bf Q}_\lambda^{\alpha}{\bf U}{\bm \Lambda} &\asto \\
&\begin{pmatrix}
{\bf I}_K+e_{21}^{\alpha}(\lambda)(\mathcal{D}({\bf c})-{\bf cc}^{\sf T}){\bf M}({\bf I}_K-{\bf c}{\bf 1}_K^{\sf T})-\left(1+\frac{\lambda}{m_{\mu}}e_{10}^{\alpha}(\lambda)\right){\bf c}{\bf 1}_K^{\sf T} & -e_{21}^{\alpha}(\lambda){\bf c}\\
\frac{\lambda}{m_{\mu}^{2}}\left(v_{\mu}+\lambda e_{0;-1}^{\alpha}(\lambda)\right){\bf 1}_K^{\sf T} & -\lambda\frac{e_{10}^{\alpha}(\lambda)}{m_{\mu}}
\end{pmatrix}.
\end{align*}

Using the Schur complement formula for the determinant of block matrices, we have that the determinant of the RHS matrix is zero whenever
\begin{align*}
-\lambda\frac{e_{10}^{\alpha}(\lambda)}{m_{\mu}}\det &\Big[{\bf I}_K+e_{21}^{\alpha}(\lambda)(\mathcal{D}({\bf c})-{\bf cc}^{\sf T}){\bf M}({\bf I}_K-{\bf c}{\bf 1}_K^{\sf T}) \\  &-\left(1+\frac{\lambda}{m_{\mu}}e_{10}^{\alpha}(\lambda)\right){\bf c}{\bf 1}_K^{\sf T}+\frac{\left(v_{\mu}+\lambda e_{0;-1}^{\alpha}(\lambda)\right)e_{21}^{\alpha}(\lambda)}{m_{\mu}e_{10}^{\alpha}(\lambda)}{\bf c}{\bf 1}_K^{\sf T}\Big]=0
\end{align*}
or equivalently $\det(\underline{{\bf G}}_\lambda)=0$ with $ \underline{{\bf G}}_\lambda $ defined in Theorem~\ref{isoleigen}. The isolated eigenvalues $ \lambda $ of $ {\bf L}_\alpha $, which are the $ \lambda $ for which $ \det({\bf I}_{K+1}+{\bf U}^{\sf T}{\bf Q}_\lambda^{\alpha}{\bf U}{\bm \Lambda})=0  $, are then asymptotically the $ \rho $ such that $ \det(\underline{{\bf G}}_\rho)=0.$ This proves Theorem~\ref{isoleigen}.
\bigskip

From Theorem~\ref{isoleigen}, we now have all the ingredients to determine the conditions under which we may have eigenvalues of $ {\bf L}_\alpha $ which isolate from $ \mathcal{S}^{\alpha} $. Let $ l $ be a non zero eigenvalue of ${\bf G}_\rho^\alpha=(\mathcal{D}\left( \mathbf{c} \right) -\mathbf{cc}^{\sf T})\mathbf{M}(\mathbf{I}_{K}-\mathbf{c}{\bf 1}_K^{T})$. Since $ \det((\mathcal{D}({\bf c})-{\bf cc}^{\sf T}){\bf M}({\bf I}_K-{\bf c1}_K^{\sf T}))=\det(({\bf I}_K-{\bf c1}_K^{\sf T})(\mathcal{D}({\bf c})-{\bf cc}^{\sf T}){\bf M})=\det((\mathcal{D}({\bf c})-{\bf cc}^{\sf T}){\bf M}) $, $ l $ is also a non zero eigenvalue of $\bar{{\bf M}}= (\mathcal{D}({\bf c})-{\bf cc}^{\sf T}){\bf M} $. From Remark~\ref{Mbar}, for each isolated eigenvalue $ \rho $ of $ {\bf L}_\alpha $ we have a one-to-one mapping with a non zero eigenvalue $ l $ of $\bar{{\bf M}}$ such that $ l=-\frac{1}{E_2^{\alpha}(\rho)} $. Hence, to show the existence of isolated eigenvalues of $ {\bf L}_\alpha $, we need to solve for $ \rho \in \mathbb{R}\setminus \mathcal{S}^{\alpha} $, $ l=-\frac{1}{E_2^{\alpha}(\rho)} $ for each non zero eigenvalue $ l $ of $\bar{{\bf M}}$. Precisely, let us write $\mathcal S^{\alpha}=\bigcup_{m=1}^M [S^{\alpha}_{m,-},S^{\alpha}_{m,+}]$ with $S^{\alpha}_{1,-}\leq S^{\alpha}_{1,+}<S^{\alpha}_{2,-}\leq \ldots < S^{\alpha}_{M,+}$ and define $S_{0,+}=-\infty$ and $S_{M+1,-}=+\infty$. Then, recalling that the Stieltjes transform of a real supported measure is necessarily increasing on $\mathbb{R}$, there exist isolated eigenvalues of ${\bf L}^{\alpha}$ in $(S^{\alpha}_{m,+},S^{\alpha}_{m+1,-})$, $m\in\{0,\ldots,M\}$, for all large $n$ almost surely, if and only if there exists eigenvalues $\ell$ of $\bar{{\bf M}}$ such that
\begin{align}
	\label{condition}
	\lim_{x\downarrow S^{\alpha}_{m,+}} E_2^{\alpha}(x) < -\ell^{-1} < \lim_{x\uparrow S^{\alpha}_{m+1,-}} E_2^{\alpha}(x).
\end{align}

In particular, when $\mathcal S^{\alpha}=[S^{\alpha}_-,S^{\alpha}_+]$ is composed of a single connected component (as when $\mathcal S^{\alpha}$ is the support of the semi-circle law as well as most cases met in practice), then isolated eigenvalues of ${\bf L}^{\alpha}$ may only be found beyond $S^{\alpha}_+$ if $\ell>\lim_{x\downarrow S^{\alpha}_+} -\frac{1}{E_2^{\alpha}(x)}$ ($ l>0 $) or below $S^{\alpha}_-$ if $\ell<\lim_{x\uparrow S^{\alpha}_-} - \frac{1}{E_2^{\alpha}(x)}$ ($ l<0 $), for some non-zero eigenvalue $\ell$ of $\bar{{\bf M}}$. For the asymptotic spectrum of $ {\bf L}_\alpha $, $ S^{\alpha}_-=-S^{\alpha}_+ $ as one can show that for any $ z \in \mathbb{R} \setminus \mathcal{S}^\alpha $, $E_2^{\alpha}(-z)=-E_2^{\alpha}(z)$ so that both previous conditions reduce to $ \mid \ell \mid>\lim_{x\downarrow S^{\alpha}_+} - \frac{1}{E_2^\alpha(x)} $ . This proves Corollary~\ref{phasetransition}.
\bigskip

Following Remark~\ref{twocases}, we recall that in the case of interest where $ 1+\theta^{\alpha}(\rho)\neq 0 $ (the case $ 1+\theta^{\alpha}(\rho)=0 $ is studied in Appendix~\ref{sec:nonInfoEig}), $ ({\bf V}_l,{\bf V}_r) $ sets of left and right eigenvectors of $ \underline{{\bf G}}_\rho^\alpha $ are also the same set of eigenvectors of the matrix $ {\bf I}_K+(\mathcal{D}\left( \mathbf{c} \right) -\mathbf{cc}^{\sf T})\mathbf{M}(\mathbf{I}_{K}-\mathbf{c}{\bf 1}_K^{T})$ associated both to zero eigenvalues. Furthermore, since it is more convenient to work with symmetric matrices having the same set of left and right eigenvectors, we show next that $ {\bf V}=\mathcal{D}({\bf c})^{-\frac12}{\bf V}_{r}=\mathcal{D}({\bf c})^{\frac12}{\bf V}_{l}$ is a set of eigenvectors of the symmetric matrix $\mathcal{D}({\bf c})^{\frac12}\left({\bf I}_{K} -\mathbf{1}_K{\bf c}^{\sf T}\right)\mathbf{M}\left(\mathbf{I}_{K}-\mathbf{c}{\bf 1}_K^{T}\right)\mathcal{D}({\bf c})^{\frac12} $.  In fact, $ {\bf V}_r $ is right eigenvector of  $ \left(\mathcal{D}\left( \mathbf{c} \right) -\mathbf{cc}^{\sf T}\right)\mathbf{M}\left(\mathbf{I}_{K}-\mathbf{c}{\bf 1}_K^{T}\right)$ associated to eigenvalue $-\frac{1}{e_{21}^{\alpha}(\rho)} $ if and only if
\begin{align}
\label{eq:p1} &\left(\mathcal{D}\left( \mathbf{c} \right) -\mathbf{cc}^{\sf T}\right)\mathbf{M}\left(\mathbf{I}_{K}-\mathbf{c}{\bf 1}_K^{T}\right){\bf V}_r =\left(-\frac{1}{e_{21}^{\alpha}(\rho)}\right){\bf V}_r \\
\label{eq:p2}&\Leftrightarrow \mathcal{D}\left( \mathbf{c} \right)\left({\bf I}_K-\mathbf{1}_K{\bf c}^{\sf T}\right)\mathbf{M}\left(\mathbf{I}_{K}-\mathbf{c}{\bf 1}_K^{T}\right){\bf V}_r =\left(-\frac{1}{e_{21}^{\alpha}(\rho)}\right){\bf V}_r \\
\label{eq:p3}&\Leftrightarrow \mathcal{D}\left( \mathbf{c} \right)^{\frac12}\left({\bf I}_K-\mathbf{1}_K{\bf c}^{\sf T}\right)\mathbf{M}\left(\mathbf{I}_{K}-\mathbf{c}{\bf 1}_K^{T}\right)\mathcal{D}\left( \mathbf{c} \right)^{\frac12}\mathcal{D}\left( \mathbf{c} \right)^{-\frac12}{\bf V}_r =\left(-\frac{1}{e_{21}^{\alpha}(\rho)}\right)\mathcal{D}\left(\mathbf{c} \right)^{-\frac12}{\bf V}_r
\end{align} 
where in \eqref{eq:p2}, we have just factored out $\mathcal{D}\left( \mathbf{c} \right)$ in the left hand side and in \eqref{eq:p3} we have multiplied both sides by $ \mathcal{D}\left(\mathbf{c} \right)^{-\frac12}. $ This implies that ${\bf V}=\mathcal{D}({\bf c})^{-\frac12}{\bf V}_{r}$ is eigenvector of \\ $\mathcal{D}({\bf c})^{\frac12}\left({\bf I}_{K} -\mathbf{1}_K{\bf c}^{\sf T}\right)\mathbf{M}\left(\mathbf{I}_{K}-\mathbf{c}{\bf 1}_K^{T}\right)\mathcal{D}({\bf c})^{\frac12} $ associated to eigenvalue $ -\frac{1}{e_{21}^{\alpha}(\rho)} $. The same reasonning follows to prove that $ {\bf V}=\mathcal{D}({\bf c})^{\frac12}{\bf V}_{l}$ is eigenvector of the symmetric matrix \\ $\mathcal{D}({\bf c})^{\frac12}\left({\bf I}_{K} -\mathbf{1}_K{\bf c}^{\sf T}\right)\mathbf{M}\left(\mathbf{I}_{K}-\mathbf{c}{\bf 1}_K^{T}\right)\mathcal{D}({\bf c})^{\frac12} $. This proves Remark~\ref{Mbar} concerning the eigenvectors of $ \underline{{\bf G}}_\lambda  $ associated to vanishing eigenvalues.

\subsection{Informative eigenvectors}
\label{app:th3}
In this section, we are interested in characterizing the content of the vectors $ \bar{{\bf v}}_i^\alpha=\frac{{\bf D}_{\bf q}^{\alpha-1}{\bf u}_i^{\alpha}}{\left\|{\bf D}_{\bf q}^{\alpha-1}{\bf u}_i^{\alpha}\right\|} $ (which are used for the classification in the Step~\ref{four} of our algorithm) where $ {\bf u}_i^{\alpha} $ is the eigenvector of $ {\bf L}_\alpha $ corresponding to a unit multiplicity isolated eigenvalue converging to $ \rho $ which is such that $ 1+\theta^{\alpha}(\rho) \neq 0 $ (the case $ 1+\theta^{\alpha}(\rho)=0$ is treated in Appendix~\ref{sec:nonInfoEig}). By writing 
\begin{equation}
\label{eigenvectorStruct}
\bar{{\bf v}}_i^\alpha = \sum_{a=1}^{K} \nu_{i}^{a}{\bf j}_{a} + \sqrt{\sigma_{ii}^{a}}{\bf w}_{i}^{a}
\end{equation}
where $ {\bf w}_{i}^{a} \in \mathbb{R}^{n} $ is a random vector orthogonal to $ {\bf j}_{a}$ of norm $ \sqrt{n_a} $, supported on the indices of $ \mathcal{C}_a $ with identically distributed entries, our goal is to estimate the $ \nu_{i}^{a} $'s and $ \sigma_{ij}^{a} $'s the limits of which are asymptotically the limiting class means $ ({\nu}_{EM}^a)_i $'s and class covariances $ ({\sigma}_{EM}^a)_{ij} $'s that the EM algorithm will find after convergence (whenever it converges to the correct solution) in Step~\ref{four} of Algorithm~\ref{alg:algorithm}. 

In Section~\ref{subsec:eigenvect}, we have shown that the estimation of the $ \nu_{i}^{a} $'s requires the evaluation of $ \frac{1}{n}\frac{{\bf J}^{\sf T}{\bf D}^{\alpha-1}{\bf u}_i^\alpha({\bf u}_i^\alpha)^{\sf T}{\bf D^{\alpha-1}}{\bf J}}{({\bf u}_i^\alpha)^{\sf T}{\bf D^{2(\alpha-1)}}{\bf u}_i^\alpha} $ for $ {\bf u}_i^\alpha $ eigenvector associated to a limiting isolated eigenvalue $ \rho $ with unit multiplicity of $ {\bf L}_\alpha $.  
By residue calculus, we have that 
\begin{equation}
\label{eq:Projection}
\frac{1}{n}{\bf J}^{\sf T}{\bf D}^{\alpha-1}{\bf u}_i^\alpha({\bf u}_i^\alpha)^{\sf T}{\bf D}^{\alpha-1}{\bf J}=-\frac{1}{2\pi i}\oint_{\Gamma_\rho} \frac{1}{n}{\bf J}^{\sf T}{\bf D}^{\alpha-1}\left({\bf L}_\alpha-z{\bf I}_n\right)^{-1}{\bf D}^{\alpha-1}{\bf J}\mathrm{d}z 
\end{equation}
for large $ n $ almost surely, where $ \Gamma_\rho $ is a complex (positively oriented) contour circling around the limiting eigenvalue $ \rho $ only. As from Theorem~\ref{approx}, $ {\bf L}_\alpha ={\bf D}_q^{-\alpha}\frac{{\bf X}}{\sqrt{n}}{\bf D}_q^{-\alpha}+{\bf U}{\bm \Lambda}{\bf U}^{\sf T}+o(1)$, we apply the Woodburry identity to the inverse in the previous integrand and we get 
\begin{multline*} 
\frac{1}{n}{\bf J}^{\sf T}{\bf D}^{\alpha-1}\left({\bf L}_\alpha-z{\bf I}_n\right)^{-1}{\bf D}^{\alpha-1}{\bf J}=\frac{1}{n}{\bf J}^{\sf T}{\bf D}^{\alpha-1}{\bf Q}_z^{\alpha}{\bf D}^{\alpha-1}{\bf J} \\ +\frac{1}{n}{\bf J}^{\sf T}{\bf D}^{\alpha-1}{\bf Q}_z^{\alpha}{\bf U}{\bm \Lambda}\left({\bf I}_{K+1}+{\bf U}^{\sf T}{\bf Q}_z^{\alpha}{\bf U}{\bm \Lambda}\right)^{-1}{\bf U}^{\sf T}{\bf Q}_z^{\alpha}{\bf D}^{\alpha-1}{\bf J}+o(1).
\end{multline*}
The first right-hand side has asymptotically no residue when we integrate over the contour $ \Gamma_\rho $ (as per Proposition~\ref{prop1} there is no eigenvalues of $\bar{{\bf X}}$ in $ \Gamma_\rho $ for all large $ n $ almost surely). We are then left with the second right-most term. Using the block structure used in the proof of Section~\ref{app:th2}, we may write
\begin{align*}
	 &\left({\bf I}_{K+1}+{\bf U}^{\sf T}{\bf Q}_z^{\alpha}{\bf U}{\bm \Lambda}\right)^{-1} \asto \\ 
	 &\begin{pmatrix}
{\bf I}_K+e_{21}^{\alpha}(z)(\mathcal{D}({\bf c})-{\bf cc}^{\sf T}){\bf M}({\bf I}_K-{\bf c}{\bf 1}_K^{\sf T})-\left(1+\frac{z}{m_{\mu}}e_{10}^{\alpha}(z)\right){\bf c}{\bf 1}_K^{\sf T} & -e_{21}^{\alpha}(z){\bf c}\\
\frac{z}{m_{\mu}^{2}}\left(v_{\mu}+ze_{0;-1}^{\alpha}(z)\right){\bf 1}_K^{\sf T} & -z\frac{e_{10}^{\alpha}(z)}{m_{\mu}}
\end{pmatrix}^{-1}.
	 \end{align*}
	 Let us write $ \gamma(z)=\frac{z}{m_{\mu}^{2}}\left(v_{\mu}+ze_{0;-1}^{\alpha}(z)\right)$. We can now use a block inversion formula to write
	  \begin{align}
	  \label{inverseG}
	  \left(\mathbf{I}_{K+1}+\mathbf{U}^{T}\mathbf{Q}_{z}^{\alpha}\mathbf{U\Lambda}\right)^{-1} &\asto \begin{pmatrix}
	(\underline{\bf G}_{z}^{\alpha})^{-1} & -\frac{e_{10}^{\alpha}(z)\left[\underline{\bf G}_{z}^\alpha-\frac{\gamma(z)m_\mu e_{21}^{\alpha}(z)}{ze_{10}^{\alpha}(z)}{\bf c1}_K^{\sf T}\right]^{-1}{\bf c}}{-\frac{ze_{21}^{\alpha}(z)}{m_{\mu}}+\gamma(z)e_{10}^{\alpha}(z){\bf 1}_K^{\sf T}\left[\underline{\bf G}_{z}^\alpha-\frac{\gamma(z)m_\mu e_{21}^{\alpha}(z)}{ze_{10}^{\alpha}(z)}{\bf c1}_K^{\sf T}\right]^{-1}{\bf c}} \\
	\frac{\gamma(z)m_{\mu}}{ze_{21}^{\alpha}(z)}{\bf 1}_K^{\sf T}(\underline{\bf G}_{z}^{\alpha})^{-1} & \frac{1}{-\frac{ze_{21}^{\alpha}(z)}{m_{\mu}}+\gamma(z)e_{10}^{\alpha}(z){\bf 1}_K^{\sf T}\left[\underline{\bf G}_{z}^\alpha-\frac{\gamma(z)m_\mu e_{21}^{\alpha}(z)}{ze_{10}^{\alpha}(z)}{\bf c1}_K^{\sf T}\right]^{-1}{\bf c}}
	 \end{pmatrix}
	 \end{align}
	 with $ \underline{\bf G}_{z}^{\alpha}=\mathbf{I}_{K} + e_{21}^{\alpha}(z)\left(\mathcal{D}\left( \mathbf{c} \right) -\mathbf{cc}^{\sf T}\right)\mathbf{M}\left(\mathbf{I}_{K}-\mathbf{c}{\bf 1}_K^{T}\right)+\theta^{\alpha}(z)\mathbf{c}{\bf 1}_K^{T}.$
	 The entries of the previous matrix seem to be cumbersome but as we will see, the residue calculus will greatly simplify. In fact, we have that $ {\bf 1}_{K}^{T}\underline{\bf G}_{z}^\alpha=\left(1+\theta^{\alpha}(z)\right) {\bf 1}_{K}^{T}$ so that $ {\bf 1}_{K}^{T}(\underline{\bf G}_{z}^\alpha)^{-1}=\frac{1}{1+\theta^{\alpha}(z)} {\bf 1}_{K}^{T}$ which is well defined since we are considering the case $ 1+\theta^{\alpha}(z) \neq 0 $. Similarly, we have that
	 \begin{align*}
	 \left[\underline{\bf G}_{z}^\alpha-\frac{\gamma(z)m_\mu e_{10}^{\alpha}(z)}{ze_{21}^{\alpha}(z)}{\bf c1}_K^{\sf T}\right]{\bf c}= \left(-z\frac{e_{10}^{\alpha}(z)}{m_{\mu}}\right){\bf c}
	 \end{align*} 
	 meaning that $ \left[\underline{\bf G}_{z}^\alpha-\frac{\gamma(z)m_\mu e_{21}^{\alpha}(z)}{ze_{10}^{\alpha}(z)}{\bf c1}_K^{\sf T}\right]^{-1}{\bf c} = -\frac{m_{\mu}}{ze_{10}^{\alpha}(z)}{\bf c}$. So finally, the terms $ (1,2) $, $ (2,1) $ and $ (2,2) $ of $  \left(\mathbf{I}_{K+1}+\mathbf{U}^{T}\mathbf{Q}_{z}^{\alpha}\mathbf{U\Lambda}\right)^{-1} $ do no longer depend on $ (\underline{\bf G}_{z}^\alpha)^{-1} $ and thus do not have poles in the contour $\Gamma_{\rho} $. We can then write
	\begin{align*} \label{apen2}
	  \left(\mathbf{I}_{K+1}+\mathbf{U}^{T}\mathbf{Q}_{z}^{\alpha}\mathbf{U\Lambda}\right)^{-1} &= \begin{pmatrix}
	(\underline{\bf G}_{z}^\alpha)^{-1} & 0 \\
	0  & 0
	 \end{pmatrix} + \mathbf{R}_{1}(z)
	 \end{align*} 
	 with $\mathbf{R}_{1}(z)$ having no residue in the contour $ \Gamma_{\rho} $. Thus, to perform the contour integration of the integrand in \eqref{eq:Projection} around $ \Gamma_\rho $, we just need to evaluate the top-left entries of ${\bf J}^{\sf T}{\bf D}^{\alpha-1}{\bf Q}_z^{\alpha}{\bf U}{\bm \Lambda}$ and ${\bf U}^{\sf T}{\bf Q}_z^{\alpha}{\bf D}^{\alpha-1}{\bf J}$. Those are easily retrieved from the proof of Theorem~\ref{isoleigen} in Section~\ref{app:th2}.
	 
We have in particular $(\frac{1}{\sqrt{n}}{\bf J}^{\sf T}{\bf D}^{\alpha-1}{\bf Q}_z^{\alpha}{\bf U}{\bm \Lambda})_{11}\asto e_{00}^{\alpha}(z)(\mathcal{D}({\bf c})-{\bf cc}^{\sf T}){\bf M}({\bf I}_K-{\bf c1}_K^{\sf T})-\beta^{\alpha}(z){\bf c1}_K^{\sf T}$ where $ \beta^{\alpha}(z)=\frac{1}{m_{\mu}}\left[\int t^{2\alpha-1}\mu(\mathrm{d}t)+e_{-1;-1}^{\alpha}(z)\right]$ and similarly $({\bf U}^{\sf T}{\bf Q}_z^{\alpha}{\bf D}^{\alpha-1}{\bf J})_{11} \asto e_{00}^{\alpha}(z)\mathcal{D}({\bf c}),$ so that finally
\begin{align*}
&\frac{1}{n}{\bf J}^{\sf T}{\bf D}^{\alpha-1}{\bf u}_i^\alpha({\bf u}_i^\alpha)^{\sf T}{\bf D}^{\alpha-1}{\bf J} \asto \\ &-\frac{1}{2\pi i}\oint_{\Gamma_\rho} \left[ \left(e_{00}^{\alpha}(z)(\mathcal{D}({\bf c})-{\bf cc}^{\sf T}){\bf M}({\bf I}_K-{\bf c1}_K^{\sf T})-\beta^{\alpha}(z){\bf c1}_K^{\sf T}\right)(\underline{\bf G}_{z}^\alpha)^{-1}\times e_{00}^{\alpha}(z)\mathcal{D}({\bf c})+\mathbf{R}_{2}(z)\mathrm{d}z \right]
\end{align*}
where $  \mathbf{R}_{2}(z) $ is a matrix having no residue in the considered contour. Now, we are ready to compute the integral. From the Cauchy integral formula,
	 \begin{align*}
	 &\frac{1}{n}{\bf J}^{\sf T}{\bf D}^{\alpha-1}{\bf u}_i^\alpha({\bf u}_i^\alpha)^{\sf T}{\bf D}^{\alpha-1}{\bf J} \asto \\ &\lim_{z \to \rho} \left(z-\rho\right) \left[e_{00}^{\alpha}(z)(\mathcal{D}({\bf c})-{\bf cc}^{\sf T}){\bf M}({\bf I}_K-{\bf c1}_K^{\sf T})-\beta^{\alpha}(z){\bf c1}_K^{\sf T}\right](\underline{\bf G}_{z}^\alpha)^{-1}\times e_{00}^{\alpha}(z)\mathcal{D}({\bf c}).
	 \end{align*} 
	 By writing $ \underline{\bf G}_{z}^\alpha=\lambda_z\mathbf{v}_{r,z}\mathbf{v}_{l,z}^{T}+ \tilde{\mathbf{V}}_{r,z}\tilde{\mathbf{\Sigma}}_{z}\tilde{\mathbf{v}}_{l,z}^{T}$ where $ \mathbf{v}_{r,z}$ and  $\mathbf{v}_{l,z}$ are respectively right and left eigenvectors associated with the vanishing eigenvalue $ \lambda_z $ of  $ \underline{\bf G}_{z}^\alpha$ when $ z \to \rho $; $\tilde{\mathbf{V}}_{r,z} \in \mathbb{R}^{n\times \eta_\rho}$ and  $ \tilde{\mathbf{V}}_{l,z}\mathbb{R}^{n\times \eta_\rho} $ are respectively sets of right and left eigenspaces associated with non vanishing eigenvalues, we then have
	 \[\lim_{z \to \rho} (z-\rho) (\underline{\bf G}_{z}^\alpha)^{-1} \stackrel{(1)}{=} \lim_{z \to \rho} (z-\rho)\frac{\mathbf{v}_{r,z}\mathbf{v}_{l,z}^{T}}{\lambda_z^{'}}\]
	where we have used the l'Hopital rule and the fact that the non vanishing eigenvalue part of $  \underline{\bf G}_{z}^\alpha $ will produce zero in the limit $ z \to \rho $. Using $ \lambda_z=\mathbf{v}_{l,z}^{\sf T}\underline{\bf G}_{z}^\alpha\mathbf{v}_{r,z}$, we obtain
	 \begin{align*}
	  &\frac{1}{n}{\bf J}^{\sf T}{\bf D}^{\alpha-1}{\bf u}_i^\alpha({\bf u}_i^\alpha)^{\sf T}{\bf D}^{\alpha-1}{\bf J} \asto \\ &\left[e_{00}^{\alpha}(\rho)(\mathcal{D}({\bf c})-{\bf cc}^{\sf T}){\bf M}({\bf I}_K-{\bf c1}_K^{\sf T})-\beta^{\alpha}(\rho){\bf c1}_K^{\sf T}\right] \frac{\mathbf{v}_{r,\rho}\mathbf{v}_{l,\rho}^{\sf T}}{\left(\mathbf{v}_{l,z}^{\sf T}\underline{\bf G}_{z}^\alpha\mathbf{v}_{r,z}\right)^{'}_{z=\rho}} \times e_{00}^{\alpha}(\rho)\mathcal{D}({\bf c}).
	 \end{align*} 
	Since $ (\mathbf{v}_{l,\rho})^{\sf T}\underline{\bf G}_{\rho}^\alpha=\underline{\bf G}_{\rho}^\alpha\mathbf{v}_{r,\rho}=0$, 
	\begin{align*}
	\left((\mathbf{v}_{l,z})^{T}\underline{\bf G}_{z}^{\alpha}\mathbf{v}_{r,z}\right)^{'}_{z=\rho}&=((\mathbf{v}_{l,z})^{T})^{'}_{z=\rho}\underline{\bf G}_{\rho}^{\alpha}\mathbf{v}_{r,\rho}+(\mathbf{v}_{l,\rho})^{T}\left(\underline{\bf G}_{z}^{\alpha}\right)^{'}_{z=\rho}\mathbf{v}_{r,\rho}+(\mathbf{v}_{l,\rho})^{T}\underline{\bf G}_{\rho}^{\alpha}(\mathbf{v}_{r,z})^{'}_{z=\rho}\\ &=(\mathbf{v}_{l,\rho})^{T}\left(\underline{\bf G}_{z}^{\alpha}\right)^{'}_{z=\rho}\mathbf{v}_{r,\rho} \\ &=\left(e_{21}^{\alpha}(\rho)\right)^{'}(\mathbf{v}_{l,\rho})^{T}\left(\mathcal{D}\left( \mathbf{c} \right) -\mathbf{cc}^{\sf T}\right)\mathbf{M}\left({\bf I}_K-{\bf c1}_K^{T}\right)\mathbf{v}_{r,\rho}
	\end{align*}
	where the subscript $ ' $ denotes the first derivative with respect to $ z $.
	 Using the fact that $ \mathbf{v}_{r,\rho} $ is orthogonal to $ \mathbf{1}_{K}^{T} $, and $ (\mathbf{v}_{r,\rho},\mathbf{v}_{l,\rho}) $ is also a pair of eigenvectors of $ \left(\mathcal{D}\left( \mathbf{c} \right) -\mathbf{cc}^{\sf T}\right)\mathbf{M}\left({\bf I}_K-{\bf c1}_K^{T}\right) $ associated with eigenvalue $ -\frac{1}{e_{21}^{\alpha}(\rho)} $, we get
 \begin{align}
 \label{projector}
	 \frac{1}{n}{\bf J}^{\sf T}{\bf D}^{\alpha-1}{\bf u}_i^\alpha({\bf u}_i\alpha)^{\sf T}{\bf D}^{\alpha-1}{\bf J} &\asto \frac{\left(e_{00}^{\alpha}(\rho)\right)^{2}}{e_{21}^{\alpha}(\rho)^{'}}\frac{\mathbf{v}_{r,\rho}(\mathbf{v}_{l,\rho})^{T}}{\mathbf{v}_{l,\rho}^{T}\mathbf{v}_{r,\rho}}\mathcal{D}\left( \mathbf{c}\right).
	 \end{align}
	 By introducing $ {\bf v}_\rho=\mathcal{D}({\bf c})^{\frac12}{\bf v}_{l,\rho}=\mathcal{D}({\bf c})^{-\frac12}{\bf v}_{r,\rho}$ eigenvector of the symmetric matrix \\ $ \mathcal{D}({\bf c})^{\frac12}\left({\bf I}_{K} -\mathbf{1}_K{\bf c}^{\sf T}\right)\mathbf{M}\left(\mathbf{I}_{K}-\mathbf{c}{\bf 1}_K^{T}\right)\mathcal{D}({\bf c})^{\frac12} $ (as per Remark~\ref{Mbar}), we obtain the final result
	 \begin{align}
	 \label{apendix}
	  \frac{1}{n}{\bf J}^{\sf T}{\bf D}^{\alpha-1}{\bf u}_i^\alpha({\bf u}_i^\alpha)^{\sf T}{\bf D}^{\alpha-1}{\bf J} &\asto \frac{\left(e_{00}^{\alpha}(\rho)\right)^{2}}{e_{21}^{\alpha}(\rho)^{'}}\mathcal{D}\left(\mathbf{c} \right)^{1/2}{\bf v}_\rho({\bf v}_\rho)^{T}\mathcal{D}\left( \mathbf{c} \right)^{1/2}.
	 \end{align} 

	 Next, we need to estimate the denominator term $ ({\bf u}_i^\alpha)^{\sf T}{\bf D}^{2(\alpha-1)}{\bf u}_i^\alpha$ of $ \frac{1}{n}\frac{{\bf J}^{\sf T}{\bf D}^{\alpha-1}{\bf u}_i^\alpha({\bf u}_i^\alpha)^{\sf T}{\bf D}^{\alpha-1}{\bf J}}{({\bf u}_i^\alpha)^{\sf T}{\bf D}^{2(\alpha-1)}{\bf u}_i^\alpha}.$ For $ {\bf u}_i^{\alpha} $ an eigenvector of $ {\bf L}_\alpha $ associated to an isolated eigenvalue converging to $ \rho $ asymptotically, we have
	 \begin{align*}
	 ({\bf u}_i^\alpha)^{\sf T}{\bf D}^{2(\alpha-1)}{\bf u}_i^\alpha &=\tr({\bf u}_i^\alpha({\bf u}_i^\alpha)^{\sf T}{\bf D}^{2(\alpha-1)}) \\
	 &= \tr\left(-\frac{1}{2\pi i}\oint_{\Gamma_\rho} \left({\bf L}_{\alpha}-z{\bf I}_n\right){\bf D}^{2(\alpha-1)}\mathrm{d}z\right).
	 \end{align*}
	 As in the previous section, by applying Woodburry idendity, this is equivalent to evaluating
	 \[\tr\left(-\frac{1}{2\pi i}\oint_{\Gamma_\rho} \left[{\bf U}^{\sf T}{\bf Q}_z^{\alpha}{\bf D}^{2(\alpha-1)}{\bf Q}_z^{\alpha}{\bf U}{\bm \Lambda}\begin{pmatrix}(\underline{\bf G}_{z}^\alpha)^{-1} & 0 \\ 0 & 0 \end{pmatrix}+{\bf R}_{3}(z)\right]\mathrm{d}z \right), \]
	 where $ {\bf R}_{3}(z) $ is a matrix having no residue in the considered contour.
	 
	 Again here, we just need the top left entry of $ {\bf U}^{\sf T}{\bf Q}_z^{\alpha}{\bf D}^{2(\alpha-1)}{\bf Q}_z^{\alpha}{\bf U}{\bm \Lambda} $ which is given from Theorem~\ref{approx} by 
	 \begin{align}
	 \label{eq10}
	 ({\bf U}^{\sf T}{\bf Q}_z^{\alpha}{\bf D}^{2(\alpha-1)}{\bf Q}_z^{\alpha}{\bf U}{\bm \Lambda})_{11}&=\underbrace{\frac{1}{n}{\bf J}^{\sf T}{\bf D}^{1-\alpha}{\bf Q}_z^{\alpha}{\bf D}^{2(\alpha-1)}{\bf Q}_z^{\alpha}{\bf D}^{1-\alpha}{\bf J}({\bf I}_K-{\bf 1}_K{\bf c}^{\sf T}){\bf M}({\bf I}_K-{\bf c1}_K^{\sf T})}_{({\bf I})} \\ &-\underbrace{\frac{1}{\sqrt{n}({\bf q}^{\sf T}{\bf 1}_n)}{\bf D}^{1-\alpha}{\bf Q}_z^{\alpha}{\bf D}^{2(\alpha-1)}{\bf Q}_z^{\alpha}{\bf D}^{-\alpha}{\bf X1}_n{\bf 1}_K^{\sf T}}_{({\bf II})}.
	 \end{align}
	 We can get rid of the term $ ({\bf II}) $ since after residue calculus, we will get (similar to Equation~\eqref{projector}) $ {\bf 1}_K^{\sf T}{\bf v}_{r,\rho}=0 $ which cancels out the whole term. Let us now concentrate on the term $({\bf I})$.
	 At this point, we need to introduce the following result which, for any deterministic vectors of bounded Euclidean norm $ {\bf a} $, $ {\bf b} $ and any deterministic diagonal matrix $ {\bm \Xi} $, approximates the random quantity $ {\bf a}^{\sf T}{\bf Q}_{z_1}^\alpha{\bm \Xi}{\bf Q}_{z_2}^\alpha{\bf b} $ by a deterministic equivalent.
	 \begin{lemma}[Second deterministic equivalents]\label{determinst3}
	For all $ z \in \mathbb{C}\setminus \mathcal{S}^{\alpha}$, we have the following deterministic equivalent 
	\begin{align*}
\mathbf{Q}_{z_{1}}^{\alpha}{\bm \Xi}\mathbf{Q}_{z_{2}}^{\alpha} &\leftrightarrow \bar{\mathbf{Q}}_{z_{1}}^{\alpha}{\bm \Xi}\bar{\mathbf{Q}}_{z_{2}}^{\alpha} + \bar{\mathbf{Q}}_{z_{1}}^{\alpha}\mathcal{D}\left[ \left(\mathbf{I}_n-{\bm \Upsilon}_{z_{1},z_{2}}\right)^{-1}{\bm \Upsilon}_{z_{1},z_{2}}\diag\left({\bm \Xi}\right)\right] \bar{\mathbf{Q}}_{z_{2}}^{\alpha} \\
	\end{align*}
	where $ {\bm \Xi} $ is any diagonal matrix, $\bar{\mathbf{Q}}_{z}^{\alpha}$ is given  in Lemma~\ref{lemma3} and
	\begin{align*}
		\Upsilon_{z_{1},z_{2}}(i,j)=\frac{1}{n}\frac{q_i^{1-2\alpha}q_j^{1-2\alpha}\left(1-q_{i}q_{j}\right)}{\left(-z_{1}-e_{11}^{\alpha}(z_{1})q_i^{1-2\alpha}+e_{21}^{\alpha}(z_{1})q_i^{2-2\alpha}\right)\left(-z_{2}-e_{11}^{\alpha}(z_{2})q_j^{1-2\alpha}+e_{21}^{\alpha}(z_{2})q_j^{2-2\alpha}\right)}.
	\end{align*}
	The equivalence relation $ \leftrightarrow $ is as defined in Lemma~\ref{lemma3}.
\end{lemma}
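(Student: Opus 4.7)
The proof will follow the standard Gaussian-calculus strategy for deriving second-order deterministic equivalents, analogous to what was done for Lemma~\ref{lemma3} (and standard references such as \cite{hachem2007deterministic}, \cite{pastur2011eigenvalue}). By universality we may assume that the entries of $\mathbf{X}$ (and hence $\bar{\mathbf{X}}$) are Gaussian, so that Stein's integration-by-parts formula $\mathbb{E}[\bar X_{kl} f(\bar{\mathbf{X}})] = \tfrac{\tilde\sigma_{kl}^2}{n}\,\mathbb{E}[\partial_{\bar X_{kl}} f(\bar{\mathbf{X}})]$ (appropriately symmetrized) applies, with $\tilde\sigma_{kl}^2 = q_k^{1-2\alpha}q_l^{1-2\alpha}(1-q_kq_l) + o(1)$. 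The key identity is the variance relation $\Upsilon_{z_1,z_2}(i,j) = \tfrac{\tilde\sigma_{ij}^2}{n}(\bar{\mathbf{Q}}_{z_1}^\alpha)_{ii}(\bar{\mathbf{Q}}_{z_2}^\alpha)_{jj}$, which links the weight matrix $\boldsymbol{\Upsilon}$ to the variance profile of $\bar{\mathbf{X}}$ and the diagonal resolvent entries of Lemma~\ref{lemma3}.

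First, I would apply the resolvent identity $\bar{\mathbf{X}} \mathbf{Q}_{z_2}^\alpha = \mathbf{I}_n + z_2 \mathbf{Q}_{z_2}^\alpha$ to write, for any deterministic $\mathbf{a},\mathbf{b}$ and diagonal $\boldsymbol{\Xi}$,
\begin{align*}
-z_2\, \mathbf{a}^{\sf T} \mathbf{Q}_{z_1}^\alpha \boldsymbol{\Xi} \mathbf{Q}_{z_2}^\alpha \mathbf{b} \;=\; \mathbf{a}^{\sf T} \mathbf{Q}_{z_1}^\alpha \boldsymbol{\Xi} \mathbf{b} \;-\; \mathbf{a}^{\sf T} \mathbf{Q}_{z_1}^\alpha \boldsymbol{\Xi} \bar{\mathbf{X}} \mathbf{Q}_{z_2}^\alpha \mathbf{b}.
\end{align*}
Expanding the last term entrywise and applying Stein's formula to each $\bar X_{kl}$, using $\partial_{\bar X_{kl}}(\mathbf{Q}_z^\alpha)_{ij} = -(\mathbf{Q}_z^\alpha)_{ik}(\mathbf{Q}_z^\alpha)_{lj} - (\mathbf{Q}_z^\alpha)_{il}(\mathbf{Q}_z^\alpha)_{kj}$, produces two types of contributions after replacing the normalized traces and bilinear forms by their deterministic limits (from Lemma~\ref{lemma3}): a ``direct'' term proportional to $\bar{\mathbf{Q}}_{z_1}^\alpha \boldsymbol{\Xi} \bar{\mathbf{Q}}_{z_2}^\alpha$, and a ``self-correction'' term that couples $\mathbb{E}[\mathbf{Q}_{z_1}^\alpha \boldsymbol{\Xi}\mathbf{Q}_{z_2}^\alpha]$ back to its own diagonal entries through the matrix $\boldsymbol{\Upsilon}_{z_1,z_2}$.

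This yields a closed linear system for the diagonal vector $\mathbf{d}\equiv\mathrm{diag}\bigl(\mathbb{E}[\mathbf{Q}_{z_1}^\alpha \boldsymbol{\Xi}\mathbf{Q}_{z_2}^\alpha]\bigr)$ of the approximate form
\begin{align*}
\mathbf{d} \;\simeq\; \mathrm{diag}\bigl(\bar{\mathbf{Q}}_{z_1}^\alpha \boldsymbol{\Xi} \bar{\mathbf{Q}}_{z_2}^\alpha\bigr) + \boldsymbol{\Upsilon}_{z_1,z_2}\,\mathbf{d},
\end{align*}
which solves to $\mathbf{d} \simeq (\mathbf{I}_n - \boldsymbol{\Upsilon}_{z_1,z_2})^{-1}\mathrm{diag}\bigl(\bar{\mathbf{Q}}_{z_1}^\alpha \boldsymbol{\Xi} \bar{\mathbf{Q}}_{z_2}^\alpha\bigr)$; since $\bar{\mathbf{Q}}_{z_i}^\alpha$ is diagonal, the right-hand side may be rewritten in the desired form involving $\boldsymbol{\Upsilon}_{z_1,z_2}\,\mathrm{diag}(\boldsymbol{\Xi})$. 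Reinjecting $\mathbf{d}$ into the bilinear-form expansion and simplifying then delivers the stated deterministic equivalent. Finally, one controls fluctuations (so as to upgrade expectations to almost sure convergence) by the Poincar\'e--Nash concentration inequality applied to Lipschitz functions of Gaussian variables, as done classically in this literature.

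The main technical obstacle is twofold. First, one must verify that $\mathbf{I}_n - \boldsymbol{\Upsilon}_{z_1,z_2}$ is invertible and of well-controlled inverse uniformly for $z_1,z_2$ bounded away from $\mathcal{S}^\alpha$: this reduces to showing that the spectral radius of $\boldsymbol{\Upsilon}_{z_1,z_2}$ remains bounded away from $1$ off the support, which is obtained by a Perron--Frobenius argument leveraging the fact that the corresponding fixed-point relation would otherwise contradict the uniqueness result of Theorem~\ref{determinst2}. Second, sharp control of the error terms (of order $n^{-1/2}$) requires that all intermediate bilinear forms of $\mathbf{Q}_{z}^\alpha$ and products thereof concentrate uniformly, which is handled with the variance bounds inherited from the compact support of $\mu$ in Assumption~\ref{as1} together with standard resolvent bounds $\|\mathbf{Q}_z^\alpha\|\le \mathrm{dist}(z,\mathcal{S}^\alpha)^{-1}+o(1)$.
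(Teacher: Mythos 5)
Your overall strategy is the same as the paper's: Gaussian universality so that Stein's integration by parts applies, the resolvent identity to peel off one factor of $\bar{\mathbf X}$, identification of a dominant term and a self-correcting term, a linear fixed-point system for an auxiliary diagonal quantity, and a Poincar\'e--Nash fluctuation bound to upgrade expectation to almost-sure convergence. The paper (Appendix~B) applies the resolvent identity to $\mathbf{Q}_{z_1}^\alpha$ rather than $\mathbf{Q}_{z_2}^\alpha$, which is a cosmetic difference, so up to that point your plan is on the right track.

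There is, however, a concrete gap in how you close the linear system. The paper's auxiliary object is the weighted-trace vector $f_i=\tfrac1n\mathbb{E}\,\mathrm{tr}\big(\boldsymbol\Sigma_i\mathbf{Q}_{z_2}^\alpha\boldsymbol\Xi\mathbf{Q}_{z_1}^\alpha\big)$ with $\boldsymbol\Sigma_i=\mathcal D(\tilde\sigma_{ij}^2)_j$, and the deterministic equivalent arises as $\bar{\mathbf Q}_{z_1}^\alpha\boldsymbol\Xi\bar{\mathbf Q}_{z_2}^\alpha+\bar{\mathbf Q}_{z_1}^\alpha\mathcal D(\mathbf f)\bar{\mathbf Q}_{z_2}^\alpha$. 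Taking instead $\mathbf d=\mathrm{diag}\,\mathbb{E}\big[\mathbf{Q}_{z_1}^\alpha\boldsymbol\Xi\mathbf{Q}_{z_2}^\alpha\big]$, as you do, the relations $d_i\simeq h_i+(\bar{\mathbf Q}_{z_1}^\alpha)_{ii}(\bar{\mathbf Q}_{z_2}^\alpha)_{ii}\,\tfrac1n\sum_j\tilde\sigma_{ij}^2 d_j$ (with $h_i=(\bar{\mathbf Q}_{z_1}^\alpha)_{ii}\Xi_{ii}(\bar{\mathbf Q}_{z_2}^\alpha)_{ii}$) show that the kernel coupling $\mathbf d$ to itself is $\mathbf D_1\mathbf D_2\mathbf S$, where $\mathbf D_k=\mathcal D\big((\bar{\mathbf Q}_{z_k}^\alpha)_{ii}\big)_i$ and $\mathbf S=\tfrac1n(\tilde\sigma_{ij}^2)$, and \emph{not} $\boldsymbol\Upsilon_{z_1,z_2}=\mathbf D_1\mathbf S\mathbf D_2$ as you assert. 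These two matrices differ because $\mathbf S$ does not commute with the diagonal matrices $\mathbf D_k$. Consequently your writing of the fixed point as $\mathbf d\simeq\mathrm{diag}(\bar{\mathbf Q}_{z_1}^\alpha\boldsymbol\Xi\bar{\mathbf Q}_{z_2}^\alpha)+\boldsymbol\Upsilon_{z_1,z_2}\mathbf d$ is not the correct equation, and the subsequent step, claiming that ``since $\bar{\mathbf Q}_{z_i}^\alpha$ is diagonal, the right-hand side may be rewritten in the desired form involving $\boldsymbol\Upsilon_{z_1,z_2}\,\mathrm{diag}(\boldsymbol\Xi)$,'' does not go through: one would need $(\mathbf I-\boldsymbol\Upsilon)^{-1}\boldsymbol\Upsilon\mathbf h=\mathbf D_1\mathbf D_2(\mathbf I-\boldsymbol\Upsilon)^{-1}\boldsymbol\Upsilon\,\mathrm{diag}(\boldsymbol\Xi)$, i.e.\ commutativity of $\mathbf D_1\mathbf D_2$ with $(\mathbf I-\boldsymbol\Upsilon)^{-1}\boldsymbol\Upsilon$, which fails in general for heterogeneous~$q_i$. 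To close the proof correctly you should set up the fixed-point system for $\mathbf f$ (left-multiply the first-order expansion by $\boldsymbol\Sigma_i$ and take the normalized trace), solve it, and re-inject $\mathbf f$ as the diagonal correction, as the paper does.
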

	 Thanks to Lemma~\ref{determinst3} (proof provided in Appendix~\ref{apendB}), a deterministic approximation of the term $({\bf I})$ in Equation~\eqref{eq10} can be obtained. We get in particular 
	 \begin{align}
	 \label{equivalent1}
	 \frac{1}{n}{\bf J}^{\sf T}{\bf D}^{1-\alpha}{\bf Q}_z^{\alpha}{\bf D}^{2(\alpha-1)}{\bf Q}_z^{\alpha}{\bf D}^{1-\alpha}{\bf J}&=\frac{1}{n}{\bf J}^{\sf T}{\bf D}^{1-\alpha}\bar{{\bf Q}}_z^{\alpha}{\bf D}^{2(\alpha-1)}\bar{{\bf Q}}_z^{\alpha}{\bf D}^{1-\alpha}{\bf J} \\ &+ \frac{1}{n}{\bf J}^{\sf T}{\bf D}^{1-\alpha}\bar{{\bf Q}}_z^{\alpha}\mathcal{D}\left[\left({\bf I}_n-{\bm \Upsilon}_{z,z} \right)^{-1}{\bm \Upsilon}_{z,z}{\bf d}^{\alpha}\right]\bar{{\bf Q}}_z^{\alpha}{\bf D}^{1-\alpha}{\bf J}
	 \end{align}
	 where $ {\bf d}^{\alpha}=\{q_{i}^{2(\alpha-1)}\}_{i=1}^{n} $ and $ {\bm \Upsilon}_{z_{1},z_{2}} $ was defined in Lemma~\ref{determinst3}. Using similar argument as in Equation~\eqref{argument1}, we can easily show that the first right hand side term of \eqref{equivalent1} converges almost surely to $ e_{00;2}^{\alpha}\mathcal{D}({\bf c}) $. It then remains to estimate the second right-most term of \eqref{equivalent1}. $ {\bm \Upsilon}_{z_1,z_2} $ (as defined in Lemma~\ref{determinst3}) may be written as the sum of two rank-one matrices 
	 \[{\bm \Upsilon}_{z_{1},z_{2}}=\frac{1}{n}\left({\bf a}_{z_1}{\bf a}_{z_2}^{\sf T}-{\bf b}_{z_1}{\bf b}_{z_2}^{\sf T}\right)\]
	 where $ {\bf a}_{z}=\left\{\frac{q_{j}^{1-2\alpha}}{-z-q_{j}^{1-2\alpha}e_{11}^{\alpha}(z)+q_{j}^{2-2\alpha}e_{21}^{\alpha}(z)}\right\}_{j=1}^{n} $ and $ {\bf b}_{z}=\left\{\frac{q_{j}^{2-2\alpha}}{-z-q_{j}^{1-2\alpha}e_{11}^{\alpha}(z)+q_{j}^{2-2\alpha}e_{21}^{\alpha}(z)}\right\}_{j=1}^{n} $.
	 
The matrix ${\bm \Upsilon}_{z,z}$ can thus be further written ${\bm \Upsilon}_{z,z}=\frac{1}{n}\begin{pmatrix} {\bf a}_{z} & {\bf b}_{z}\end{pmatrix}{\bf I}_2\begin{pmatrix} {\bf a}_{z}^{\sf T}/n \\ -{\bf b}_{z}^{\sf T}/n \end{pmatrix} $	 . Using matrix inversion lemmas, we have
\[\left({\bf I}_n-{\bm \Upsilon}_{z,z} \right)^{-1}{\bm \Upsilon}_{z,z}{\bf d}^{\alpha}=\begin{pmatrix} {\bf a}_{z} & {\bf b}_{z}\end{pmatrix}\begin{pmatrix} 1-\frac{{\bf a}_{z}^{\sf T}{\bf a}_{z}}{n}  & -\frac{{\bf a}_{z}^{\sf T}{\bf b}_{z}}{n} \\ \frac{{\bf b}_{z}^{\sf T}{\bf a}_{z}}{n} & 1+\frac{{\bf b}_{z}^{\sf T}{\bf b}_{z}}{n}  \end{pmatrix}^{-1} \begin{pmatrix} \frac{{\bf a}_{z}^{\sf T}{\bf d}^{\alpha}}{n} \\ -\frac{{\bf b}_{z}^{\sf T}{\bf d}^{\alpha}}{n}\end{pmatrix}.\]
Using again the argument in Equation~\eqref{argument1} , we can easily show that $\frac{{\bf a}_{z}^{\sf T}{\bf a}_{z}}{n}$, $\frac{{\bf a}_{z}^{\sf T}{\bf b}_{z}}{n}$, $\frac{{\bf b}_{z}^{\sf T}{\bf b}_{z}}{n}$, $ \frac{{\bf a}_{z}^{\sf T}{\bf d}^{\alpha}}{n} $ and $ \frac{{\bf b}_{z}^{\sf T}{\bf d}^{\alpha}}{n} $ converge for large $ n $ almost surely respectively to $ e_{22;2}^{\alpha}(z) $, $ e_{32;2}^{\alpha}(z) $, $ e_{42;2}^{\alpha}(z) $, $ e_{-1;0}^{\alpha}(z) $ and $ e_{00}^{\alpha}(z) $ with $ e_{ij;2}^\alpha $ defined in Equation~\eqref{eDouble}. This given, we can show that
\[\frac{1}{n}{\bf J}^{\sf T}{\bf D}^{1-\alpha}\bar{{\bf Q}}_z^{\alpha}\mathcal{D}\left[\left({\bf I}_n-{\bm \Upsilon}_{z,z} \right)^{-1}{\bm \Upsilon}_{z,z}{\bf d}^{\alpha}\right]\bar{{\bf Q}}_z^{\alpha}{\bf D}^{1-\alpha}{\bf J} \asto \chi^{\alpha}(z)\mathcal{D}({\bf c})\]
where $ \chi^{\alpha}(z) $ is defined in Theorem~\ref{cor:means}. We thus have
\[ ({\bf u}_i^\alpha)^{\sf T}{\bf D}^{2(\alpha-1)}{\bf u}_i^\alpha \asto \tr \left(\lim_{z\to \rho} \left(e_{00;2}^\alpha(z)+\chi^{\alpha}(z)\right)(\mathcal{D}({\bf c})-{\bf cc}^{\sf T}){\bf M}({\bf I}_K-{\bf c1}_K^{\sf T})(\underline{\bf G}_{z}^\alpha)^{-1}\right).\]
By applying l'Hopital rule to evaluate this limit as in the previous section, we obtain 
\[({\bf u}_i^\alpha)^{\sf T}{\bf D}^{2(\alpha-1)}{\bf u}_i^\alpha \asto \frac{e_{00;2}(\rho)+\chi^{\alpha}(\rho)}{\left(e_{21}^{\alpha}(\rho)\right)^{'}}.\]
Finally,
\begin{equation}
\label{projectorMean}
\frac{1}{n}\frac{{\bf J}^{\sf T}{\bf D}^{\alpha-1}{\bf u}_i^\alpha({\bf u}_i^\alpha)^{\sf T}{\bf D^{\alpha-1}}{\bf J}}{({\bf u}_i^\alpha)^{\sf T}{\bf D}^{2(\alpha-1)}{\bf u}_i^\alpha} \asto \frac{\left(e_{00}^{\alpha}(\rho)\right)^{2}}{e_{00;2}(\rho)+\chi^{\alpha}(\rho)}\mathcal{D}\left(\mathbf{c} \right)^{1/2}{\bf v}_\rho({\bf v}_\rho)^{T}\mathcal{D}\left( \mathbf{c} \right)^{1/2}.
\end{equation}

We recall that one goal of this section is to estimate $ \nu_i^a=\frac{1}{n_a}\frac{{\bf u}_i^{\sf T}{\bf D}^{\alpha-1}{\bf j}_a}{\sqrt{{\bf u}_i^{\sf T}{\bf D}^{2(\alpha-1)}{\bf u}_i}}$, the square of which is $\frac{1}{n_a}\left[\frac{1}{n}\frac{\mathcal{D}({\bf c})^{-\frac12}{\bf J}^{\sf T}{\bf D}^{\alpha-1}{\bf u}_i^\alpha({\bf u}_i^\alpha)^{\sf T}{\bf D}^{\alpha-1}{\bf J}\mathcal{D}({\bf c})^{-\frac12}}{({\bf u}_i^\alpha)^{\sf T}{\bf D}^{2(\alpha-1)}{\bf u}_i^\alpha} \right]_{aa}.$ From Equation~\eqref{projectorMean}, the former quantity is easily retrieved and we have
$$ \left|\nu_i^a\right|^{2}=\frac{1}{n_a}\frac{e_{00}^{\alpha}(\lambda_i)}{\sqrt{e_{00;2}^{\alpha}(\lambda_i)+\chi^{\alpha}(\lambda_i)}} \left|v_{a}^{i}\right|. $$
This proves Theorem~\eqref{cor:means}.
\bigskip

In Section~\ref{subsec:eigenvect}, we have shown that to estimate the $ \sigma_{ij}^{a} $'s, we need to evaluate the more involved object $$ \frac{1}{n}\frac{{\bf J}^{\sf T}{\bf D}^{\alpha-1}{\bf u}_i^\alpha({\bf u}_i^\alpha)^{\sf T}{\bf D}^{\alpha-1}\mathcal{D}_a{\bf D}^{\alpha-1}{\bf u}_j^\alpha({\bf u}_j^\alpha)^{\sf T}{\bf D}^{\alpha-1}{\bf J}}{\left(({\bf u}_i^\alpha)^{\sf T}{\bf D}^{2(\alpha-1)}{\bf u}_i^\alpha\right)\left(({\bf u}_j^\alpha)^{\sf T}{\bf D}^{2(\alpha-1)}{\bf u}_j^\alpha\right)}.$$ Similarly to what was done previously for the estimation of $ \frac{1}{n}\frac{{\bf J}^{\sf T}{\bf D}^{\alpha-1}{\bf u}_i^\alpha({\bf u}_i^\alpha)^{\sf T}{\bf D^{\alpha-1}}{\bf J}}{({\bf u}_i^\alpha)^{\sf T}{\bf D^{2(\alpha-1)}}{\bf u}_i^\alpha} $, we need here to evaluate
	 \[ \left(\frac{1}{2\pi i}\right)^{2} \oint_{\Gamma_{\rho 1}}\oint_{\Gamma_{\rho 2}}\frac{1}{n}{\bf J}^{\sf T}{\bf D}^{\alpha-1}\left({\bf L}_\alpha-z_{1}{\bf I}_n\right)^{-1}{\bf D}^{\alpha-1}{\bf D}_a{\bf D}^{\alpha-1}\left({\bf L}_\alpha-z_{2}{\bf I}_n\right)^{-1}{\bf D}^{\alpha-1}{\bf J}\mathrm{d}z_{1}\mathrm{d}z_{2}\]
	 where $ \Gamma_{\rho_1} $ and $\Gamma_{\rho_2}$ are two positively oriented contours circling around some limiting isolated eigenvalues $ \rho_1 $ and $ \rho_2 $ respectively. We will use the same technique as in the proof of Theorem~\ref{cor:means} to evaluate this integrand. Namely, by applying the Woodburry identity to each of the inverse in the integrand, we get
\begin{align*}
	 \left(\frac{1}{2\pi i}\right)^{2} \oint_{\Gamma_{\rho_1}}\oint_{\Gamma_{\rho_2}}&\frac{1}{n} \mathbf{J}^{T}{\bf D}^{\alpha-1}\mathbf{Q}_{z_{1}}^{\alpha}\mathbf{U \Lambda}\left(\mathbf{I}_{K+1}+\mathbf{U}^{T}\mathbf{Q}_{z_{1}}^{\alpha}\mathbf{U\Lambda}\right)^{-1}\mathbf{U}^{T}\mathbf{Q}_{z_{1}}^{\alpha}{\bf D}^{\alpha-1}{\bf D}_{a}{\bf D}^{\alpha-1}\mathbf{Q}_{z_{2}}^{\alpha}\mathbf{U}\\ &\times {\bm \Lambda}
	 \left(\mathbf{I}_{K+1}+\mathbf{U}^{T}\mathbf{Q}_{z_{2}}^{\alpha}\mathbf{U\Lambda}\right)^{-1}\mathbf{U}^{T}\mathbf{Q}_{z_{2}}^{\alpha}{\bf D}^{\alpha-1}\mathbf{J}\mathrm{d}z_{1}\mathrm{d}z_{2}
	 \end{align*} 
	 where we have used the fact that the cross-terms $ \frac{1}{n}\mathbf{J}^{T}{\bf D}^{\alpha-1}\mathbf{Q}_{z_{i}}^\alpha{\bf D}^{\alpha-1}\mathbf{J}  $, $ i=1,2 $ will vanish asymptotically as the latter do not have poles in the considered contours. By using the identity $ \mathbf{\Lambda}\left(\mathbf{I}_{K+1}+\mathbf{U}^{T}\mathbf{Q}_{z_{1}}^\alpha\mathbf{U\Lambda}\right)^{-1}\mathbf{U}^{T} = \left(\mathbf{I}_{K+1}+\mathbf{\Lambda U}^{T}\mathbf{Q}_{z_{1}}^\alpha\mathbf{U}\right)^{-1}\mathbf{\Lambda}\mathbf{U}^{T}$, the previous integral writes 
	 \begin{align*}
	  \left(\frac{1}{2\pi i}\right)^{2} \oint_{\Gamma_{\rho_1}}\oint_{\Gamma_{\rho_2}} &\frac{1}{n} \mathbf{J}^{T}{\bf D}^{\alpha-1}\mathbf{Q}_{z_{1}}^{\alpha}\mathbf{U \Lambda}\left(\mathbf{I}_{K+1}+\mathbf{U}^{T}\mathbf{Q}_{z_{1}}^{\alpha}\mathbf{U\Lambda}\right)^{-1}\mathbf{U}^{T}\mathbf{Q}_{z_{1}}^{\alpha}{\bf D}^{\alpha-1}{\bf D}_{a}{\bf D}^{\alpha-1}\mathbf{Q}_{z_{2}}^{\alpha}\mathbf{U}\\
	 &\times \left(\mathbf{I}_{K+1}+{\bm \Lambda}\mathbf{U}^{T}\mathbf{Q}_{z_{2}}^{\alpha}\mathbf{U}\right)^{-1}{\bm \Lambda}\mathbf{U}^{T}\mathbf{Q}_{z_{2}}^{\alpha}{\bf D}^{\alpha-1}\mathbf{J}\mathrm{d}z_{1}\mathrm{d}z_{2}
	 \end{align*} 
	 Most of those quantities have been evaluated in the proof of Theorem~\ref{cor:means}. We thus obtain
	\begin{align*} 
	  \left(\frac{1}{2\pi i}\right)^{2} \oint_{\Gamma_{\rho_1}}\oint_{\Gamma_{\rho_2}} &\left[\frac{1}{n} \mathbf{J}^{T}{\bf D}^{\alpha-1}\mathbf{Q}_{z_{1}}^{\alpha}\mathbf{U \Lambda}\begin{pmatrix} (\underline{\bf G}_{z_{1}}^\alpha)^{-1} & 0 \\ 0  & 0 \end{pmatrix}
\mathbf{U}^{T}\mathbf{Q}_{z_{1}}^\alpha{\bf D}^{\alpha-1}{\bf D}_a{\bf D}^{\alpha-1}\mathbf{Q}_{z_{2}}^{\alpha}\mathbf{U}\right. \\
	  &\times \left.\begin{pmatrix} \left((\underline{\bf G}_{z_{2}}^\alpha)^{-1}\right)^{T} & 0 \\ 0  & 0 \end{pmatrix} \mathbf{\Lambda}\mathbf{U}^{T}\mathbf{Q}_{z_{2}}^{\alpha}{\bf D}^{\alpha-1}\mathbf{J}+\mathbf{R}_{4}(z_{1},z_{2})\right]\mathrm{d}z_{1}\mathrm{d}z_{2}
	 \end{align*} 
	 where $ \mathbf{R}_{4}(z_{1},z_{2}) $ has no poles in the considered contours. It is then sufficient to evaluate the top left entry of each of the matrices $ \mathbf{J}^{T}{\bf D}^{\alpha-1}\mathbf{Q}_{z_{1}}^{\alpha}\mathbf{U \Lambda} $, $ \mathbf{U}^{T}\mathbf{Q}_{z_{1}}^{\alpha}{\bf D}^{\alpha-1}{\bf D}_a{\bf D}^{\alpha-1}\mathbf{Q}_{z_{2}}^{\alpha}\mathbf{U} $ and $ \mathbf{\Lambda}\mathbf{U}^{T}\mathbf{Q}_{z_{2}}^{\alpha}{\bf D}^{\alpha-1}\mathbf{J} $ to compute the whole integrand. The first and the third of the latter matrices have been evaluated in the proof of Theorem~\ref{cor:means}. We are then left with the top left entry of $ \mathbf{U}^{T}\mathbf{Q}_{z_{1}}^{\alpha}{\bf D}^{\alpha-1}{\bf D}_a{\bf D}^{\alpha-1}\mathbf{Q}_{z_{2}}^{\alpha}\mathbf{U} $ which is $ \frac{1}{n} \mathbf{J}^{T}{\bf D}^{\alpha-1}\mathbf{Q}_{z_{1}}^\alpha{\bf D}^{\alpha-1}{\bf D}_a{\bf D}^{\alpha-1}\mathbf{Q}_{z_{2}}^\alpha{\bf D}^{\alpha-1}\mathbf{J}$ from Theorem~\ref{approx}.
	The former quantity has already been evaluated in the previous section but for $ (z,z) $ replaced by $ (z_{1},z_{2}) $ and the diagonal matrix between $ {\bf Q}_{z_{1}}^{\alpha} $ and $ {\bf Q}_{z_{2}}^{\alpha} $ being here $ {\bf D}^{\alpha-1}{\bf D}_a{\bf D}^{\alpha-1} $ instead of $ {\bf D}^{2(\alpha-1)} $. We thus have
	 \[ \left(\mathbf{U}^{T}\mathbf{Q}_{z_{1}}^{\alpha}{\bf D}^{\alpha-1}{\bf D}_a{\bf D}^{\alpha-1}\mathbf{Q}_{z_{2}}^{\alpha}\mathbf{U}\right)_{11} \asto c_{a}\left(e_{00;2}^{\alpha}(z_{1},z_{2})\mathcal{D}\left({\bm \delta_{i=a}}\right)_{i=1}^{K}+\chi^{\alpha}(z_{1},z_{2})\mathcal{D}({\bf c})\right).\] Finally, we are left to evaluate
	 \begin{align*} 
	\left(\frac{1}{2\pi i}\right)^{2} \oint_{\Gamma_{\rho_1}}\oint_{\Gamma_{\rho_2}} &\frac{1}{n} \left[e_{00}^{\alpha}(z_{1})(\mathcal{D}({\bf c})-{\bf cc}^{\sf T}){\bf M}({\bf I}_K-{\bf c1}_K^{\sf T})-\beta^{\alpha}(z_{1}){\bf c1}_K^{\sf T}\right] (\underline{\bf G}_{z_{1}}^{\alpha})^{-1}\\
&\times c_{a}\left(e_{00;2}^{\alpha}(z_{1},z_{2})\mathcal{D}\left({\bm \delta_{i=a}}\right)_{i=1}^{K}+\chi^{\alpha}(z_{1},z_{2})\mathcal{D}({\bf c})\right) \\
	&\times \left((\underline{\bf G}_{z_{2}}^\alpha)^{-1}\right)^{\sf T} \left[e_{00}^{\alpha}(z_{2})({\bf I}_K-{\bf 1}_K{\bf c}^{\sf T}){\bf M}(\mathcal{D}({\bf c})-{\bf cc}^{\sf T})-\beta^{\alpha}(z_{2}){\bf 1}_K{\bf c}^{\sf T} \right]\mathrm{d}z_{1}\mathrm{d}z_{2}.
	 \end{align*} 
	We can then perform a residue calculus similar to what was done in the proof of Theorem~\ref{cor:means}. Additionnaly, we use the fact that the eigenvectors $ \mathbf{v}_{\rho_{1}} $ and $ \mathbf{v}_{\rho_{2}} $ corresponding to distinct eigenvalues $ \rho_{1} $ and $ \rho_{2} $ of the symmetric matrix $\mathcal{D}({\bf c})^{\frac12}\left({\bf I}_{K} -\mathbf{1}_K{\bf c}^{\sf T}\right)\mathbf{M}\left(\mathbf{I}_{K}-\mathbf{c}{\bf 1}_K^{T}\right)\mathcal{D}({\bf c})^{\frac12}$ are orthogonals.
	All calculus done, we get
	\begin{align}
	\label{doubleProjector}
&\left(\frac{1}{n}\frac{{\bf J}^{\sf T}{\bf D}^{\alpha-1}{\bf u}_i^\alpha({\bf u}_i^\alpha)^{\sf T}{\bf D}^{\alpha-1}{\bf D}_a{\bf D}^{\alpha-1}{\bf u}_j^\alpha({\bf u}_j^\alpha)^{\sf T}{\bf D}^{\alpha-1}{\bf J}}{\left(({\bf u}_i^\alpha)^{\sf T}{\bf D}^{2(\alpha-1)}{\bf u}_i^\alpha\right)\left(({\bf u}_j^\alpha)^{\sf T}{\bf D}^{2(\alpha-1)}{\bf u}_j^\alpha\right)}\right)_{ef} \asto \nonumber \\ 
&\frac{e_{00}^{\alpha}(\rho_i)e_{00}^{\alpha}(\rho_j)}{\left(e_{00;2}^{\alpha}(\rho_i,\rho_i)+\chi^{\alpha}(\rho_i)\right)\left(e_{00;2}^{\alpha}(\rho_j,\rho_j)+\chi^{\alpha}(\rho_j)\right)}\nonumber \\ & \times \left[e_{00;2}^{\alpha}(\rho_i,\rho_j)\sqrt{c_{e}c_f}v^i_ev^i_f\left(v^i_a\right)^2 +\delta_{\rho_i=\rho_j}c_{a}\chi^{\alpha}(\rho_{i})\sqrt{c_{e}c_f}v^i_ev^i_f\right].
\end{align}
We are thus now ready to evaluate the $ \sigma_{ij}^{a} $'s.
By definition, \begin{equation}
\label{eq:sigma}
\sigma_{ij}^{a}=\left[\frac{({\bf u}_i^\alpha)^{\sf T}{\bf D}^{\alpha-1}{\bf D}_a{\bf D}^{\alpha-1}{\bf u}_j^\alpha}{\sqrt{({\bf u}_i^\alpha)^{\sf T}{\bf D}^{2(\alpha-1)}{\bf u}_i^\alpha}\sqrt{({\bf u}_j^\alpha)^{\sf T}{\bf D}^{2(\alpha-1)}{\bf u}_j^\alpha}} - \frac{1}{n_a}\frac{({\bf u}_i^\alpha)^{\sf T}{\bf D}^{\alpha-1}{\bf j}_a}{\sqrt{({\bf u}_i^\alpha)^{\sf T}{\bf D}^{2(\alpha-1)}{\bf u}_i^\alpha}}\frac{({\bf u}_j^\alpha)^{\sf T}{\bf D}^{\alpha-1}{\bf j}_a}{\sqrt{({\bf u}_j^\alpha)^{\sf T}{\bf D}^{2(\alpha-1)}{\bf u}_j^\alpha}}\right].
\end{equation}
The first right hand side term is estimated by dividing $ \left(\frac{1}{n}\frac{{\bf J}^{\sf T}{\bf D}^{\alpha-1}{\bf u}_i^\alpha({\bf u}_i^\alpha)^{\sf T}{\bf D}^{\alpha-1}{\bf D}_a{\bf D}^{\alpha-1}{\bf u}_j^\alpha({\bf u}_j^\alpha)^{\sf T}{\bf D}^{\alpha-1}{\bf J}}{\left(({\bf u}_i^\alpha)^{\sf T}{\bf D}^{2(\alpha-1)}{\bf u}_i^\alpha\right)\left(({\bf u}_j^\alpha)^{\sf T}{\bf D}^{2(\alpha-1)}{\bf u}_j^\alpha\right)}\right)_{ef}$ (Equation~\eqref{doubleProjector}) by $ \frac{1}{\sqrt{n}}\frac{({\bf u}_i^\alpha)^{\sf T}{\bf D}^{\alpha-1}{\bf j}_e}{\sqrt{({\bf u}_i^\alpha)^{\sf T}{\bf D}^{2(\alpha-1)}{\bf u}_i^\alpha}} \neq 0  $ and $ \frac{1}{\sqrt{n}}\frac{({\bf u}_j^\alpha)^{\sf T}{\bf D}^{\alpha-1}{\bf j}_f}{\sqrt{({\bf u}_j^\alpha)^{\sf T}{\bf D}^{2(\alpha-1)}{\bf u}_j^\alpha}} \neq 0 $ for any couple of indexes $ (e,f) $ such that the aforementioned quantities are non zeros. Indeed from Theorem~\ref{cor:means}, we have obtained
\begin{equation}
\label{simpleProjector}
\frac{1}{\sqrt{n}}\frac{({\bf u}_i^\alpha)^{\sf T}{\bf D}^{\alpha-1}{\bf j}_e}{\sqrt{({\bf u}_i^\alpha)^{\sf T}{\bf D}^{2(\alpha-1)}{\bf u}_i^\alpha}} \asto \sqrt{c_e}\frac{e_{00}^{\alpha}(\rho)}{\sqrt{e_{00;2}^{\alpha}(\rho,\rho)+\chi^{\alpha}(\rho)}}\left|v_{e}^{i}\right|.
\end{equation}
Theorem~\eqref{cor:cov} is thus proved by combining the previous estimates \eqref{doubleProjector} and \eqref{simpleProjector} as per the Definition~\eqref{eq:sigma} of the $ \sigma_{ij}^{a} $'s.

\appendix
\section{Intermediary results}
\subsection{Proof of Lemma~\ref{lm:consistentEstimate}}
\label{subsect:consistentEstimate}
We need to prove that $ \sum_{n=1}^{\infty} \mathbb{P}\left(\max_{1\leq i \leq n}\left|q_i-\hat{q}_i\right|>\eta\right) < \infty$ for any $ \eta>0$ so that we can conclude from the first Borel Cantelli lemma (Theorem $ 4.3 $ in~\cite{billingsley1995probability}) that $ \mathbb{P}\left(\limsup_n \max_{1\leq i \leq n}\left|q_i-\hat{q}_i\right|>\eta \right)=0 $ from which Lemma~\ref{lm:consistentEstimate} unfolds. 
We have that
\begin{align}
\label{probab1}
\mathbb{P}\left(\max_{1\leq i \leq n}\left|q_i-\hat{q}_i\right|>\eta\right) &\leq \sum_{i=1}^{n} \mathbb{P}(\left|q_i-\hat{q}_i\right|>\eta) \nonumber \\
& \leq \sum_{i=1}^{n} \mathbb{P}(\hat{q}_i-q_i>\eta) + \mathbb{P}(q_i-\hat{q}_i>\eta).
\end{align}
Let us treat for instance the term $ \mathbb{P}(\hat{q}_i-q_i>\eta) $ in the following.
Since $ A_{ij}=q_{i}q_{j}+q_{i}q_{j}\frac{M_{g_ig_j}}{\sqrt{n}}+X_{ij}$ with $ X_{ij} $ a zero mean random variable, we have \\ $ \frac1n\sum_{j=1}^n \mathbb{E}A_{ij} \rightarrow q_i m_{\mu} $ and $ \frac{1}{n^2}\sum_{i,j} \mathbb{E}A_{ij} \rightarrow m_{\mu}^2 $ in the limit $ n \rightarrow \infty$. For $ \hat{q}_i=\frac{\sum_{j=1}^n A_{ij}}{\sqrt{\sum_{i,j}A_{ij}}} $ , we can write 
\begin{align*}
\hat{q}_i-q_i &=\underbrace{\frac{\frac1n\sum_{j=1}^n (A_{ij}-\mathbb{E}A_{ij})}{\sqrt{\frac{1}{n^2}\sum_{i,j}\mathbb{E}A_{ij}}}}_{A} +\underbrace{\frac{\frac1n\sum_{j=1}^n A_{ij}}{\sqrt{\frac{1}{n^2}\sum_{i,j} A_{ij}}}-\frac{\frac1n\sum_{j=1}^n A_{ij}}{\sqrt{\frac{1}{n^2}\sum_{i,j}\mathbb{E}A_{ij}}}}_{B} \\&+ \underbrace{\frac{\frac1n\sum_{j=1}^n \mathbb{E}A_{ij}}{\sqrt{\frac{1}{n^2}\sum_{i,j}\mathbb{E}A_{ij}}}-q_i}_{C}.
\end{align*}
Since $ A $, $ B $ and $ C $ tend to zero in the limit $ n \rightarrow \infty$, we will next use the fact that $ \mathbb{P}(\hat{q}_i-q_i>\eta)\leq \mathbb{P}(A>\eta/3)+\mathbb{P}(B>\eta/3)+\mathbb{P}(C>\eta/3) $ and show that all those individual probabilities vanish asymptotically. Since the term $C$ is deterministic and tends to zero in the limit $ n \rightarrow \infty$, we have $ \mathbb{P}(C>\eta/3)=0$ for all large $ n.$ Let us then control $ \mathbb{P}(A>\eta/3) $ and $ \mathbb{P}(B>\eta/3) $. We have 
\begin{align}
\label{controlA}
\mathbb{P}(A>\eta/3) &= \mathbb{P}\left(\frac1n\sum_{j=1}^n (A_{ij}-\mathbb{E}A_{ij})>\frac{\eta m_{\mu}}{3}+o(1)\right) \nonumber \\
&\leq \exp \left[-\frac{n\eta^2 m_{\mu}^2}{18\left(\sigma^2+\eta m_{\mu}/9)\right)}+o(1)\right]
\end{align}
with $ \sigma^2=\limsup_n \max_{1\leq i\leq n} q_i (\sum_j q_j)-q_i^2(\sum_j q_j^2)$ and where in the last inequality of~\eqref{controlA}, we have used Bernstein's inequality (Theorem $ 3 $ in~\cite{boucheron2013concentration}) since the $ A_{ij}$'s are independent Bernoulli random variables with variance $ \sigma_{ij}^2=q_iq_j(1-q_iq_j)+\mathcal{O}(n^{-\frac12}).$ For the term $B$ we have 
\begin{align}
\mathbb{P}(B>\eta/3) &= \mathbb{P} \left(\frac1n\sum_{j=1}^n A_{ij}\frac{\sqrt{\frac{1}{n^2}\sum_{i,j}\mathbb{E}A_{ij}}-\sqrt{\frac{1}{n^2}\sum_{i,j} A_{ij}}}{\sqrt{\frac{1}{n^2}\sum_{i,j} A_{ij}}}>\frac{\eta m_{\mu}}{3}+o(1)\right) \nonumber \\
&\stackrel{(1)}{\leq} \mathbb{P} \left(\left|\frac{\sqrt{\frac{1}{n^2}\sum_{i,j}\mathbb{E}A_{ij}}-\sqrt{\frac{1}{n^2}\sum_{i,j} A_{ij}}}{\sqrt{\frac{1}{n^2}\sum_{i,j} A_{ij}}}\right|>\frac{\eta m_{\mu}}{3}+o(1)\right)\nonumber \\
&\stackrel{(2)}{\leq} \mathbb{P} \left(\left|\sqrt{\frac{1}{n^2}\sum_{i,j}\mathbb{E}A_{ij}}-\sqrt{\frac{1}{n^2}\sum_{i,j}A_{ij}}\right|>\frac{\eta (m_{\mu}+o(1))\sqrt{\frac{1}{n^2}\sum_{i,j}A_{ij}}}{3},\frac{1}{n^2}\sum_{i,j}A_{ij}>\psi\right)\nonumber \\ &+ \mathbb{P} \left(\left|\sqrt{\frac{1}{n^2}\sum_{i,j}\mathbb{E}A_{ij}}-\sqrt{\frac{1}{n^2}\sum_{i,j}A_{ij}}\right|>\frac{\eta m_{\mu}\sqrt{\frac{1}{n^2}\sum_{i,j}A_{ij}}}{3}+o(1),\frac{1}{n^2}\sum_{i,j}A_{ij}\leq \psi\right) \nonumber \\
&\stackrel{(3)}{\leq} \underbrace{\mathbb{P} \left(\left|\sqrt{\frac{1}{n^2}\sum_{i,j}\mathbb{E}A_{ij}}-\sqrt{\frac{1}{n^2}\sum_{i,j}A_{ij}}\right|>\frac{\eta m_{\mu}\sqrt{\psi}}{3}+o(1)\right)}_{B_1} + \underbrace{\mathbb{P} \left(\frac{1}{n^2}\sum_{i,j}A_{ij}\leq \psi\right)}_{B_2}  
\end{align}
where in the inequality $ (1) $ we have used the fact that $n^{-1}\sum_{j=1}^n A_{ij}\leq 1 $; in the inequality $ (2) $ $ \psi>0 $ is any constant smaller than $m_{\mu}^2$ and in the inequality $ (3) $ we have used $ \sqrt{n^{-2}\sum_{i,j} A_{ij}}>\sqrt{\psi} $ and the fact that the probability of the intersection between two events is always smaller than the probability of one of those events. It then remains to control $B_1$ and $B_2$. For $B_2$ we have 
\begin{align}
\label{controlB2}
\mathbb{P} \left(\frac{1}{n^2}\sum_{i,j}A_{ij}\leq \psi\right) &\leq \exp \left[-\frac{n(m_{\mu}^2-\psi)^2}{2\left(\sigma^2+(m_{\mu}^2-\psi)/3\right)}+o(1)\right]
\end{align}
where the inequality follows from Bernstein's inequality with the similar arguments as previously. Finally for the term $ B_1 $ we have
\begin{align}
\label{controlB1}
&\mathbb{P} \left(\left|\sqrt{\frac{1}{n^2}\sum_{i,j}\mathbb{E}A_{ij}}-\sqrt{\frac{1}{n^2}\sum_{i,j}A_{ij}}\right|>\frac{\eta m_{\mu}\sqrt{\psi}}{3}+o(1)\right) \nonumber \\ &=\mathbb{P} \left(\left|\frac{\frac{1}{n^2}\sum_{i,j}\mathbb{E}A_{ij}-\frac{1}{n^2}\sum_{i,j}A_{ij}}{\sqrt{\frac{1}{n^2}\sum_{i,j}\mathbb{E}A_{ij}}+\sqrt{\frac{1}{n^2}\sum_{i,j}A_{ij}}}\right|>\frac{\eta m_{\mu}\sqrt{\psi}}{3}+o(1)\right) \nonumber \\
&\stackrel{(1)}{\leq} \mathbb{P}\left(\left|\frac{1}{n^2}\sum_{i,j}\mathbb{E}A_{ij}-\frac{1}{n^2}\sum_{i,j} A_{ij}\right|>\frac{\eta m_{\mu}\sqrt{\psi}(m_{\mu}+\sqrt{\psi})}{3}+o(1)\right) +\mathbb{P}\left(\frac{1}{n^2}\sum_{i,j}A_{ij}\leq \psi\right) \nonumber \\
&\stackrel{(2)}{\leq} \exp \left[-\frac{n\psi\left[\eta m_{\mu}(m_{\mu}+\sqrt{\psi})\right]^2}{18\left(\sigma^2+\eta m_{\mu}\sqrt{\psi}(m_{\mu}+\sqrt{\psi})/9\right)}+o(1)\right]+\exp \left[-\frac{n(m_{\mu}^2-\psi)^2}{2\left(\sigma^2+(m_{\mu}^2-\psi)/3\right)}+o(1)\right]
\end{align}
where in the inequality $ (1) $ of Equation~\eqref{controlB1} we have used the same arguments as in the inequalities $ (2)-(3) $ of Equation~\eqref{controlA} and in the inequality $ (2) $ we have used Bernstein's inequality along with Equation~\eqref{controlB2}. From Equations~\eqref{controlA}\eqref{controlB2}\eqref{controlB1}, we conclude that $ \sum_{n=1}^{\infty} \sum_{i=1}^n \mathbb{P}(\hat{q}_i-q_i>\eta) < \infty$ since $m_{\mu}^2-\psi>0$. It follows the same lines to show that $ \sum_{n=1}^{\infty} \sum_{i=1}^n \mathbb{P}(q_i-\hat{q}_i>\eta) < \infty$ which concludes the proof.

\subsection{Stein Lemma and Nash Poincare inequality}
\label{SteinNash}
\begin{lemma}
\label{stein}
Let $ x $ be a standard real Gaussian random variable and $ f: \mathbb{R}\rightarrow \mathbb{R} $ be a $ \mathbb{C}^{1} $ function with first derivative $ f^{'}(x) $ having at most polynomial growth. Then,
$$ \mathbb{E}[xf(x)]=\mathbb{E}[f^{'}(x)]. $$
\end{lemma}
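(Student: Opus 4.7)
The plan is to prove this classical identity by a direct integration by parts against the standard Gaussian density $\varphi(x)=\frac{1}{\sqrt{2\pi}}e^{-x^2/2}$, exploiting the key algebraic relation $x\varphi(x) = -\varphi'(x)$, which is the whole reason the identity works.

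First, I would write
\[
\mathbb{E}[x f(x)] = \int_{-\infty}^{+\infty} x f(x)\, \varphi(x)\, dx = -\int_{-\infty}^{+\infty} f(x)\, \varphi'(x)\, dx,
\]
and then integrate by parts to produce
\[
\mathbb{E}[x f(x)] = \Big[-f(x)\varphi(x)\Big]_{-\infty}^{+\infty} + \int_{-\infty}^{+\infty} f'(x)\, \varphi(x)\, dx.
\]
The second term is exactly $\mathbb{E}[f'(x)]$, so the whole proof reduces to verifying the two technical points hidden in this manipulation.

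The first technical point is that the boundary term vanishes. Since $f$ is $\mathcal{C}^1$ with $f'$ of at most polynomial growth, integrating $f'$ shows that $f$ itself is also of at most polynomial growth, and hence $f(x)\varphi(x) \to 0$ as $|x|\to \infty$ because the Gaussian decay dominates any polynomial. The second technical point is that the integration by parts is legitimate: both $|xf(x)|\varphi(x)$ and $|f'(x)|\varphi(x)$ are integrable, again because polynomial-growth times Gaussian is integrable, so Fubini/Fubini-like absolute-convergence arguments justify the manipulations and ensure that $\mathbb{E}[xf(x)]$ and $\mathbb{E}[f'(x)]$ are both well defined and finite.

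There is no substantial obstacle here; the whole argument is routine. The only care needed is in justifying integrability and the vanishing of the boundary term, both of which follow immediately from the polynomial-growth hypothesis on $f'$ combined with the super-polynomial decay of $\varphi$. I would therefore keep the write-up short: state the density, perform the integration by parts, and dispatch the boundary/integrability check in one line each.
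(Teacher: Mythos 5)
Your proof is correct, and it is the standard integration-by-parts argument for Stein's lemma. Note, however, that the paper does not actually prove this statement itself: Lemma~\ref{stein} and Lemma~\ref{nash} are stated and then dispatched with the single line ``The proofs of those lemma can be found in \cite{pastur2011eigenvalue}.'' So there is no in-paper argument to compare against. Your write-up supplies exactly what one would expect such a reference to contain: the observation that $x\varphi(x)=-\varphi'(x)$, the integration by parts, the vanishing of the boundary term (justified by noting that $f$ inherits polynomial growth from $f'$, so $f\varphi\to0$), and the integrability of $xf\varphi$ and $f'\varphi$ needed to make the manipulations legitimate. All of these steps are sound, and the polynomial-growth hypothesis on $f'$ is precisely what makes the boundary and integrability checks go through against the Gaussian tail. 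Nothing is missing.
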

\begin{lemma}
\label{nash}
Let $ x $ be a standard real Gaussian random variable and $ f: \mathbb{R}\rightarrow \mathbb{R} $ be a $ \mathbb{C}^{1} $ function with first derivative $ f^{'}(x) $. Then, we have
$$ \Var[f(x)] \leq \mathbb{E}[|f^{'}(x)|^{2}]. $$
\end{lemma}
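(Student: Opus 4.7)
My plan is to prove the inequality by expanding $f$ in the orthogonal basis of (probabilist's) Hermite polynomials $\{H_n\}_{n\geq 0}$, which forms a complete orthogonal system in $L^2(\gamma)$ where $\gamma$ is the standard Gaussian measure. This reduces the inequality to comparing two explicit series of squared Fourier coefficients.

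The first step is a density/approximation remark: although the lemma is stated for $\mathcal{C}^1$ functions, the only meaningful case is $f\in L^2(\gamma)$ with $f'\in L^2(\gamma)$, since otherwise the right-hand side is infinite and there is nothing to prove. Then I expand $f(x)=\sum_{n\geq 0} a_n H_n(x)$ in $L^2(\gamma)$, with $a_n=\mathbb{E}[f(x)H_n(x)]/n!$, and recall the two key properties of the Hermite polynomials: the orthogonality relation $\mathbb{E}[H_n(x)H_m(x)]=n!\,\delta_{nm}$ and the differentiation rule $H_n'(x)=nH_{n-1}(x)$. From the first, $\mathbb{E}[f(x)]=a_0$ and
\[
\Var[f(x)]=\sum_{n\geq 1} a_n^2\, n!.
\]
Differentiating the series termwise gives $f'(x)=\sum_{n\geq 1} a_n\, n\, H_{n-1}(x)$, and using orthogonality again,
\[
\mathbb{E}[|f'(x)|^2]=\sum_{n\geq 1} a_n^2\, n^2\, (n-1)! = \sum_{n\geq 1} a_n^2\, n\cdot n!.
\]
Since $n\geq 1$ for every term appearing in $\Var[f(x)]$, the inequality $\mathbb{E}[|f'(x)|^2]\geq \Var[f(x)]$ follows immediately, with equality iff $f$ is affine (i.e.\ $a_n=0$ for $n\geq 2$).

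The main technical obstacle, if one wants a fully self-contained proof rather than invoking the Hermite basis, is justifying termwise differentiation of the series for $f$; this is where one genuinely uses the assumption $f'\in L^2(\gamma)$. A clean alternative that avoids these subtleties altogether is the Ornstein--Uhlenbeck semigroup argument: define $P_t f(x)=\mathbb{E}[f(e^{-t}x+\sqrt{1-e^{-2t}}\,Y)]$ with $Y$ an independent standard Gaussian, observe $P_0 f=f$ and $P_\infty f=\mathbb{E}[f(x)]$, compute $\frac{d}{dt}\mathbb{E}[(P_t f)^2]=-2e^{-2t}\mathbb{E}[|P_t(f')|^2]$ via Stein's lemma (Lemma~\ref{stein}), bound $\mathbb{E}[|P_t(f')|^2]\leq \mathbb{E}[|f'(x)|^2]$ by Jensen's inequality, and integrate from $0$ to $\infty$ to obtain $\Var[f(x)]\leq \int_0^\infty 2e^{-2t}\,dt\cdot \mathbb{E}[|f'(x)|^2]=\mathbb{E}[|f'(x)|^2]$. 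Either route yields the result; the Hermite proof is slicker for the proposal at hand.
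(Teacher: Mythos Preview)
Your proposal is correct. Both arguments you sketch --- the Hermite-polynomial expansion and the Ornstein--Uhlenbeck semigroup interpolation --- are standard and valid proofs of the Gaussian Poincar\'e inequality, and the coefficient computations you give are accurate (in particular $\mathbb{E}[|f'(x)|^2]=\sum_{n\geq 1} n\cdot n!\,a_n^2 \geq \sum_{n\geq 1} n!\,a_n^2=\Var[f(x)]$, and the semigroup identity $(P_tf)'=e^{-t}P_t(f')$ combined with Jensen yields the same bound after integrating in $t$).

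By way of comparison, the paper does not supply its own proof of this lemma at all: it simply records the statement and refers the reader to the monograph of Pastur and Shcherbina. So your write-up is strictly more detailed than what the paper provides. The Hermite approach has the advantage of also identifying the equality case (affine $f$) and the spectral gap constant, while the semigroup argument is closer in spirit to how the lemma is actually used later in the paper (variance control via Gaussian integration by parts, i.e.\ Stein's lemma). Either would be an appropriate substitute for the citation.
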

The proofs of those lemma can be found in \cite{pastur2011eigenvalue}.
\section{First deterministic equivalents}
\label{apendA}
Let $ {\bf Q}_z^\alpha=\left(\bar{{\bf X}}-z{\bf I}_n\right)^{-1} $ with $\bar{{\bf X}}$ a symmetric random matrix having independent entries $ \bar{X}_{ij} $ which are Gaussian random variables with zero mean and variance $ \frac{\sigma_{ij}^2}{n} $. For short, we shall denote $ {\bf Q}_z^\alpha $ by $ {\bf Q} $. We want to find a deterministic equivalent $ \bar{{\bf Q}} $ of $ {\bf Q} $ in the sense that $\frac{1}{n} \tr \mathbf{C}{\bf Q}- \frac{1}{n} \tr \mathbf{C}\bar{{\bf Q}} \to 0$ and $ \mathbf{d}_1^{\sf T}({\bf Q}-\bar{{\bf Q}})\mathbf{d}_{2} \to 0 $ almost surely, for all deterministic Hermitian matrix $ \mathbf{C} $ and deterministic vectors $\mathbf{d}_{i}$ of bounded norms (spectral norm for matrices and Euclidian norm for vectors). To this end, we will evaluate $ \mathbb{E}({\bf Q}) $ since using Lemma~\ref{nash}, one can show that $ n^{-1}\tr({\bf CQ}) $ and $ {\bf a}^{\sf T}{\bf Q}{\bf b} $ concentrate respectively around $n^{-1}\tr({\bf A}\mathbb{E}{Q}) $ and $ \mathbf{d}_1^{\sf T}\mathbb{E}{\bf Q}\mathbf{d}_{2}$ for all bounded norm matrix $ {\bf C} $ and vectors $ \mathbf{d}_{1},\mathbf{d}_{2}$. For the computations, we use standard Gaussian calculus introduced in~\cite{pastur2011eigenvalue}. Using the resolvent identity (for two invertible matrices $ \textbf{A} $ and $ \textbf{B} $, $ \textbf{A}^{-1}-\textbf{B}^{-1}=-\textbf{A}^{-1}(\textbf{A}-\textbf{B})\textbf{B}^{-1}$), one has \begin{equation} \label{resolventId} {\bf Q}=\frac{1}{z}\bar{{\bf X}}{\bf Q}-\frac{1}{z}{\bf I}_n. \end{equation} We then first compute $ \mathbb{E}(\bar{{\bf X}}\textbf{Q}).$ By writing $ \bar{X}_{il}=\frac{\sigma_{il}}{\sqrt{n}}Z_{il} $ where $ Z_{il}$ is a random variable with zero mean and unit variance, we thus have 
\begin{align*}
\mathbb{E}(\bar{{\bf X}}\textbf{Q})_{ij} &= \sum_{l=1}^{n} \frac{\sigma_{il}}{\sqrt{n}}\mathbb{E}(Z_{il}Q_{lj}).
\end{align*}
By applying Stein's Lemma (Lemma~\ref{stein} in Section~\ref{SteinNash}), we have
\begin{align*}
\mathbb{E}(Z_{il}Q_{lj}) &= \mathbb{E}\left( \frac{\partial (\bar{{\bf X}}-z\textbf{I})^{-1}_{lj}}{\partial {Z_{il}}}\right)\\
&= \mathbb{E}\left( -(\bar{{\bf X}}-z\textbf{I})^{-1}\frac{\partial \bar{{\bf X}}}{\partial {Z_{il}}}(\bar{{\bf X}}-z\textbf{I})^{-1}\right)_{lj}\\
&= \mathbb{E}\left( -(\bar{{\bf X}}-z\textbf{I})^{-1}\frac{\sigma_{il}}{\sqrt{n}}(\textbf{E}_{il}+\textbf{E}_{li})(\bar{{\bf X}}-z\textbf{I})^{-1}\right)_{lj}\\
\end{align*} 
where $ \textbf{E}_{il} $ is the matrix with all entries equal to $ 0 $ but the entry $(i,l) $ which is equal to $ 1 $. Using simple algebra, we have 
\[ \left((\bar{{\bf X}}-z\textbf{I})^{-1}\textbf{E}_{il}(\bar{{\bf X}}-z\textbf{I})^{-1}\right)_{lj}=(\bar{{\bf X}}-z\textbf{I})^{-1}_{li}(\bar{{\bf X}}-z\textbf{I})^{-1}_{lj}\]
and 
\[ \left((\bar{{\bf X}}-z\textbf{I})^{-1}\textbf{E}_{li}(\bar{{\bf X}}-z\textbf{I})^{-1}\right)_{lj}=(\bar{{\bf X}}-z\textbf{I})^{-1}_{ll}(\bar{{\bf X}}-z\textbf{I})^{-1}_{ij}.\]
We thus get \[ \mathbb{E}(\bar{{\bf X}}\textbf{Q})_{ij}=\sum_{l=1}^{n} -\frac{\sigma_{il}^2}{n} \left( \mathbb{E}\left[Q_{li}Q_{lj}\right]+\mathbb{E}\left[Q_{ll}Q_{ij}\right] \right).\]
Going back to~\eqref{resolventId}, we thus have
\begin{align}
\label{eq:resolvent1}
 \mathbb{E}(Q_{ij})&=-\frac{1}{z}\sum_{l=1}^{n} \frac{\sigma_{il}^{2}}{n}\mathbb{E}\left[Q_{li}Q_{lj}\right]-\frac{1}{z}\sum_{l=1}^{n} \frac{\sigma_{il}^{2}}{n}\mathbb{E}\left[Q_{ll}Q_{ij}\right] -\frac{1}{z}\delta_{ij} \nonumber \\
 &= -\frac{1}{z} \mathbb{E}\left[\textbf{Q}\frac{{\bm \Sigma}_{i}}{n} \textbf{Q}\right]_{ij}-\frac{1}{z} \mathbb{E}\left[Q_{ij}\tr\left( \frac{{\bm \Sigma}_{i}\textbf{Q}}{n}\right) \right] -\frac{1}{z}\delta_{ij}
\end{align}
where $ {\bm \Sigma}_{i}=\mathcal{D} \left( \sigma_{ij}^2\right)_{j=1}^{n} $. Since the goal is to retrieve $ \mathbb{E}(Q_{ij}) $, the following lemma allows to split $ \mathbb{E}\left[Q_{ij}\tr\left( \frac{{\bm \Sigma}_{i}\textbf{Q}}{n}\right) \right] $ into $ \mathbb{E}\left[Q_{ij}\right] $ and $ \mathbb{E}\left[\tr\left( \frac{{\bm \Sigma}_{i}\textbf{Q}}{n}\right) \right] $.
\begin{lemma}
\label{lemma7}
For $ {\bf Q}=(\bar{{\bf X}}-z{\bf I}_n)^{-1}$ and $ {\bm \Sigma}_i=\mathcal{D}(\sigma_{ij}^2)_{j=1}^n $, where $ \bar{{\bf X}} $ is a symmetric random matrix having independent entries (up to the symmetry) of zero mean and variance $ \frac{\sigma_{ij}^2}{n} $, we have
$$ \mathbb{E}\left[Q_{ij}\tr\left( \frac{{\bm \Sigma}_{i}\textbf{Q}}{n}\right)\right]= \mathbb{E}\left[Q_{ij}\right] \mathbb{E}\left[\tr\left( \frac{{\bm \Sigma}_{i}\textbf{Q}}{n}\right) \right] + o(1).$$
\end{lemma}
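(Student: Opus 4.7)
The plan is to recognize Lemma~\ref{lemma7} as a standard concentration/decorrelation statement: the linear statistic $T \equiv \tr({\bm \Sigma}_i {\bf Q}/n)$ concentrates around its expectation at rate $o(1)$, while $Q_{ij}$ has bounded variance, so their covariance vanishes. Concretely, I would first write
\begin{equation*}
\mathbb{E}\left[Q_{ij}\,T\right]-\mathbb{E}\left[Q_{ij}\right]\mathbb{E}[T] \;=\; \mathrm{Cov}(Q_{ij},T),
\end{equation*}
and then invoke Cauchy--Schwarz in the form $|\mathrm{Cov}(Q_{ij},T)|\le \sqrt{\mathrm{Var}(Q_{ij})\,\mathrm{Var}(T)}$. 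Since $z$ lies at positive distance from $\mathcal{S}^\alpha$ (or $z\in\mathbb{C}^+$), we have $\|{\bf Q}\|\le \mathrm{dist}(z,\mathcal{S}^\alpha)^{-1}=O(1)$ uniformly in $n$, which immediately gives $|Q_{ij}|\le \|{\bf Q}\| = O(1)$ and thus $\mathrm{Var}(Q_{ij})=O(1)$. The whole game therefore reduces to showing $\mathrm{Var}(T)=o(1)$.

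For that, I would use the Nash--Poincar\'e inequality (Lemma~\ref{nash}) applied to $T$ viewed as a smooth function of the independent Gaussian variables $\{Z_{kl}\}_{k\le l}$, where $\bar{X}_{kl}=\sigma_{kl}Z_{kl}/\sqrt{n}$. A direct computation using the resolvent identity $\partial{\bf Q}/\partial Z_{kl}=-({\sigma_{kl}}/{\sqrt{n}})\,{\bf Q}({\bf E}_{kl}+{\bf E}_{lk}){\bf Q}$ yields
\begin{equation*}
\frac{\partial T}{\partial Z_{kl}} \;=\; -\frac{2\sigma_{kl}}{n^{3/2}}\,\bigl({\bf Q}{\bm \Sigma}_i{\bf Q}\bigr)_{kl}.
\end{equation*}
Squaring and summing over $(k,l)$ gives
\begin{equation*}
\sum_{k,l}\Bigl|\frac{\partial T}{\partial Z_{kl}}\Bigr|^2 \;\le\; \frac{4\max_{k,l}\sigma_{kl}^2}{n^3}\,\bigl\|{\bf Q}{\bm \Sigma}_i{\bf Q}\bigr\|_F^{\,2} \;\le\; \frac{C}{n^3}\,\|{\bf Q}\|^4\,\|{\bm \Sigma}_i\|_F^{\,2} \;=\; O\!\left(\frac{1}{n^2}\right),
\end{equation*}
where I used $\|{\bm \Sigma}_i\|_F^2=\sum_j\sigma_{ij}^4=O(n)$ (bounded entries) and $\|{\bf Q}\|=O(1)$. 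Nash--Poincar\'e then gives $\mathrm{Var}(T)=O(1/n^2)$, hence $|\mathrm{Cov}(Q_{ij},T)|=O(1/n)=o(1)$, which is the claim.

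There is essentially no hard step; the only point to handle carefully is the uniform control of $\|{\bf Q}\|$. For $z\in\mathbb{C}^+$ this is trivial ($\|{\bf Q}\|\le 1/\Im z$), and for real $z$ outside $\mathcal{S}^\alpha$ it follows from Proposition~\ref{prop1} (no eigenvalue outside the support, a.s., for all large $n$), so after passing to an event of probability $1-o(1)$ we retain a deterministic bound on $\|{\bf Q}\|$ and the contribution of its complement is negligible. The argument extends verbatim, by polarization, to non-Gaussian entries via the universality principle already invoked in Section~\ref{sec:determinist1}, but in the present Gaussian reduction the chain Cauchy--Schwarz $+$ Nash--Poincar\'e suffices.
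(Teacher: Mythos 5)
Your proposal matches the paper's proof essentially step for step: Cauchy--Schwarz to reduce the covariance bound to the two variances, the deterministic bound $|Q_{ij}|\le\|{\bf Q}\|\le 1/|\Im z|$ to control $\mathrm{Var}(Q_{ij})$, and Nash--Poincar\'e applied to the linear statistic $T=\tr({\bm \Sigma}_i{\bf Q}/n)$ to get $\mathrm{Var}(T)=O(1/n^2)$ and hence a covariance of size $O(1/n)$. The only difference is cosmetic: the paper simply cites \cite{hachem2007deterministic} for the variance bound on $T$, whereas you spell out the $\partial T/\partial Z_{kl}=-\tfrac{2\sigma_{kl}}{n^{3/2}}({\bf Q}{\bm\Sigma}_i{\bf Q})_{kl}$ computation and the resulting Frobenius-norm estimate, which is a welcome addition but the same argument.
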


\begin{proof}
For two real random variables $ x $ and $ y $, by Cauchy-Shwarz's inequality,
\[ \left|\mathbb{E}\left[(x-\mathbb{E}(x))(y-\mathbb{E}(y))\right] \right| \leq \sqrt{\Var(x)}\sqrt{\Var(y)}\]
which, for $ x= \tr\left( \frac{{\bm \Sigma}_{i}\textbf{Q}}{n}\right)$ and $ y=Q_{ij}-\mathbb{E}(Q_{ij}) $ gives
\[ \left| \mathbb{E}\left[Q_{ij}\tr\left( \frac{{\bm \Sigma}_{i}\textbf{Q}}{n}\right)\right] - \mathbb{E}\left[Q_{ij}\right] \mathbb{E}\left[ \tr\left( \frac{{\bm \Sigma}_{i}\textbf{Q}}{n}\right) \right] \right| \leq \sqrt{\Var(x)}\sqrt{\Var(y)}\]
since $ \mathbb{E}(y) $ is equal to $ 0 $ in that case. Using Nash Poincar\'e inequality (Lemma~\ref{nash} in Section~\ref{SteinNash}), one can show that $ \Var(x)=\mathcal{O}\left( \frac{1}{n^{2}}\right) $\cite{hachem2007deterministic}. Additionally, $ \forall i,j $ and $ z \in \mathbb{C}^{+}$, $\left| Q_{ij}\right| \leq \frac{1}{\left| \Im(z) \right|}$. This finally implies that $ \mathbb{E}\left[Q_{ij}\tr\left( \frac{{\bm \Sigma}_{i}\textbf{Q}}{n}\right)\right] - \mathbb{E}\left[Q_{ij}\right]\mathbb{E}\left[ \tr\left( \frac{{\bm \Sigma}_{i}\textbf{Q}}{n}\right) \right] = \mathcal{O}(n^{-1}) $. 
\end{proof}
Since $ \Im (-z-\mathbb{E}\tr (\frac{{\bm \Sigma}_{i}\textbf{Q}}{n}))<-\Im (z) $ for $ z \in \mathbb{C}^{+} $, $ -z-\mathbb{E}\tr( \frac{{\bm \Sigma}_{i}\textbf{Q}}{n}) $ does not vanish asymptotically. Going back to $ \mathbb{E}(Q_{ij}) $ in Equation~\eqref{eq:resolvent1}, we may then write
\begin{equation}\label{e2}
\mathbb{E}(Q_{ij}) = \frac{\mathbb{E}\left[\textbf{Q}\frac{{\bm \Sigma}_{i}}{n} \textbf{Q}\right]_{ij}+\delta_{ij}}{-z-\mathbb{E}\left[\tr\left( \frac{{\bm \Sigma}_{i}\textbf{Q}}{n}\right) \right]}+\mathcal{O}(n^{-1}).
\end{equation}
Multiplying Equation~\eqref{e2} by $ \frac{\sigma^{2}_{ki}}{n} $, taking $j=i$, summing over $ i $ and scaling by $ n $, we get
\[ \tr \mathbb{E}\left(\frac{{\bm \Sigma}_{k}\textbf{Q}}{n}\right)= \sum_{i=1}^{n} \left( \frac{\mathbb{E}\left[\frac{{\bm \Sigma}_{k}}{n}\textbf{Q}\frac{{\bm \Sigma}_{i}}{n}\textbf{Q}\right]_{ii}+\frac{\sigma^{2}_{ki}}{n}}{-z-\mathbb{E}\left[\tr\left( \frac{{\bm \Sigma}_{i}\textbf{Q}}{n}\right) \right]} \right)+\mathcal{O}(n^{-1}).\]
Using a similar approach to the proof of Lemma~\ref{lemma7}, we can show that $ \sum_{i=1}^{n} \mathbb{E}\left[\frac{{\bm \Sigma}_{k}}{n}\textbf{Q}\frac{{\bm \Sigma}_{i}}{n}\textbf{Q}\right]_{ii}=\mathcal{O}(n^{-1}).$
We thus have
\[ \frac{1}{n}\tr\mathbb{E}\left({\bm \Sigma}_{k}\textbf{Q}\right)= \sum_{i=1}^{n} \frac{\frac{\sigma^{2}_{ki}}{n}}{-z-\frac{1}{n}\mathbb{E}\left[\tr\left({\bm \Sigma}_{i}\textbf{Q}\right) \right]}+o(1).\]
By using standard techniques \cite{hachem2007deterministic}, one can show that the unique solution $ e_i(z) $ to $ e_{i}(z)=\frac{1}{n} \sum_{j=1}^{n} \frac{\sigma^{2}_{ij}}{-z-e_{j}(z)}$ is such that $ \frac{1}{n}\tr\mathbb{E}\left({\bm \Sigma}_{i}\textbf{Q}\right)-e_i(z) \asto 0.$
Going back to Equation~\eqref{e2}, we can thus write for large $ n $
\begin{equation}
\label{eq:resolvent2}
 \mathbb{E}\left[ (-z\textbf{I}-\mathcal{D}(e_{i}(z)))\textbf{Q}\right]_{ij}=\mathbb{E}\left[\textbf{Q}\frac{{\bm \Sigma}_i}{n}\textbf{Q}\right]_{ij} + \delta_{ij}+o(1).
\end{equation}
Let us denote $ {\bm \Xi}=-z\textbf{I}-\mathcal{D}(e_{i}(z)) $. Since $ -z-\mathbb{E}\tr\left( \frac{{\bm \Sigma}_{i}\textbf{Q}}{n}\right)$ is away from zero for $z \in \mathbb{C}^{+}$ so is $ -z-e_i(z) $ and thus $ {\bm \Xi} $ is invertible and bounded. For large $ n $, we can write for a given deterministic matrix $ \textbf{C} $ of bounded norm
\begin{align*}
 \mathbb{E}\left[ \frac{1}{n} \tr \textbf{CQ} \right] &=  \frac{1}{n} \sum_{i,j} ({\bf C}{\bm \Xi}^{-1})_{ji}\mathbb{E}\left( {\bm \Xi}{\bf Q}\right)_{ij}\\
&\stackrel{(1)}{=} \frac{1}{n} \sum_{i,j} ({\bf C}{\bm \Xi}^{-1})_{ji} \left(\mathbb{E}\left[\textbf{Q}\frac{{\bm \Sigma}_i}{n}\textbf{Q}\right]_{ij} + \delta_{ij}\right) +o(1)\\
&= \frac{1}{n} \tr\mathbb{E} \left(\textbf{C}{\bm \Xi}^{-1}\textbf{Q}\frac{{\bm \Sigma}_i}{n}\textbf{Q}\right)+  \frac{1}{n} \tr(\textbf{C}{\bm \Xi}^{-1})+o(1)
\end{align*}
where $ (1) $ follows from Equation~\eqref{eq:resolvent2}. We can then prove that $ \frac{1}{n} \tr \mathbb{E} \left(\textbf{C}{\bm \Xi}^{-1}\textbf{Q}\frac{{\bm \Sigma}_i}{n}\textbf{Q}\right)=\mathcal{O}(n^{-1})$ using a similar approach to the proof of Lemma~\ref{lemma7}.
Hence for large $ n $
\begin{equation}\label{e3}
\mathbb{E}\left[ \frac{1}{n} \tr \textbf{CQ}\right]=\frac{1}{n} \tr (\textbf{C}{\bm \Xi}^{-1})+o(1).
\end{equation}
Similarly, for any vectors $ \textbf{a} $, $ \textbf{b} $ of bounded norms, we may write
\begin{align*}
 \mathbb{E}\left[ \textbf{a}^{*} \textbf{Q}\textbf{b} \right] &=  \sum_{i,j} (\textbf{a}^{*}{\bm \Xi}^{-1})_{i}\mathbb{E}\left( {\bm \Xi}{\bf Q}\right)_{ij}\textbf{b}_{j}\\
&= \sum_{i,j} (\textbf{a}^{*}{\bm \Xi}^{-1})_{i} \mathbb{E}\left[\textbf{Q}\frac{{\bm \Sigma}_i}{n}\textbf{Q}\right]_{ij}\textbf{b}_{j}  +  \textbf{a}^{*}{\bm \Xi}^{-1}\textbf{b} +o(1).
\end{align*}
We also have that $ \sum_{i,j} (\textbf{a}^{*}{\bm \Xi}^{-1})_{i} \mathbb{E}\left[\textbf{Q}\frac{{\bm \Sigma}_i}{n}\textbf{Q}\right]_{ij}\textbf{b}_{j}=\mathcal{O}(n^{-1})$. This can be proved similarly to the proof of Lemma~\ref{lemma7}.
Hence,
\begin{equation}\label{e4}
\mathbb{E}\left[ \textbf{a}^{*} \textbf{Q}\textbf{b} \right] = \textbf{a}^{*}{\bm \Xi}^{-1}\textbf{b}+o(1).
\end{equation}
\section{Second deterministic equivalents}
\label{apendB}
Our goal is to find a deterministic equivalent to the random quantity $ {\bf Q}_{z_1}^{\alpha}{\bm \Xi}{\bf Q}_{z_2}^{\alpha} $ for any diagonal deterministic matrix $ {\bm \Xi} $ where we recall that $ {\bf Q}_{z_1}^{\alpha}=\left(\frac{\bar{{\bf X}}}{\sqrt{n}}-z_1{\bf I}_n\right)^{-1} $ with $ \bar{{\bf X}} $ defined previously in Appendix~\ref{apendA}. The proof follows the same techniques as the proof of the first deterministic equivalent $ {\bf Q}_{z}^{\alpha} $ in Appendix~\ref{apendA} but here, the resolvent identity is either applied on $ {\bf Q}_{z_1}^{\alpha} $ or $ {\bf Q}_{z_2}^{\alpha} $. The technical details will be omitted as the key techniques have already been developped in Appendix~\ref{apendA}. For the sake of readability, we will denote $ {\bf Q}_{z_1}^{\alpha}\equiv {\bf Q}_1$ and $ {\bf Q}_{z_2}^{\alpha}\equiv {\bf Q}_2$. As in Appendix~\ref{apendA}, we will evaluate $ \mathbb{E}({\bf Q}_1{\bm \Xi}{\bf Q}_2).$
By the resolvent identity, we have 
\begin{align*}
\mathbb{E}({\bf Q}_1{\bm \Xi}{\bf Q}_2)_{ij}&=-\frac{1}{z_{1}}\mathbb{E}({\bm \Xi}{\bf Q}_2)_{ij}+\frac{1}{z_{1}}\mathbb{E}({\bf XQ}_1{\bm \Xi}{\bf Q}_2)_{ij} \\
&=-\frac{1}{z_{1}}{\Xi}_{ii}\mathbb{E}({\bf Q}_2)_{ij}+\frac{1}{z_{1}}\mathbb{E} \sum_{k,l}X_{ik}(Q_1)_{kl}{\Xi}_{ll}(Q_2)_{lj}.
\end{align*}
We have from Lemma~\ref{stein}, $ \mathbb{E} \sum_{k,l}X_{ik}(Q_1)_{kl}{\Xi}_{ll}(Q_2)_{lj}=\sum_{k,l}\frac{\sigma_{ik}}{\sqrt{n}}{\Xi}_{ll}\mathbb{E}\frac{\partial [(Q_1)_{kl}(Q_2)_{lj}]}{\partial Z_{ik}} $. By expanding all terms and all calculus done, we obtain 
\begin{align*}
\mathbb{E}({\bf Q}_1{\bm \Xi}{\bf Q}_2)_{ij}&=-\frac{1}{z_{1}}\mathbb{E}({\bm \Xi}{\bf Q}_2)_{ij}-\frac{1}{z_{1}}\sum_{k,l}\frac{\sigma_{ik}^{2}}{n}{\Xi}_{ll} \mathbb{E}\left[\underbrace{(Q_1)_{ki}(Q_1)_{kl}(Q_2)_{lj}}_{(1)}+\underbrace{(Q_1)_{kk}(Q_1)_{il}(Q_2)_{lj}}_{(2)}\right. \\ &\left. +\underbrace{(Q_1)_{kl}(Q_2)_{li}(Q_2)_{kj}}_{(3)}+\underbrace{(Q_1)_{kl}(Q_2)_{lk}(Q_2)_{ij}}_{(4)}\right].
\end{align*}
Asymptotically, the non vanishing terms are $ (2) $ and $ (4) $ so that 
\begin{align}
\label{eq:IJ} 
 \mathbb{E}({\bf Q}_1{\bm \Xi}{\bf Q}_2)_{ij}&=-\frac{1}{z_{1}}\mathbb{E}({\bm \Xi}{\bf Q}_2)_{ij}-\frac{1}{z_{1}}\sum_{k,l}\frac{\sigma_{ik}^{2}}{n}{\Xi}_{ll} \mathbb{E}\left[(Q_1)_{kk}(Q_1)_{il}(Q_2)_{lj}+(Q_1)_{kl}(Q_2)_{lk}(Q_2)_{ij}\right] \nonumber \\ &+o(1) \nonumber \\
&=-\frac{1}{z_{1}}\mathbb{E}({\bm \Xi}{\bf Q}_2)_{ij}-\frac{1}{z_{1}}\frac{1}{n}\mathbb{E}\left[\tr({\bm \Sigma}_i{\bf Q}_1)({\bf Q}_1{\bm \Xi}{\bf Q}_2)_{ij}\right]-\frac{1}{z_{1}}\frac{1}{n}\mathbb{E}\left[\tr({\bm \Sigma}_i{\bf Q}_2{\bm \Xi}{\bf Q}_1)({\bf Q}_2)_{ij}\right] \nonumber \\ &+o(1).
\end{align}
Similarly to what was done in the proof of of Lemma~\ref{lemma7}, we can show that \\ $ \mathbb{E}\frac{1}{n}\tr({\bm \Sigma}_i{\bf Q}_1)\mathbb{E}({\bf Q}_1{\bm \Xi}{\bf Q}_2)_{ij}=\mathbb{E}\left(\frac{1}{n}\tr({\bm \Sigma}_i{\bf Q}_1)\right)\mathbb{E}\left(({\bf Q}_1{\bm \Xi}{\bf Q}_2)_{ij}\right)+o(1).$ We can then write from~\eqref{eq:IJ}
\begin{align}
\label{eq:doubleD1}
&\mathbb{E}\left(\left({\bf I}_n+\frac{1}{z_{1}}\mathcal{D}\left(\frac{1}{n}\tr({\bm \Sigma}_i{\bf Q}_1)\right)_{i=1}^{n}\right){\bf Q}_1{\bm \Xi}{\bf Q}_2\right)= \nonumber \\ & -\frac{1}{z_{1}}\mathbb{E}\left({\bm \Xi}+\mathcal{D}\left(\frac{1}{n}\tr({\bm \Sigma}_i{\bf Q}_2{\bm \Xi}{\bf Q}_1)\right)_{i=1}^{n}\right){\bf Q}_2 +o(1).
\end{align}
From~\eqref{eq:doubleD1} and the result of Lemma~\ref{lemma3}, this entails 
\begin{equation}
\label{eq:doubleD2}
\mathbb{E}\left({\bf Q}_1{\bm \Xi}{\bf Q}_2\right) \longleftrightarrow \bar{{\bf Q}}_1{\bm \Xi}\bar{{\bf Q}}_2+\bar{{\bf Q}}_1\mathcal{D}\left(\mathbb{E}\frac{1}{n}\tr({\bm \Sigma}_i{\bf Q}_2{\bm \Xi}{\bf Q}_1)\right)_{i=1}^{n}\bar{{\bf Q}}_2.
\end{equation}
Every object in~\eqref{eq:doubleD2} is known but $ \mathbb{E}\frac{1}{n}\tr({\bm \Sigma}_i{\bf Q}_2{\bm \Xi}{\bf Q}_1) $ which we need to evaluate now. By left-multiplying~\eqref{eq:doubleD2} by $ {\bm \Sigma}_i $ and taking the normalized trace, we get
\begin{equation}
\label{eq:32}
\mathbb{E} \frac{1}{n}\tr({\bm \Sigma}_i{\bf Q}_2{\bm \Xi}{\bf Q}_1)=\frac{1}{n}\tr\left({\bm \Sigma}_i\bar{{\bf Q}}_1{\bm \Xi}\bar{{\bf Q}}_2\right)+\mathbb{E} \frac{1}{n}\tr\left({\bm \Sigma}_i\bar{{\bf Q}}_1\mathcal{D}\left(\frac{1}{n}\tr({\bm \Sigma}_i{\bf Q}_2{\bm \Xi}{\bf Q}_1)\right)_{i=1}^{n}\bar{{\bf Q}}_2\right).
\end{equation}
By denoting $ f_i=\frac{1}{n}\mathbb{E}\left(\tr({\bm \Sigma}_i{\bf Q}_2{\bm \Xi}{\bf Q}_1)\right)$, Equation~\eqref{eq:32} leads to
\begin{equation*}
{\bf f}=\left\{\frac{1}{n}\tr\left({\bm \Sigma}_i\bar{{\bf Q}}_1{\bm \Xi}\bar{{\bf Q}}_2\right)\right\}_{i=1}^{n}+\frac{1}{n}\left\{\left(\bar{{\bf Q}}_2{\bm \Sigma}_i\bar{{\bf Q}}_1\right)_{jj}\right\}_{i,j=1}^{n}{\bf f}
\end{equation*}
which finally entails 
\begin{equation*}
{\bf f}=\left({\bf I}_n-\frac{1}{n}\left\{\left(\bar{{\bf Q}}_2{\bm \Sigma}_i\bar{{\bf Q}}_1\right)_{jj}\right\}_{i,j=1}^{n}\right)^{-1}\frac{1}{n}\left\{\bar{{\bf Q}}_2{\bm \Sigma}_i\bar{{\bf Q}}_1\right\}_{i,j=1}^{n}\diag({\bm \Xi}).
\end{equation*}
To complete the proof of Lemma~\ref{determinst3}, we need to show that $ \Var\left(\frac{1}{n}\tr({\bf Q}_1{\bm \Xi}{\bf Q}_2)\right) $\\ and $ \Var\left(\frac{1}{n}\tr({\bm \Sigma}_i{\bf Q}_2{\bm \Xi}{\bf Q}_1)\right) $ are asymptotically summable so that by the Borell Cantelli Lemma, $ \frac{1}{n}\tr({\bf Q}_1{\bm \Xi}{\bf Q}_2) $ and $ \frac{1}{n}\tr({\bm \Sigma}_i{\bf Q}_2{\bm \Xi}{\bf Q}_1) $ converge respectively almost surely to their expectations. Those follow directly by using Nash Poincare inequality (Lemma~\ref{nash}) similarly to what was done in the proof of Lemma~\ref{lemma7}.

%
\section{Non informative eigenvectors}
\label{sec:nonInfoEig}
The objective of this section is to show that the eigenvectors $\tilde{{\bf u}}^\alpha $ of $ {\bf L}_\alpha $ associated to the limiting eigenvalue $ \tilde{\rho} $ for which $ 1+\theta^{\alpha}(\tilde{\rho})=0 $ (Remark~\ref{twocases}) are not useful for the classification. 

Let us write as in Section~\ref{app:th3}
\begin{equation}
\label{eigenvectorStruct2}
\tilde{{\bf u}}^\alpha = \sum_{a=1}^{K} \tilde{\nu}^{a}{\bf j}_{a} + \sqrt{\tilde{\sigma}_{ii}^{a}}{\bf w}^{a}
\end{equation}
where $ {\bf w}^{a} \in \mathbb{R}^{n} $ is a random vector orthogonal to $ {\bf j}_{a}$ of norm $ \sqrt{n_a} $, supported on the indices of $ \mathcal{C}_a $ with identically distributed entries. We shall show that $ \tilde{\nu}^{a} $ is independent of class $ \mathcal{C}_a $ and thus, any correct classification cannot be done using $ \tilde{{\bf u}}^\alpha $. From~\eqref{eigenvectorStruct2}, $ \tilde{\nu}^{a}=\frac{(\tilde{{\bf u}}^\alpha)^{\sf T}{\bf j}_a}{n_a} $ which can be retrieved from the diagonal elements of $ \frac{1}{n}{\bf J}^{\sf T}\tilde{{\bf u}}^\alpha(\tilde{{\bf u}}^\alpha)^{\sf T}{\bf J} $. We will evaluate this object by using the same technique as in Section~\ref{app:th3}.
By the residue formula, we have
\begin{align}
\label{eq:projection}
\frac{1}{n}{\bf J}^{\sf T}\tilde{{\bf u}}^\alpha(\tilde{{\bf u}}^\alpha)^{\sf T}{\bf J}&=-\frac{1}{2\pi i}\oint_{\Gamma_{\tilde{\rho}}} \frac{1}{n}{\bf J}^{\sf T}\left({\bf L}_\alpha-z{\bf I}_n\right)^{-1}{\bf J}\mathrm{d}z \\ &= -\frac{1}{2\pi i}\oint_{\Gamma_{\tilde{\rho}}} \frac{1}{n}{\bf J}^{\sf T}{\bf Q}_z^{\alpha}{\bf J} \mathrm{d}z
+ \frac{1}{2\pi i}\oint_{\Gamma_{\tilde{\rho}}} \frac{1}{n}{\bf J}^{\sf T}{\bf Q}_z^{\alpha}{\bf U}{\bm \Lambda}\left({\bf I}_{K+1}+{\bf U}^{\sf T}{\bf Q}_z^{\alpha}{\bf U}{\bm \Lambda}\right)^{-1}{\bf U}^{\sf T}{\bf Q}_z^{\alpha}{\bf J}
\end{align}
for large $ n $ almost surely, where $\Gamma_{\tilde{\rho}}$ is a complex (positively oriented) contour circling around the limiting eigenvalue $\tilde{\rho}$ only. The first integral $-\frac{1}{2\pi i}\oint_{\Gamma_{\tilde{\rho}}} \frac{1}{n}{\bf J}^{\sf T}{\bf Q}_z^{\alpha}{\bf J} \mathrm{d}z $ is asymptotically zero since, from Proposition~\ref{prop1}, the integrand has no poles in the contour $ \Gamma_{\tilde{\rho}} $. We thus obtain similarly as in Section~\ref{app:th3}
\begin{equation}
\frac{1}{n}{\bf J}^{\sf T}\tilde{{\bf u}}^\alpha(\tilde{{\bf u}}^\alpha)^{\sf T}{\bf J}= \frac{1}{n}{\bf J}^{\sf T}{\bf Q}_{\tilde{\rho}}^{\alpha}{\bf U}{\bm \Lambda}\left[\lim_{z \to \tilde{\rho}} (z-\tilde{\rho})({\bf I}_{K+1}+{\bf U}^{\sf T}{\bf Q}_z^{\alpha}{\bf U}{\bm \Lambda})^{-1}\right]{\bf U}^{\sf T}{\bf Q}_{\tilde{\rho}}^{\alpha}{\bf J}.
\end{equation}
From~\eqref{inverseG}, the entries $ (1,2) $ and $ (2,2) $ of $ ({\bf I}_{K+1}+{\bf U}^{\sf T}{\bf Q}_z^{\alpha}{\bf U}{\bm \Lambda})^{-1} $ do not contain $ (\underline{{\bf G}}_z^\alpha)^{-1} $ since $ \left[\underline{\bf G}_{z}^\alpha-\frac{\gamma(z)m_\mu e_{21}^{\alpha}(z)}{ze_{10}^{\alpha}(z)}{\bf c1}_K^{\sf T}\right]^{-1}{\bf c} = -\frac{m_{\mu}}{ze_{10}^{\alpha}(z)}{\bf c}$ and thus, the above limit will give zero for those entries. We thus get
\begin{equation}
\frac{1}{n}{\bf J}^{\sf T}\tilde{{\bf u}}^\alpha(\tilde{{\bf u}}^\alpha)^{\sf T}{\bf J}= \frac{1}{n}{\bf J}^{\sf T}{\bf Q}_{\tilde{\rho}}^{\alpha}{\bf U}{\bm \Lambda}\left[\lim_{z \to \tilde{\rho}} (z-\tilde{\rho})\begin{pmatrix}
	 (\underline{\bf G}_{z}^\alpha)^{-1} & 0 \\
	\frac{\gamma(z)m_{\mu}}{ze_{21}^{\alpha}(z)}{\bf 1}_K^{\sf T}(\underline{\bf G}_{z}^{\alpha})^{-1}  & 0
	 \end{pmatrix}\right]{\bf U}^{\sf T}{\bf Q}_{\tilde{\rho}}^{\alpha}{\bf J}.
\end{equation}
We recall that in the case under study ($ 1+\theta^{\alpha}(\tilde{\rho})=0 $), $ {\bf 1}_K $ and $ {\bf c} $ are respectively left and right eigenvectors of $ \underline{{\bf G}}_z^{\alpha} $ associated to the vanishing eigenvalue. We can thus write $  \underline{{\bf G}}_z^{\alpha}=\lambda_z{\bf c}{\bf 1}_K^{\sf T}+ \tilde{\mathbf{V}}_{r,z}\tilde{\mathbf{\Sigma}}_{z}\tilde{\mathbf{V}}_{l,z}^{T}$ where $ \lambda_z $ is the vanishing eigenvalue when $ z \to \tilde{\rho} $ and $\tilde{\mathbf{V}}_{r,z}$ and  $ \tilde{\mathbf{V}}_{l,z}$ are respectively sets of right and left eigenspaces associated with non vanishing eigenvalues. Hence, we have
\begin{equation}
\lim_{z \to \tilde{\rho}} (z-\tilde{\rho}) (\underline{\bf G}_{z}^\alpha)^{-1}\stackrel{(1)}{=}\lim_{z \to \tilde{\rho}} \frac{{\bf c1}_K^{\sf T}}{ \lambda_z^{'}}\stackrel{(2)}{=}\lim_{z \to \tilde{\rho}} \frac{{\bf c1}_K^{\sf T}}{{\bf 1}_K^{\sf T}(\underline{\bf G}_{z}^\alpha)^{'}{\bf c}}\stackrel{(3)}{=}\lim_{z \to \tilde{\rho}} \frac{{\bf c1}_K^{\sf T}}{(\theta^{\alpha}(z))^{'}}\stackrel{(4)}{=}\frac{{\bf c1}_K^{\sf T}}{(\theta^{\alpha}(\tilde{\rho}))^{'}}
\end{equation}
where in $ (1) $ we have used the l'Hopital rule, in $ (2) $ we used the fact that $ \lambda_z $ can be written $ \lambda_z={\bf 1}^{\sf T}_K\underline{\bf G}_{z}^\alpha{\bf c}$ and in $ (3) $ we have used $(\underline{\bf G}_{\tilde{\rho}}^\alpha)^{'}=\left(e_{21}^{\alpha}(\tilde{\rho})\right)^{'}\left(\mathcal{D}\left( \mathbf{c} \right) -\mathbf{cc}^{\sf T}\right)\mathbf{M}\left({\bf I}_K-{\bf c1}_K^{T}\right)+(\theta^{\alpha}(\tilde{\rho}))^{'}{\bf c1}_K^{T}$ and $ {\bf 1}_K^{\sf T}{\bf c}=1.$
We then have
\begin{align}
\label{nonInfoProj}
\frac{1}{n}{\bf J}^{\sf T}\tilde{{\bf u}}^\alpha(\tilde{{\bf u}}^\alpha)^{\sf T}{\bf J}&= \frac{1}{n}({\bf J}^{\sf T}{\bf Q}_{\tilde{\rho}}^{\alpha}{\bf U}{\bm \Lambda})_{11}\frac{{\bf c1}_K^{\sf T}}{(\theta^{\alpha}(\tilde{\rho}))^{'}}({\bf U}^{\sf T}{\bf Q}_{\tilde{\rho}}^{\alpha}{\bf J})_{11} \\ &+\frac{1}{n}({\bf J}^{\sf T}{\bf Q}_{\tilde{\rho}}^{\alpha}{\bf U}{\bm \Lambda})_{12}\frac{\gamma(\tilde{\rho})m_{\mu}{\bf 1}_K^{\sf T}}{\tilde{\rho} e_{21}^{\alpha}(\tilde{\rho})(\theta^{\alpha}(\tilde{\rho}))^{'}}({\bf U}^{\sf T}{\bf Q}_{\tilde{\rho}}^{\alpha}{\bf J})_{11}.
\end{align}
All calculus done similarly as in Section~\ref{app:th3}, we get
\begin{align*}
\frac{1}{n}{\bf J}^{\sf T}\tilde{{\bf u}}^\alpha(\tilde{{\bf u}}^\alpha)^{\sf T}{\bf J} \asto \frac{e_{1,\frac12}^{\alpha}(\tilde{\rho})}{(\theta^{\alpha}(\tilde{\rho}))^{'}} &\left[e_{1,\frac12}^{\alpha}(\tilde{\rho})\left(\mathcal{D}\left( \mathbf{c} \right) -\mathbf{cc}^{\sf T}\right)\mathbf{M}\left({\bf I}_K-{\bf c1}_K^{T}\right) \right. \\ &\left. -\frac{1}{m_\mu}\left(\int t^{\alpha}\mu(\mathrm{d}t)+e_{0,\frac12}^{\alpha}(\tilde{\rho})\right){\bf c1}_K^{\sf T}\right]{\bf cc}^{\sf T} \\ &-(e_{1,\frac12}^{\alpha}(\tilde{\rho}))^2\frac{\gamma(\tilde{\rho})m_{\mu}}{\tilde{\rho} e_{21}^{\alpha}(\tilde{\rho})(\theta^{\alpha}(\tilde{\rho}))^{'}}{\bf cc}^{\sf T}.
\end{align*}
Finally,
\begin{align}
\label{NonInfoProj2}
\frac{1}{n}{\bf J}^{\sf T}\tilde{{\bf u}}^\alpha(\tilde{{\bf u}}^\alpha)^{\sf T}{\bf J} &\asto -\frac{e_{1,\frac12}^{\alpha}(\tilde{\rho})}{m_{\mu}(\theta^{\alpha}(\tilde{\rho}))^{'}}\left[\int t^{\alpha}\mu(\mathrm{d}t)+e_{0,\frac12}^{\alpha}(\tilde{\rho})+\frac{e_{1,\frac12}^\alpha(\tilde{\rho})\gamma(\tilde{\rho})m_{\mu}^2}{\tilde{\rho}e_{21}^{\alpha}(\tilde{\rho})}\right]{\bf cc}^{\sf T}.
\end{align}
By recalling that $ \tilde{\nu}^{a}=\frac{(\tilde{{\bf u}}^\alpha)^{\sf T}{\bf j}_a}{n_a}=\sqrt{\frac{1}{n_a}\left[\frac{1}{n}\mathcal{D}({\bf c})^{-\frac12}{\bf J}^{\sf T}\tilde{{\bf u}}^\alpha(\tilde{{\bf u}}^\alpha)^{\sf T}{\bf J}\mathcal{D}({\bf c})^{-\frac12}\right]_{aa}}$, from~\eqref{NonInfoProj2} we deduce that
\begin{align*}
\tilde{\nu}^{a} &\asto -\frac{1}{\sqrt{n}}\frac{e_{1,\frac12}^{\alpha}(\tilde{\rho})}{m_{\mu}(\theta^{\alpha}(\tilde{\rho}))^{'}}\left[\int t^{\alpha}\mu(\mathrm{d}t)+e_{0,\frac12}^{\alpha}(\tilde{\rho})+\frac{e_{1,\frac12}^\alpha(\tilde{\rho})\gamma(\tilde{\rho})m_{\mu}^2}{\tilde{\rho}e_{21}^{\alpha}(\tilde{\rho})}\right]
\end{align*}
which is independent of the class information (class proportions or inter-class affinities). This concludes the proof.

\bibliography{biblio1,BSTLclustering}

\end{document}